\documentclass{article}

\usepackage[accepted]{icml2023}

\usepackage{microtype}
\usepackage{graphicx}
\usepackage[caption=false]{subfig}
\usepackage{booktabs}

\usepackage{hyperref}

\usepackage[utf8]{inputenc} 
\usepackage[T1]{fontenc}    
\usepackage{url}            
\usepackage{booktabs}       
\usepackage{amsfonts}       
\usepackage{nicefrac}       
\usepackage{microtype}      
\usepackage{nccmath}
\usepackage{times}
\usepackage{natbib}
\usepackage{algorithm}
\usepackage{algorithmic}
\usepackage{amsmath}

\usepackage{mathtools}
\usepackage{amsthm}
\usepackage{graphicx}
\usepackage{enumerate}
\usepackage{algorithm}
\usepackage{algorithmic}
\usepackage{thm-restate}

\usepackage{dsfont}
\usepackage[mathscr]{euscript}
\usepackage{booktabs}
\usepackage{makecell}
\usepackage{tabularx}
\usepackage{xcolor, colortbl}
\usepackage[normalem]{ulem}
\usepackage{soul}
\usepackage{prettyref}
\usepackage{bbm}
\usepackage{multirow}

\usepackage{diagbox}

\usepackage{MnSymbol}
\DeclareMathAlphabet\mathbb{U}{msb}{m}{n}
\usepackage{xpatch}

\DeclareMathOperator{\sign}{sign}

\newcommand{\sC}{{\mathscr C}}

\newcommand{\sH}{{\mathscr H}}

\newcommand{\ignore}[1]{}

\newcommand{\vw}{{\boldsymbol w}}
\newcommand{\vb}{{\boldsymbol b}}
\newcommand{\vx}{{\boldsymbol x}}
\newcommand{\vp}{{\boldsymbol p}}
\newcommand{\vA}{{\boldsymbol A}}
\newcommand{\vh}{{\boldsymbol h}}

\theoremstyle{plain}
\newtheorem{theorem}{Theorem}[section]
\newtheorem{prop}{Proposition}[section]
\newtheorem{lemma}{Lemma}[section]

\newtheorem{mydef}{Definition}[section]
\newtheorem{assumption}{Assumption}[section]

\hypersetup{
  colorlinks   = true,
  urlcolor     = blue,
  linkcolor    = blue,
  citecolor    = blue
}

\usepackage[textsize=tiny]{todonotes}

\usepackage[toc, page, header]{appendix}
\setcounter{tocdepth}{0}

\icmltitlerunning{Revisiting Discriminative vs. Generative Classifiers: Theory and Implications}

\begin{document}

\twocolumn[
\icmltitle{Revisiting Discriminative vs. Generative Classifiers: Theory and Implications}

\begin{icmlauthorlist}
\icmlauthor{Chenyu Zheng}{ruc}
\icmlauthor{Guoqiang Wu}{sdu}
\icmlauthor{Fan Bao}{thu}
\icmlauthor{Yue Cao}{baai}
\icmlauthor{Chongxuan Li}{ruc}
\icmlauthor{Jun Zhu}{thu}
\end{icmlauthorlist}

\icmlaffiliation{ruc}{Gaoling School of AI, Renmin University of China; Beijing Key Laboratory of Big Data Management and Analysis Methods, Beijing, China}
\icmlaffiliation{sdu}{School of Software, Shandong University}
\icmlaffiliation{thu}{Dept. of Comp. Sci. \& Tech., Institute for AI, Tsinghua-Huawei Joint Center for AI, BNRist Center, THBI Lab, Tsinghua University}
\icmlaffiliation{baai}{Beijing Academy of Artificial Intelligence}

\icmlcorrespondingauthor{Chongxuan Li}{chongxuanli@ruc.edu.cn}

\icmlkeywords{discriminative vs. generative classifiers, deep representation learning, consistency}

\vskip 0.3in
]
\printAffiliationsAndNotice{}

\begin{abstract}
A large-scale deep model pre-trained on massive labeled or unlabeled data transfers well to downstream tasks. \emph{Linear evaluation} freezes parameters in the pre-trained model and trains a linear classifier separately, which is efficient and attractive for transfer.
However, little work has investigated the classifier in linear evaluation except for the default logistic regression.
Inspired by the statistical efficiency of na\"ive Bayes, the paper revisits the classical topic on \emph{discriminative vs. generative classifiers}~\cite{DBLP:conf/nips/NgJ01}. Theoretically, the paper considers the surrogate loss instead of the zero-one loss in analyses and generalizes the classical results from binary cases to multiclass ones. We show that, under mild assumptions, multiclass na\"ive Bayes requires $O(\log n)$ samples to approach its asymptotic error while the corresponding multiclass logistic regression requires $O(n)$ samples, where $n$ is the feature dimension. To establish it, we present a \emph{multiclass $\mathcal{H}$-consistency bound} framework and an explicit bound for logistic loss, which are of independent interests. 
Simulation results on a mixture of Gaussian validate our theoretical findings. Experiments on various pre-trained deep vision models show that naïve Bayes consistently converges faster as the number of data
increases. Besides, naïve Bayes shows promise in few-shot cases and we observe the ``two regimes'' phenomenon in pre-trained supervised models. Our code is available at \emph{\href{https://github.com/ML-GSAI/Revisiting-Dis-vs-Gen-Classifiers}{https://github.com/ML-GSAI/Revisiting-Dis-vs-Gen-Classifiers}}.
\end{abstract}

\section{Introduction}
\label{introduction}
Deep representation learning has achieved great success in many fields such as computer vision \cite{ren2015faster,he2017mask,chen2020generative,DBLP:conf/cvpr/He0WXG20Moco, DBLP:conf/icml/ChenK0H20SimCLR, DBLP:conf/cvpr/ChenH21SimSiam,grill2020bootstrap,DBLP:conf/cvpr/HeCXLDG22MAE}, natural language processing \cite{DBLP:conf/naacl/DevlinCLT19Bert, DBLP:conf/nips/BrownMRSKDNSSAA20GPT3,raffel2020exploring} and cross-modal learning~\cite{CLIP} over the past few years.
The common paradigm behind them is to (pre-)train a large-scale model on an enormous amount of labeled or unlabeled data and transfer it to downstream tasks. During the transfer, \emph{linear evaluation}~\cite{chen2020generative,DBLP:conf/cvpr/He0WXG20Moco, DBLP:conf/icml/ChenK0H20SimCLR,grill2020bootstrap,CLIP}
freezes all parameters in the pre-trained model and learns a linear classifier separately. Theoretically,  it is validated by the (approximate) linear separability of the representations extracted by pre-trained models~\cite{DBLP:conf/icml/SaunshiPAKK19, DBLP:conf/nips/LeeLSZ21, DBLP:conf/alt/ToshK021, DBLP:conf/nips/HaoChenWGM21}. 
Practically, 
linear evaluation is an efficient and attractive alternative to fine-tuning, considering the extremely large and continually growing size of modern pre-trained models.


Although new algorithms and models for deep pre-training emerge in endlessly, little work has investigated the classifier except for the default logistic regression. Directly inspired by the classical work~\cite{efron1975efficiency,DBLP:conf/nips/NgJ01} (detailed in Section~\ref{sec: Preliminaries}) on the statistical efficiency of generative linear classifiers (e.g. na\"ive Bayes), we revisit the discriminative vs. generative linear classifiers in the context of deep representation learning. 

In Section~\ref{sec: Theory}, we improve the classical theory~\cite{DBLP:conf/nips/NgJ01} in two aspects for subsequent analysis in deep representation learning. First, we characterize asymptotic behaviors of both multiclass na\"ive Bayes and logistic regression, generalizing the results in binary classification~\cite{DBLP:conf/nips/NgJ01}. Second, in logistic regression, we consider the practically used surrogate loss in our analysis instead of directly optimizing the zero-one loss as assumed in~\cite{DBLP:conf/nips/NgJ01}. To establish it, we introduce a general \emph{multiclass $\mathcal{H}$-consistency bound} framework upon recent advances~\cite{DBLP:conf/icml/AwasthiMM022} and a nontrivial explicit bound for multiclass logistic regression, which are of independent interests. We prove that for a fixed number of classes, the number of samples required to approach the corresponding optimal classifier is $O(\log n)$ and $O(n)$ for na\"ive Bayes and logistic regression respectively, where $n$ is the feature dimension. We conduct synthetic experiments with tractable $\mathcal{H}$-optimal classifiers to validate our theory.

In Section~\ref{sec: implications}, we discuss the implications of our theory in the linear evaluation of pre-trained deep models. 
We first analyze the main assumptions in our theory upon deep representations.
We then perform extensive experiments on CIFAR10 and CIFAR100 datasets with various representative pre-trained vision models~\cite{resnet,dosovitskiy2020image,DBLP:journals/corr/abs-2003-04297MocoV2,DBLP:conf/nips/simclrv2,CLIP, DBLP:conf/cvpr/simmim, DBLP:conf/cvpr/HeCXLDG22MAE}, which are trained in supervised or self-supervised manners.
The results show that na\"ive Bayes consistently converges faster as the number of data increases in all settings, which agrees with our theory. Besides, na\"ive Bayes shows promise in few-shot cases and we observe the ``two regimes'' phonomenan~\cite{DBLP:conf/nips/NgJ01} in models
pre-trained in a supervised manner, suggesting a distinction between the representations learned by supervised and self-supervised approaches.

\section{Preliminaries}
\label{sec: Preliminaries}
In this section, we present notations and preliminaries on discriminative vs. generative classifiers and $\mathcal{H}$-consistency. 

Let lower, boldface lower and capital case letters denote scalers (e.g., a), vectors (e.g., $\boldsymbol{a}$), and matrices (e.g., $\boldsymbol{A}$), respectively. For a matrix $\boldsymbol{A}$, $\boldsymbol{A}_i$ and $A_{ij}$ denote its $i$-th row and $(i, j)$-th element. For a vector $\boldsymbol{a}$, $a_i$ denotes its $i$-th element. Similarly, for a vector function $\boldsymbol{f}$, $f_i(\vx)$ denotes the $i$-th element of $\boldsymbol{f}(\vx)$.  We do not distinguish  constants and random variables in notations if there is no confusion.  We denote the KL divergence between distributions $p$ and $q$ by $D(p \Vert q)$. We use $\mathbb{E}$, $\mathbb{V}$, $\Delta_k$ to represent expectation, variance, and $k$-dimensional possibility simplex, respectively.

Let $\mathcal{X}$ denote the domain set and $\mathcal{Y}= \{1,\dots, K\}$
denote the label set, where $K$ is the number of classes. For simplicity, we assume $\mathcal{X} = \{0,1\}^n$ when inputs are discrete and $\mathcal{X} = [0,1]^n$ otherwise, where $n$ is the feature dimension. Note that our analysis can be easily extended to the general case with any bounded features. 
Let $\mathcal{H}$ be a hypothesis set of functions mapping from $\mathcal{X} \times \mathcal{Y}$ to $\mathbb{R}^K$. The prediction associated by a hypothesis $\vh \in \mathcal{H}$ and $\vx \in \mathcal{X}$ is $\mathop{\mathrm{argmax}}_{y \in \mathcal{Y}} h_y(\vx)$. In the main paper, we focus on the family of constrained linear hypotheses $\mathcal{H}_{lin} = \{\vx \to \boldsymbol{h}(\vx): h_y(\vx) = \langle \vw_y, \vx\rangle + b_y, \Vert \vw_y \Vert_2 \leq W, \vert b_y\vert \leq B, y \in \mathcal{Y}\}$, where  $W, B \in \mathbb{R}^+$. We also denote the hypothesis set of all measurable functions by $\mathcal{H}_{all}$. Given a hypothesis set $\mathcal{H}$ and distribution $\mathcal{D}$, the generalization error and minimal generalization error of a hypothesis $\vh$ with respect to  the loss function $\ell: \mathbb{R}^K \times \mathcal{Y} \rightarrow \mathbb{R}$ are defined as $R_{\ell}(\vh) = \mathbb{E}_{(\vx,y) \sim \mathcal{D}} [\ell(\vh(\vx), y)]$ and $R_{\ell, \mathcal{H}}^* = \inf_{\vh \in \mathcal{H}} R_{\ell}(\vh)$.

\subsection{Discriminative vs. Generative Classifiers}

$K$-class logistic regression is parameterized by $[\vw_1,\dots,\vw_K, \vb]$, where $\vw_i \in \mathbb{R}^n$ and $\vb \in \mathbb{R}^K$. Its prediction is given by $\mathop{\mathrm{argmax}}_{y  \in \mathcal{Y}} (\langle \vw_y, \vx\rangle + b_y)$.

It's well known that the generative counterpart of the logistic regression is na\"ive Bayes (with some constraints presented later)~\cite{DBLP:conf/nips/NgJ01, DBLP:conf/kdd/RubinsteinH97}. When inputs are discrete, a na\"ive Bayes classifier uses a training set with $m$ i.i.d examples to calculate the empirical conditional distributions $\hat{p}(x_i\vert y)$ and empirical marginal distribution $\hat{p}(y)$ as follows:
\begin{align}
    \hat{p}(x_i = 1\vert y = k) &= \frac{\#\{x_i = 1, y = k\} + \alpha}{\#\{y = k\} + K\alpha}, \label{eq:nb estimation 1}\\
    \hat{p}(y = k) &= \frac{\#\{y = k\} + \alpha}{m + K\alpha}, \label{eq:nb estimation 2}
\end{align}
where $\#\{\cdot\}$ is the counting function and $\alpha$ is a positive Laplace smoothing parameter. Corresponding population versions are denoted by $p(x_i\vert y)$ and $p(y)$ respectively. In case of continuous inputs, we let $\hat{p}(x_i \vert y = k)$ be a univariate Gaussian distribution with parameters $\hat{\mu}_{ki}$ and $\hat{\sigma}_i^2$. We note that $\hat{\sigma}^2_i$s do not depend on $y$ to keep the linearity of its decision boundary, otherwise logistic regression and na\"ive Bayes are no longer a fair discriminative-generative pair~\cite{DBLP:journals/npl/XueT08}. They are calculated as the empirical version of ${\mu}_{ki}  =\mathbb{E}[x_i \vert y = k]$ and ${\sigma}_i^2 = \mathbb{E}_y[\mathbb{V}(x_i\vert y)]$. 

\citet{DBLP:conf/nips/NgJ01} proved that in binary classification, logistic regression enjoys a lower asymptotic error but approaches it much slower (w.r.t. the sample size) than na\"ive Bayes. The theory explains the \emph{``two regimes''}~\cite{DBLP:conf/nips/NgJ01} phenomenon in practice. In particular,  na\"ive Bayes generalizes better with limited data. However, the multiclass case has not been investigated yet, which is the main focus of this paper. Besides, prior work~\cite{DBLP:conf/nips/NgJ01} assumes that the zero-one loss can be directly optimized in logistic regression, which is impractical. To weaken the assumption, we introduce tools from \emph{$\mathcal{H}$-consistency}.

\subsection{$\mathcal{H}$-consistency}


$\mathcal{H}$-consistency~\cite{long2013consistency} analyzes the relationship between the estimation error of zero-one loss w.r.t. a hypothesis class $\mathcal{H}$ and that of a surrogate loss. It includes the classical Bayes consistency~\cite{zhang2004statistical,bartlett2006convexity,tewari2007consistency} as a special case by setting $\mathcal{H}$ to $\mathcal{H}_{all}$. In this paper, we analyze the linear discriminative vs. generative classifiers upon recent advances on $\mathcal{H}$-consistency bounds~\cite{, DBLP:conf/icml/AwasthiMM022}.


We first introduce some notations. We denote by $\vp(\vx)$ the conditional distribution of $Y$ given $\vx$, i.e., $p_y(\vx) = \mathbb{P}(Y=y\vert X=\vx)$. We define the conditional risk as $\mathscr{C}_{\ell}(\boldsymbol{h}, \vx) = \sum_{y=1}^K p_y(\vx) \ell(\vh(\vx), y)$, and note that generalization error $R_{\ell}(\vh)$ can be rewritten as $\mathbb{E}_{\vx} [\mathscr{C}_{\ell} (\vh, \vx)]$. We also define its infimum $\mathscr{C}_{\ell, \mathcal{H}}^*(\vx) = \inf_{\boldsymbol{h} \in \mathcal{H}} \mathscr{C}_{\ell}(\boldsymbol{h}, \vx)$ and the gap between them $\Delta \mathscr{C}_{\ell, \mathcal{H}}(\boldsymbol{h}, \vx) = \mathscr{C}_{\ell}(\boldsymbol{h}, \vx) - \mathscr{C}_{\ell, \mathcal{H}}^*(\vx)$. A key quantity appears in our bounds is $M_{\ell, \mathcal{H}} = R_{\ell, \mathcal{H}}^* - \mathbb{E}_{\vx}(\mathscr{C}_{\ell, \mathcal{H}}^* (\vx))$, which is difficult to estimate~\cite{DBLP:conf/icml/AwasthiMM022}, but can be bounded by the approximate error. In addition, for any $\vp$ in probability simplex $\Delta_K$, we can define $\mathscr{C}_{\ell}(\boldsymbol{h}, \vx, \vp) = \sum_{y=1}^K p_y \ell(\boldsymbol{h}(\vx), y)$ and $\Delta \mathscr{C}_{\ell, \mathcal{H}}(\boldsymbol{h}, \vx, \vp) = \mathscr{C}_{\ell}(\boldsymbol{h}, \vx, \vp) - \inf_{\boldsymbol{h} \in \mathcal{H}} \mathscr{C}_{\ell}(\boldsymbol{h}, \vx, \vp)$. Furthermore, we define the $\epsilon$-regret of $t$ as $\langle t \rangle_\epsilon = t \mathbbm{1}_{t > \epsilon}$.


The general $\mathcal{H}$-consistency bound~\cite{DBLP:conf/icml/AwasthiMM022} for two loss functions $\ell_1$ and $\ell_2$ is defined as follows.
\begin{mydef}[$\mathcal{H}$-consistency bound]
\label{Def :h consistency bound}
$\mathcal{H}$-consistency bound is in the following form that holds for all $\vh \in \mathcal{H}$, $\mathcal{D} \in \mathcal{P}$  and some non-decreasing function $f: \mathbb{R}_+ \to \mathbb{R}_+$:
\begin{align}
    R_{\ell_2}(\vh) - R_{\ell_2, \mathcal{H}}^* \leq f(R_{\ell_1}(\vh) - R_{\ell_1, \mathcal{H}}^*).
\end{align}
If $\mathcal{P}$ is composed of all distributions over $\mathcal{X} \times \mathcal{Y}$, we call it a distribution-independent bound.
\end{mydef}

Note that it covers the classical Bayes consistency bounds~\cite{bartlett2006convexity} by setting $\mathcal{H}=\mathcal{H}_{all}$. When $\ell_1$ is logistic loss $\ell_{log}$ and $\ell_2$ is zero-one loss $\ell_{0-1}$,~\citet{DBLP:conf/icml/AwasthiMM022} proved the following $\mathcal{H}$-consistency bound w.r.t. the bounded linear hypotheses.

\begin{theorem}[$\mathcal{H}$-consistency bound for binary logistic loss and zero-one loss, Appendix K.1.2~\cite{DBLP:conf/icml/AwasthiMM022}]
\label{lemma: binary H consistency bounds}
Given binary linear hypothesis set $\mathcal{H} = \{\vx \to \langle \vw, \vx\rangle + b: \Vert \vw \Vert_2 \leq W, \vert b\vert \leq B\}$, if $R_{\ell_{log}}(h) - R^*_{\ell_{log}, \mathcal{H}} + M_{\ell_{log}, \mathcal{H}} \leq \frac{1}{2}({\frac{e^{B}-1}{e^{B}+ 1}})^2$, then it holds for any distribution that $R_{\ell_{0-1}}(h) - R^*_{\ell_{0-1}, \mathcal{H}} + M_{\ell_{0-1}, \mathcal{H}} \leq \sqrt{2}(R_{\ell_{log}}(h) - R^*_{\ell_{log}, \mathcal{H}} + M_{\ell_{log}, \mathcal{H}})^{\frac{1}{2}}$.
\end{theorem}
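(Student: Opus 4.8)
The plan is to follow the two-stage template for $\mathcal{H}$-consistency bounds of \citet{DBLP:conf/icml/AwasthiMM022}: first reduce the claimed inequality between generalization-error gaps to a pointwise inequality between conditional regrets, and then establish that pointwise inequality by an explicit computation of the conditional $\Psi$-transform of the logistic loss over the bounded linear class $\mathcal{H}$.

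\emph{Reduction.} By the definition $M_{\ell,\mathcal{H}} = R_{\ell,\mathcal{H}}^* - \mathbb{E}_{\vx}[\mathscr{C}_{\ell,\mathcal{H}}^*(\vx)]$, the quantity on each side of the theorem rewrites as the expected conditional regret, $R_\ell(h) - R_{\ell,\mathcal{H}}^* + M_{\ell,\mathcal{H}} = \mathbb{E}_{\vx}\!\big[\Delta\mathscr{C}_{\ell,\mathcal{H}}(h,\vx)\big]$, for $\ell \in \{\ell_{0-1},\ell_{log}\}$. It therefore suffices to exhibit a convex nondecreasing $\Psi$ with $\Psi(0)=0$ and the pointwise bound $\Psi\big(\Delta\mathscr{C}_{\ell_{0-1},\mathcal{H}}(h,\vx)\big) \le \Delta\mathscr{C}_{\ell_{log},\mathcal{H}}(h,\vx)$ for all $h\in\mathcal{H}$, $\vx\in\mathcal{X}$, together with $\Psi^{-1}(s)\le\sqrt{2s}$ for $s\le \tfrac12\big(\tfrac{e^B-1}{e^B+1}\big)^2$; indeed, convexity and Jensen's inequality then give $\Psi\big(\mathbb{E}_{\vx}[\Delta\mathscr{C}_{\ell_{0-1},\mathcal{H}}(h,\vx)]\big)\le\mathbb{E}_{\vx}[\Delta\mathscr{C}_{\ell_{log},\mathcal{H}}(h,\vx)]$, and inverting under the theorem's hypothesis yields the claim.

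\emph{Pointwise computation.} Fix $\vx$, write $\eta = p_1(\vx)$, and note the set of achievable scores $\{h(\vx):h\in\mathcal{H}\}$ is a symmetric interval $[-c,c]$ with $c = c(\vx) = W\|\vx\|_2 + B \ge B$; the extreme $c=B$ is realized at $\vx=\mathbf 0$, which is why only $B$ appears in the final bound. Since $0$ is interior to $[-c,c]$, one gets $\Delta\mathscr{C}_{\ell_{0-1},\mathcal{H}}(h,\vx) = |2\eta-1|$ when $h(\vx)$ lies on the wrong side of $0$ and $0$ otherwise (the latter case trivial). For the logistic loss, $\mathscr{C}_{\ell_{log}}(h,\vx) = \eta\log(1+e^{-h(\vx)})+(1-\eta)\log(1+e^{h(\vx)})$ is convex in $h(\vx)$ with unconstrained minimizer $\log\tfrac{\eta}{1-\eta}$; hence $\mathscr{C}_{\ell_{log},\mathcal{H}}^*(\vx)$ equals the binary entropy $H(\eta)$ when $|2\eta-1|\le\tanh(c/2)$ and a strictly larger boundary value otherwise, while on the wrong side of $0$ one has $\mathscr{C}_{\ell_{log}}(h,\vx)\ge\log 2$. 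Combining, in the unclamped regime $\Delta\mathscr{C}_{\ell_{log},\mathcal{H}}(h,\vx)\ge \log 2 - H(\eta) = D\big(\mathrm{Bern}(\eta)\,\Vert\,\mathrm{Bern}(\tfrac12)\big)\ge \tfrac12(2\eta-1)^2$ by Pinsker's inequality. This identifies the transform as $\Psi_0(t) = D\big(\mathrm{Bern}(\tfrac{1+t}{2})\,\Vert\,\mathrm{Bern}(\tfrac12)\big)$, which is convex with $\Psi_0(t)\ge t^2/2$, so $\Psi_0^{-1}(s)\le\sqrt{2s}$.

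\emph{Putting it together, and the main obstacle.} Clamping at $\vx$ can occur only if $|2\eta-1|>\tanh(c/2)\ge\tanh(B/2)$, so the worst-case-over-$\vx$ conditional transform coincides with $\Psi_0$ throughout $[0,\tanh(B/2)]$; and the hypothesis $\mathbb{E}_{\vx}[\Delta\mathscr{C}_{\ell_{log},\mathcal{H}}]\le\tfrac12\big(\tfrac{e^B-1}{e^B+1}\big)^2 = \tfrac12\tanh^2(B/2)$ is precisely the condition under which the candidate bound $\sqrt{2\,\mathbb{E}_{\vx}[\Delta\mathscr{C}_{\ell_{log},\mathcal{H}}]}$ does not exceed $\tanh(B/2)$, keeping us in that range. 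I expect the bookkeeping around the boundedness of $\mathcal{H}$ to be the crux: one must verify that after taking the worst case over $\vx$ (equivalently over $c\ge B$ and $\eta$) the concave inverse of the true transform is still dominated by $s\mapsto\sqrt{2s}$ on $[0,\tfrac12\tanh^2(B/2)]$ — that is, that the clamped contributions never reach down into that interval — and it is exactly the interplay of Pinsker's inequality, the clamp threshold $\tanh(c/2)$, and the monotonicity of $c\mapsto\mathscr{C}_{\ell_{log},\mathcal{H}}^*(\vx)$ that pins down both the constant $\sqrt{2}$ and the cutoff $\tfrac12\big(\tfrac{e^B-1}{e^B+1}\big)^2$; the remaining steps are routine convexity and Jensen.
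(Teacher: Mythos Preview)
Your approach matches the Awasthi--Mao--Mohri framework that the paper adopts: rewrite both gaps as expected conditional regrets, exhibit a convex minorant of the calibration transform $\mathcal{J}_{\ell_{log}}$, apply Jensen, and invert. The unclamped computation via Pinsker is correct and coincides with the paper's Lemma~\ref{lemma: technical lemma 9}.

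The gap is in how you close the clamped regime. Your sentence ``the clamped contributions never reach down into that interval'' is not the right justification: Jensen must be applied to a convex $\Psi$ on the \emph{entire} range $[0,1]$ of the pointwise $0$--$1$ regret $|2\eta-1|$, and the hypothesis on the \emph{expected} logistic regret imposes no pointwise restriction on $\eta(\vx)$. In fact you \emph{cannot} take $\Psi(t)=t^2/2$ globally: for small $B$ and $t$ near $1$ one has $\mathcal{J}_{\ell_{log}}(1)=\log\tfrac{2}{1+e^{-B}}<\tfrac12$ (realized at $\vx=\mathbf 0$, $\eta=1$), so the pointwise inequality fails there. What actually makes the argument work is that the \emph{true} transform is convex on all of $[0,1]$: computing at $\vx=\mathbf 0$ gives, for $t>\tanh(B/2)$, the affine tail
\[
\mathcal{J}_{\ell_{log}}(t)=\log\tfrac{2}{1+e^{-B}}-\tfrac{(1-t)B}{2},
\]
whose slope $B/2$ equals $\Psi_0'(\tanh(B/2))=\tfrac12\log\tfrac{1+\tanh(B/2)}{1-\tanh(B/2)}$, so $\mathcal{J}_{\ell_{log}}$ is $C^1$ and convex on $[0,1]$. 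Applying Jensen with $\Psi=\mathcal{J}_{\ell_{log}}$ itself yields $\mathcal{J}_{\ell_{log}}(\text{0--1 gap})\le\text{log gap}$; the hypothesis then forces the $0$--$1$ gap into $[0,\tanh(B/2)]$, where $\mathcal{J}_{\ell_{log}}=\Psi_0\ge t^2/2$, and the $\sqrt{2}$ bound follows. This two-piece computation is the crux rather than bookkeeping, and should be made explicit.

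For context, this paper does not itself prove Theorem~\ref{lemma: binary H consistency bounds} but cites it. The paper's own multiclass analogue, Theorem~\ref{thm: H-consistency bound for log}, sidesteps the clamped-tail analysis altogether by \emph{assuming} the distribution satisfies $\max_y p_y(\vx)-\min_y p_y(\vx)\le\tfrac{e^{2B}-1}{e^{2B}+K-1}$, which confines $t$ to the unclamped range from the outset. So the piece you are deferring is exactly what separates the distribution-free binary statement from the paper's conditional multiclass one.
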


To the best of our knowledge, there is no $\mathcal{H}$-consistency bound for logistic loss and zero-one loss in multiclass classification\footnote{Most recently, the concurrent and independent work of~\citet{DBLP:journals/corr/maoanqi} also studies this problem and obtains similar results to ours.}. In this paper, we extend the binary framework~\cite{DBLP:conf/icml/AwasthiMM022} to multiclass cases and derive an explicit bound for logistic loss.



\section{Theory}
\label{sec: Theory}

In this section, we present our main theoretical results in Section~\ref{sec: Discriminative vs. Generative: Multiclass Classification}: Under some mild assumptions, for any fixed class number $K$, the number of training samples required by  na\"ive Bayes to approach its asymptotic error is $O(\log n)$ (Theorem~\ref{cor: multiclass NB sample complexity}), and that of logistic regression is $O(n)$ (Theorem~\ref{cor: sample complexity of multiclass lr}).
To establish it, we propose a general multiclass $\mathcal{H}$-consistency framework (Theorem~\ref{cor: Distribution-independent convex Psi bound})  and a nontrivial multiclass $\mathcal{H}$-consistency bound for logistic loss and zero-one loss  (Theorem~\ref{thm: H-consistency bound for log}) in Section~\ref{sec: multiclass H-consistency framework}. Notably, our theory includes the analysis for $K = 2$ in Appendix~\ref{sec: Discriminative vs. Generative: Binary Classification} as a special case.

\subsection{On Multiclass Discriminative vs. Generative  Linear Classifiers}
\label{sec: Discriminative vs. Generative: Multiclass Classification}



Let $\boldsymbol{h}_{Dis, m}$ and $\boldsymbol{h}_{Gen, m}$ denote the hypothesis returned by multiclass logistic regression and na\"ive Bayes with $m$ $i.i.d$ samples, respectively. Let $\boldsymbol{h}_{Dis, \infty}$ and $\boldsymbol{h}_{Gen, \infty}$ be the corresponding asymptotic version. We are interested in comparing the statistical efficiency of na\"ive Bayes and logistic regression~\cite{DBLP:conf/nips/NgJ01}. Formally, we need to bound $R_{\ell_{0-1}} (\boldsymbol{h}_{Gen, m}) - R_{\ell_{0-1}} (\boldsymbol{h}_{Gen, \infty})$ and $R_{\ell_{0-1}} (\boldsymbol{h}_{Dis, m}) - R_{\ell_{0-1}} (\boldsymbol{h}_{Dis, \infty})$ respectively.

\textbf{Na\"ive Bayes.}   Notably, the solution of Na\"ive Bayes is in a closed-form, as presented in Eq.~(\ref{eq:nb estimation 1}\&\ref{eq:nb estimation 2}). Therefore, we can characterize the gap between parameters in $\boldsymbol{h}_{Gen, m}$ and $\boldsymbol{h}_{Gen, \infty}$ to bound $R_{\ell_{0-1}} (\boldsymbol{h}_{Gen, m}) - R_{\ell_{0-1}} (\boldsymbol{h}_{Gen, \infty})$, similarly to the binary case~\cite{DBLP:conf/nips/NgJ01}. 


We make two mild assumptions about the data distribution similar to~\citet{DBLP:conf/nips/NgJ01}. We avoid trivial cases where $p(y = k) =1$ or $p(y = k) =0$ for some $k$ in Assumption~\ref{Assumption: p(y=k)} and  assume that the conditional distribution of $\vx$ given $y$ can not be too concentrated in Assumption~\ref{Assumption: parmeters bounded}. 
\begin{assumption}
\label{Assumption: p(y=k)}
For some fixed $\rho_1 \in (0, \frac{1}{2}]$, we have that $\rho_1 \leq p(y = k) \leq 1 - \rho_1$ for all $k \in \mathcal{Y}$.
\end{assumption}

\begin{assumption}
\label{Assumption: parmeters bounded}
For some fixed $\rho_2 \in (0, \frac{1}{2}]$, $\rho_2 \leq p(x_i = 1\vert y = k) \leq 1 - \rho_2$ for all $i, k$ in the discrete case, and $\sigma^2_i \ge \rho_2$ for all $i$ in the continuous case.
\end{assumption}

In practice, most deep learning work considers the balanced case where $\rho_1 = \frac{1}{K}$~\cite{imagenet}. 
Empirically, we found that $\rho_2 \in [10^{-5}, 10^{-2}]$ on the features extracted by representative pre-trained vision models in Section~\ref{sec: implications}. For clarity, we denote $\rho_0 = \min\{\rho_1, \rho_2\}$ throughout the paper. We now define two key quantities in our proof as follows.



\begin{mydef}[Pair activation function of na\"ive Bayes]
\label{Def :multiclass delta a}
For every $k_1, k_2 \in \mathcal{Y}$, we define the pair activation function $\Delta a_{Gen}(\vx, k_1, k_2)$ as
\begin{align}
\Delta a_{Gen}(\vx, k_1, k_2) = a_{Gen}(\vx, k_1) - a_{Gen}(\vx, k_2),
\end{align}
where $a_{Gen}(\vx, k) = \sum_{i=1}^n \log \hat{p}(x_i\vert y=k)  + \log\hat{p}(y=k)$.
\end{mydef}

The paired activation function is important because it connects the estimated parameters and predictions of the hypothesis. For instance,  $\Delta a_{Gen}(\vx, k_1, k_2) > 0$ means that $\vx$ is more likely to be predicted as an instance of class $k_1$ than class $k_2$. We can easily bound the gap between the parameters in $\boldsymbol{h}_{Gen, m}$ and $\boldsymbol{h}_{Gen, \infty}$ by standard concentration inequalities. To bound $R_{\ell_{0-1}} (\boldsymbol{h}_{Gen, m}) - R_{\ell_{0-1}} (\boldsymbol{h}_{Gen, \infty})$ as presented in Theorem~\ref{Thm: multiclass generalization bound}, we further upper bound the probability of getting ``bad training samples'', which are predicted as different classes with high probability by $\boldsymbol{h}_{Gen, m}$ and $\boldsymbol{h}_{Gen, \infty}$, via the following 
$\widetilde{G}(\tau)$.

\begin{mydef}
\label{Def :multiclass G}
We define the function $\widetilde{G}(\tau)$  as follows:
\begin{align*}
\widetilde{G}(\tau) = \max_{k_1, k_2} \mathbb{P}_{(\vx,y) \sim \mathcal{D}}(\vert \Delta a_{Gen, \infty}(\vx, k_1, k_2)\vert \leq \tau n).
\end{align*}
\end{mydef}

\begin{theorem}[Proof in Appendix~\ref{proof: Proof of Theorem Thm: multiclass generalization bound}]
\label{Thm: multiclass generalization bound}
Suppose that Assumption~\ref{Assumption: p(y=k)} and~\ref{Assumption: parmeters bounded} are valid.  Then with probability at least $1-\delta$:
\begin{align*}
    R_{\ell_{0-1}}(\boldsymbol{h}_{Gen, m}) &\leq R_{\ell_{0-1}}(\boldsymbol{h}_{Gen, \infty}) \\
    &+ \frac{K(K-1)}{2} \biggl(\widetilde{G}\bigl(O(\sqrt{\frac{1}{m} \log(\frac{n}{\delta})})\bigr) + \delta\biggr).
\end{align*}
\end{theorem}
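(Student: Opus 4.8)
The plan is to reduce the zero-one excess risk to a statement about the probability that $\boldsymbol{h}_{Gen,m}$ and $\boldsymbol{h}_{Gen,\infty}$ disagree on a test point, and then to control that disagreement via (i) a concentration argument on the estimated na\"ive Bayes parameters and (ii) the margin function $\widetilde{G}$. First I would note the elementary bound
\begin{align*}
R_{\ell_{0-1}}(\boldsymbol{h}_{Gen,m}) - R_{\ell_{0-1}}(\boldsymbol{h}_{Gen,\infty}) \le \mathbb{P}_{\vx}\bigl(\boldsymbol{h}_{Gen,m}(\vx) \neq \boldsymbol{h}_{Gen,\infty}(\vx)\bigr),
\end{align*}
since the two classifiers can differ in loss on a point only if they predict differently there (the $m$-sample classifier can only do worse). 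A union bound over the $\binom{K}{2}$ ordered-by-index class pairs then reduces the disagreement event to: for some pair $(k_1,k_2)$, $\Delta a_{Gen,\infty}(\vx,k_1,k_2)$ and $\Delta a_{Gen,m}(\vx,k_1,k_2)$ have opposite signs. Using Definition~\ref{Def :multiclass delta a}, this forces $|\Delta a_{Gen,\infty}(\vx,k_1,k_2)| \le |\Delta a_{Gen,m}(\vx,k_1,k_2) - \Delta a_{Gen,\infty}(\vx,k_1,k_2)|$, so everything hinges on a uniform-in-$\vx$ bound on the perturbation of the pair activation function induced by finite-sample parameter estimates.

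Next I would bound the parameter perturbation. In the discrete case, $\hat{p}(x_i=1\mid y=k)$ and $\hat{p}(y=k)$ are empirical averages (with Laplace smoothing, which perturbs them by $O(1/m)$), so by Hoeffding's inequality together with Assumptions~\ref{Assumption: p(y=k)} and~\ref{Assumption: parmeters bounded} — which keep all these probabilities bounded away from $0$ and $1$ by $\rho_0$ — each $\log \hat{p}(x_i\mid y=k)$ deviates from its population value by at most $O\bigl(\sqrt{\frac{1}{m}\log\frac{1}{\delta'}}\bigr)$ with probability $1-\delta'$ (the $\rho_0$ lower bound makes $\log$ Lipschitz on the relevant range). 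Setting $\delta' = \delta/\mathrm{poly}(n,K)$ and taking a union bound over all $O(nK)$ parameters, with probability $\ge 1-\delta$ every parameter is within $O\bigl(\sqrt{\frac{1}{m}\log\frac{n}{\delta}}\bigr)$ of its limit. Since $a_{Gen}(\vx,k)$ sums $n$ such log-terms plus one more, $|\Delta a_{Gen,m}(\vx,k_1,k_2) - \Delta a_{Gen,\infty}(\vx,k_1,k_2)| \le n\cdot O\bigl(\sqrt{\frac{1}{m}\log\frac{n}{\delta}}\bigr) =: \tau n$ uniformly over $\vx$ (the bound uses $x_i \in \{0,1\}$, resp. $x_i\in[0,1]$). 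For the continuous Gaussian case, $\log\hat{p}(x_i\mid y=k)$ is a quadratic in $(\hat{\mu}_{ki},\hat{\sigma}_i^2)$; the bounds $\sigma_i^2 \ge \rho_2$ and bounded features let me linearize and get the same $O(\sqrt{\tfrac1m \log\tfrac n\delta})$ per-coordinate perturbation after a Bernstein/sub-exponential concentration step for the variance estimate.

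Combining: on the high-probability event, disagreement on $\vx$ for pair $(k_1,k_2)$ implies $|\Delta a_{Gen,\infty}(\vx,k_1,k_2)| \le \tau n$ with $\tau = O\bigl(\sqrt{\frac{1}{m}\log\frac{n}{\delta}}\bigr)$, so by Definition~\ref{Def :multiclass G} the test-point probability of this is at most $\widetilde{G}(\tau)$; union bound over the $\binom{K}{2} = \tfrac{K(K-1)}{2}$ pairs gives the $\tfrac{K(K-1)}{2}\widetilde{G}(\tau)$ term, and the additive $\tfrac{K(K-1)}{2}\delta$ absorbs the failure probability of the concentration event (redistributing $\delta$ over pairs is cosmetic). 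I expect the main obstacle to be the continuous case: getting a clean, uniform-in-$\vx$ Lipschitz bound on $\log\hat p(x_i\mid y=k)$ as a function of the estimated Gaussian parameters requires care because the log-density is unbounded in $\hat\sigma_i^2$ near zero — this is exactly where Assumption~\ref{Assumption: parmeters bounded} ($\sigma_i^2 \ge \rho_2$) is essential, and I would need a concentration bound ensuring $\hat\sigma_i^2$ also stays $\gtrsim \rho_2/2$ with high probability before the Lipschitz estimate kicks in. The discrete case is comparatively routine once the smoothing-induced $O(1/m)$ bias is checked to be lower-order.
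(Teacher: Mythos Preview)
Your proposal is correct and follows essentially the same route as the paper: reduce to the disagreement probability, union-bound over the $\binom{K}{2}$ class pairs, use Hoeffding-type concentration (via Assumptions~\ref{Assumption: p(y=k)} and~\ref{Assumption: parmeters bounded}) to show $|\Delta a_{Gen,m}-\Delta a_{Gen,\infty}|\le \tau n$ uniformly with $\tau=O(\sqrt{\tfrac{1}{m}\log\tfrac{n}{\delta}})$, and conclude via $\widetilde{G}$. One small remark: your parenthetical ``the $m$-sample classifier can only do worse'' is not the right justification---the bound $R_{\ell_{0-1}}(\boldsymbol{h}_{Gen,m})-R_{\ell_{0-1}}(\boldsymbol{h}_{Gen,\infty})\le \mathbb{P}(\text{disagree})$ holds simply because $|\ell_{0-1}(h_1,(\vx,y))-\ell_{0-1}(h_2,(\vx,y))|\le \mathbbm{1}\{h_1(\vx)\ne h_2(\vx)\}$, which is exactly what the paper uses.
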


The core of Theorem~\ref{Thm: multiclass generalization bound} is the $\widetilde{G}(\tau)$, which must be small when $\tau$ is small in order to obtain meaningful bound about $R_{\ell_{0-1}}(\boldsymbol{h}_{Gen, m}) - R_{\ell_{0-1}}(\boldsymbol{h}_{Gen, \infty})$. It holds under the following assumptions, similarly to~\citet{DBLP:conf/nips/NgJ01}.
\begin{assumption}
\label{Assumption: multiclass KL}
For all $k_1, k_2 (k_1 \ne k_2)$ and $k \in \mathcal{Y}$, it holds that $\lvert \sum_{i=1}^n (D(p(x_i \vert y=k) \Vert p(x_i \vert y=k_1)) - D(p(x_i \vert y=k) \Vert p(x_i \vert y=k_2))) \rvert = \beta_{k_1, k_2, k}n = \Omega(n)$. 
\end{assumption}

\begin{assumption}
\label{Assumption: multiclass likelihood ratio var}
    For all $k_1, k_2 (k_1 \ne k_2)$ and $k \in \mathcal{Y}$, it holds that $\mathbb{V}_{\vx}[\sum_{i=1}^n \log \frac{{p}(x_i\vert y=k_1) }{{p}(x_i\vert y=k_2) } \vert y = k] = \alpha_{k_1, k_2, k}n = O(n^r)$ for any $r \in [1,2)$.
\end{assumption}

Intuitively, Assumption~\ref{Assumption: multiclass KL} requires that $\Omega(1)$ fraction of features distinct for any two different classes. Assumption~\ref{Assumption: multiclass likelihood ratio var} is more technical. In fact, it is derived when we attempt to bound $\widetilde{G}(\tau)$ via Chebyshev's inequality\footnote{Indeed, if the na\"ive Bayes assumption really holds, we can obtain a stronger guarantee for $\widetilde{G}(\tau)$ by using Chernoff's bound. We put the result in Proposition~\ref{Prop: multiclass exp -n}.}. We empirically analyze both assumptions in Section~\ref{sec: implications}. Proposition~\ref{Prop: multiclass ploy -n} presents a meaningful bound for $\widetilde{G}(\tau)$, which is followed by the main result of na\"ive Bayes in Theorem~\ref{cor: multiclass NB sample complexity}. 

\begin{prop}[Proof in Appendix~\ref{proof: Proposition Prop: multiclass poly -n}]
\label{Prop: multiclass ploy -n}
Suppose that Assumption~\ref{Assumption: p(y=k)},~\ref{Assumption: multiclass KL} and~\ref{Assumption: multiclass likelihood ratio var} hold, then $\widetilde{G}(\tau)$ is polynomially small in $n$:
\begin{equation*}
    \widetilde{G}(\tau) \leq \frac{\alpha}{(\tau - \zeta)^2 n},
\end{equation*}
where $\alpha = \max_{k_1, k_2, k} \alpha_{k_1,k_2,k} = O(n^{r-1})$, $\mathbb{E}_{\vx}[\Delta a_{Gen, \infty}(\vx, k_1, k_2)\vert y=k] = \zeta_{k_1,k_2,k} n$, $\zeta = \min_{k_1, k_2, k} \vert\zeta_{k_1,k_2,k}\vert = \Omega(1)$ and $\tau < \zeta$.
\end{prop}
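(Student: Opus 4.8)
\textbf{Proof proposal for Proposition~\ref{Prop: multiclass ploy -n}.}

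The plan is to fix an arbitrary pair $k_1 \ne k_2$ and a class $k$, and to control $\mathbb{P}_{(\vx,y)\sim\mathcal{D}}(\vert \Delta a_{Gen,\infty}(\vx,k_1,k_2)\vert \le \tau n \mid y=k)$ by a one-sided Chebyshev argument, then take the maximum over the (finite number of) triples. First I would observe that, by Definition~\ref{Def :multiclass delta a}, in the asymptotic regime $\Delta a_{Gen,\infty}(\vx,k_1,k_2) = \sum_{i=1}^n \log\frac{p(x_i\vert y=k_1)}{p(x_i\vert y=k_2)} + \log\frac{p(y=k_1)}{p(y=k_2)}$, so conditioning on $y=k$, this is a sum of $n$ coordinatewise terms plus a bounded additive prior term. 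Its conditional mean is $\mathbb{E}_{\vx}[\Delta a_{Gen,\infty}(\vx,k_1,k_2)\mid y=k] = \zeta_{k_1,k_2,k}n$ by the definition of $\zeta_{k_1,k_2,k}$ in the statement, and its conditional variance is exactly $\mathbb{V}_{\vx}[\sum_{i=1}^n \log\frac{p(x_i\vert y=k_1)}{p(x_i\vert y=k_2)}\mid y=k] = \alpha_{k_1,k_2,k}n$ (the prior term is a constant and drops out of the variance), which is the quantity bounded in Assumption~\ref{Assumption: multiclass likelihood ratio var}.

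Next I would relate $\zeta_{k_1,k_2,k}$ to the KL gap in Assumption~\ref{Assumption: multiclass KL}. Note that
\begin{align*}
\mathbb{E}_{\vx}\Bigl[\log\tfrac{p(x_i\vert y=k_1)}{p(x_i\vert y=k_2)}\,\Big\vert\, y=k\Bigr]
&= \mathbb{E}_{\vx}\Bigl[\log\tfrac{p(x_i\vert y=k)}{p(x_i\vert y=k_2)}\,\Big\vert\, y=k\Bigr]
 - \mathbb{E}_{\vx}\Bigl[\log\tfrac{p(x_i\vert y=k)}{p(x_i\vert y=k_1)}\,\Big\vert\, y=k\Bigr]\\
&= D(p(x_i\vert y=k)\Vert p(x_i\vert y=k_2)) - D(p(x_i\vert y=k)\Vert p(x_i\vert y=k_1)),
\end{align*}
so summing over $i$, the sum telescopes into exactly the expression appearing in Assumption~\ref{Assumption: multiclass KL}, whose absolute value is $\beta_{k_1,k_2,k}n = \Omega(n)$. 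Hence $\vert\zeta_{k_1,k_2,k}n - (\text{prior term})\vert = \beta_{k_1,k_2,k}n$, and since the prior term $\log\frac{p(y=k_1)}{p(y=k_2)}$ is a bounded constant (bounded by Assumption~\ref{Assumption: p(y=k)}), we get $\vert\zeta_{k_1,k_2,k}\vert = \Omega(1)$, so $\zeta = \min_{k_1,k_2,k}\vert\zeta_{k_1,k_2,k}\vert = \Omega(1)$ as claimed. Then, since $\tau < \zeta \le \vert\zeta_{k_1,k_2,k}\vert$, the event $\{\vert \Delta a_{Gen,\infty}\vert \le \tau n\}$ forces a deviation of at least $(\vert\zeta_{k_1,k_2,k}\vert - \tau)n \ge (\zeta-\tau)n$ from the conditional mean (up to the bounded prior offset, which I would absorb; more carefully, one should track whether the prior shift helps or hurts and note it is $O(1)$ versus the $\Omega(n)$ mean). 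Chebyshev's inequality then gives
\begin{align*}
\mathbb{P}_{\vx}\bigl(\vert \Delta a_{Gen,\infty}(\vx,k_1,k_2)\vert \le \tau n \,\big\vert\, y=k\bigr)
\le \frac{\alpha_{k_1,k_2,k}n}{((\zeta-\tau)n)^2} = \frac{\alpha_{k_1,k_2,k}}{(\zeta-\tau)^2 n}.
\end{align*}
Summing over $k$ against $p(y=k)$ and maximizing over $k_1,k_2$ yields $\widetilde{G}(\tau) \le \frac{\alpha}{(\tau-\zeta)^2 n}$ with $\alpha = \max_{k_1,k_2,k}\alpha_{k_1,k_2,k}$; by Assumption~\ref{Assumption: multiclass likelihood ratio var}, $\alpha_{k_1,k_2,k} = O(n^r)$ means $\alpha_{k_1,k_2,k}/n = O(n^{r-1})$, consistent with writing $\alpha = O(n^{r-1})$ after the division (I would make this bookkeeping precise in the writeup).

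The main obstacle I anticipate is the careful handling of the $\Omega(1)$ additive prior term $\log\frac{p(y=k_1)}{p(y=k_2)}$ when passing from the KL-based lower bound on the mean of the likelihood-ratio sum to a lower bound on $\vert\zeta_{k_1,k_2,k}\vert$, and then again when converting the event $\{\vert\Delta a_{Gen,\infty}\vert\le\tau n\}$ into a deviation event for Chebyshev — one must ensure the constant offset cannot cancel the $\Omega(n)$ mean and that the threshold $\tau<\zeta$ is the right condition (rather than, say, $\tau < \zeta - o(1)$). A secondary technical point is justifying that the per-coordinate expectations are finite and that Fubini/linearity applies to split mean and variance in the continuous Gaussian case; this is where Assumption~\ref{Assumption: parmeters bounded} ($\sigma_i^2 \ge \rho_2$) is needed to keep the log-likelihood ratios integrable. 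Everything else is a routine second-moment computation.
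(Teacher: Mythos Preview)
Your proposal is correct and follows essentially the same route as the paper: first establish $\lvert\zeta_{k_1,k_2,k}\rvert=\Omega(1)$ by rewriting the conditional mean of the log-likelihood ratio sum as a difference of KL divergences (the paper isolates this as a separate lemma), then apply Chebyshev's inequality conditionally on $y=k$ and combine over $k,k_1,k_2$. Your anticipated ``main obstacle'' about the prior offset is in fact a non-issue once you note, as the paper does, that $\zeta_{k_1,k_2,k}n$ is \emph{defined} as the mean of the full $\Delta a_{Gen,\infty}$ (prior term included), so subtracting the mean cancels the constant exactly and Chebyshev applies directly to $\Delta a_{Gen,\infty}$ with variance $\alpha_{k_1,k_2,k}n$.
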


\begin{theorem}[Results for na\"ive Bayes, proof in Appendix~\ref{proof: Proof of Corollary cor: multiclass NB sample complexity}]
\label{cor: multiclass NB sample complexity}
Suppose the precondition of Proposition~\ref{Prop: multiclass ploy -n} holds. Then, it suffices to pick $m = O(\log n)$ training samples such that $R_{\ell_{0-1}}(\boldsymbol{h}_{Gen, m}) \leq R_{\ell_{0-1}}(\boldsymbol{h}_{Gen, \infty}) +\epsilon_0$  hold with probability $1 - \delta_0$, for any $\epsilon_0 \in (0,1)$ and $\delta_0 \in (0, \frac{\epsilon_0}{K^2}]$.
\end{theorem}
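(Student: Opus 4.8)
The plan is to combine Theorem~\ref{Thm: multiclass generalization bound} with the bound on $\widetilde{G}(\tau)$ from Proposition~\ref{Prop: multiclass ploy -n}, and then choose the sample size $m$ so that the right-hand side is at most $\epsilon_0$ with the desired probability. Concretely, Theorem~\ref{Thm: multiclass generalization bound} gives that, with probability at least $1-\delta$,
\begin{align*}
R_{\ell_{0-1}}(\boldsymbol{h}_{Gen, m}) - R_{\ell_{0-1}}(\boldsymbol{h}_{Gen, \infty}) \leq \frac{K(K-1)}{2}\Bigl(\widetilde{G}\bigl(\tau_m\bigr) + \delta\Bigr),
\end{align*}
where $\tau_m = O\bigl(\sqrt{\tfrac{1}{m}\log(\tfrac{n}{\delta})}\bigr)$. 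First I would set $\delta = \delta_0 K(K-1)$ or, more simply, any $\delta$ of order $\delta_0/K^2$ so that the ``$+\delta$'' contribution $\tfrac{K(K-1)}{2}\delta$ is at most $\epsilon_0/2$; the hypothesis $\delta_0 \le \epsilon_0/K^2$ is exactly what makes this possible while keeping the overall failure probability $\delta$ at most $\delta_0$. What remains is to force $\tfrac{K(K-1)}{2}\widetilde{G}(\tau_m) \le \epsilon_0/2$.

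Next I would invoke Proposition~\ref{Prop: multiclass ploy -n}: for $\tau_m < \zeta$ with $\zeta = \Omega(1)$, we have $\widetilde{G}(\tau_m) \le \tfrac{\alpha}{(\tau_m - \zeta)^2 n}$ where $\alpha = O(n^{r-1})$ for some $r\in[1,2)$. The key observation is that $\tau_m \to 0$ as $m$ grows, so for $m$ large enough $\tau_m \le \zeta/2$, whence $(\tau_m - \zeta)^2 \ge \zeta^2/4 = \Omega(1)$ and therefore $\widetilde{G}(\tau_m) = O(\alpha/n) = O(n^{r-2})$, which is polynomially small in $n$ since $r < 2$. Thus for any fixed $K$, once $n$ and $m$ are large enough, $\tfrac{K(K-1)}{2}\widetilde{G}(\tau_m)$ is driven below $\epsilon_0/2$ — in fact it is already below $\epsilon_0/2$ as soon as $m$ is large enough that $\tau_m \le \zeta/2$, because then the whole quantity is $O(n^{r-2})$ independent of $m$. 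So the binding constraint on $m$ is simply $\tau_m \le \zeta/2$, i.e. $O\bigl(\sqrt{\tfrac{1}{m}\log(\tfrac{n}{\delta})}\bigr) \le \zeta/2$, which rearranges to $m = \Omega\bigl(\tfrac{1}{\zeta^2}\log(\tfrac{n}{\delta})\bigr) = O(\log n)$ for fixed $K$, $\epsilon_0$, $\delta_0$ and $\zeta = \Omega(1)$. (One should also confirm that the polynomially-small term, combined across the $\tbinom{K}{2}$ pairs, is genuinely below $\epsilon_0/2$; since $\epsilon_0$ is a fixed constant and the term vanishes as $n\to\infty$, this holds for all sufficiently large $n$, and for the finitely many remaining small $n$ one can absorb them into the $O(\cdot)$ constant.)

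The main obstacle — really the only delicate point — is the interplay between the two ``large enough'' requirements: $\tau_m$ must be strictly below $\zeta$ for Proposition~\ref{Prop: multiclass ploy -n} to apply at all, and this is precisely what costs $m = \Omega(\log n)$ rather than $m = O(1)$, since $\tau_m$ carries the $\sqrt{\log(n/\delta)}$ factor inherited from the union bound over the $n$ coordinates in the concentration step behind Theorem~\ref{Thm: multiclass generalization bound}. I would therefore be careful to track that $\log(n/\delta) = \log n + \log(1/\delta) = O(\log n)$ for the chosen $\delta = \Theta(\delta_0/K^2)$, so that the final sample complexity is genuinely $O(\log n)$ with the constant depending only on $\zeta$, $K$, $\epsilon_0$, $\delta_0$. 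Everything else is bookkeeping: substitute, split $\epsilon_0$ into the $\widetilde{G}$ part and the $\delta$ part, and read off $m$.
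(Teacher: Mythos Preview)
Your proposal is correct and takes essentially the same approach as the paper: combine Theorem~\ref{Thm: multiclass generalization bound} with Proposition~\ref{Prop: multiclass ploy -n}, split the $\epsilon_0$ budget between the $\widetilde{G}$ term and the additive $\delta$ term, and observe that the binding constraint on $m$ is $\tau_m \le c\zeta$ for some constant $c<1$, which unwinds to $m = O(\log n)$. One small slip: your first suggestion $\delta = \delta_0 K(K-1)$ goes the wrong way (it would make the failure probability exceed $\delta_0$); the correct and simplest choice is just $\delta = \delta_0$, and the hypothesis $\delta_0 \le \epsilon_0/K^2$ then immediately gives $\tfrac{K(K-1)}{2}\delta < \epsilon_0/2$ as needed --- this is exactly what the paper does.
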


\textbf{Logistic Regression.} To directly compare with na\"ive Bayes, we aim to bound $R_{\ell_{0-1}} (\boldsymbol{h}_{Dis, m}) - R_{\ell_{0-1}} (\boldsymbol{h}_{Dis, \infty})$. However, the optimization of logistic regression does not have an analytic form, making the proof idea of na\"ive Bayes infeasible. Besides, \citet{DBLP:conf/nips/NgJ01} proves the bound by directly optimizing the zero-one loss, which is impractical. Instead, we present a bound considering the surrogate logistic loss in this paper.  To establish it, we exploit recent advances on $\mathcal{H}$-consistency bound~\cite{DBLP:conf/icml/AwasthiMM022} as detailed in Defition~\ref{Def :h consistency bound}. It is worth discussing an alternative approach based on \emph{Bayes consistency bounds}~\cite{bartlett2006convexity}. For a direct comparison with na\"ive Bayes, we care about the asymptotic error in $\mathcal{H}_{lin}$ instead of $\mathcal{H}_{all}$. Therefore, a $\mathcal{H}$-consistency bound is more natural and potentially tighter than a Bayes consistency bound. In fact, existing Bayes consistency bounds~\cite{bartlett2006convexity} are special cases of the $\mathcal{H}$-consistency bounds~\cite{DBLP:conf/icml/AwasthiMM022}.

Note that the binary $\mathcal{H}$-consistency bound~\cite{DBLP:conf/icml/AwasthiMM022} in Theorem~\ref{lemma: binary H consistency bounds} does not directly apply to multiclass cases. We generalize the binary framework~\cite{DBLP:conf/icml/AwasthiMM022} to multiclass cases and prove an explicit $\mathcal{H}$-consistency bound for logistic loss. We present the bound in Theorem~\ref{thm: H-consistency bound for log} and defer the establishment to Section~\ref{sec: multiclass H-consistency framework}.

\begin{theorem}
[$\mathcal{H}$-consistency bound for multiclass logistic loss and zero-one loss, proof in Appendix~\ref{proof: thm: H-consistency bound for log}]
\label{thm: H-consistency bound for log}
If $R_{\ell_{log}}(\vh) - R^*_{\ell_{log}, \mathcal{H}_{lin}} + M_{\ell_{log}, \mathcal{H}_{lin}} \leq \frac{1}{2}({\frac{e^{2B}-1}{e^{2B}+ K - 1}})^2$, then for any distribution satisfiying $\max_y p_y(\vx) - \min_y p_y(\vx) \leq \frac{e^{2B} - 1}{e^{2B} + K - 1}$ for all $\vx$, it holds that $R_{\ell_{0-1}}(\boldsymbol{h}) - R^*_{\ell_{0-1}, \mathcal{H}_{lin}} + M_{\ell_{0-1}, \mathcal{H}_{lin}} \leq \sqrt{2}(R_{\ell_{log}}(\boldsymbol{h}) - R^*_{\ell_{log}, \mathcal{H}_{lin}} + M_{\ell_{log}, \mathcal{H}_{lin}})^{\frac{1}{2}}$.
\end{theorem}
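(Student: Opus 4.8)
The plan is to obtain Theorem~\ref{thm: H-consistency bound for log} as an instance of the general multiclass framework (Theorem~\ref{cor: Distribution-independent convex Psi bound}), exactly as Theorem~\ref{lemma: binary H consistency bounds} is obtained in the binary case. Concretely, it suffices to produce a convex non-decreasing $\Psi:\mathbb{R}_+\to\mathbb{R}_+$ with $\Psi(0)=0$ for which the \emph{pointwise} inequality $\Psi\bigl(\Delta\mathscr{C}_{\ell_{0-1},\mathcal{H}_{lin}}(\vh,\vx,\vp)\bigr)\le \Delta\mathscr{C}_{\ell_{log},\mathcal{H}_{lin}}(\vh,\vx,\vp)$ holds for every $\vh\in\mathcal{H}_{lin}$, every $\vx\in\mathcal{X}$ and every $\vp\in\Delta_K$ with $\max_y p_y-\min_y p_y\le \theta$, where $\theta:=\tfrac{e^{2B}-1}{e^{2B}+K-1}$ (possibly after passing through the $\epsilon$-regret $\langle\cdot\rangle_\epsilon$ and letting $\epsilon\to 0$). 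Granting this, the framework converts the pointwise bound into $\Psi\bigl(R_{\ell_{0-1}}(\vh)-R^*_{\ell_{0-1},\mathcal{H}_{lin}}+M_{\ell_{0-1},\mathcal{H}_{lin}}\bigr)\le R_{\ell_{log}}(\vh)-R^*_{\ell_{log},\mathcal{H}_{lin}}+M_{\ell_{log},\mathcal{H}_{lin}}$ by Jensen's inequality, and taking $\Psi(t)=t^2/2$ (the transform that is exact in the small-regret regime isolated by the precondition) and inverting gives the stated $\sqrt{2}(\cdot)^{1/2}$ bound.

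\textbf{Pointwise analysis.} Fix $\vx$. For the zero-one loss, because any target class $k$ is realized as $\argmax_y h_y(\vx)$ by the admissible hypothesis with $\vw_y=0$, $b_k=B$ and $b_{y'}=-B$ for $y'\ne k$, we get $\mathscr{C}_{\ell_{0-1},\mathcal{H}_{lin}}^*(\vx,\vp)=1-\max_y p_y$, hence $\Delta\mathscr{C}_{\ell_{0-1},\mathcal{H}_{lin}}(\vh,\vx,\vp)=\max_y p_y-p_{\hat y}$ with $\hat y:=\argmax_y h_y(\vx)$. For the logistic loss, writing $\vq=\mathrm{softmax}(\vh(\vx))$, a direct computation gives $\mathscr{C}_{\ell_{log}}(\vh,\vx,\vp)=H(\vp)+D(\vp\Vert\vq)$ with $H(\vp)$ the Shannon entropy, so $\Delta\mathscr{C}_{\ell_{log},\mathcal{H}_{lin}}(\vh,\vx,\vp)=D(\vp\Vert\vq)-\inf_{\vq'\in\mathcal{Q}_\vx}D(\vp\Vert\vq')$, where $\mathcal{Q}_\vx:=\{\mathrm{softmax}(\vh(\vx)):\vh\in\mathcal{H}_{lin}\}$. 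The worst $\vx$ is the one where the linear part can be made to vanish; there $\mathcal{Q}_\vx$ shrinks to softmaxes of logits with pairwise spread at most $2B$, i.e. to the set of $\vq$ with $\max_y q_y/\min_y q_y\le e^{2B}$, which is precisely where the threshold $\theta$ (and hence the $e^{2B}$ appearing in the statement) originates.

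\textbf{Finishing.} If $\hat y=\argmax_y p_y$ the left-hand side vanishes and there is nothing to prove. Otherwise $q_{\hat y}\ge q_{y^\star}$ with $y^\star:=\argmax_y p_y$, so the two coordinates $y^\star$ and $\hat y$ alone account for the zero-one regret: $\Delta\mathscr{C}_{\ell_{0-1},\mathcal{H}_{lin}}(\vh,\vx,\vp)=p_{y^\star}-p_{\hat y}\le |p_{y^\star}-q_{y^\star}|+|q_{\hat y}-p_{\hat y}|\le \Vert\vp-\vq\Vert_1$. Pinsker's inequality then gives $D(\vp\Vert\vq)\ge\tfrac12\Vert\vp-\vq\Vert_1^2\ge\tfrac12\bigl(\Delta\mathscr{C}_{\ell_{0-1},\mathcal{H}_{lin}}(\vh,\vx,\vp)\bigr)^2$; the remaining work is to show that subtracting the projection term $\inf_{\vq'\in\mathcal{Q}_\vx}D(\vp\Vert\vq')$ does not spoil this quadratic lower bound under the hypothesis $\max_y p_y-\min_y p_y\le\theta$, which I would handle by KL-projecting $\vp$ onto $\mathcal{Q}_\vx$ (the projection preserves the top class of $\vp$ and merely clips the log-ratios) and reducing to a one-dimensional estimate on the $(y^\star,\hat y)$ pair via the grouping/data-processing property of the KL divergence. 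Feeding $\Psi(t)=t^2/2$ into Theorem~\ref{cor: Distribution-independent convex Psi bound} and inverting yields the theorem, with the precondition $R_{\ell_{log}}(\vh)-R^*_{\ell_{log},\mathcal{H}_{lin}}+M_{\ell_{log},\mathcal{H}_{lin}}\le\tfrac12\theta^2$ keeping us in the range $\Delta\mathscr{C}_{\ell_{0-1}}\le\theta$ where the quadratic transform is the correct one. I expect the main obstacle to be exactly this pointwise step for the constrained class — pinning down $\mathcal{Q}_\vx$ at the worst-case point, controlling the KL-projection term, and thereby extracting the exact threshold $\tfrac{e^{2B}-1}{e^{2B}+K-1}$ — rather than the routine Jensen-and-invert wrap-up.
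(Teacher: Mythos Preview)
Your high-level scheme—instantiate Theorem~\ref{cor: Distribution-independent convex Psi bound} with $g(t)=t^2/2$—is exactly the paper's. Your Pinsker shortcut for the constrained infimum over $\mathcal{H}_{\hat y}(\vx)$ is correct and is genuinely cleaner than what the paper does: the paper computes $\inf_{\vh\in\mathcal{H}_{\hat y}(\vx)}\mathscr{C}_{\ell_{log}}(\vh,\vx,\vp)$ explicitly by KKT (their Lemma~\ref{lemma: inf H bar delta C}), obtaining the two-point formula $-(p_{\max}+p_{\hat y})\log\tfrac{p_{\max}+p_{\hat y}}{2}-\sum_{y\notin\{y_{\max},\hat y\}}p_y\log p_y$, then minimizes the resulting gap over $\vp$ (Lemma~\ref{lemma: technical lemma 8}) and lower-bounds by $t^2/2$ (Lemma~\ref{lemma: technical lemma 9}). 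Your argument that $\|\vp-\vq\|_1\ge|p_{y^\star}-q_{y^\star}|+|p_{\hat y}-q_{\hat y}|\ge |(p_{y^\star}-p_{\hat y})+(q_{\hat y}-q_{y^\star})|\ge t$ whenever $q_{\hat y}\ge q_{y^\star}$, followed by Pinsker, reaches the same $t^2/2$ without any of this optimization.

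The genuine gap is your treatment of the second infimum, $\inf_{\vq'\in\mathcal{Q}_\vx}D(\vp\Vert\vq')$. You propose to ``KL-project $\vp$ onto $\mathcal{Q}_\vx$'' and control the residual via grouping/data-processing, but this is both unnecessary and not clearly sufficient. The entire content of the distribution hypothesis $\max_y p_y-\min_y p_y\le\theta$ is that $\vp$ \emph{already lies in} $\mathcal{Q}_\vx$ for every $\vx$—equivalently, the unconstrained minimizer $h_y=\log p_y+\text{const}$ fits inside the box $|h_y|\le W\|\vx\|+B$—so the projection term is exactly zero and $\Delta\mathscr{C}_{\ell_{log},\mathcal{H}_{lin}}=D(\vp\Vert\vq)$. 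The paper proves this equivalence by a separate KKT analysis (Lemma~\ref{lemma: t leq T}), and it is precisely where the threshold $\theta=\tfrac{e^{2B}-1}{e^{2B}+K-1}$ gets pinned down (as the worst case over $\vx$, attained when the linear part vanishes). Without this realizability step your Pinsker bound on $D(\vp\Vert\vq)$ does not transfer to $\Delta\mathscr{C}_{\ell_{log},\mathcal{H}_{lin}}$, and the pointwise inequality is unproved. Once you add this lemma, your argument is complete and shorter than the paper's.

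Two minor points: the $\epsilon$-regret $\langle\cdot\rangle_\epsilon$ is not used here via a limit—one simply takes $\epsilon=0$ in Proposition~\ref{Thm: Distribution-dependent convex bound}; and the precondition $R_{\ell_{log}}(\vh)-R^*_{\ell_{log},\mathcal{H}_{lin}}+M_{\ell_{log},\mathcal{H}_{lin}}\le\tfrac12\theta^2$ does not ``keep $\Delta\mathscr{C}_{\ell_{0-1}}\le\theta$'' (the distribution condition does that)—it just keeps the inverted bound $\sqrt{2(\cdot)}$ inside the range $[0,\theta]$ where $g(t)\le\mathcal{J}_\ell(t)$ has been verified.
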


Note that $R_{\ell_{0-1}} (\boldsymbol{h}_{Dis, \infty}) = R^*_{\ell_{log}, \mathcal{H}_{lin}}$ by the definition. Besides, when $B \to +\infty$, we have $ \frac{e^{2B} - 1}{e^{2B} + K - 1} \to 1$, and Theorem~\ref{thm: H-consistency bound for log}  holds for all distribution. Theorem~\ref{thm: H-consistency bound for log} provides a tool to analyze the asymptotic behavior of multiclass logistic regression considering the surrogate loss. According to it, we need to bound the gap $R_{\ell_{log}}(\boldsymbol{h}_{Dis, m})-R_{\ell_{log}}(\boldsymbol{h}_{Dis, \infty})$ and $M_{\ell_{log}, \mathcal{H}_{lin}}$ to guarantee a small $R_{\ell_{0-1}}(\boldsymbol{h}_{Dis, m})-R_{\ell_{0-1}}(\boldsymbol{h}_{Dis, \infty})$. The following Proposition characterizes $R_{\ell_{log}}(\boldsymbol{h}_{Dis, m})-R_{\ell_{log}}(\boldsymbol{h}_{Dis, \infty})$ by Radmancher complexity~\cite{DBLP:conf/colt/BartlettBM02, mohri2018foundations} and a contraction lemma~\cite{maurer2016vector}.

\begin{prop}[Proof in appendix~\ref{proof: Prop: multiclass logisticbound}]
\label{Prop: multiclass logisticbound}
For any fixed $\delta_0 \in (0,1)$, with probability at least $1 - \delta_0$, the following holds:
\begin{equation*}
\label{Eq: multiclass logisticboundO}
      R_{\ell_{log}}(\boldsymbol{h}_{Dis, m}) \leq R_{\ell_{0-1}} (\boldsymbol{h}_{Dis, \infty}) + O(\sqrt{\frac{K^3n}{m}}).
\end{equation*}
\end{prop}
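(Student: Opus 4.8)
The plan is to decompose $R_{\ell_{log}}(\boldsymbol{h}_{Dis,m}) - R_{\ell_{0-1}}(\boldsymbol{h}_{Dis,\infty})$ into a generalization term and an approximation term, and control the first by a Rademacher complexity argument while recalling that the second vanishes by definition. Recall that $\boldsymbol{h}_{Dis,m}$ is the empirical risk minimizer of the logistic loss over $\mathcal{H}_{lin}$ on $m$ i.i.d.\ samples, and that $\boldsymbol{h}_{Dis,\infty}$ achieves $R_{\ell_{log},\mathcal{H}_{lin}}^*$, so by definition $R_{\ell_{0-1}}(\boldsymbol{h}_{Dis,\infty}) = R_{\ell_{log},\mathcal{H}_{lin}}^*$ (as the text notes just below Theorem~\ref{thm: H-consistency bound for log}). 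Thus it suffices to show
\begin{equation*}
R_{\ell_{log}}(\boldsymbol{h}_{Dis,m}) - R_{\ell_{log},\mathcal{H}_{lin}}^* \leq O\!\left(\sqrt{\tfrac{K^3 n}{m}}\right)
\end{equation*}
with probability at least $1-\delta_0$. First I would apply the standard uniform-convergence bound: with probability $1-\delta_0$,
\begin{equation*}
R_{\ell_{log}}(\boldsymbol{h}_{Dis,m}) - R_{\ell_{log},\mathcal{H}_{lin}}^* \leq 2\,\mathfrak{R}_m(\ell_{log}\circ\mathcal{H}_{lin}) + O\!\left(\sqrt{\tfrac{\log(1/\delta_0)}{m}}\right),
\end{equation*}
using that the ERM beats $\boldsymbol{h}_{Dis,\infty}$ on the empirical risk and a two-sided deviation bound; this requires the logistic loss to be bounded on $\mathcal{H}_{lin}$, which holds since $\|\vw_y\|_2\le W$, $|b_y|\le B$ and $\vx\in[0,1]^n$ force $|h_y(\vx)|\le W\sqrt{n}+B$, giving a loss bound of order $\log K + W\sqrt{n} + B$.

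The core step is bounding the Rademacher complexity of the composed class. I would peel off the loss using a vector-contraction inequality in the style of~\citet{maurer2016vector}: the multiclass logistic loss $\ell_{log}(\boldsymbol{z},y) = \log\sum_{y'}e^{z_{y'}-z_y}$ is Lipschitz (with a dimension-dependent constant, roughly $\sqrt{2}$ in the appropriate norm, contributing a factor that scales like $\sqrt{K}$ after the vector-contraction bookkeeping) as a function of the score vector $\boldsymbol{z}=\boldsymbol{h}(\vx)\in\mathbb{R}^K$. Maurer's lemma then reduces $\mathfrak{R}_m(\ell_{log}\circ\mathcal{H}_{lin})$ to (a constant times) the Rademacher complexity of the $K$-fold vector-valued linear class, which in turn is $\sqrt{K}$ times the Rademacher complexity of the scalar linear-plus-bias class $\{\vx\mapsto\langle\vw,\vx\rangle+b : \|\vw\|_2\le W, |b|\le B\}$. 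That scalar complexity is the textbook $O((W\sqrt{n}+B)/\sqrt{m})$ bound (e.g.~\citet{mohri2018foundations}), using $\|\vx\|_2\le\sqrt{n}$. Collecting the factors: one $\sqrt{K}$ from the contraction Lipschitz constant, one $\sqrt{K}$ from the vector-to-scalar reduction, and the $\sqrt{n}$ from the feature norm — so $\mathfrak{R}_m(\ell_{log}\circ\mathcal{H}_{lin}) = O(K\sqrt{n}/\sqrt{m})$; after another careful accounting of how the softmax Lipschitz constant interacts with the $\ell_2$/$\ell_\infty$ geometry one lands at the stated $O(\sqrt{K^3 n/m})$.

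The main obstacle I expect is getting the $K$-dependence exactly right: the multiclass logistic loss is not globally Lipschitz in the Euclidean norm with a $K$-free constant, and the interplay between (i) the norm in which the softmax/log-sum-exp is Lipschitz, (ii) the norm used in the vector contraction inequality, and (iii) the $\ell_2$ ball constraints on each $\vw_y$ must be tracked carefully to avoid losing or gaining factors of $\sqrt{K}$. A clean route is to bound $\|\nabla_{\boldsymbol z}\ell_{log}\|$ in $\ell_1$ (it is at most $2$, since the gradient is a difference of two probability-vector-like objects), pair this with the dual-norm ($\ell_\infty$) version of the contraction lemma, and then note that the $\ell_\infty$-to-$\ell_2$ passage over $K$ coordinates costs the additional $\sqrt{K}$; combined with the $\sqrt{K}$ from summing $K$ weight vectors and the $\sqrt{n}$ from the domain, this yields the advertised rate. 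The additive $\sqrt{\log(1/\delta_0)/m}$ concentration term and the loss-range factor are lower-order and absorbed into the $O(\cdot)$, since $\log K + W\sqrt n + B = O(\sqrt{K^3 n})$ for fixed $W,B,K$. This completes the plan.
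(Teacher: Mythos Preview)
Your proposal follows the same route as the paper: the ERM decomposition into a uniform-deviation term plus a Hoeffding term, and control of $\mathfrak{R}_m(\ell_{log}\circ\mathcal{H}_{lin})$ via Maurer's vector contraction lemma. Where your sketch is imprecise is the $K$-bookkeeping. The paper makes this clean by first splitting $\ell_{log}(\boldsymbol{h}(\vx),y)=\log\sum_k e^{h_k(\vx)} - h_y(\vx)$ and bounding the two pieces separately. The log-sum-exp map $\Phi(\boldsymbol z)=\log\sum_k e^{z_k}$ is $\sqrt{K}$-Lipschitz in $\ell_2$ (each partial derivative is at most $1$, so $\|\nabla\Phi\|_2\le\sqrt{K}$); Maurer's lemma then gives $\sqrt{2K}\cdot\tfrac{1}{m}\mathbb{E}_{\sigma}\sup_{\boldsymbol h}\sum_{i,k}\sigma_{ik}h_k(\vx_i)$, and this doubly-indexed quantity is bounded by $WK\sqrt{n/m}$---a factor of $K$, not $\sqrt{K}$, because one simply sums $K$ separate linear coordinate classes. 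The remaining $-h_y(\vx)$ piece contributes another $KW\sqrt{n/m}$. Thus $K^{3/2}$ emerges directly as $\sqrt{K}\cdot K$, with no need for the $\ell_1/\ell_\infty$ detour (for which, incidentally, there is no off-the-shelf analogue of Maurer's $\ell_2$ contraction) or any further ``careful accounting'' beyond what you already have.
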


$M_{\ell, \mathcal{H}}$ is a constant determined by the hypothesis set $\mathcal{H}$, loss function $\ell$, and data distribution $\mathcal{D}$. Its value is difficult to  estimate directly~\cite{DBLP:conf/icml/AwasthiMM022}. However, according to the definition, $M_{\ell, \mathcal{H}}$ can be bounded by the corresponding approximate error. Prior works~\cite{DBLP:conf/icml/SaunshiPAKK19, DBLP:conf/nips/LeeLSZ21, DBLP:conf/alt/ToshK021, DBLP:conf/nips/HaoChenWGM21} prove the (approximate) linear separability of the representations extracted by deep pre-trained models, suggesting a small approximation error for the logistic loss. Therefore, we make the following assumption, which is validatable in the context of linear evaluation of deep models.

\begin{assumption}
\label{Assumption: Optimal classifier has finite empirical loss}
The approximate error of the logistic loss is bounded by a small constant $\nu < \frac{1}{2}({\frac{e^{2B}-1}{e^{2B}+ K - 1}})^2$. Namely,  $\mathop{\mathrm{argmin}}_{\boldsymbol{h} \in \mathcal{H}_{lin}} R_{\ell_{log}}(\boldsymbol{h}) - \mathop{\mathrm{argmin}}_{\boldsymbol{h} \in \mathcal{H}_{all}} R_{\ell_{log}}(\boldsymbol{h}) \leq \nu$, which implies that $M_{\ell_{log}, \mathcal{H}_{lin}} \le \nu$.
\end{assumption}

We characterize the number of samples required to approach the asymptotic error for logistic regression in Theorem~\ref{cor: sample complexity of multiclass lr} by combining Proposition~\ref{Prop: multiclass logisticbound} and Theorem~\ref{thm: H-consistency bound for log}.

\begin{theorem}[Results for multiclass logistic regression, proof in appendix~\ref{proof: Proof of Corollary cor: sample complexity of multiclass lr}]
\label{cor: sample complexity of multiclass lr}
Suppose that Assumption~\ref{Assumption: Optimal classifier has finite empirical loss} holds.  Then, it suffices to pick $m = O(n)$ training samples such that $R_{\ell_{0-1}}(\boldsymbol{h}_{Dis, m}) \leq R_{\ell_{0-1}}(\boldsymbol{h}_{Dis, \infty}) +\epsilon_0$  hold with probability $1 - \delta_0$, for any fixed $\epsilon_0 \in [\sqrt{2\nu}, {\frac{e^{2B}-1}{e^{2B}+ K - 1}}]$ and $\delta_0 \in (0, 1)$.
\end{theorem}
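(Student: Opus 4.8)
The plan is to chain together Proposition~\ref{Prop: multiclass logisticbound}, Theorem~\ref{thm: H-consistency bound for log}, and Assumption~\ref{Assumption: Optimal classifier has finite empirical loss}, then solve for the sample size $m$ that makes the right-hand side at most $\epsilon_0$. First I would recall the key identity already noted in the text: $R_{\ell_{0-1}}(\boldsymbol{h}_{Dis,\infty}) = R^*_{\ell_{log},\mathcal{H}_{lin}}$ is not quite what we need; rather, $\boldsymbol{h}_{Dis,\infty}$ is the minimizer of $R_{\ell_{log}}$ over $\mathcal{H}_{lin}$, so $R_{\ell_{log}}(\boldsymbol{h}_{Dis,\infty}) = R^*_{\ell_{log},\mathcal{H}_{lin}}$, and separately $R_{\ell_{0-1}}(\boldsymbol{h}_{Dis,\infty})$ relates to the zero-one risk of that same classifier. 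With this in hand, Proposition~\ref{Prop: multiclass logisticbound} gives, with probability at least $1-\delta_0$, that $R_{\ell_{log}}(\boldsymbol{h}_{Dis,m}) - R^*_{\ell_{log},\mathcal{H}_{lin}} \le O(\sqrt{K^3 n/m})$, since $R_{\ell_{0-1}}(\boldsymbol{h}_{Dis,\infty}) = R^*_{\ell_{log},\mathcal{H}_{lin}}$ up to the identification used in the statement.

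Next I would verify the precondition of Theorem~\ref{thm: H-consistency bound for log}: we need $R_{\ell_{log}}(\boldsymbol{h}_{Dis,m}) - R^*_{\ell_{log},\mathcal{H}_{lin}} + M_{\ell_{log},\mathcal{H}_{lin}} \le \frac12\bigl(\frac{e^{2B}-1}{e^{2B}+K-1}\bigr)^2$. By Assumption~\ref{Assumption: Optimal classifier has finite empirical loss} we have $M_{\ell_{log},\mathcal{H}_{lin}} \le \nu < \frac12\bigl(\frac{e^{2B}-1}{e^{2B}+K-1}\bigr)^2$, so it is enough that the Rademacher term $O(\sqrt{K^3 n/m})$ be at most $\frac12\bigl(\frac{e^{2B}-1}{e^{2B}+K-1}\bigr)^2 - \nu$, which holds once $m = \Omega(n)$ with a constant depending on $K$, $B$, $\nu$. (I would also note that the distributional side condition $\max_y p_y(\vx) - \min_y p_y(\vx) \le \frac{e^{2B}-1}{e^{2B}+K-1}$ is assumed to hold, or that one takes $B$ large enough; this is the implicit standing assumption matching the binary case.) Applying Theorem~\ref{thm: H-consistency bound for log} then yields
\begin{align*}
R_{\ell_{0-1}}(\boldsymbol{h}_{Dis,m}) - R_{\ell_{0-1}}(\boldsymbol{h}_{Dis,\infty}) + M_{\ell_{0-1},\mathcal{H}_{lin}} \le \sqrt{2}\Bigl(O(\sqrt{K^3 n/m}) + \nu\Bigr)^{1/2}.
\end{align*}

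To finish, I would drop the nonnegative term $M_{\ell_{0-1},\mathcal{H}_{lin}}$ on the left (it is $\ge 0$ by definition of $M_{\ell,\mathcal{H}}$, since $R^*_{\ell,\mathcal{H}} \ge \mathbb{E}_{\vx}\mathscr{C}^*_{\ell,\mathcal{H}}(\vx)$), and then require $\sqrt{2}\bigl(O(\sqrt{K^3 n/m}) + \nu\bigr)^{1/2} \le \epsilon_0$. Since $\epsilon_0 \ge \sqrt{2\nu}$ by hypothesis, the contribution $\sqrt{2\nu}$ from $\nu$ alone is within budget; solving $\sqrt{2}\cdot O(\sqrt{K^3 n/m})^{1/2} \lesssim \epsilon_0$ — equivalently $O(\sqrt{K^3 n/m}) \lesssim \epsilon_0^2$ — gives $m = O(K^3 n/\epsilon_0^4)$, i.e. $m = O(n)$ for fixed $K$, $\epsilon_0$. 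Combining with the $\Omega(n)$ requirement from the precondition check, $m = O(n)$ suffices, and the failure probability is the single $\delta_0$ from Proposition~\ref{Prop: multiclass logisticbound}.

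The main obstacle I anticipate is bookkeeping around $M_{\ell_{0-1},\mathcal{H}_{lin}}$ and the precise meaning of ``approach the asymptotic error'': one must be careful that dropping $M_{\ell_{0-1},\mathcal{H}_{lin}} \ge 0$ goes in the favorable direction and that the quantity being bounded, $R_{\ell_{0-1}}(\boldsymbol{h}_{Dis,m}) - R_{\ell_{0-1}}(\boldsymbol{h}_{Dis,\infty})$, is exactly $R_{\ell_{0-1}}(\boldsymbol{h}_{Dis,m}) - R^*_{\ell_{0-1},\mathcal{H}_{lin}}$ only if $\boldsymbol{h}_{Dis,\infty}$ is zero-one optimal in $\mathcal{H}_{lin}$ — which is not generally true, so one actually needs $R_{\ell_{0-1}}(\boldsymbol{h}_{Dis,m}) - R_{\ell_{0-1}}(\boldsymbol{h}_{Dis,\infty}) \le R_{\ell_{0-1}}(\boldsymbol{h}_{Dis,m}) - R^*_{\ell_{0-1},\mathcal{H}_{lin}} + (R^*_{\ell_{0-1},\mathcal{H}_{lin}} - R_{\ell_{0-1}}(\boldsymbol{h}_{Dis,\infty}))$, and the last parenthesis is $\le 0$, so the chain still closes. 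The rest is routine: invert the $O(\sqrt{K^3 n/m})$ bound and track constants through the square roots.
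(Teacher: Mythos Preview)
Your proposal is correct and follows the same approach as the paper: chain Proposition~\ref{Prop: multiclass logisticbound} and Theorem~\ref{thm: H-consistency bound for log} under Assumption~\ref{Assumption: Optimal classifier has finite empirical loss}, then invert to get $m = O(K^3 n/\epsilon_0^4) = O(n)$. You are actually more careful than the paper's two-line proof, which simply asserts that $R_{\ell_{log}}(\boldsymbol{h}_{Dis,m}) - R_{\ell_{log}}(\boldsymbol{h}_{Dis,\infty}) \le \tfrac{1}{2}\epsilon_0^2$ suffices without explicitly tracking $M_{\ell_{log},\mathcal{H}_{lin}}$, $M_{\ell_{0-1},\mathcal{H}_{lin}}$, or the gap $R^*_{\ell_{0-1},\mathcal{H}_{lin}} - R_{\ell_{0-1}}(\boldsymbol{h}_{Dis,\infty}) \le 0$; your bookkeeping on these points is sound and closes the argument properly.
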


Notably, according to the multiclass fundamental theorem (Theorem 29.3 of~\citet{shalev2014understanding}), the sample complexity of $\mathcal{H}_{lin}$ for any algorithm is $\Omega(n)$ because the Natarajan dimension for $\mathcal{H}_{lin}$ is $\Omega(Kn)$, indicating the upper bound in Thereom~\ref{Prop: multiclass logisticbound} is tight with respect to the dimension $n$.

Theorem~\ref{cor: multiclass NB sample complexity} and Theorem~\ref{cor: sample complexity of multiclass lr} show that the $O(n)$ vs. $O(\log(n))$ result~\cite{DBLP:conf/nips/NgJ01} still holds in multiclass cases, which suggests that na\"ive Bayes is possibly better than logistic regression when the sample size is limited. We validate our theory on a mixture of Gaussian distribution, as presented in Figuire~\ref{figures: multiclass simulation}. For a fixed feature dimension $n$, we increase the number of samples $m$ until the two models approach the corresponding asymptotic error, which is tractable in the experiment. Detailed configurations of the experiments and additional results are presented in Appendix~\ref{app: Configurations of Simulation Experiment}.

\subsection{Multiclass $\mathcal{H}$-consistency Framework}
\label{sec: multiclass H-consistency framework}


We now present the general multiclass $\mathcal{H}$-consistency bound framework and prove the explicit bound for the logistic loss in Theorem~\ref{thm: H-consistency bound for log}, which are of independent interest. Similarly to the binary case~\cite{DBLP:conf/icml/AwasthiMM022}, we first introduce the following general multiclass $\mathcal{H}$-consistency bound between any target loss $\ell_2$ and surrogate loss $\ell_1$.
\begin{prop}[Distribution-dependent convex bound, proof in Appendix~\ref{proof: Thm: Distribution-dependent convex bound}]
\label{Thm: Distribution-dependent convex bound}
For a fixed distribution, if there exists a convex function $g: \mathbb{R}_+ \to \mathbb{R}$ with $g(0) \ge 0$ and $\epsilon \ge 0$, and the following holds for any $\boldsymbol{h} \in \mathcal{H}$ and $\vx \in \mathcal{X}$:
\begin{equation}
\label{eqn: Thm: Distribution-dependent convex bound 1}
    g(\langle \Delta \mathscr{C}_{\ell_2, \mathcal{H}}(\boldsymbol{h}, \vx) \rangle_\epsilon) \leq \Delta \mathscr{C}_{\ell_1, \mathcal{H}}(\boldsymbol{h}, \vx).
\end{equation}
Then it holds for all $\boldsymbol{h} \in \mathcal{H}$ that
\begin{align}
    &g(R_{\ell_2}(\boldsymbol{h}) - R^*_{\ell_2, \mathcal{H}} + M_{\ell_2, \mathcal{H}}) \nonumber \\
    &\leq R_{\ell_1}(\boldsymbol{h}) - R^*_{\ell_1, \mathcal{H}} + M_{\ell_1, \mathcal{H}} + \max(g(0), g(\epsilon)).\label{eq:results-of-dependent}
\end{align}
\end{prop}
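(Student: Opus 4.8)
The plan is to integrate the pointwise hypothesis \eqref{eqn: Thm: Distribution-dependent convex bound 1} over $\vx$ and convert the result into the stated form using three elementary ingredients: (i) for every loss $\ell$ and every $\boldsymbol{h}$, the identity $\mathbb{E}_{\vx}[\Delta \mathscr{C}_{\ell, \mathcal{H}}(\boldsymbol{h}, \vx)] = R_\ell(\boldsymbol{h}) - R^*_{\ell, \mathcal{H}} + M_{\ell, \mathcal{H}}$; (ii) Jensen's inequality for the convex $g$; and (iii) a truncation estimate showing that replacing $\langle \Delta \mathscr{C}_{\ell_2, \mathcal{H}}(\boldsymbol{h},\vx)\rangle_\epsilon$ by $\Delta \mathscr{C}_{\ell_2, \mathcal{H}}(\boldsymbol{h},\vx)$ costs at most $\max(g(0), g(\epsilon))$ in expectation.

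First I would fix $\boldsymbol{h} \in \mathcal{H}$ and abbreviate $t(\vx) = \Delta \mathscr{C}_{\ell_2, \mathcal{H}}(\boldsymbol{h}, \vx)$, noting $t(\vx) \ge 0$ since $\mathscr{C}^*_{\ell_2, \mathcal{H}}(\vx)$ is an infimum over $\mathcal{H}$. Ingredient (i) is just unwinding definitions: $\mathbb{E}_{\vx}[\Delta \mathscr{C}_{\ell, \mathcal{H}}(\boldsymbol{h}, \vx)] = \mathbb{E}_{\vx}[\mathscr{C}_\ell(\boldsymbol{h}, \vx)] - \mathbb{E}_{\vx}[\mathscr{C}^*_{\ell, \mathcal{H}}(\vx)] = R_\ell(\boldsymbol{h}) - (R^*_{\ell, \mathcal{H}} - M_{\ell, \mathcal{H}})$, using $R_\ell(\boldsymbol{h}) = \mathbb{E}_{\vx}[\mathscr{C}_\ell(\boldsymbol{h},\vx)]$ and $M_{\ell, \mathcal{H}} = R^*_{\ell, \mathcal{H}} - \mathbb{E}_{\vx}[\mathscr{C}^*_{\ell, \mathcal{H}}(\vx)]$. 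Taking $\mathbb{E}_{\vx}$ of \eqref{eqn: Thm: Distribution-dependent convex bound 1} and applying (i) with $\ell = \ell_1$ gives $\mathbb{E}_{\vx}[ g(\langle t(\vx)\rangle_\epsilon) ] \le R_{\ell_1}(\boldsymbol{h}) - R^*_{\ell_1, \mathcal{H}} + M_{\ell_1, \mathcal{H}}$; by (i) again, now with $\ell = \ell_2$, it remains to prove $g(\mathbb{E}_{\vx}[t(\vx)]) \le \mathbb{E}_{\vx}[g(\langle t(\vx)\rangle_\epsilon)] + \max(g(0), g(\epsilon))$.

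For this last inequality I would split the domain into $A = \{\vx : t(\vx) \le \epsilon\}$ and its complement. On $A^c$ we have $\langle t(\vx)\rangle_\epsilon = t(\vx)$, while on $A$ we have $\langle t(\vx)\rangle_\epsilon = 0$ and $t(\vx) \in [0,\epsilon]$; since $g$ is convex on $\mathbb{R}_+$ it attains its maximum over $[0,\epsilon]$ at an endpoint, so $g(t(\vx)) \le \max(g(0),g(\epsilon))$ there, and hence on $A$ we get $g(t(\vx)) - g(\langle t(\vx)\rangle_\epsilon) = g(t(\vx)) - g(0) \le \max(g(0), g(\epsilon))$ (here the hypothesis $g(0) \ge 0$ is used). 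Integrating, $\mathbb{E}_{\vx}[g(t(\vx))] \le \mathbb{E}_{\vx}[g(\langle t(\vx)\rangle_\epsilon)] + \mathbb{P}(A)\max(g(0),g(\epsilon)) \le \mathbb{E}_{\vx}[g(\langle t(\vx)\rangle_\epsilon)] + \max(g(0),g(\epsilon))$, and Jensen's inequality (ingredient (ii)) gives $g(\mathbb{E}_{\vx}[t(\vx)]) \le \mathbb{E}_{\vx}[g(t(\vx))]$. Chaining these with the integrated pointwise bound and the two instances of identity (i) yields exactly \eqref{eq:results-of-dependent}.

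I expect the truncation step to be the only genuine obstacle: one has to observe that the $\epsilon$-regret operator $\langle\cdot\rangle_\epsilon$ modifies the argument of $g$ only on the region where that argument lies in $[0,\epsilon]$, so convexity pins $g$ down there by its two endpoint values, and that $g(0)\ge 0$ is precisely what turns $g(t)-g(0)$ into the clean additive constant $\max(g(0),g(\epsilon))$. Everything else is a mechanical combination of Jensen's inequality and the definitions of $R_\ell$, $\mathscr{C}^*_{\ell,\mathcal{H}}$, and $M_{\ell,\mathcal{H}}$; in particular no monotonicity of $g$ is needed, and it is the nonnegativity of $t(\vx)=\Delta\mathscr{C}_{\ell_2,\mathcal{H}}(\boldsymbol{h},\vx)$ that makes the truncation and the interval $[0,\epsilon]$ meaningful.
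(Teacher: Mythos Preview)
Your proposal is correct and follows essentially the same approach as the paper's proof: both use the identity $\mathbb{E}_{\vx}[\Delta \mathscr{C}_{\ell, \mathcal{H}}(\boldsymbol{h}, \vx)] = R_\ell(\boldsymbol{h}) - R^*_{\ell, \mathcal{H}} + M_{\ell, \mathcal{H}}$, apply Jensen's inequality to pass $g$ inside the expectation, and then handle the $\epsilon$-truncation by splitting on $\{t(\vx)\le\epsilon\}$, using convexity to bound $g$ on $[0,\epsilon]$ by its endpoint values and $g(0)\ge 0$ to absorb the residual. The only cosmetic difference is that the paper writes the splitting as $g(t\,\mathbbm{1}_{>\epsilon}+t\,\mathbbm{1}_{\le\epsilon})\le g(t\,\mathbbm{1}_{>\epsilon})+g(t\,\mathbbm{1}_{\le\epsilon})$ (which is pointwise equivalent to your comparison of $g(t)$ with $g(\langle t\rangle_\epsilon)$), so the two arguments are the same in substance.
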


We present the concave counterpart of it as Proposition~\ref{Thm: Distribution-dependent concave bound} of Appendix~\ref{sec: Deferred Results: Multiclass H-consistency}. For simplicity, we fix the target loss $\ell_2$ as the zero-one loss in the following. Note that
Proposition~\ref{Thm: Distribution-dependent convex bound} is distribution-dependent while an asymptotically distribution-independent version is necessary for our analysis in Section~\ref{sec: Discriminative vs. Generative: Multiclass Classification}. 
To this end, we introduce a tool called \emph{multiclass $\mathcal{H}$-estimation error transformation}.

\begin{mydef}[Multiclass $\mathcal{H}$-estimation error transformation] \label{def:trans}
The multiclass $\mathcal{H}$-estimation error transformation of a surrogate loss $\ell$ is defined on $t \in [0,1]$ as $\mathcal{J}_{\ell}(t) = \inf_{\hat{y} \in \mathcal{Y}, \vp \in \mathcal{P}_{\hat{y}}(t), \vx \in \mathcal{X}, \boldsymbol{h} \in \mathcal{H}_{\hat{y}}(\vx) }\Delta \mathscr{C}_{\ell, \mathcal{H}}(\boldsymbol{h}, \vx, \vp)$. Here $\mathcal{H}_{\hat{y}}(\vx) \coloneqq \{\boldsymbol{h} \in \mathcal{H}: \mathop{\mathrm{argmax}}_{y  \in \mathcal{Y}} h_y(\vx) = \hat{y}\}$ is a collection of hypotheses that predicts $\vx$ as class $\hat{y}$. $\mathcal{P}_{\hat{y}}(t) \coloneqq \{\vp \in \Delta_K: \max_y p_y - p_{\hat{y}} = t\}$ is a subset of $K$-dimensional simplex indexed by classes and the gap between the max component and class-indexed component of $\vp$.
\end{mydef}

$\mathcal{J}_{\ell}(t)$ in Defition~\ref{def:trans} is carefully derived such that 
plugging it to the right-hand side of Eq.~(\ref{eqn: Thm: Distribution-dependent convex bound 1}) provides a sufficient condition such that Eq.~(\ref{eq:results-of-dependent}) holds for any $\vh, \vx$, and $\vp$ (i.e., distribution-independent). It is worth noting that the condition is actually necessary as well under further assumptions, as presented later in Theorem~\ref{thm:tightness}. Defition~\ref{def:trans} generalizes the binary freamwork~\cite{DBLP:conf/icml/AwasthiMM022} by optimizing $\vp$ in a collection of subsets $\mathcal{P}_{\hat{y}}(t)$ to handle multiclass cases. Built upon Defition~\ref{def:trans}, we establish the multiclass distribution-independent bound for zero-one loss as follows. 

\begin{theorem}
[Distribution-independent convex $\ell_{0-1}$ bound, proof in Appendix~\ref{proof: cor: Distribution-independent convex Psi bound}]
\label{cor: Distribution-independent convex Psi bound}
Suppose that $\mathcal{H}$ satisfies that $\{\mathop{\mathrm{argmax}}_{y \in \mathcal{Y}} h_y(\vx) : \boldsymbol{h} \in \mathcal{H}\} = \{1, \dots, K\}$ for any $\vx \in \mathcal{X}$. If there exists a convex function $g: \mathbb{R}_+ \to \mathbb{R}$ with $g(0) = 0$ and $g(t) \leq \mathcal{J}_{\ell}(t)$. Then it holds for any $\boldsymbol{h} \in \mathcal{H}$ and any distribution $\mathcal{D}$ that
\begin{equation*}
    g(R_{\ell_{0-1}}(\vh) - R^*_{\ell_{0-1}, \mathcal{H}} + M_{\ell_{0-1}, \mathcal{H}}) \leq R_{\ell}(\vh) - R^*_{\ell, \mathcal{H}} + M_{\ell, \mathcal{H}}.
\end{equation*}
\end{theorem}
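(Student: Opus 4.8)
The plan is to derive the distribution-independent bound of Theorem~\ref{cor: Distribution-independent convex Psi bound} from the distribution-dependent Proposition~\ref{Thm: Distribution-dependent convex bound} by verifying that the hypothesis $g(t) \le \mathcal{J}_\ell(t)$ forces the pointwise inequality \eqref{eqn: Thm: Distribution-dependent convex bound 1} to hold \emph{for every} distribution simultaneously, with $\epsilon = 0$. Concretely, I would first fix an arbitrary $\boldsymbol{h} \in \mathcal{H}$ and $\vx \in \mathcal{X}$, and let $\vp = \vp(\vx) \in \Delta_K$ be the true conditional distribution. Let $\hat{y} = \mathop{\mathrm{argmax}}_{y} h_y(\vx)$ be the class predicted by $\boldsymbol{h}$ at $\vx$, so that $\boldsymbol{h} \in \mathcal{H}_{\hat{y}}(\vx)$. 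The key observation is that $\Delta \mathscr{C}_{\ell_{0-1}, \mathcal{H}}(\boldsymbol{h}, \vx) = \max_y p_y(\vx) - p_{\hat{y}}(\vx)$ exactly: the conditional zero-one risk of a hypothesis predicting $\hat{y}$ is $1 - p_{\hat{y}}(\vx)$, and, using the surjectivity assumption $\{\mathop{\mathrm{argmax}}_y h_y(\vx) : \boldsymbol{h} \in \mathcal{H}\} = \{1,\dots,K\}$, the infimum $\mathscr{C}^*_{\ell_{0-1},\mathcal{H}}(\vx)$ equals $1 - \max_y p_y(\vx)$, since some hypothesis in $\mathcal{H}$ predicts the Bayes-optimal class at $\vx$. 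Hence, setting $t := \Delta \mathscr{C}_{\ell_{0-1}, \mathcal{H}}(\boldsymbol{h}, \vx)$, we get $t = \max_y p_y(\vx) - p_{\hat y}(\vx) \in [0,1]$, which is precisely the index in Definition~\ref{def:trans}, so $\vp(\vx) \in \mathcal{P}_{\hat y}(t)$.

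The second step is to chain the inequalities. By the definition of $\mathcal{J}_\ell$ as an infimum over all triples $(\hat y, \vp, \vx, \boldsymbol{h})$ with $\vp \in \mathcal{P}_{\hat y}(t)$, $\vx \in \mathcal{X}$, $\boldsymbol{h} \in \mathcal{H}_{\hat y}(\vx)$, and since our particular $(\hat y, \vp(\vx), \vx, \boldsymbol{h})$ is admissible, we have $\mathcal{J}_\ell(t) \le \Delta \mathscr{C}_{\ell, \mathcal{H}}(\boldsymbol{h}, \vx, \vp(\vx)) = \Delta \mathscr{C}_{\ell, \mathcal{H}}(\boldsymbol{h}, \vx)$. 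Combining this with the hypothesis $g(t) \le \mathcal{J}_\ell(t)$ gives $g(\Delta \mathscr{C}_{\ell_{0-1}, \mathcal{H}}(\boldsymbol{h}, \vx)) = g(t) \le \mathcal{J}_\ell(t) \le \Delta \mathscr{C}_{\ell, \mathcal{H}}(\boldsymbol{h}, \vx)$. Since $t \ge 0$, we have $\langle \Delta \mathscr{C}_{\ell_{0-1}, \mathcal{H}}(\boldsymbol{h}, \vx) \rangle_0 = t$ (the $0$-regret is the identity on $[0,\infty)$), so this is exactly condition \eqref{eqn: Thm: Distribution-dependent convex bound 1} of Proposition~\ref{Thm: Distribution-dependent convex bound} with $\ell_2 = \ell_{0-1}$, $\ell_1 = \ell$, and $\epsilon = 0$. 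Crucially, this derivation used nothing about the distribution beyond the values $p_y(\vx)$, so it holds for every $\mathcal{D}$.

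The final step is to invoke Proposition~\ref{Thm: Distribution-dependent convex bound}: its conclusion \eqref{eq:results-of-dependent} with $\epsilon = 0$ reads $g(R_{\ell_{0-1}}(\boldsymbol{h}) - R^*_{\ell_{0-1}, \mathcal{H}} + M_{\ell_{0-1}, \mathcal{H}}) \le R_\ell(\boldsymbol{h}) - R^*_{\ell, \mathcal{H}} + M_{\ell, \mathcal{H}} + \max(g(0), g(0))$, and since $g(0) = 0$ by assumption, the additive term vanishes, yielding the claim. I should double-check that $g$ being convex with $g(0)=0$ and defined on $\mathbb{R}_+$ meets the hypotheses of Proposition~\ref{Thm: Distribution-dependent convex bound} (which requires $g$ convex with $g(0) \ge 0$) — it does. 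The main obstacle, and the step deserving the most care, is the identification $\mathscr{C}^*_{\ell_{0-1},\mathcal{H}}(\vx) = 1 - \max_y p_y(\vx)$: this is where the surjectivity assumption on $\mathcal{H}$ is essential, because without it the hypothesis class might be unable to predict the Bayes class at some $\vx$, making the conditional-risk gap strictly smaller than $\max_y p_y(\vx) - p_{\hat y}(\vx)$ and breaking the reduction to $\mathcal{P}_{\hat y}(t)$. I would state this identity as a small lemma and prove it carefully before assembling the three steps.
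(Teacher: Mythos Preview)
Your proposal is correct and follows essentially the same approach as the paper: characterize $\Delta \mathscr{C}_{\ell_{0-1},\mathcal{H}}(\boldsymbol{h},\vx) = \max_y p_y(\vx) - p_{\hat y}(\vx)$ via the surjectivity assumption, observe that the tuple $(\hat y,\vp(\vx),\vx,\boldsymbol{h})$ is admissible in the infimum defining $\mathcal{J}_\ell(t)$, and then invoke Proposition~\ref{Thm: Distribution-dependent convex bound} with $\epsilon=0$. The paper routes through an intermediate lemma bounding by $\inf_{\boldsymbol{h}\in\mathcal{H}_{\hat y}(\vx)}\Delta\mathscr{C}_{\ell,\mathcal{H}}(\boldsymbol{h},\vx)$ before appealing to Proposition~\ref{Thm: Distribution-dependent convex bound}, whereas you go one step further directly to $\Delta\mathscr{C}_{\ell,\mathcal{H}}(\boldsymbol{h},\vx)$; this is a harmless streamlining, not a different argument.
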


\begin{figure}[t]
\centering
\includegraphics[width=1. \columnwidth]{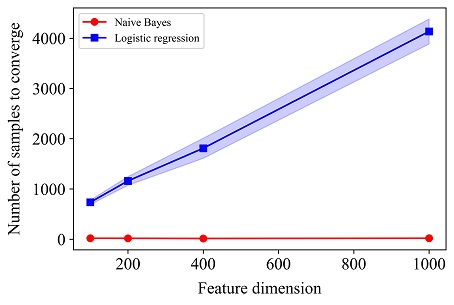}
\caption{Multiclass ($K = 5$) simulation results. Empirically, logistic regression and na\"ive Bayes require $O(n)$ and $O(\log n)$  samples to approach the corresponding asymptotic error respectively. Error bars show the variance estimated by 5 runs.}
\label{figures: multiclass simulation}
\end{figure}

\begin{table*}[t!]
\centering
\caption{Analysis of assumptions on CIFAR10 training dataset.}
\vskip 0.15in
\label{tab: assumptions}
\begin{tabular}{lccccc} 
\toprule
Method   & Backbone & Pre-training data & $\rho_0$ & $\beta$ & $\alpha$  \\ 
\midrule
ViT~\cite{dosovitskiy2020image}      & ViT-B/16 & Image-label  & 2.80E-3                   & 0.004                    & 690                        \\
ResNet~\cite{resnet}   & ResNet50 & Image-label  & 1.70E-3                   & 0.06                     & 11516                      \\
CLIP~\cite{CLIP}     & ResNet50 & Image-text   & 4.78E-3                   & 0.203                    & 6383                       \\
MoCov2~\cite{DBLP:journals/corr/abs-2003-04297MocoV2}   & ResNet50 & Image        & 5.03E-5                   & 0.005                    & 26640                      \\
SimCLRv2~\cite{DBLP:conf/nips/simclrv2} & ResNet50 & Image        & 3.74E-5                   & 0.01                     & 2490                       \\
MAE~\cite{DBLP:conf/cvpr/HeCXLDG22MAE}      & ViT-B/16 & Image        & 6.37E-3                   & 0.032                    & 6919                       \\
SimMIM~\cite{DBLP:conf/cvpr/simmim}   & ViT-B/16 & Image        & 7.86E-3                   & 0.002                    & 5201                       \\
\bottomrule
\end{tabular}
\end{table*}

We present the concave counterpart of it as Theorem~\ref{cor: Distribution-independent concave bound} in Appendix~\ref{sec: Deferred Results: Multiclass H-consistency}. This theorem holds for any hypothesis set $\mathcal{H}$ that can divide any sample $\vx$ into any category, including the linear hypothesis set and hypotheses of neural network. Notably, our multiclass $\mathcal{H}$-consistency result degenerates to the binary one exactly~\cite{DBLP:conf/icml/AwasthiMM022} with $K=2$. In addition, we note that if $\mathcal{J}_{\ell}(t)$ is convex and $\mathcal{J}_{\ell}(0) = 0$, then $\mathcal{J}_{\ell}$ satisfies the condition of $g$ in Theorem~\ref{cor: Distribution-independent convex Psi bound}. In fact, it leads to the tightest multiclass $\mathcal{H}$-consistency bound.
\begin{theorem}[Tightness, proof in Appendix~\ref{Proofs of thm:tightness}]
\label{thm:tightness}
If $\mathcal{J}_{\ell}(t)$ is convex with $\mathcal{J}_{\ell}(0)=0$, then for any $t\in[0,1]$ and $\delta>0$, there exist a distribution $\mathcal{D}$ and a hypothesis $h\in \mathcal{H}$ such that $R_{\ell_{0-1}}(\boldsymbol{h}) - R^*_{\ell_{0-1}, \mathcal{H}} + M_{\ell_{0-1}, \mathcal{H}} = t$ and $\mathcal{J}_{\ell}(t) \leq R_{\ell}(\boldsymbol{h}) - R^*_{\ell, \mathcal{H}} + M_{\ell, \mathcal{H}} \leq \mathcal{J}_{\ell}(t) + \delta$.
\end{theorem}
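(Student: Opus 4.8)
The plan is to force the infimum defining $\mathcal J_\ell(t)$ in Definition~\ref{def:trans} to be (near-)attained by a \emph{point-mass} distribution, so that the two estimation-error quantities $R_\ell(\boldsymbol h)-R^*_{\ell,\mathcal H}+M_{\ell,\mathcal H}$ and $R_{\ell_{0-1}}(\boldsymbol h)-R^*_{\ell_{0-1},\mathcal H}+M_{\ell_{0-1},\mathcal H}$ collapse to conditional-risk gaps at a single input. First I would fix $t\in[0,1]$ and $\delta>0$ and extract a $\delta$-suboptimal feasible tuple $(\hat y^\star,\vp^\star,\vx^\star,\boldsymbol h^\star)$ for that infimum, i.e.\ $\vp^\star\in\mathcal P_{\hat y^\star}(t)$, $\boldsymbol h^\star\in\mathcal H_{\hat y^\star}(\vx^\star)$, and
\[
\mathcal J_\ell(t)\;\le\;\Delta\mathscr C_{\ell,\mathcal H}(\boldsymbol h^\star,\vx^\star,\vp^\star)\;\le\;\mathcal J_\ell(t)+\delta .
\]
This feasible set is nonempty for every $t\in[0,1]$: since $K\ge2$, one checks that $\vp$ with a single class carrying mass $\tfrac{1+t}{2}$ and $\hat y^\star$ carrying mass $\tfrac{1-t}{2}$ lies in $\mathcal P_{\hat y^\star}(t)$, and the standing assumption on $\mathcal H$ (the same one as in Theorem~\ref{cor: Distribution-independent convex Psi bound}, that $\{\argmax_{y}h_y(\vx):\boldsymbol h\in\mathcal H\}=\{1,\dots,K\}$ for every $\vx$) guarantees $\mathcal H_{\hat y^\star}(\vx^\star)\neq\emptyset$. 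I would then let $\mathcal D$ put all mass on $X=\vx^\star$ with conditional label law $\vp^\star$, and set $h:=\boldsymbol h^\star$.

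Under this $\mathcal D$, every risk equals its conditional risk at $\vx^\star$: $R_{\ell'}(\boldsymbol h)=\mathscr C_{\ell'}(\boldsymbol h,\vx^\star,\vp^\star)$ for all $\boldsymbol h$ and every loss $\ell'$, whence $R^*_{\ell',\mathcal H}=\mathscr C^*_{\ell',\mathcal H}(\vx^\star)=\mathbb E_\vx[\mathscr C^*_{\ell',\mathcal H}(\vx)]$ and therefore $M_{\ell',\mathcal H}=0$; in particular both $M_{\ell_{0-1},\mathcal H}$ and $M_{\ell,\mathcal H}$ vanish. Consequently $R_{\ell}(\boldsymbol h)-R^*_{\ell,\mathcal H}+M_{\ell,\mathcal H}=\Delta\mathscr C_{\ell,\mathcal H}(\boldsymbol h^\star,\vx^\star,\vp^\star)$, which already lies in $[\mathcal J_\ell(t),\mathcal J_\ell(t)+\delta]$ by the choice above, and likewise $R_{\ell_{0-1}}(\boldsymbol h)-R^*_{\ell_{0-1},\mathcal H}+M_{\ell_{0-1},\mathcal H}=\Delta\mathscr C_{\ell_{0-1},\mathcal H}(\boldsymbol h^\star,\vx^\star,\vp^\star)$. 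To finish I would evaluate this last quantity: since $\boldsymbol h^\star\in\mathcal H_{\hat y^\star}(\vx^\star)$ predicts $\hat y^\star$ at $\vx^\star$, its conditional zero-one risk there is $1-p^\star_{\hat y^\star}$, while $\mathscr C^*_{\ell_{0-1},\mathcal H}(\vx^\star)=1-\max_y p^\star_y$ because $\mathcal H$ realizes every label at $\vx^\star$; hence $\Delta\mathscr C_{\ell_{0-1},\mathcal H}(\boldsymbol h^\star,\vx^\star,\vp^\star)=\max_y p^\star_y-p^\star_{\hat y^\star}=t$ by the defining property of $\mathcal P_{\hat y^\star}(t)$, and the theorem follows.

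The only genuinely delicate point is the identity $\mathscr C^*_{\ell_{0-1},\mathcal H}(\vx^\star)=1-\max_y p^\star_y$: this is exactly where the surjectivity-of-$\argmax$ hypothesis on $\mathcal H$ is used (without it the Bayes-optimal label need not be reachable at $\vx^\star$, and the zero-one gap could fall short of $t$). The convexity of $\mathcal J_\ell$ together with $\mathcal J_\ell(0)=0$ is \emph{not} used in the construction itself; it serves only to make $\mathcal J_\ell$ an admissible choice of $g$ in Theorem~\ref{cor: Distribution-independent convex Psi bound}, so that the inequality $\mathcal J_\ell(R_{\ell_{0-1}}(\boldsymbol h)-R^*_{\ell_{0-1},\mathcal H}+M_{\ell_{0-1},\mathcal H})\le R_\ell(\boldsymbol h)-R^*_{\ell,\mathcal H}+M_{\ell,\mathcal H}$ is a genuine upper bound whose sharpness this theorem certifies. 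When the infimum in $\mathcal J_\ell(t)$ is attained one may take $\delta=0$; the $\delta$ margin merely covers the general case, and the whole argument, including the binary specialization $K=2$, goes through verbatim.
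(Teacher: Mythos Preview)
Your proposal is correct and follows the same point-mass construction as the paper's proof: pick a near-optimal tuple for the infimum defining $\mathcal J_\ell(t)$, place all mass of $\mathcal D$ on that single $\vx^\star$ with conditional label law $\vp^\star$, and read off both estimation-error quantities as conditional-risk gaps at $\vx^\star$. The only cosmetic difference is that the paper obtains the lower bound $\mathcal J_\ell(t)\le R_\ell(\boldsymbol h)-R^*_{\ell,\mathcal H}+M_{\ell,\mathcal H}$ by invoking Theorem~\ref{cor: Distribution-independent convex Psi bound} with $g=\mathcal J_\ell$ (this is where it uses the convexity and $\mathcal J_\ell(0)=0$ hypotheses), whereas you get it directly from feasibility of the chosen tuple in the defining infimum---a slightly more elementary route that, as you correctly note, shows the convexity assumption is not needed for the construction itself.
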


To establish our main result in Section~\ref{sec: Discriminative vs. Generative: Multiclass Classification}, we have presented an asymptotically distribution-independent multiclass $\mathcal{H}$-consistency bound for the logistic loss in an explicit form in Theorem~\ref{thm: H-consistency bound for log}. We mention that the proof of Theorem~\ref{thm: H-consistency bound for log} is nontrivial because $\mathcal{J}_{\ell}(t)$ in the multiclass case involves a much more complex optimization problem than that in the binary case~\cite{DBLP:conf/icml/AwasthiMM022}.

The proposed framework is not limited to the linear hypothesis class and the logistic loss. In particular, we present a similar result for the hypothesis class of one-hidden-layer neural networks in Theorem~\ref{thm: H-consistency bound for log neural network} of Appendix~\ref{sec: Deferred Results: Multiclass H-consistency}. Besides, the general bound in Theorem~\ref{cor: Distribution-independent convex Psi bound} and the proof idea of Theorem~\ref{thm: H-consistency bound for log} are applicable to hinge loss, exponential loss, $\rho$-margin loss, and so on, which are left for future work. Furthermore, the analysis idea can be used to obtain multiclass Bayes consistency bounds by setting $\mathcal{H}$ to $\mathcal{H}_{all}$.

\section{Implications in Deep Learning}
\label{sec: implications}
In this section, we discuss the implications of our theoretical results in the linear evaluation of pre-trained deep neural networks. 
First, as presented in Section~\ref{sec: Validate the assumptions}, we empirically analyze the main assumptions of our theory in various deep vision models~\cite{dosovitskiy2020image, resnet, CLIP, DBLP:journals/corr/abs-2003-04297MocoV2, DBLP:conf/nips/simclrv2, DBLP:conf/cvpr/HeCXLDG22MAE, DBLP:conf/cvpr/simmim}. Second, we systematically compare logistic regression and na\"ive Bayes on the CIFAR10 and CIFAR100 datasets~\cite{cifar} with various models and sample sizes in Section~\ref{sec: Deep Learning Results}.  Na\"ive Bayes always converges much faster, which agrees with our theory. The ``two regimes'' phenomenon~\cite{DBLP:conf/nips/NgJ01} almost happens with models pre-trained in a supervised manner~\cite{dosovitskiy2020image, resnet}, which is analyzed in detail in Section~\ref{sec: On the difference between supervised pertaining and self-supervised}. Details of experiments can be found in Appendix~\ref{app: details of DL Experiments}.

\subsection{Analyzing the Assumptions}
\label{sec: Validate the assumptions}
We empirically analyze and discuss the main assumptions made in Section~\ref{sec: Theory} on the CIFAR10 dataset. The results are summarized in Table~\ref{tab: assumptions}. We emphasize that the concrete values of the quantities in the table won't affect the asymptotic analyses in Section~\ref{sec: Theory}, i.e., $O(\log n)$ results for na\"ive Bayes, but may affect its performance given a fixed data size.


We consider linear evaluation for transfer learning on top of pre-trained models, whose parameters are frozen. Therefore, it is valid to assume that the features extracted on the target dataset satisfy the $i.i.d.$ assumption.

\subsubsection{Assumption~\ref{Assumption: p(y=k)} and~\ref{Assumption: parmeters bounded}}
Assumption~\ref{Assumption: p(y=k)} holds naturally because the CIFAR10 dataset is class-balanced. For Assumption~\ref{Assumption: parmeters bounded}, we calculate the $\hat{\sigma_i}^2$ for each dimension of the training representations as approximations for $\sigma_i^2$. We present $\rho_0 = \min(\min_i \hat{\sigma_i}^2, \frac{1}{10})$ in Table~\ref{tab: assumptions}, and Figure~\ref{figures: sigmas} in Appendix~\ref{app: Additional Results of Validating the Assumptions} plots the histogram of $\hat{\sigma_i}^2$. Assumption~\ref{Assumption: parmeters bounded}
holds for all models.

\subsubsection{Assumption~\ref{Assumption: multiclass KL} and~\ref{Assumption: multiclass likelihood ratio var}}
It is hard to directly validate the two assumptions in practice. Nevertheless, we estimate $\beta_{k_1,k_2,k}$ and $\alpha_{k_1,k_2,k}$ for all $k_1, k_2 (k_1 \ne k_2)$ and $k \in \mathcal{Y}$ in different models for a comparison. We note that $\beta_{k_1,k_2,k} = \zeta_{k_1,k_2,k}$ in our experiments, because the CIFAR10 dataset is class-balanced. We report the estimated $\beta = \zeta = \min_{k1,k_2,k} \vert\beta_{k_1,k_2,k}\vert$ and $\alpha = \max_{k1,k_2,k} \alpha_{k_1,k_2,k}$ in Table~\ref{tab: assumptions}. We also present the histograms of $\vert\beta_{k_1,k_2,k}\vert$ and $\alpha_{k_1,k_2,k}$  in Figure~\ref{figures: beta} and Figure~\ref{figures: alpha} of Appendix~\ref{app: Additional Results of Validating the Assumptions}, respectively.

\subsubsection{Assumption~\ref{Assumption: Optimal classifier has finite empirical loss}}

Assumption~\ref{Assumption: Optimal classifier has finite empirical loss} is hard to validate in practice because the Bayes-optimal classifier is unknown. However, recent theoretical results in prior works~\cite{DBLP:conf/icml/SaunshiPAKK19, DBLP:conf/nips/LeeLSZ21, DBLP:conf/alt/ToshK021, DBLP:conf/nips/HaoChenWGM21} suggest that it holds  when the number of samples for pre-training is sufficiently large.


\subsection{Empirical Results in Deep Learning}
\label{sec: Deep Learning Results}

We systematically compare logistic regression and naïve Bayes on the CIFAR10 and CIFAR100 datasets in various models, which are trained on image-label pairs~\cite{dosovitskiy2020image, resnet}, image-text pairs~\cite{CLIP}, or pure images~\cite{DBLP:journals/corr/abs-2003-04297MocoV2, DBLP:conf/nips/simclrv2,DBLP:conf/cvpr/HeCXLDG22MAE, DBLP:conf/cvpr/simmim}. 

For a fair comparison, we keep the linear evaluation setting in~\cite{CLIP} throughout the experiments. Specially, we train the logistic regression using scikit-learn's~\cite{scikit-learn} L-BFGS implementation, with a maximum of 1000 iterations. We adjust the weight of $\ell_2$ regularization of logistic regression carefully to reproduce the results reported in~\cite{CLIP} on both datasets with full training data. We then adjust the number of training samples $m$ gradually. For each $m$, we obtain training  samples randomly 5 times and record the mean test error of two models. 

We plot the convergence curves in all settings in Appendix~\ref{app: Additional Deep Learning Results}, which are linked in Table~\ref{tab: visual results sum}. Notably, na\"ive Bayes approaches its asymptotic error much faster than logistic regression in all settings, like that presented in Figure~\ref{figures: resnet cifar100 long}, which is consistent with our theoretical results.

\begin{figure}[t!]
\centering  
\includegraphics[width=0.9\columnwidth]{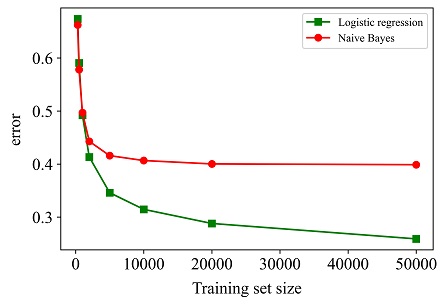}
\caption{Comparison between na\"ive Bayes and logistic regression with the features extracted by ResNet on the CIFAR100 dataset. Na\"ive Bayes approaches its asymptotic error much
faster.} 
\label{figures: resnet cifar100 long}  
\end{figure}

\begin{figure}[t]
\centering  
\includegraphics[width=0.9\columnwidth]{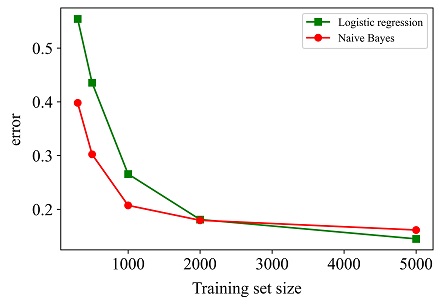}
\caption{Comparison between na\"ive Bayes and logistic regression with the features extracted by ViT on the CIFAR100 dataset. The ``two regimes'' phenomenon is observed.} 
\label{figures: vit cifar100 short}  
\end{figure}

\begin{table}[t]
\centering
\caption{Convergence comparison between multiclass logistic regression and na\"ive Bayes. ``NB faster'' means na\"ive Bayes approaches its asymptotic error faster.}
\vskip 0.15in
\label{tab: visual results sum}
\begin{tabular}{lccc}
\toprule
Method   & Visual results & \multicolumn{2}{c}{NB faster/ Two regimes}  \\
         &                & CIFAR10 & CIFAR100              \\
\midrule
ViT      & Figure~\ref{figures: vit}              & $\surd$ / $\surd$     & $\surd$ / $\surd$                   \\
ResNet   & Figure~\ref{figures: resnet}              & $\surd$ / $\surd$     & $\surd$ / $\surd$                   \\
CLIP     & Figure~\ref{figures: clip}              & $\surd$ / $\surd$     & $\surd$ / $\surd$                   \\
MoCov2   & Figure~\ref{figures: mocov2}              & $\surd$ / $\times$      & $\surd$ / $\times$                   \\
SimCLRv2 & Figure~\ref{figures: simclrv2}              & $\surd$ / $\times$      & $\surd$ / $\surd$                   \\
MAE      & Figure~\ref{figures: mae}              & $\surd$ / $\surd$     &$\surd$ / $\times$                    \\
SimMIM   & Figure~\ref{figures: simmim}              & $\surd$ / $\times$      & $\surd$ / $\times$  \\
\bottomrule
\end{tabular}
\end{table}

\subsection{On the ``Two Regimes'' Phenomenon} 
\label{sec: On the difference between supervised pertaining and self-supervised}





\citet{DBLP:conf/nips/NgJ01} suggests that there can often be two regimes of performance between na\"ive Bayes and logistic regression, that is, though logistic regression enjoys lower asymptotic error, na\"ive Bayes performs better with smaller
training sets because of its fast convergence rate. They observed this phenomenon on many datasets from the UCI Machine Learning repository~\cite{UCI}. These classical datasets are small and the features are mostly low-dimensional. However, nowadays, people prefer to obtain representations by using deep neural networks pre-trained by massive data. The occurrence of the ``two regimes'' phenomenon in this new setting has not been investigated yet.

We summarize the occurrence of the ``two regimes'' phenomenon in Table~\ref{tab: visual results sum}. The ``two regimes'' phenomenon occurs in half of our experiments, which suggests that na\"ive Bayes still shows promise when the training data is limited. We present a typical case in Figure~\ref{figures: vit cifar100 short} and see Appendix~\ref{app: Additional Deep Learning Results} for complete results. Interestingly, the ``two regimes'' phenomenon almost happens when the deep vision model is pre-trained in a supervised manner (ViT, ResNet, and CLIP), which suggests a distinction between representations learned by supervised learning and self-supervised learning. 

We conjecture that representations learned by supervised methods could have some better properties to make na\"ive Bayes converges faster than that learned by self-supervised methods. As validated in Section~\ref{sec: Deep Learning Results}, though our theory could only prove the fast convergence rate of na\"ive Bayes, it does help us to understand this distinction to some extent. Combining the values presented in Table~\ref{tab: assumptions}, we can get some preliminary results. 

\emph{Representations learned by supervised methods could be more robust for each dimension.} As shown in Table~\ref{tab: assumptions}, features learned by supervised methods (ViT, ResNet, CLIP) tend to have larger $\rho_0$. In other words, these representations tend to have larger in-class variance $\sigma_i^2$ than others. Intuitively, it suggests that data in each dimension could be more robust to relieve the over-fitting and boost na\"ive Bayes learning better in the few-shot case. Besides, according to Eq.~(\ref{eq: discrete delata a}-\ref{eq: conti delata a}) in Appendix~\ref{proof: Proof of Theorem Thm: multiclass generalization bound} and the derivation in Appendix~\ref{proof: Proof of Corollary cor: multiclass NB sample complexity}, a larger $\rho_0$ implies faster convergence in a $1/\rho_0^2$ order, which explains it in a certain sense.

\emph{Representations learned by supervised methods could be more separable between different categories.} From Table~\ref{tab: assumptions}, representations learned by supervised methods (ResNet, CLIP) are inclined to have larger $\beta$ than others. Namely, there exists more distinction between the distributions of samples in different classes, which are easier to predict. In addition, by our derivation in Appendix~\ref{proof: Proof of Corollary cor: multiclass NB sample complexity}, a larger $\beta$ implies faster convergence in a $1/\beta^2$ order, which agrees with our observation.




\section{Related Work}
\subsection{Deep Representative Learning}

Deep representation learning aims to learn representations on the raw unlabeled data and transfer them to the downstream tasks. It has made remarkable progress in various machine learning fields~\cite{ren2015faster,he2017mask,chen2020generative,DBLP:conf/cvpr/He0WXG20Moco, DBLP:conf/icml/ChenK0H20SimCLR, DBLP:conf/cvpr/ChenH21SimSiam,grill2020bootstrap,DBLP:conf/cvpr/HeCXLDG22MAE, DBLP:conf/cvpr/simmim, DBLP:conf/naacl/DevlinCLT19Bert, DBLP:conf/nips/BrownMRSKDNSSAA20GPT3,raffel2020exploring}. 
In particular, the promise of \emph{linear evaluation}~\cite{chen2020generative,DBLP:conf/cvpr/He0WXG20Moco, DBLP:conf/icml/ChenK0H20SimCLR,grill2020bootstrap,CLIP} suggests that representations extracted by pre-trained models are near to linear separable. 
Besides, the performance of such representations in linear evaluation is 
guaranteed in recent theoretical works~\cite{DBLP:conf/icml/SaunshiPAKK19, DBLP:conf/nips/LeeLSZ21, DBLP:conf/alt/ToshK021, DBLP:conf/nips/HaoChenWGM21}.  
All of these empirical and theoretical works encourage us to rethink the role of linear classifiers.

\subsection{Discriminative vs. Generative Learning}
Comparing discriminative with generative classifiers has long been an interesting topic~\cite{efron1975efficiency, DBLP:conf/kdd/RubinsteinH97, DBLP:conf/nips/NgJ01}. \citet{efron1975efficiency} compared the logistic regression and normal discriminant analysis and claimed that the latter is only slightly more efficient. \citet{DBLP:conf/nips/NgJ01} simplified the normal discriminant analysis to na\"ive Bayes and concluded that the discriminative model has lower asymptotic error while the generative classifier may approach its higher asymptotic error much faster. 
\citet{DBLP:conf/nips/NgJ01} assume that one can directly optimize on zero-one loss. Instead, we weaken the assumption and introduce the theoretical tools from $\mathcal{H}$-consistency to obtain more reliable results.

\subsection{$\mathcal{H}$-consistency}
Most machine learning algorithms depend on optimizing a surrogate loss function rather than the target loss function. To find the favorable property of surrogate loss, consistency has been studied broadly in the last two decades. Classical Bayes consistency~\cite{DBLP:journals/jmlr/Zhang04a, zhang2004statistical,bartlett2006convexity,tewari2007consistency} analyzes the relationship between the excess error of zero-one loss and that of a surrogate loss. Instead, $\mathcal{H}$-consistency~\cite{long2013consistency} considers the estimation error w.r.t. a hypothesis set $\mathcal{H}$. It includes the classical Bayes consistency as a special case by setting $\mathcal{H}$ to $\mathcal{H}_{all}$. Most recently, \citet{DBLP:conf/icml/AwasthiMM022} proposed a novel and solid framework named $\mathcal{H}$-consistency bounds, which consider the upper bounds on the target estimation error expressed by surrogate estimation error. 

We proposed a novel multiclass $\mathcal{H}$-consistency framework, which includes the framework in~\cite{DBLP:conf/icml/AwasthiMM022} as a special case. We notice that the independent work of~\cite{awasthimulti} also proposed a multiclass $\mathcal{H}$-consistency framework from the same general theorem (Proposition~\ref{Thm: Distribution-dependent convex bound}). We highlight the following comparison that distinguishes our work. 
First, the proof ideas are totally different. In particular, we directly generalize the binary framework in~\cite{DBLP:conf/icml/AwasthiMM022} to the multiclass case in Theorem~\ref{cor: Distribution-independent convex Psi bound}, which is general and tight (Theorem~\ref{thm:tightness}). In contrast, \citet{awasthimulti} argues that generalizing the binary framework is nontrivial and instead provides a case-by-case analysis for different losses, which does not enjoy the tightness guarantee. Second, we provide an explicit bound for logistic loss (Theorem~\ref{thm: H-consistency bound for log}), which is necessary for our subsequent analysis, while it is unclear how to derive such a bound by the prior work~\cite{awasthimulti}. 

\textbf{Concurrent work.} The concurrent and independent work of~\citet{DBLP:journals/corr/maoanqi} also obtains $\mathcal{H}$-consistency bounds of the multiclass logistic loss under a little stronger assumption. The multiclass $\mathcal{H}$-estimation error transformation $\mathcal{J}_{\ell}(t)$ derived by them (Theorem 1 of~\citet{DBLP:journals/corr/maoanqi}) is actually the same as ours in Theorem~\ref{thm: H-consistency bound for log}, and their bounds also enjoy the tightness guarantee. However, they assume that the hypothesis set $\mathcal{H}$ is complete, that is, $\{h_y(\vx): \boldsymbol{h} \in \mathcal{H}\} = \mathbb{R}$ for any $(\vx, y) \in \mathcal{X} \times \mathcal{Y}$, which does not hold for bounded linear hypotheses ($W, B < +\infty$) considered by this paper.



\section{Conclusion}
We revisit the classical topic of discriminative vs. generative classifiers~\cite{DBLP:conf/nips/NgJ01}. Specially, we weaken the assumption in the previous work and extend the analysis to multiclass cases. As result, under some assumptions, we prove that multiclass na\"ive Bayes requires $O(\log n)$ samples to approach its asymptotic error while the logistic regression needs $O(n)$ samples. Technically, we proposed a multiclass $\mathcal{H}$-consistency framework, which is of independent interest. Experiments with various pre-trained deep vision models verify our theory and show the potential of the generative linear head in the few-shot cases. Finally, our experiments suggest differences between representations learned by supervised and self-supervised methods.

\textbf{Social Impact:} This is mainly theoretical work and we do not see a direct social impact of our theory. The experiments on Na\"ive Bayes may benefit applications with a few training data such as medical analysis.

\section*{Acknowledgements}

This work was supported by NSF of China (NO. 62076145, 62206159); Beijing Outstanding Young Scientist Program (NO. BJJWZYJH012019100020098); Shandong Provincial Natural Science
Foundation (NO. ZR2022QF117); Major Innovation \& Planning Interdisciplinary Platform for the ``Double-First Class" Initiative, Renmin University of China; the Fundamental Research Funds for the Central Universities, and the Research Funds of Renmin University of China (22XNKJ13); the Fundamental Research Funds of Shandong University. C. Li was also sponsored by Beijing Nova Program.

\bibliography{ref}
\bibliographystyle{icml2023}

\newpage
\appendix
\onecolumn

\renewcommand{\contentsname}{Contents of Appendix}
\tableofcontents
\addtocontents{toc}{\protect\setcounter{tocdepth}{3}} 
\clearpage

\section{Detailed Notations and Definitions}
\label{notations}
 Let lower, boldface lower and capital case letters denote scalers (e.g., a), vectors (e.g., $\boldsymbol{a}$), and matrices (e.g., $\boldsymbol{A}$) respectively. For a matrix $\boldsymbol{A}$, $\boldsymbol{A}_i$ and $A_{ij}$ denote its $i$-th row and $(i, j)$-th element. For a vector $\boldsymbol{a}$, $a_i$ denotes its $i$-th element. Similarly, for a vector function $\boldsymbol{f}$, $f_i(\vx)$ denotes the $i$-th element of $\boldsymbol{f}(\vx)$. Let $\mathcal{X}$ denote the domain set and $\mathcal{Y}$ denote the label set. For simplicity, we assume $\mathcal{X} = \{0,1\}^n$ when inputs are discrete and $\mathcal{X} = [0,1]^n$ otherwise, where $n$ is the feature dimension. Let $\mathcal{Y} = \{0, 1\}$ be the binary label space and $\mathcal{Y} = \{1,\dots,K\}$ be the multiclass label space, where $K$ is the number of classes. $\mathcal{D}$ denotes the distribution on $\mathcal{X} \times \mathcal{Y}$ and $\mathcal{P}$ denotes set of distribution. We denote the KL Divergence between two distributions $p$ and $q$ by $D(p \Vert q)$. We use $\mathbb{E}$ and $\mathbb{V}$ to represent expectation and variance, respectively.

For the binary case, let $\mathcal{H}$ be a hypothesis set of functions mapping from $\mathcal{X}$ to $\mathbb{R}$. The prediction associated by a hypothesis $h \in \mathcal{H}$ and $\vx \in \mathcal{X}$ is $\sign(h(x))$. In this paper, we mainly focus on the family of constrained binary linear hypotheses $\mathcal{H}_{lin} = \{x \to \vw^Tx + b: \Vert \vw \Vert_2 \leq W, \vert b\vert \leq B\}$, where $W, B \in \mathbb{R}^+$. The generalization error and minimal generalization error of a hypothesis $h$ w.r.t. the loss function $\ell: \mathbb{R} \times \mathcal{Y} \rightarrow \mathbb{R}$ are defined as $R_{\ell}(h) = \mathbb{E}_{(\vx,y) \sim \mathcal{D}} [\ell(h(\vx), y))]$ and $R_{\ell, \mathcal{H}}^* = \inf_{h \in \mathcal{H}} R_{\ell}(h)$, where $\mathcal{H}$ is a hypothesis set and $\mathcal{D}$ is data distribution. We denote the empirical generalization error by $\hat{R}_{\ell}(h)$. Furthermore, given a family of functions $\mathcal{G}$ mapping from $\mathcal{Z}$ to $\mathbb{R}$, the empirical Rademacher complexity of $\mathcal{G}$ for a sample $S = (z_1, \dots, z_m)$ is defined by $\hat{\mathcal{R}}_S(\mathcal{G}) = \mathbb{E}_{ \sigma}[\frac{1}{m} \sup_{g \in \mathcal{G}} \sum_{i=1}^m \sigma_i g(z_i)]$, where $\sigma = (\sigma_1, \dots, \sigma_m)$ is a vector of $i.i.d.$ independent uniform random variables taking values in $\{-1,+1\}$. The Rademacher complexity of $\mathcal{G}$ is defined as $\mathcal{R}_m(\mathcal{G}) = \mathbb{E}_S[\hat{\mathcal{R}}_S(\mathcal{G})]$.

Notations listed in the following will be useful to analyze the $\mathcal{H}$-consistency bounds. For binary label space, let $\eta(x)$ denote the conditional distribution $\mathbb{P}(Y = 1 \vert X = x)$ and $\Delta \eta(x)$ the $\eta(x) - \frac{1}{2}$. We rewrite the generalization error as $R_{\ell}(h) = \mathbb{E}_{\vx} [\mathscr{C}_{\ell}(h, x)]$, where $\mathscr{C}_{\ell}(h, x) = \eta(x)\ell(h, (x, 1))  + (1-\eta(x))\ell(h, (x, 0))$ is called as conditional risk. We can also define the minimal conditional risk as $\mathscr{C}_{\ell, \mathcal{H}}^*(x) = \inf_{h \in \mathcal{H}} \mathscr{C}_{\ell}(h, x)$. We use the shorthand for the gap $\Delta \mathscr{C}_{\ell, \mathcal{H}}(h, x) = \mathscr{C}_{\ell}(h, x) - \mathscr{C}_{\ell, \mathcal{H}}^*(x)$ and conditional $\epsilon$-regret of $\ell$ $\langle \Delta \mathscr{C}_{\ell, \mathcal{H}}(h, x) \rangle_\epsilon = \Delta \mathscr{C}_{\ell, \mathcal{H}}(h, x) \mathbbm{1}_{\mathscr{C}_{\ell, \mathcal{H}}(h, x) > \epsilon}$. For any $t \in [0,1]$, we also define $\mathscr{C}_{\ell}(h, x, t) = t\ell(h, (x, 1))  + (1-t)\ell(h, (x, 0))$ and $\Delta \mathscr{C}_{\ell, \mathcal{H}}(h, x, t) = \mathscr{C}_{\ell}(h, x, t) - \inf_{h \in \mathcal{H}} \mathscr{C}_{\ell}(h, x, t)$. It is worthwhile to note that a key quantity appears in the article is the $M_{\ell, \mathcal{H}} = R_{\ell, \mathcal{H}}^* - \mathbb{E}_{\vx}(\mathscr{C}_{\ell, \mathcal{H}}^*(x))$, which is hard to estimate.

For the multiclass case, let $\mathcal{H}$ be a hypothesis set of functions mapping from $\mathcal{X} \times \mathcal{Y}$ to $\mathbb{R}^K$. The prediction associated by a hypothesis $\vh \in \mathcal{H}$ and $\vx \in \mathcal{X}$ is $\mathop{\mathrm{argmax}}_{y \in \mathcal{Y}} h_y(\vx)$. In the main paper, we mainly focus on the family of constrained linear hypotheses $\mathcal{H}_{lin} = \{x \to \boldsymbol{h}(x): h_y(x) = \vw_y^Tx + b_y, \Vert \vw_y \Vert_2 \leq W, \vert b_y\vert \leq B, y \in \mathcal{Y}\}$, where $W, B \in \mathbb{R}^+$. We also give $\mathcal{H}$-consistency bound for family of one-hidden-layer neural network hypotheses with ReLU activation function $(\cdot)_+$ $\mathcal{H}_{NN} = \{\vx \to \boldsymbol{\vh}(\vx): h_y(\vx) = \sum_{j=1}^n U_{yj}(\langle \vw_j, \vx\rangle + b)_+\}$, where $\boldsymbol{U} \in \mathbb{R}^{K \times n}$, $\vw_j \in \mathbb{R}^n$ and  $b \in \mathbb{R}$. The generalization error and minimal generalization error of a hypothesis $\vh$ w.r.t. the loss function $\ell: \mathbb{R}^K \times \mathcal{Y} \rightarrow \mathbb{R}$ are defined as $R_{\ell}(\vh) = \mathbb{E}_{(\vx,y) \sim \mathcal{D}} [\ell(\vh(\vx), y)]$ and $R_{\ell, \mathcal{H}}^* = \inf_{\vh \in \mathcal{H}} R_{\ell}(\vh)$, where $\mathcal{H}$ is a hypothesis set and $\mathcal{D}$ is data distribution. We denote by $\vp(\vx)$ the conditional distribution of $y$ when given $\vx$, i.e., $p_y(\vx) = \mathbb{P}(Y=y\vert X=\vx)$. Similarly to the binary classification, we have $\mathscr{C}_{\ell}(\boldsymbol{h}, \vx) = \sum_{y=1}^K p_y(\vx) \ell(\vh(\vx), y)$, $\mathscr{C}_{\ell, \mathcal{H}}^*(\vx) = \inf_{\boldsymbol{h} \in \mathcal{H}} \mathscr{C}_{\ell}(\boldsymbol{h}, \vx)$, $\Delta \mathscr{C}_{\ell, \mathcal{H}}(\boldsymbol{h}, \vx) = \mathscr{C}_{\ell}(\boldsymbol{h}, \vx) - \mathscr{C}_{\ell, \mathcal{H}}^*(\vx)$ and $M_{\ell, \mathcal{H}} = R_{\ell, \mathcal{H}}^* - \mathbb{E}_{\vx}(\mathscr{C}_{\ell, \mathcal{H}}^* (\vx))$. Furthermore, for any $\vp$ in probability simplex $\Delta_K$, we can define $\mathscr{C}_{\ell}(\boldsymbol{h}, \vx, \vp) = \sum_{y=1}^K p_y \ell(\boldsymbol{h}(\vx), y)$ and $\Delta \mathscr{C}_{\ell, \mathcal{H}}(\boldsymbol{h}, \vx, \vp) = \mathscr{C}_{\ell}(\boldsymbol{h}, \vx, \vp) - \inf_{\boldsymbol{h} \in \mathcal{H}} \mathscr{C}_{\ell}(\boldsymbol{h}, \vx, \vp)$.

\section{On Binary Discriminative vs. Generative  Linear Classifiers}
\label{sec: Discriminative vs. Generative: Binary Classification}

In this section, we focus on the binary case and obtain results that are similar to~\cite{DBLP:conf/nips/NgJ01}, under weaker assumptions. Let $h_{Gen, m}$ and $h_{Dis, m}$ be logistic regression and na\"ive Bayes trained with $m$ $i.i.d$ samples, $h_{Gen, \infty}$ and $h_{Dis, \infty}$ be their asymptotic/population versions. Proofs of this section can be found in Appendix~\ref{Proofs of sec: Discriminative vs. Generative: Binary Classification}. 

We will compare the sample complexity of logistic regression with that of na\"ive Bayes. Consider optimizing the practicable logistic loss rather than zero-one loss, the estimation error of the logistic regression can be bounded by making use of the definition of Rademacher complexity from classical statistical learning techniques.

\begin{prop}[Proof in Appendix~\ref{proof: Proposition logisticbound}]
\label{Prop: logisticbound}
With a high probability of at least $1 - \delta_0$, the following holds
\begin{equation*}
\label{Eq: logisticboundO}
    R_{\ell_{log}}(h_{Dis, m}) \leq R_{\ell_{log}}(h_{Dis, \infty}) + O(\sqrt{\frac{n}{m}}).
\end{equation*}
\end{prop}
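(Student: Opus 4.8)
The plan is to bound the logistic-loss estimation error $R_{\ell_{log}}(h_{Dis,m}) - R_{\ell_{log}}(h_{Dis,\infty})$ using a uniform-convergence argument over the bounded linear hypothesis class $\mathcal{H}_{lin}$. Since $h_{Dis,\infty} = \arg\min_{h \in \mathcal{H}_{lin}} R_{\ell_{log}}(h)$ (the population minimizer) and $h_{Dis,m}$ is (essentially) the empirical minimizer, the standard decomposition gives
\begin{align*}
R_{\ell_{log}}(h_{Dis,m}) - R_{\ell_{log}}(h_{Dis,\infty}) \le 2 \sup_{h \in \mathcal{H}_{lin}} \bigl| R_{\ell_{log}}(h) - \hat{R}_{\ell_{log}}(h) \bigr|,
\end{align*}
so the task reduces to controlling the right-hand side. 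First I would invoke the classical Rademacher-based generalization bound (e.g.\ from \citet{mohri2018foundations}): with probability at least $1-\delta_0$, the uniform deviation is at most $2 \mathcal{R}_m(\ell_{log} \circ \mathcal{H}_{lin}) + O(\sqrt{\log(1/\delta_0)/m})$, where $\ell_{log} \circ \mathcal{H}_{lin}$ denotes the loss class. This requires checking that the logistic loss composed with hypotheses in $\mathcal{H}_{lin}$ is bounded: since $|\langle \vw, \vx\rangle + b| \le W + B$ for $\vx \in [0,1]^n$ with $\|\vx\|_2 \le \sqrt{n}$ — wait, more carefully $\|\vx\|_2 \le \sqrt{n}$, so actually $|\langle \vw,\vx\rangle| \le W\sqrt{n}$; one uses this to bound the loss and the Lipschitz constant.

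Next I would peel off the loss via a contraction (Talagrand) lemma: the logistic loss $z \mapsto \log(1 + e^{-z})$ is $1$-Lipschitz, so $\mathcal{R}_m(\ell_{log} \circ \mathcal{H}_{lin}) \le \mathcal{R}_m(\mathcal{H}_{lin})$ up to constants. It then remains to bound the Rademacher complexity of the linear class $\{x \mapsto \langle \vw, x\rangle + b : \|\vw\|_2 \le W, |b| \le B\}$ over inputs in $[0,1]^n$. The standard bound gives $\mathcal{R}_m(\mathcal{H}_{lin}) \le \frac{W \sqrt{\mathbb{E}\|x\|_2^2}}{\sqrt{m}} + \frac{B}{\sqrt{m}} = O\!\bigl(\sqrt{n/m}\bigr)$, since $\|x\|_2^2 \le n$ for $x \in [0,1]^n$. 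Combining these pieces yields $R_{\ell_{log}}(h_{Dis,m}) - R_{\ell_{log}}(h_{Dis,\infty}) = O(\sqrt{n/m})$, absorbing the $\sqrt{\log(1/\delta_0)/m}$ term into the $O(\cdot)$ for fixed $\delta_0$.

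The main obstacle — really a bookkeeping subtlety rather than a deep difficulty — is handling the fact that $h_{Dis,m}$ is the minimizer of the \emph{empirical} logistic risk, so one side of the two-sided uniform bound must be applied at the data-dependent hypothesis $h_{Dis,m}$ while the other side is applied at the fixed $h_{Dis,\infty}$; the usual trick of adding and subtracting $\hat{R}_{\ell_{log}}(h_{Dis,m}) \le \hat{R}_{\ell_{log}}(h_{Dis,\infty})$ handles this cleanly. A secondary point is that the logistic loss is unbounded in general, so to apply the bounded-differences/Rademacher machinery one must first observe that the relevant hypotheses have bounded output on the domain (using $\|\vx\|_\infty \le 1$, hence $\|\vx\|_2 \le \sqrt n$), making the effective loss range $O(W\sqrt n + B)$; this is why the final rate scales with $\sqrt n$ rather than being dimension-free. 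I would then remark that the multiclass analogue, Proposition~\ref{Prop: multiclass logisticbound}, follows the same template but additionally invokes the vector-valued contraction lemma of \citet{maurer2016vector} to handle the $K$ coordinates, producing the extra $K^3$ factor.
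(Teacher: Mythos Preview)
Your proposal is correct and follows essentially the same route as the paper: the standard ERM decomposition via $\hat R_{\ell_{log}}(h_{Dis,m}) \le \hat R_{\ell_{log}}(h_{Dis,\infty})$, a one-sided Rademacher generalization bound (Mohri et al.), Talagrand contraction using that $\ell_{log}$ is $1$-Lipschitz, and the $W\sqrt{n/m}$ Rademacher bound for the bounded linear class. The only cosmetic difference is that the paper handles the second summand $\hat R_{\ell_{log}}(h_{Dis,\infty}) - R_{\ell_{log}}(h_{Dis,\infty})$ by a direct Hoeffding bound (since $h_{Dis,\infty}$ is fixed) rather than a second appeal to uniform convergence, and explicitly tracks the loss range $\log(1+e^{W\sqrt n + B})$ in the concentration term; both give the same $O(\sqrt{n/m})$ rate.
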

Theorem~\ref{lemma: binary H consistency bounds} means that we can bound the estimation error of the zero-one loss by the estimation error of the logistic loss, which makes it possible to obtain an upper bound of the sample complexity with respect to zero-one loss.

\begin{theorem}[Proof in Appendix~\ref{proof: Proof of Corollary cor: sample complexity of binary lr}]
\label{cor: sample complexity of binary lr}
Suppose that Assumption~\ref{Assumption: Optimal classifier has finite empirical loss} is valid. Then, it suffices to pick $m = O(n)$ training samples such that $R_{\ell_{0-1}}(h_{Dis, m}) \leq R_{\ell_{0-1}}(h_{Dis, \infty}) +\epsilon_0$  hold with probability $1 - \delta_0$, for any $\epsilon_0 \in [\sqrt{2\nu}, {\frac{e^{B}-1}{e^{B}+ 1}}]$ and $\delta_0 \in (0, 1)$.
\end{theorem}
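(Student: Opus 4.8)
This corollary is a direct assembly of three ingredients already in hand: the surrogate estimation-error bound of Proposition~\ref{Prop: logisticbound}, the binary $\mathcal{H}$-consistency bound of Theorem~\ref{lemma: binary H consistency bounds}, and the approximation-error control of Assumption~\ref{Assumption: Optimal classifier has finite empirical loss}. The plan is: bound the logistic estimation error of $h_{Dis,m}$ by $O(\sqrt{n/m})$, add the (bounded) $M$-term so as to match the precondition of the $\mathcal{H}$-consistency bound, invoke that bound to pass from the logistic to the zero-one loss, and finally clean up the two $M$-terms using their nonnegativity. Two elementary facts will be used throughout. First, for every loss $\ell$ and hypothesis set $\mathcal{H}$, $M_{\ell,\mathcal{H}} = R^*_{\ell,\mathcal{H}} - \mathbb{E}_{\vx}[\mathscr{C}^*_{\ell,\mathcal{H}}(\vx)] \ge 0$, since the infimum of an expectation is at least the expectation of the pointwise infimum. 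Second, by definition $h_{Dis,\infty}$ minimizes $R_{\ell_{log}}$ over $\mathcal{H}_{lin}$, so $R_{\ell_{log}}(h_{Dis,\infty}) = R^*_{\ell_{log},\mathcal{H}_{lin}}$, while $h_{Dis,\infty}\in\mathcal{H}_{lin}$ also gives $R_{\ell_{0-1}}(h_{Dis,\infty}) \ge R^*_{\ell_{0-1},\mathcal{H}_{lin}}$.

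First I would combine Proposition~\ref{Prop: logisticbound} with Assumption~\ref{Assumption: Optimal classifier has finite empirical loss}: with probability at least $1-\delta_0$, the quantity $\xi_m \coloneqq R_{\ell_{log}}(h_{Dis,m}) - R^*_{\ell_{log},\mathcal{H}_{lin}} + M_{\ell_{log},\mathcal{H}_{lin}}$ obeys $\xi_m \le O(\sqrt{n/m}) + \nu$, with the hidden constant depending only on $W,B,\delta_0$. Next, pick $m = O(n)$ large enough that $O(\sqrt{n/m}) \le \epsilon_0^2/2 - \nu$, hence $\xi_m \le \epsilon_0^2/2$; this is possible precisely because $\epsilon_0 > \sqrt{2\nu}$ makes $\epsilon_0^2/2 - \nu$ positive, and it forces $m$ of order $n/(\epsilon_0^2/2-\nu)^2$, i.e.\ $O(n)$ for fixed $\epsilon_0,\nu$. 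Since $\epsilon_0 \le \frac{e^{B}-1}{e^{B}+1}$, we also get $\xi_m \le \frac12\big(\frac{e^{B}-1}{e^{B}+1}\big)^2$, which is exactly the precondition of Theorem~\ref{lemma: binary H consistency bounds}. Applying that theorem then gives
\[
R_{\ell_{0-1}}(h_{Dis,m}) - R^*_{\ell_{0-1},\mathcal{H}_{lin}} + M_{\ell_{0-1},\mathcal{H}_{lin}} \le \sqrt{2}\,\xi_m^{1/2} \le \sqrt{2}\,(\epsilon_0^2/2)^{1/2} = \epsilon_0 .
\]
Dropping $M_{\ell_{0-1},\mathcal{H}_{lin}}\ge 0$ and using $R^*_{\ell_{0-1},\mathcal{H}_{lin}} \le R_{\ell_{0-1}}(h_{Dis,\infty})$ yields $R_{\ell_{0-1}}(h_{Dis,m}) \le R_{\ell_{0-1}}(h_{Dis,\infty}) + \epsilon_0$, as claimed.

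Since Proposition~\ref{Prop: logisticbound} (the Rademacher-complexity argument) and Theorem~\ref{lemma: binary H consistency bounds} (the $\mathcal{H}$-consistency inequality) are already established, the only genuine care is the arithmetic tying the admissible range of $\epsilon_0$ to the sample size, and this is where the main obstacle lies. The left endpoint $\epsilon_0 = \sqrt{2\nu}$ is degenerate: there $\epsilon_0^2/2-\nu = 0$, so no finite $m$ suffices, and the ``$m=O(n)$'' statement must be read for $\epsilon_0$ strictly above $\sqrt{2\nu}$ (equivalently, the implicit constant blows up as $\epsilon_0\downarrow\sqrt{2\nu}$). The right endpoint $\epsilon_0 = \frac{e^{B}-1}{e^{B}+1}$ is exactly the largest value for which the chosen $m$ still guarantees the precondition of Theorem~\ref{lemma: binary H consistency bounds}. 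The other subtlety is bookkeeping of the two $M$-terms: Assumption~\ref{Assumption: Optimal classifier has finite empirical loss} only upper-bounds $M_{\ell_{log},\mathcal{H}_{lin}}$ (needed inside $\xi_m$), whereas $M_{\ell_{0-1},\mathcal{H}_{lin}}$ is simply discarded via its nonnegativity. Everything else is routine substitution.
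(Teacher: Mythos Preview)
Your proposal is correct and follows essentially the same route as the paper: combine Proposition~\ref{Prop: logisticbound} with Theorem~\ref{lemma: binary H consistency bounds}, using Assumption~\ref{Assumption: Optimal classifier has finite empirical loss} to control $M_{\ell_{log},\mathcal{H}_{lin}}$ and the nonnegativity of $M_{\ell_{0-1},\mathcal{H}_{lin}}$ to finish. In fact you are more careful than the paper's two-line sketch, explicitly tracking the $M$-terms and flagging the degeneracy at the endpoint $\epsilon_0=\sqrt{2\nu}$.
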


By further using the Theorem 9.3 in~\cite{shalev2014understanding} and binary $\mathcal{H}$-consistency bound Theorem~\ref{lemma: binary H consistency bounds}, which states that for $n$-dimension logistic regression, it needs at least $\Omega(n)$ training samples to guarantee the estimation error is small enough with high probability, we know the result in Theorem~\ref{cor: sample complexity of binary lr} is tight.

In the rest of this subsection, we will discuss the sample complexity of na\"ive Bayes. The sketch of proofs has been adopted by~\cite{DBLP:conf/nips/NgJ01}. However, their results are somewhat ambiguous and without detailed derivation, which is very important to the extended analysis in Section~\ref{sec: Discriminative vs. Generative: Multiclass Classification} for multiclass classification. Thus, we present the proof for completeness.
\begin{mydef}
\label{Def :G}
We define the $G(\tau)$ which will be useful to bound the generalization error of binary na\"ive Bayes as
\begin{align*}
G(\tau) = \mathbb{P}_{(\vx,y) \sim \mathcal{D}}(\vert \Delta a_{Gen, \infty}(\vx, 1, 0)\vert \leq \tau n).
\end{align*}
\end{mydef}

\begin{theorem}[Proof in Appendix~\ref{proof: thm generalization bound}]
\label{Thm: generalization bound}
Suppose that Assumption~\ref{Assumption: p(y=k)} and~\ref{Assumption: parmeters bounded} hold. Then with probability at least $1-\delta$:
\begin{equation*}
    R_{\ell_{0-1}}(h_{Gen, m}) \leq R_{\ell_{0-1}}(h_{Gen, \infty}) + G\bigl(O(\sqrt{\frac{1}{m} \log(\frac{n}{\delta})})\bigr) + \delta.
\end{equation*}
\end{theorem}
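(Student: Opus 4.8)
\textbf{Proof proposal for Theorem~\ref{Thm: generalization bound}.}

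The plan is to relate the zero-one excess risk of $h_{Gen,m}$ over $h_{Gen,\infty}$ to the probability that a fresh sample $\vx$ is classified \emph{differently} by the two classifiers, and then to control that probability in two steps: (i) a concentration step showing that the estimated na\"ive Bayes parameters are close to the population ones with high probability, hence the activation difference $\Delta a_{Gen,m}(\vx,1,0)$ is within $O(\sqrt{\tfrac{1}{m}\log(n/\delta)})\cdot n$ of $\Delta a_{Gen,\infty}(\vx,1,0)$ uniformly over $\vx$; and (ii) a deterministic step showing that a disagreement between the two predictions on $\vx$ forces $|\Delta a_{Gen,\infty}(\vx,1,0)|$ to be small, i.e.\ at most $\tau n$ with $\tau = O(\sqrt{\tfrac{1}{m}\log(n/\delta)})$, so that the disagreement probability is bounded by $G(\tau)$. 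Combining, $R_{\ell_{0-1}}(h_{Gen,m}) - R_{\ell_{0-1}}(h_{Gen,\infty}) \le \Pr_{\vx}(\text{disagreement}) \le G(\tau) + \delta$, where the extra $\delta$ comes from the event on which the parameter concentration fails.

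First I would make step (i) precise. In the discrete case, each empirical conditional probability $\hat p(x_i=1\mid y=k)$ is a smoothed binomial average over $\#\{y=k\}$ samples; under Assumption~\ref{Assumption: p(y=k)} we have $\#\{y=k\} = \Omega(\rho_1 m)$ with high probability (Chernoff on the class counts), and under Assumption~\ref{Assumption: parmeters bounded} the true probabilities are bounded away from $0$ and $1$, so $\log\hat p(x_i\mid y)$ is a well-behaved (Lipschitz in the probability) function; a Hoeffding/Bernstein bound plus a union bound over the $O(n)$ coordinates and the $O(1)$ classes yields $\max_i |\log\hat p(x_i\mid y=k) - \log p(x_i\mid y=k)| \le O(\sqrt{\tfrac{1}{m}\log(n/\delta)})$ with probability $\ge 1-\delta$, and similarly for $\log\hat p(y=k)$. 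The continuous Gaussian case is analogous: $\hat\mu_{ki}$ and $\hat\sigma_i^2$ concentrate around $\mu_{ki},\sigma_i^2$, and since $\sigma_i^2\ge\rho_2$ the log-density terms are Lipschitz in the estimated parameters on the relevant range, giving the same per-coordinate bound. Summing over the $n$ coordinates gives $|\Delta a_{Gen,m}(\vx,1,0) - \Delta a_{Gen,\infty}(\vx,1,0)| \le O(\sqrt{\tfrac{n^2}{m}\log(n/\delta)}) = \tau n$ uniformly in $\vx$, which is exactly the scale appearing in $G$.

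For step (ii): on the good event, if $h_{Gen,m}$ and $h_{Gen,\infty}$ predict differently on $\vx$ then $\Delta a_{Gen,m}(\vx,1,0)$ and $\Delta a_{Gen,\infty}(\vx,1,0)$ have opposite signs, so $|\Delta a_{Gen,\infty}(\vx,1,0)| \le |\Delta a_{Gen,m}(\vx,1,0) - \Delta a_{Gen,\infty}(\vx,1,0)| \le \tau n$; hence the disagreement set is contained in $\{\vx : |\Delta a_{Gen,\infty}(\vx,1,0)| \le \tau n\}$, whose $\mathcal{D}$-probability is $G(\tau)$ by Definition~\ref{Def :G}. Finally, pointwise $\ell_{0-1}(h_{Gen,m}(\vx),y) - \ell_{0-1}(h_{Gen,\infty}(\vx),y) \le \mathbbm{1}[h_{Gen,m}(\vx)\ne h_{Gen,\infty}(\vx)]$, so taking expectations over $(\vx,y)$ and adding the $\delta$ failure probability of the concentration event gives the claimed bound. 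I expect the main obstacle to be the bookkeeping in step (i): getting a clean \emph{uniform-in-$\vx$} bound on the accumulated error $\sum_{i=1}^n$ of the per-coordinate log-likelihood deviations while simultaneously handling the random class counts $\#\{y=k\}$ in the denominators (which requires conditioning on a high-probability event for those counts and tracking how Assumptions~\ref{Assumption: p(y=k)} and~\ref{Assumption: parmeters bounded}, via $\rho_0$, enter the constants) — and doing this uniformly for both the discrete and the Gaussian parametrizations.
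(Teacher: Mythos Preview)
Your proposal is correct and follows essentially the same approach as the paper: the paper first proves the uniform bound $|\Delta a_{Gen}(\vx,1,0)-\Delta a_{Gen,\infty}(\vx,1,0)|\le O\bigl(n\sqrt{\tfrac{1}{m}\log(n/\delta)}\bigr)$ via per-parameter Hoeffding bounds (conditioning on $\#\{y=k\}\ge\gamma m$) plus a union bound over the $O(n)$ coordinates, then argues exactly as in your step~(ii) that disagreement forces $|\Delta a_{Gen,\infty}(\vx,1,0)|\le\tau n$, yielding $G(\tau)+\delta$. The ``bookkeeping'' obstacle you anticipate is indeed the bulk of the paper's work (their Lemmas on parameter concentration and on bounding the activation gap in the discrete and continuous cases), and your outline of how $\rho_0$ enters through Lipschitzness of $\log$ and lower bounds on $\sigma_i^2$ matches the paper's treatment.
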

The key quantity in this Theorem is the $G(\tau)$ , which must be small when $\tau$ is
small in order to bound $R_{\ell_{0-1}}(h_{Gen, m}) - R_{\ell_{0-1}}(h_{Gen, \infty})$. This property holds when we introduce the Assumption~\ref{Assumption: binary KL} and~\ref{Assumption: binary likelihood ratio var}.

\begin{assumption}
\label{Assumption: binary KL}
For $k_1, k_2 \in \{0,1\} (k_1 \ne k_2)$, it holds that $\sum_{i=1}^n D(p(x_i \vert y=k_1) \Vert p(x_i \vert y=k_2)) = \beta_{k_1, k_2}n = \Omega(n)$.
\end{assumption}

It means that samples from different classes ($y = 0$ and $y = 1$) should have different distributions on at least $\Omega(1)$ fraction of their features.

\begin{assumption}
\label{Assumption: binary likelihood ratio var}
    For all $k \in \{0,1\}$, it holds that $\mathbb{V}_{\vx}[\sum_{i=1}^n \log \frac{{p}(x_i\vert y=1) }{{p}(x_i\vert y=0) } \vert y = k] = \alpha_{k}n = O(n^r)$, where $r < 2$..
\end{assumption}

\begin{prop}[Proof in Appendix~\ref{proof: Prop: binary ploy -n}]
\label{Prop: binary ploy -n}
Suppose that Assumption~\ref{Assumption: p(y=k)},~\ref{Assumption: binary KL} and~\ref{Assumption: binary likelihood ratio var} hold, then $G(\tau)$ is polynomially small in $n$:
\begin{equation*}
    G(\tau) \leq \frac{\alpha}{(\tau - \zeta)^2 n},
\end{equation*}
where $\alpha = \min_{k} \vert\alpha_{k}\vert = O(n^{r-1})$, $\mathbb{E}_{\vx}[\Delta a_{Gen, \infty}(\vx, 1, 0)\vert y=k] = \zeta_{k} n$, $\zeta = \min_{k} \vert\zeta_{k}\vert = \Omega(1)$ and $\tau < \zeta$.
\end{prop}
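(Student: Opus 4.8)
The plan is to prove the bound on $G(\tau)$ by relating the event $\{|\Delta a_{Gen,\infty}(\vx,1,0)| \leq \tau n\}$ to a concentration statement for the log-likelihood-ratio statistic and then invoking Chebyshev's inequality. First I would unpack $\Delta a_{Gen,\infty}(\vx,1,0) = \sum_{i=1}^n \log\frac{p(x_i\vert y=1)}{p(x_i\vert y=0)} + \log\frac{p(y=1)}{p(y=0)}$, which is the population log-odds. The key observation is that under each class-conditional $y=k$, this statistic has a controlled mean and variance: by Assumption~\ref{Assumption: binary likelihood ratio var} the variance is $\alpha_k n = O(n^r)$ with $r<2$, and the mean $\mathbb{E}_{\vx}[\Delta a_{Gen,\infty}(\vx,1,0)\vert y=k] = \zeta_k n$ grows linearly in $n$. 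The crucial point is that $\zeta = \min_k |\zeta_k| = \Omega(1)$, which should follow from Assumption~\ref{Assumption: binary KL}: indeed, $\mathbb{E}_{\vx}[\sum_i \log\frac{p(x_i\vert y=1)}{p(x_i\vert y=0)} \vert y=k]$ is, up to the lower-order $\log\frac{p(y=1)}{p(y=0)}$ term, a signed sum of KL divergences of the form $\pm\sum_i D(p(x_i\vert y=k)\Vert p(x_i\vert y=k'))$, whose magnitude is $\beta_{k_1,k_2}n = \Omega(n)$ by Assumption~\ref{Assumption: binary KL} (and the marginal term $\log\frac{p(y=1)}{p(y=0)}$ is bounded by $\log\frac{1-\rho_1}{\rho_1}$ via Assumption~\ref{Assumption: p(y=k)}, hence negligible compared to a linear term).

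Next I would condition on the label: writing $G(\tau) = \sum_{k\in\{0,1\}} p(y=k)\,\mathbb{P}_{\vx}(|\Delta a_{Gen,\infty}(\vx,1,0)| \leq \tau n \mid y=k)$, it suffices to bound each conditional probability. Fix $k$ and let $S = \Delta a_{Gen,\infty}(\vx,1,0)$ under $y=k$, with $\mathbb{E}[S] = \zeta_k n$ (plus the bounded prior term, which I fold into a slightly perturbed $\zeta_k$ or handle as lower-order) and $\mathbb{V}[S] \leq \alpha_k n$. Since $\tau < \zeta \leq |\zeta_k|$, the event $|S| \leq \tau n$ forces $S$ to deviate from its mean $\zeta_k n$ by at least $(|\zeta_k| - \tau)n \geq (\zeta-\tau)n > 0$. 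Chebyshev's inequality then gives $\mathbb{P}(|S - \zeta_k n| \geq (\zeta-\tau)n) \leq \frac{\mathbb{V}[S]}{(\zeta-\tau)^2 n^2} \leq \frac{\alpha_k n}{(\zeta-\tau)^2 n^2} = \frac{\alpha_k}{(\zeta-\tau)^2 n}$. Taking the max over $k$ (the statement uses $\alpha = \min_k|\alpha_k|$, though one would more naturally expect $\max_k\alpha_k$ — I would double-check this against the definition used, but the argument structure is the same) and using $\sum_k p(y=k)=1$ yields $G(\tau) \leq \frac{\alpha}{(\zeta-\tau)^2 n}$, which is $O(n^{r-1}) \cdot \frac{1}{n} = O(n^{r-2}) \to 0$ since $r<2$, establishing that $G(\tau)$ is polynomially small.

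The main obstacle I anticipate is not the Chebyshev step itself but rigorously justifying that $\zeta = \Omega(1)$ from Assumption~\ref{Assumption: binary KL}, i.e., carefully matching the expectation of the log-likelihood-ratio sum to the KL-divergence sum. One must verify that $\mathbb{E}_{\vx}[\log\frac{p(x_i\vert y=1)}{p(x_i\vert y=0)} \mid y=k]$ equals $D(p(x_i\vert y=k)\Vert p(x_i\vert y=0)) - D(p(x_i\vert y=k)\Vert p(x_i\vert y=1))$ when $k\in\{0,1\}$ — for $k=1$ this is $D(p(x_i\vert y=1)\Vert p(x_i\vert y=0)) \geq 0$ summing to $\beta_{1,0}n$, and for $k=0$ it is $-D(p(x_i\vert y=0)\Vert p(x_i\vert y=1))$ summing to $-\beta_{0,1}n$; in both cases the magnitude is $\Omega(n)$. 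I would also need to confirm that replacing the finitely-many-parameter population quantities introduces no hidden constants blowing up with $n$, and to make sure the restriction $\tau < \zeta$ (so the denominator $(\zeta-\tau)^2$ is a positive constant) is exactly the hypothesis stated. The continuous (Gaussian) case requires checking that Assumption~\ref{Assumption: parmeters bounded} ($\sigma_i^2 \geq \rho_2$) keeps all these per-coordinate quantities bounded, but this is routine once the discrete skeleton is in place.
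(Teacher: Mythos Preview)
Your proposal is correct and follows essentially the same route as the paper: condition on the label, use Assumption~\ref{Assumption: binary KL} (together with the bounded prior term from Assumption~\ref{Assumption: p(y=k)}) to show the conditional mean of $\Delta a_{Gen,\infty}$ is $\pm\Omega(n)$, then apply Chebyshev's inequality with the variance bound from Assumption~\ref{Assumption: binary likelihood ratio var} and combine via the law of total probability. Your observation that $\alpha$ should be $\max_k \alpha_k$ rather than $\min_k|\alpha_k|$ is well-taken --- the paper's statement appears to contain a typo there, and your argument (and the paper's actual proof) both implicitly use the maximum.
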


Indeed, if the na\"ive Bayes assumption really holds, that is, feature values are independent given the label, we can obtain a much stronger guarantee for $G(\tau)$.

\begin{prop}[Proof in Appendix~\ref{proof: Prop: binary exp -n}]
\label{Prop: binary exp -n}
Suppose that Assumption~\ref{Assumption: p(y=k)},~\ref{Assumption: parmeters bounded},~\ref{Assumption: binary KL} and the na\"ive Bayes assumption hold, then $G(\tau)$ is exponentially small in $n$, that is,
\begin{equation*}
    G(\tau) \leq \exp{-O((\tau - \beta)^2n)},
\end{equation*}
where $\mathbb{E}_{\vx}[\Delta a_{Gen, \infty}(\vx, 1, 0)\vert y=k] = \zeta_{k} n$, $\zeta = \min_{k} \vert\zeta_{k}\vert = \Omega(1)$ and $\tau < \zeta$.
\end{prop}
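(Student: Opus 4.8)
The plan is to exploit the independence structure granted by the na\"ive Bayes assumption: once we condition on the label, $\Delta a_{Gen,\infty}(\vx,1,0)$ becomes a sum of $n$ independent, uniformly bounded summands, so Hoeffding's inequality replaces the Chebyshev argument behind Proposition~\ref{Prop: binary ploy -n} and upgrades polynomial decay to exponential decay. Concretely, I would first write $\Delta a_{Gen,\infty}(\vx,1,0) = \sum_{i=1}^n Z_i + c_0$, where $Z_i = \log\frac{p(x_i\mid y=1)}{p(x_i\mid y=0)}$ and $c_0 = \log\frac{p(y=1)}{p(y=0)}$; by Assumption~\ref{Assumption: p(y=k)} the offset satisfies $|c_0| \le \log\frac{1-\rho_1}{\rho_1} = O(1)$.

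Next I would establish that each $Z_i$ lies in an interval of length at most $2c$ for a constant $c = c(\rho_0)$ independent of $n$. In the discrete case this follows directly from Assumption~\ref{Assumption: parmeters bounded}, which gives $|Z_i| \le \log\frac{1-\rho_2}{\rho_2}$. In the continuous (shared-variance Gaussian) case, $Z_i = \frac{(\mu_{1i}-\mu_{0i})(2x_i - \mu_{0i} - \mu_{1i})}{2\sigma_i^2}$, which is $O(1/\rho_2)$ since $\mu_{0i}, \mu_{1i}, x_i \in [0,1]$ and $\sigma_i^2 \ge \rho_2$.

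I would then fix $k \in \{0,1\}$ and condition on $y = k$. Under the na\"ive Bayes assumption the $Z_i$ are independent given $y = k$, the conditional mean equals $\zeta_k n$, and the identities $\mathbb{E}[Z_i \mid y=1] = D(p(x_i\mid y=1)\Vert p(x_i\mid y=0))$ and $\mathbb{E}[Z_i \mid y=0] = -D(p(x_i\mid y=0)\Vert p(x_i\mid y=1))$, combined with Assumption~\ref{Assumption: binary KL}, show $|\zeta_k|$ is $\Omega(1)$ and in fact at least $\beta - O(1/n)$. Since $\tau < \zeta \le |\zeta_k|$, the triangle inequality and Hoeffding's inequality give
\[
\mathbb{P}\big(|\Delta a_{Gen,\infty}(\vx,1,0)| \le \tau n \,\big|\, y=k\big)
\;\le\; \mathbb{P}\big(|\Delta a_{Gen,\infty} - \zeta_k n| \ge (|\zeta_k| - \tau) n \,\big|\, y=k\big)
\;\le\; 2\exp\!\Big(-\tfrac{(|\zeta_k| - \tau)^2 n}{2c^2}\Big).
\]
Averaging over the label via $G(\tau) = \sum_k p(y=k)\,\mathbb{P}(\cdot\mid y=k) \le \max_k \mathbb{P}(\cdot \mid y=k)$ and using $|\zeta_k| \ge \beta - O(1/n)$ then yields $G(\tau) \le \exp\{-O((\tau-\beta)^2 n)\}$, as claimed.

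The conceptual content is light once independence is in hand; the main care points are (i) pinning down a single $n$-independent constant $c = c(\rho_0)$ that bounds every summand across both the discrete and continuous regimes, so that the Hoeffding exponent does not deteriorate with $n$, and (ii) identifying the concentration center $\zeta_k$ with $\beta$ up to lower-order terms, so that the rate can be reported in terms of $(\tau - \beta)$ rather than $(\tau - \zeta)$.
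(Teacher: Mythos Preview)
Your proposal is correct and matches the paper's argument closely: both condition on the label, invoke the na\"ive Bayes assumption to obtain independence of the summands $Z_i = \log\frac{p(x_i\mid y=1)}{p(x_i\mid y=0)}$, bound their range by a constant $c(\rho_0)$ (with exactly your discrete and continuous case analysis), and apply Hoeffding's inequality. The only cosmetic difference is that the paper uses the one-sided inclusion $\{|\Delta a| \le \tau n\} \subseteq \{\Delta a \le \tau n\}$ (exploiting the known sign of $\zeta_k$) before Hoeffding, whereas you use the two-sided reverse triangle inequality; both yield the same exponential rate, and your explicit remark linking $\zeta_k$ to $\beta$ up to $O(1/n)$ usefully resolves the $(\tau-\beta)$ versus $(\tau-\zeta)$ notation in the statement.
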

Using the results from Theorem~\ref{Thm: generalization bound}, we can obtain the sample complexity of na\"ive Bayes as follows.

\begin{theorem}[Proof in Appendix~\ref{Proof of Corollary cor: binary gen logn}]
\label{cor: binary gen logn}
Suppose that either precondition of Proposition~\ref{Prop: binary ploy -n} or Proposition~\ref{Prop: binary exp -n} holds. Then, it suffices to pick $m = O(\log n)$ training samples such that $R_{\ell_{0-1}}(h_{Gen, m}) \leq R_{\ell_{0-1}}(h_{Gen, \infty}) +\epsilon_0$  hold with probability $1 - \delta_0$, for any $\epsilon_0 \in (0,1)$ and $\delta_0 \in (0, \frac{\epsilon_0}{2}]$.
\end{theorem}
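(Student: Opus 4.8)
The plan is to feed the bound from Theorem~\ref{Thm: generalization bound} through whichever of Proposition~\ref{Prop: binary ploy -n} or Proposition~\ref{Prop: binary exp -n} is assumed, and then solve for the $m$ that makes the resulting expression at most $\epsilon_0$. Recall Theorem~\ref{Thm: generalization bound} gives, with probability at least $1-\delta$,
\[
R_{\ell_{0-1}}(h_{Gen,m}) \le R_{\ell_{0-1}}(h_{Gen,\infty}) + G\!\left(O\!\left(\sqrt{\tfrac{1}{m}\log\tfrac{n}{\delta}}\right)\right) + \delta.
\]
So it suffices to choose $m$ and $\delta$ so that $\delta \le \delta_0$, so that the argument $\tau_m := O(\sqrt{m^{-1}\log(n/\delta)})$ is small enough that $G(\tau_m) \le \epsilon_0 - \delta$, and so that the total probability budget is $\le\delta_0$. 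I would set $\delta = \delta_0$ (or $\delta_0/2$ to be safe about the budget), which is legitimate since $\delta_0 \le \epsilon_0/2 < 1$.

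The first branch uses Proposition~\ref{Prop: binary ploy -n}: for $\tau_m < \zeta$ we have $G(\tau_m) \le \frac{\alpha}{(\tau_m - \zeta)^2 n} \le \frac{\alpha}{(\zeta - \tau_m)^2 n}$, and since $\alpha = O(n^{r-1})$ with $r<2$ and $\zeta = \Omega(1)$, once $\tau_m \le \zeta/2$ this is $O(n^{r-1}/(\zeta^2 n)) = O(\zeta^{-2} n^{r-2}) = o(1)$; so for $n$ large enough this is already below $\epsilon_0 - \delta_0$ regardless of $m$, and we only need $m$ large enough to force $\tau_m \le \zeta/2$, i.e.\ $O(m^{-1}\log(n/\delta_0)) = O(\zeta^2)$, which holds for $m = O(\zeta^{-2}\log(n/\delta_0)) = O(\log n)$ (treating $\zeta,\delta_0,\epsilon_0$ as fixed constants). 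The second branch uses Proposition~\ref{Prop: binary exp -n}: $G(\tau_m) \le \exp(-O((\zeta-\tau_m)^2 n))$; again once $\tau_m \le \zeta/2$ this is $\exp(-O(\zeta^2 n))$, exponentially small in $n$, hence $\le \epsilon_0 - \delta_0$ for $n$ large, and the same $m = O(\log n)$ suffices to make $\tau_m \le \zeta/2$. In both cases a union bound over the single event in Theorem~\ref{Thm: generalization bound} keeps the failure probability at $\delta = \delta_0$.

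The one point requiring care — and the only real subtlety — is the interplay between $m$, $n$, and $\delta$ inside $\tau_m = O(\sqrt{m^{-1}\log(n/\delta)})$: because $\delta$ itself may be taken as small as we like (down to $\delta_0$) and the log is of $n/\delta_0$, one must check that $m = O(\log n)$ (with the hidden constant allowed to depend on the fixed quantities $\zeta,\epsilon_0,\delta_0,r$, and the constants $\alpha_k$) genuinely drives $\tau_m$ below $\zeta/2$; since $\log(n/\delta_0) = \log n + \log(1/\delta_0)$ and $\log(1/\delta_0)$ is a constant, $m = C\log n$ with $C$ depending on these constants works. I would also note the mild point that the statement asks for the bound only for $n$ sufficiently large (or absorb small $n$ into the constant), and that $\epsilon_0 - \delta_0 \ge \epsilon_0/2 > 0$ by the hypothesis $\delta_0 \le \epsilon_0/2$, so there is always positive room for the $G(\tau_m)$ term. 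Everything else is bookkeeping; the asymptotic-in-$n$ decay of $G$ supplied by Propositions~\ref{Prop: binary ploy -n} and~\ref{Prop: binary exp -n} is what makes $O(\log n)$ rather than $O(n)$ possible, exactly mirroring the binary argument of~\citet{DBLP:conf/nips/NgJ01}.
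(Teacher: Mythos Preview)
Your proposal is correct and follows essentially the same route as the paper: combine Theorem~\ref{Thm: generalization bound} with whichever of Proposition~\ref{Prop: binary ploy -n} or~\ref{Prop: binary exp -n} applies, set $\delta=\delta_0$, require $\tau_m$ to fall below a constant fraction of $\zeta$ (which costs $m=O(\log n)$), and then use the $n$-decay of $G$ to absorb the remaining term into $\epsilon_0-\delta_0\ge\epsilon_0/2$. The paper's version just writes out the implication chain more explicitly, but the argument is the same.
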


Compare Corollary~\ref{cor: sample complexity of binary lr} with~\ref{cor: binary gen logn}, we revisit the results in~\cite{DBLP:conf/nips/NgJ01}. But we highlight that our results are obtained based on different assumptions and novel $\mathcal{H}$-consistency bound.

\section{Deferred Results}
\label{sec: Deferred Results: Multiclass H-consistency}
Proofs of results in this section can be found in Section~\ref{sec: proof Deferred Results: Multiclass H-consistency}.

\begin{prop}[Distribution-dependent concave bound, proof in~\ref{proof: Theorem Thm: Distribution-dependent concave bound}]
\label{Thm: Distribution-dependent concave bound}
For a fixed distribution, if there exists a concave function $s: \mathbb{R}_+ \to \mathbb{R}$ and $\epsilon \ge 0$ such that the following holds for any $\boldsymbol{h} \in \mathcal{H}$ and $x \in \mathcal{X}$:
\begin{equation*}
    \langle \Delta \mathscr{C}_{\ell_2, \mathcal{H}}(\boldsymbol{h}, \vx) \rangle_\epsilon \leq s(\Delta \mathscr{C}_{\ell_1, \mathcal{H}}(\boldsymbol{h}, \vx)).
\end{equation*}
Then it holds for all $\boldsymbol{h} \in \mathcal{H}$ that
\begin{equation*}
    R_{\ell_2}(\boldsymbol{h}) - R^*_{\ell_2, \mathcal{H}} + M_{\ell_2, \mathcal{H}} \leq s(R_{\ell_1}(\boldsymbol{h}) - R^*_{\ell_1, \mathcal{H}} + M_{\ell_1, \mathcal{H}}) + \epsilon.
\end{equation*}
\end{prop}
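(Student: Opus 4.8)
The plan is to adapt the proof of the convex counterpart (Proposition~\ref{Thm: Distribution-dependent convex bound}), running Jensen's inequality in the reverse direction. The backbone is a telescoping identity: for any loss $\ell$, the definitions $R_\ell(\boldsymbol{h}) = \mathbb{E}_{\vx}[\mathscr{C}_\ell(\boldsymbol{h},\vx)]$ and $M_{\ell,\mathcal{H}} = R_{\ell,\mathcal{H}}^* - \mathbb{E}_{\vx}[\mathscr{C}_{\ell,\mathcal{H}}^*(\vx)]$ give
\[
R_\ell(\boldsymbol{h}) - R_{\ell,\mathcal{H}}^* + M_{\ell,\mathcal{H}} \;=\; \mathbb{E}_{\vx}\bigl[\mathscr{C}_\ell(\boldsymbol{h},\vx) - \mathscr{C}_{\ell,\mathcal{H}}^*(\vx)\bigr] \;=\; \mathbb{E}_{\vx}\bigl[\Delta\mathscr{C}_{\ell,\mathcal{H}}(\boldsymbol{h},\vx)\bigr].
\]
Instantiating this for both $\ell_1$ and $\ell_2$ reduces the target inequality to showing $\mathbb{E}_{\vx}[\Delta\mathscr{C}_{\ell_2,\mathcal{H}}(\boldsymbol{h},\vx)] \le s\bigl(\mathbb{E}_{\vx}[\Delta\mathscr{C}_{\ell_1,\mathcal{H}}(\boldsymbol{h},\vx)]\bigr) + \epsilon$.

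First I would remove the $\epsilon$-truncation pointwise. Since $\Delta\mathscr{C}_{\ell_2,\mathcal{H}}(\boldsymbol{h},\vx)$ is a gap from an infimum, it is nonnegative, and the elementary fact $t \le \langle t\rangle_\epsilon + \epsilon$ for $t \ge 0$ (trivial in both cases $t \le \epsilon$ and $t > \epsilon$) yields $\Delta\mathscr{C}_{\ell_2,\mathcal{H}}(\boldsymbol{h},\vx) \le \langle \Delta\mathscr{C}_{\ell_2,\mathcal{H}}(\boldsymbol{h},\vx)\rangle_\epsilon + \epsilon$ for every $\vx$. Taking expectations over $\vx$ and then invoking the hypothesis $\langle \Delta\mathscr{C}_{\ell_2,\mathcal{H}}(\boldsymbol{h},\vx)\rangle_\epsilon \le s(\Delta\mathscr{C}_{\ell_1,\mathcal{H}}(\boldsymbol{h},\vx))$ gives $\mathbb{E}_{\vx}[\Delta\mathscr{C}_{\ell_2,\mathcal{H}}(\boldsymbol{h},\vx)] \le \mathbb{E}_{\vx}[s(\Delta\mathscr{C}_{\ell_1,\mathcal{H}}(\boldsymbol{h},\vx))] + \epsilon$. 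The final step is Jensen's inequality for the concave $s$: since $\Delta\mathscr{C}_{\ell_1,\mathcal{H}}(\boldsymbol{h},\vx) \ge 0$ lies in the domain $\mathbb{R}_+$ of $s$, we get $\mathbb{E}_{\vx}[s(\Delta\mathscr{C}_{\ell_1,\mathcal{H}}(\boldsymbol{h},\vx))] \le s(\mathbb{E}_{\vx}[\Delta\mathscr{C}_{\ell_1,\mathcal{H}}(\boldsymbol{h},\vx)])$. Chaining these three displays with the telescoping identity applied to $\ell_1$ closes the argument.

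There is no substantial obstacle here: unlike the convex case, where the truncation slack must be carried through the outer function $g$ and emerges as $\max(g(0),g(\epsilon))$, in the concave case the truncation is peeled off \emph{before} $s$ is ever applied, so it contributes the clean additive $\epsilon$. The only points that need care are (i) the nonnegativity of both conditional regrets $\Delta\mathscr{C}_{\ell_i,\mathcal{H}}(\boldsymbol{h},\vx)$, which is what makes the truncation inequality and the applicability of $s$ on $\mathbb{R}_+$ valid, and (ii) a routine measurability/integrability check so that Jensen applies to the random variable $\Delta\mathscr{C}_{\ell_1,\mathcal{H}}(\boldsymbol{h},\vx)$. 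Note that, in contrast to the convex version, the argument does not even require $s$ to be monotone.
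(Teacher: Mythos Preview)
Your proposal is correct and follows essentially the same route as the paper's proof: expand $R_{\ell_2}(\boldsymbol{h}) - R^*_{\ell_2,\mathcal{H}} + M_{\ell_2,\mathcal{H}}$ as $\mathbb{E}_{\vx}[\Delta\mathscr{C}_{\ell_2,\mathcal{H}}(\boldsymbol{h},\vx)]$, peel off the $\epsilon$-truncation via $t \le \langle t\rangle_\epsilon + \epsilon$, apply the pointwise hypothesis, and finish with Jensen's inequality for the concave $s$. Your remark that monotonicity of $s$ is not needed here is also accurate.
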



\begin{theorem}
[Distribution-independent concave $\ell_{0-1}$ bound, proof in Appendix~\ref{proof: Proof of Theorem cor: Distribution-independent concave bound}]
\label{cor: Distribution-independent concave bound}
Suppose that $\mathcal{H}$ satisfies that $\{\mathop{\mathrm{argmax}}_{y \in \mathcal{Y}} h_y(\vx) : \boldsymbol{h} \in \mathcal{H}\} = \{1, \dots, K\}$ for any $\vx \in \mathcal{X}$. If there exists a non-decreasing concave function $s: \mathbb{R}_+ \to \mathbb{R}_+$ with $t \leq s(\mathcal{J}_{\ell}(t))$. Then it holds for all $\boldsymbol{h} \in \mathcal{H}$ and any distribution $\mathcal{D}$ that
\begin{equation*}
    R_{\ell_{0-1}}(\boldsymbol{h}) - R^*_{\ell_{0-1}, \mathcal{H}} + M_{\ell_{0-1}, \mathcal{H}} \leq s(R_{\ell(\boldsymbol{h})} - R^*_{\ell, \mathcal{H}} + M_{\ell, \mathcal{H}}).
\end{equation*}
\end{theorem}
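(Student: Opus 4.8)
The plan is to mirror the proof of the convex case, Theorem~\ref{cor: Distribution-independent convex Psi bound}, replacing the distribution-dependent convex bound (Proposition~\ref{Thm: Distribution-dependent convex bound}) by its concave counterpart (Proposition~\ref{Thm: Distribution-dependent concave bound}). Concretely, I would first establish the pointwise inequality that for every $\boldsymbol{h}\in\mathcal{H}$ and every $\vx\in\mathcal{X}$,
\[
\langle \Delta \mathscr{C}_{\ell_{0-1}, \mathcal{H}}(\boldsymbol{h}, \vx)\rangle_0 \;\le\; s\!\left(\Delta \mathscr{C}_{\ell, \mathcal{H}}(\boldsymbol{h}, \vx)\right),
\]
and then invoke Proposition~\ref{Thm: Distribution-dependent concave bound} with $\ell_2=\ell_{0-1}$, $\ell_1=\ell$, the given non-decreasing concave $s$, and $\epsilon=0$. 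Since the pointwise inequality makes no reference to the distribution (it uses only $\vp(\vx)$ and the combinatorial structure of $\mathcal{H}$), the resulting inequality $R_{\ell_{0-1}}(\boldsymbol{h}) - R^*_{\ell_{0-1}, \mathcal{H}} + M_{\ell_{0-1}, \mathcal{H}} \leq s(R_{\ell}(\boldsymbol{h}) - R^*_{\ell, \mathcal{H}} + M_{\ell, \mathcal{H}})$ holds simultaneously for all $\mathcal{D}$, giving the ``distribution-independent'' upgrade.

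To prove the pointwise inequality, fix $\boldsymbol{h}$ and $\vx$ and set $\hat y=\mathop{\mathrm{argmax}}_{y\in\mathcal{Y}}h_y(\vx)$, so $\boldsymbol{h}\in\mathcal{H}_{\hat y}(\vx)$. Using the hypothesis $\{\mathop{\mathrm{argmax}}_{y}h_y(\vx):\boldsymbol{h}\in\mathcal{H}\}=\{1,\dots,K\}$, the minimal zero-one conditional risk at $\vx$ is attained by a hypothesis predicting $\mathop{\mathrm{argmax}}_y p_y(\vx)$, hence $\mathscr{C}^*_{\ell_{0-1},\mathcal{H}}(\vx)=1-\max_y p_y(\vx)$ while $\mathscr{C}_{\ell_{0-1}}(\boldsymbol{h},\vx)=1-p_{\hat y}(\vx)$, so that
\[
t:=\Delta \mathscr{C}_{\ell_{0-1},\mathcal{H}}(\boldsymbol{h},\vx)=\max_y p_y(\vx)-p_{\hat y}(\vx)\in[0,1].
\]
This is exactly the quantity indexing $\mathcal{P}_{\hat y}(t)$, so $\vp(\vx)\in\mathcal{P}_{\hat y}(t)$, and the triple $(\hat y,\vp(\vx),\vx,\boldsymbol{h})$ is admissible in the infimum defining $\mathcal{J}_{\ell}(t)$ in Definition~\ref{def:trans}. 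Therefore $\mathcal{J}_{\ell}(t)\le \Delta\mathscr{C}_{\ell,\mathcal{H}}(\boldsymbol{h},\vx,\vp(\vx))=\Delta\mathscr{C}_{\ell,\mathcal{H}}(\boldsymbol{h},\vx)$. Applying the non-decreasing $s$ and then the assumed inequality $t\le s(\mathcal{J}_{\ell}(t))$ gives $\langle \Delta \mathscr{C}_{\ell_{0-1},\mathcal{H}}(\boldsymbol{h},\vx)\rangle_0\le t\le s(\mathcal{J}_{\ell}(t))\le s(\Delta\mathscr{C}_{\ell,\mathcal{H}}(\boldsymbol{h},\vx))$, which is the desired pointwise bound.

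I expect the one delicate point to be the identification $\Delta \mathscr{C}_{\ell_{0-1},\mathcal{H}}(\boldsymbol{h},\vx)=\max_y p_y(\vx)-p_{\hat y}(\vx)$: it is precisely here that the surjectivity assumption on $\mathcal{H}$ is needed, to guarantee that $\mathscr{C}^*_{\ell_{0-1},\mathcal{H}}(\vx)$ coincides with the unconstrained Bayes conditional risk at $\vx$; one must also check that passing through $\langle\cdot\rangle_0$ and through the non-decreasing $s$ does not weaken the chain. Everything else is routine bookkeeping, together with the already-granted Proposition~\ref{Thm: Distribution-dependent concave bound}, whose own proof contributes the Jensen step (using concavity of $s$) that turns the pointwise inequality into the expectation-level bound.
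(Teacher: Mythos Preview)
Your proposal is correct and follows essentially the same approach as the paper: identify $\Delta\mathscr{C}_{\ell_{0-1},\mathcal{H}}(\boldsymbol{h},\vx)=\max_y p_y(\vx)-p_{\hat y}(\vx)$ via the surjectivity assumption, observe that the tuple $(\hat y,\vp(\vx),\vx,\boldsymbol{h})$ is feasible in the infimum defining $\mathcal{J}_\ell(t)$ so $\mathcal{J}_\ell(t)\le\Delta\mathscr{C}_{\ell,\mathcal{H}}(\boldsymbol{h},\vx)$, chain through the assumption $t\le s(\mathcal{J}_\ell(t))$ and the monotonicity of $s$, and finish with Proposition~\ref{Thm: Distribution-dependent concave bound} at $\epsilon=0$. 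The only cosmetic difference is that the paper factors the argument through an intermediate distribution-dependent lemma that first lands on $s(\inf_{\boldsymbol{h}\in\mathcal{H}_{\hat y}(\vx)}\Delta\mathscr{C}_{\ell,\mathcal{H}}(\boldsymbol{h},\vx))$ before passing to the specific $\boldsymbol{h}$, whereas you go directly to $\Delta\mathscr{C}_{\ell,\mathcal{H}}(\boldsymbol{h},\vx)$; both routes are equivalent.
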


\begin{theorem}
[Multiclass $\mathcal{H}$-consistency bound for $\ell_{log}$ with one-hidden-layer neural network, proof in Appendix~\ref{Proofs of thm:H-consistency bound for log neural network}]
\label{thm: H-consistency bound for log neural network}
Given family of one-hidden-layer neural network hypotheses with ReLU activation function $(\cdot)_+$ $\mathcal{H}_{NN} = \{\vx \to \boldsymbol{\vh}(\vx): h_y(\vx) = \sum_{j=1}^n U_{yj}(\langle \vw_j, \vx\rangle + b)_+\}$, where $\boldsymbol{U} \in \mathbb{R}^{K \times n}$, $\vw_j \in \mathbb{R}^n$ and  $b \in \mathbb{R}$, then it holds for any distribution that $R_{\ell_{0-1}}(\boldsymbol{h}) - R^*_{\ell_{0-1}, \mathcal{H}_{NN}} + M_{\ell_{0-1}, \mathcal{H}_{NN}} \leq \sqrt{2}(R_{\ell_{log}}(\boldsymbol{h}) - R^*_{\ell_{log}, \mathcal{H}_{NN}} + M_{\ell_{log}, \mathcal{H}_{NN}})^{\frac{1}{2}}$.
\end{theorem}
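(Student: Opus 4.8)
The plan is to reduce this statement to the general distribution-independent bound of Theorem~\ref{cor: Distribution-independent convex Psi bound} applied to the one-hidden-layer network class $\mathcal{H}_{NN}$, exactly mirroring the route used to establish Theorem~\ref{thm: H-consistency bound for log} for $\mathcal{H}_{lin}$. Concretely, I would first verify the surjectivity hypothesis of Theorem~\ref{cor: Distribution-independent convex Psi bound}, namely that $\{\mathop{\mathrm{argmax}}_{y} h_y(\vx) : \boldsymbol{h} \in \mathcal{H}_{NN}\} = \{1,\dots,K\}$ for every $\vx$: given any target label $\hat y$, one can choose the output weights $\boldsymbol U$ so that row $\hat y$ dominates (e.g. set $U_{\hat y j}$ large and positive on a hidden unit that is active at $\vx$, and the other rows to zero), which is always possible since no norm constraints are imposed on $\boldsymbol{U}$, $\vw_j$, or $b$ here. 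Next I would compute (or lower-bound) the multiclass $\mathcal{H}$-estimation error transformation $\mathcal{J}_{\ell_{log}}(t)$ for $\mathcal{H} = \mathcal{H}_{NN}$.

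The crucial observation is that $\mathcal{H}_{NN}$ is \emph{unbounded}: the hidden layer with unrestricted $\boldsymbol U$ can realize, at any single fixed point $\vx$, an essentially arbitrary vector of scores $\boldsymbol h(\vx) \in \mathbb{R}^K$ (as long as $(\langle \vw_j, \vx\rangle + b)_+ > 0$ for at least one $j$, which can be arranged, and then $\boldsymbol U$ produces any target vector). This means $\mathcal{H}_{NN}$ behaves, pointwise, like $\mathcal{H}_{all}$ for the purpose of evaluating $\mathscr{C}_{\ell_{log}}(\boldsymbol h, \vx, \vp)$. Hence $\mathcal{J}_{\ell_{log}}(t)$ for $\mathcal{H}_{NN}$ should coincide with the transformation one gets in the Bayes-consistency setting ($\mathcal{H} = \mathcal{H}_{all}$), i.e. the $B \to \infty$ limit of the expression appearing in Theorem~\ref{thm: H-consistency bound for log}. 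In that limit $\tfrac{e^{2B}-1}{e^{2B}+K-1} \to 1$, the distributional restriction $\max_y p_y(\vx) - \min_y p_y(\vx) \le \tfrac{e^{2B}-1}{e^{2B}+K-1}$ becomes vacuous, and the precondition $R_{\ell_{log}}(\vh) - R^*_{\ell_{log},\mathcal{H}} + M_{\ell_{log},\mathcal{H}} \le \tfrac12(\cdot)^2 \to \tfrac12$ also becomes a weaker and weaker constraint — and in fact, since $\mathcal{J}_{\ell_{log}}(t) \ge \tfrac{t^2}{2}$ will hold globally on $[0,1]$ in this limiting case, the choice $g(t) = \tfrac{t^2}{2}$ satisfies $g(t) \le \mathcal{J}_{\ell_{log}}(t)$ with $g(0)=0$ and $g$ convex, for all $t \in [0,1]$ with \emph{no} side conditions. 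Plugging $g(t) = t^2/2$ into Theorem~\ref{cor: Distribution-independent convex Psi bound} and taking square roots yields exactly $R_{\ell_{0-1}}(\boldsymbol h) - R^*_{\ell_{0-1},\mathcal{H}_{NN}} + M_{\ell_{0-1},\mathcal{H}_{NN}} \le \sqrt{2}\,(R_{\ell_{log}}(\boldsymbol h) - R^*_{\ell_{log},\mathcal{H}_{NN}} + M_{\ell_{log},\mathcal{H}_{NN}})^{1/2}$.

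The main obstacle — and the step that deserves care rather than hand-waving — is the explicit computation of $\mathcal{J}_{\ell_{log}}(t)$ for $\mathcal{H}_{NN}$, specifically justifying the ``pointwise arbitrary scores'' claim cleanly: one must check that for the worst-case $(\hat y, \vp, \vx)$ the infimum over $\boldsymbol h \in \mathcal{H}_{NN, \hat y}(\vx)$ of $\Delta\mathscr{C}_{\ell_{log},\mathcal{H}_{NN}}(\boldsymbol h, \vx, \vp)$ reduces to the unconstrained softmax-cross-entropy regret subject only to the argmax being $\hat y$, and then solve that scalar optimization (it is the same optimization already carried out in the proof of Theorem~\ref{thm: H-consistency bound for log}, in the $B\to\infty$ regime). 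A secondary subtlety is the degenerate case where $\langle \vw_j, \vx\rangle + b \le 0$ for the particular $\vx$ and every realizable $\vw_j, b$ simultaneously — but since $b$ and $\vw_j$ range freely, for any fixed $\vx$ one can always pick $b$ large enough that the ReLU is active, so $\mathcal{H}_{NN}$ does not lose expressive power at any individual $\vx$; this should be stated as a short lemma. Once $\mathcal{J}_{\ell_{log}}$ is pinned down and the bound $\mathcal{J}_{\ell_{log}}(t) \ge t^2/2$ on $[0,1]$ is verified, the rest is an immediate invocation of Theorem~\ref{cor: Distribution-independent convex Psi bound} with $g(t) = t^2/2$.
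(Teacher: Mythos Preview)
Your proposal is correct and matches the paper's approach almost exactly: the paper also observes that, since $\mathcal{H}_{NN}$ is unbounded, the inner minimization $\inf_{\boldsymbol{h}\in\mathcal{H}}\mathscr{C}_{\ell_{log}}(\boldsymbol{h},\vx,\vp)$ is the unconstrained problem with optimum equal to the entropy of $\vp$, then reuses Lemma~\ref{lemma: inf H bar delta C} for $\inf_{\boldsymbol{h}\in\mathcal{H}_{\hat y}(\vx)}\mathscr{C}_{\ell_{log}}$, obtains $\mathcal{J}_{\ell_{log}}(t)\ge t^2/2$ on all of $[0,1]$, and invokes Theorem~\ref{cor: Distribution-independent convex Psi bound} with $g(t)=t^2/2$. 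The only cosmetic difference is that the paper redoes the KKT computation directly without constraints rather than phrasing it as the $B\to\infty$ limit of Theorem~\ref{thm: H-consistency bound for log}, but the content is the same.
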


 \begin{prop}[Proof in Appendix~\ref{proof: Proposition Prop: multiclass exp -n}]
\label{Prop: multiclass exp -n}
Suppose that Assumption~\ref{Assumption: p(y=k)},~\ref{Assumption: parmeters bounded},\ref{Assumption: multiclass KL} and na\"ive Bayes assumption hold, then $\widetilde{G}(\tau)$ is exponentially small in $n$:
\begin{equation*}
    \widetilde{G}(\tau) \leq \exp{-O((\tau - \zeta)^2n)},
\end{equation*}
where $\mathbb{E}_{\vx}[\Delta a_{Gen, \infty}(\vx, k_1, k_2)\vert y=k] = \zeta_{k_1,k_2,k} n$, $\zeta = \min_{k_1, k_2, k} \vert\zeta_{k_1,k_2,k}\vert = \Omega(1)$ and $\tau < \zeta$.
\end{prop}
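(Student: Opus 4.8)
The plan is to mimic the structure of the proof of Proposition~\ref{Prop: multiclass ploy -n}, but replace the Chebyshev step by a Hoeffding/Chernoff step, which is exactly where the na\"ive Bayes assumption buys us the improvement from polynomial to exponential decay. First I would condition on the label: by the law of total probability, $\mathbb{P}_{(\vx,y)\sim\cD}(|\Delta a_{Gen,\infty}(\vx,k_1,k_2)|\le\tau n) = \sum_{k\in\cY} p(y=k)\,\mathbb{P}(|\Delta a_{Gen,\infty}(\vx,k_1,k_2)|\le\tau n\mid y=k)$, so it suffices to bound each conditional probability by $\exp(-O((\tau-\zeta)^2 n))$ and then sum over the constantly many $k$ and take the max over $k_1,k_2$ in the definition of $\widetilde{G}$. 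Writing $T_i\coloneqq\log\frac{p(x_i\mid y=k_1)}{p(x_i\mid y=k_2)}$, we have $\Delta a_{Gen,\infty}(\vx,k_1,k_2)=\sum_{i=1}^n T_i+\log\frac{p(y=k_1)}{p(y=k_2)}$, and under the na\"ive Bayes assumption the $T_i$ are independent given $y=k$.

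Next I would establish that the $T_i$ are uniformly bounded. In the discrete case, $|T_i|\le\log\frac{1-\rho_2}{\rho_2}$ by Assumption~\ref{Assumption: parmeters bounded}. In the continuous case, the equal-variance Gaussian log-ratio is the linear function $T_i=\frac{\mu_{k_2 i}-\mu_{k_1 i}}{2\sigma_i^2}\big(2x_i-\mu_{k_1 i}-\mu_{k_2 i}\big)$, and since $x_i\in[0,1]$, $\mu_{ki}=\mathbb{E}[x_i\mid y=k]\in[0,1]$, and $\sigma_i^2\ge\rho_2$, we get $|T_i|\le 1/\rho_2$. Hence $|T_i|\le c$ for a constant $c=O(1/\rho_0)$, and the additive prior term satisfies $|\log\frac{p(y=k_1)}{p(y=k_2)}|\le\log\frac{1-\rho_1}{\rho_1}=O(1)$ by Assumption~\ref{Assumption: p(y=k)}. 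For the conditional mean I would use the identity $\mathbb{E}[T_i\mid y=k]=D(p(x_i\mid y=k)\Vert p(x_i\mid y=k_2))-D(p(x_i\mid y=k)\Vert p(x_i\mid y=k_1))$, obtained by inserting $\log p(x_i\mid y=k)$; summing and invoking Assumption~\ref{Assumption: multiclass KL} gives $\big|\sum_{i=1}^n\mathbb{E}[T_i\mid y=k]\big|=\beta_{k_1,k_2,k}n=\Omega(n)$, so $\mathbb{E}[\Delta a_{Gen,\infty}(\vx,k_1,k_2)\mid y=k]=\zeta_{k_1,k_2,k}n$ with $|\zeta_{k_1,k_2,k}|\ge\beta_{k_1,k_2,k}-O(1/n)$, which is at least $\zeta=\Omega(1)$ for $n$ large and strictly exceeds $\tau$ by the hypothesis $\tau<\zeta$.

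Finally I would close with Hoeffding. On the event $\{|\Delta a_{Gen,\infty}(\vx,k_1,k_2)|\le\tau n\}$, since the conditional mean has magnitude at least $\zeta n$, the centered sum must deviate by at least $|\zeta_{k_1,k_2,k}|n-\tau n-O(1)\ge(\zeta-\tau)n-O(1)$. Applying Hoeffding's inequality to the sum of the $n$ independent summands $T_i$, each confined to an interval of length at most $2c$, yields the conditional bound $2\exp\!\big(-\tfrac{((\zeta-\tau)n-O(1))^2}{2c^2 n}\big)=\exp(-O((\tau-\zeta)^2 n))$, the $O(1)$ correction being negligible against the $(\zeta-\tau)n$ term. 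Summing over the constant number of classes $k$ and taking the max over $k_1,k_2$ preserves the exponential rate, giving $\widetilde{G}(\tau)\le\exp(-O((\tau-\zeta)^2 n))$.

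I expect the only delicate points to be: (i) verifying the uniform boundedness of the increments $T_i$ in the continuous/Gaussian regime, where Assumption~\ref{Assumption: parmeters bounded} ($\sigma_i^2\ge\rho_2$) together with the boundedness of the feature cube is essential; and (ii) checking that the $O(1)$ additive terms (the log prior ratio, and the gap between $|\zeta_{k_1,k_2,k}|$ and $\beta_{k_1,k_2,k}$) do not spoil the $(\tau-\zeta)^2 n$ rate in the exponent, which they do not since they are lower-order against an $\Omega(n)$ shift. Everything else is a routine Hoeffding bound combined with a union over the finitely many class indices.
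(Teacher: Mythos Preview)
Your proposal is correct and follows essentially the same route as the paper: condition on $y=k$, use Assumption~\ref{Assumption: parmeters bounded} to bound each $T_i$ (the paper gets $|T_i|\le\log\frac{1-\rho_0}{\rho_0}$ in the discrete case and $|T_i|\le 2/\rho_0$ in the continuous case, matching your analysis up to constants), invoke independence from the na\"ive Bayes assumption, and apply Hoeffding. One small simplification: since $\zeta_{k_1,k_2,k}n$ is defined as $\mathbb{E}[\Delta a_{Gen,\infty}(\vx,k_1,k_2)\mid y=k]$, the log-prior term cancels exactly when you center, so your ``$-O(1)$'' correction in the deviation is not actually needed.
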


\section{Proofs of Section~\ref{sec: Discriminative vs. Generative: Multiclass Classification}}
\label{Proofs of sec: Discriminative vs. Generative: Multiclass Classification}

\subsection{Proof of Theorem~\ref{Thm: multiclass generalization bound}}
\label{proof: Proof of Theorem Thm: multiclass generalization bound}
The proof is very similar to the proof of binary case (Theorem~\ref{Thm: generalization bound}). Similarly, there are some lemmas to bound the  $\vert \Delta a_{Gen}(\vx, k_1, k_2) - \Delta a_{Gen,\infty}(\vx, k_1, k_2) \vert$ with high probability. 

\begin{lemma}
\label{lemma: multiclass discrete delta a}
In case of discrete inputs, and suppose that Assumption~\ref{Assumption: parmeters bounded} holds, then with probability at least $1 - \delta$, for every fixed $k_1, k_2$ the following holds:
\begin{align}
\label{eq: discrete delata a}
    \vert \Delta a_{Gen}(\vx, k_1, k_2) - \Delta a_{Gen,\infty}(\vx, k_1, k_2) \vert \leq \frac{4(n+1)}{\rho_0} \sqrt{\frac{1}{\rho_0m} \log(\frac{2(4n+2)}{\delta})} = O\bigl(n\sqrt{\frac{1}{m}\log(\frac{n}{\delta})}\bigr).
\end{align}
\end{lemma}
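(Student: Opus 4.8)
The plan is to directly control the difference between the finite-sample and population pair-activation functions by decomposing it into the contributions of the $n$ conditional log-likelihood terms and the one log-prior term, and then bounding each contribution by a concentration argument on the underlying empirical probabilities. Recall that $a_{Gen}(\vx,k)=\sum_{i=1}^n\log\hat p(x_i\mid y=k)+\log\hat p(y=k)$, so
\begin{align*}
\Delta a_{Gen}(\vx,k_1,k_2)-\Delta a_{Gen,\infty}(\vx,k_1,k_2)
&=\sum_{i=1}^n\Bigl[\log\tfrac{\hat p(x_i\mid y=k_1)}{\hat p(x_i\mid y=k_2)}-\log\tfrac{p(x_i\mid y=k_1)}{p(x_i\mid y=k_2)}\Bigr]\\
&\quad+\Bigl[\log\tfrac{\hat p(y=k_1)}{\hat p(y=k_2)}-\log\tfrac{p(y=k_1)}{p(y=k_2)}\Bigr].
\end{align*}
By the triangle inequality it suffices to bound each of the $n+1$ bracketed terms individually (uniformly over $\vx\in\{0,1\}^n$), and then take a union bound over the $\le 4n+2$ elementary empirical quantities involved: the $2n$ conditionals $\hat p(x_i=1\mid y=k_j)$ (from which $\hat p(x_i=0\mid y=k_j)=1-\hat p(x_i=1\mid y=k_j)$ is determined), and the priors $\hat p(y=k_j)$, for $j\in\{1,2\}$.

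\textbf{Key steps.} First I would fix $k_1,k_2$ and set up a Hoeffding (or Chernoff/Bernstein) bound for each empirical probability: conditioned on the event $\#\{y=k\}\ge \rho_0 m/2$ (which holds with high probability by Assumption~\ref{Assumption: p(y=k)} since $p(y=k)\ge\rho_1\ge\rho_0$), each $\hat p(x_i=1\mid y=k)$ is a Laplace-smoothed average of at least $\rho_0 m/2$ i.i.d.\ Bernoulli's, so $|\hat p(x_i=1\mid y=k)-p(x_i=1\mid y=k)|\le O\!\bigl(\sqrt{\tfrac{1}{\rho_0 m}\log(\tfrac{n}{\delta})}\bigr)$ for all $i$ simultaneously after the union bound; similarly for the two priors. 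Second, I would convert these additive errors on the probabilities into additive errors on the \emph{logarithms}: since by Assumption~\ref{Assumption: parmeters bounded} every population conditional lies in $[\rho_2,1-\rho_2]\subseteq[\rho_0,1-\rho_0]$ (and the Laplace smoothing keeps the empirical ones bounded away from $0$ and $1$ as well), $\log$ is Lipschitz with constant $1/\rho_0$ on the relevant interval, so each bracketed conditional term is at most $\tfrac{2}{\rho_0}\cdot O\!\bigl(\sqrt{\tfrac{1}{\rho_0 m}\log(\tfrac{n}{\delta})}\bigr)$, and likewise for the prior term. Third, I would sum the $n$ conditional contributions and add the single prior contribution, yielding the claimed $\tfrac{4(n+1)}{\rho_0}\sqrt{\tfrac{1}{\rho_0 m}\log(\tfrac{2(4n+2)}{\delta})}=O\!\bigl(n\sqrt{\tfrac1m\log(\tfrac n\delta)}\bigr)$, keeping track of constants so that the union bound over $4n+2$ events (plus the $K$ events for the class-count lower bounds, absorbed into constants) leaves total failure probability $\le\delta$.

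\textbf{Main obstacle.} The technically delicate point is the passage from a bound on $|\hat p-p|$ to a bound on $|\log\hat p-\log p|$: this is only Lipschitz if \emph{both} $\hat p$ and $p$ stay bounded away from $0$. The population side is handled by Assumption~\ref{Assumption: parmeters bounded}, but the empirical side requires care — one must argue that the Laplace smoothing parameter $\alpha>0$ together with the high-probability lower bound $\#\{y=k\}\gtrsim\rho_0 m$ forces $\hat p(x_i=1\mid y=k)\in[\tfrac{\alpha}{\rho_0 m+K\alpha},\,1-\tfrac{\alpha}{\rho_0 m+K\alpha}]$, which is enough to stay in a region where $\log$ has slope $O(1/\rho_0)$ once $m$ is not too small, or alternatively to note that when the estimate concentrates near $p\ge\rho_0$ the empirical value is itself $\ge\rho_0/2$ with high probability. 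A secondary bookkeeping obstacle is making sure the per-coordinate concentration is uniform in $i$ at the stated $\log(n/\delta)$ rate (hence the factor $n$ outside, not $n^{3/2}$), which is exactly why the union bound is taken over the $O(n)$ elementary quantities rather than over $\vx$. The continuous case (Gaussian conditionals with shared $\hat\sigma_i^2\ge\rho_0$) is analogous and would be treated by a companion lemma, bounding $|\hat\mu_{ki}-\mu_{ki}|$ and $|\hat\sigma_i^2-\sigma_i^2|$ and feeding them through the smoothness of the Gaussian log-density, but I expect the discrete estimates above to carry the conceptual weight.
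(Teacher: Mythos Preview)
Your proposal is correct and follows essentially the same route as the paper: decompose $\Delta a_{Gen}-\Delta a_{Gen,\infty}$ into the $2n$ conditional log terms and the two prior log terms, apply Hoeffding to each underlying empirical probability (conditioning on $\#\{y=k_j\}\ge\gamma m$ for the conditionals), take a union bound over the $4n+2$ events, and convert $|\hat p-p|\le\epsilon$ to $|\log\hat p-\log p|\le\epsilon/\gamma$ using that $\hat p\ge p-\epsilon\ge\rho_0-\epsilon=\gamma$ on the good event. The only cosmetic difference is that the paper handles Laplace smoothing by first proving the $\alpha=0$ case and then showing the $\alpha>0$ perturbation is $O(1/m)$ (hence negligible), whereas you fold it into the ``main obstacle'' discussion; your option~(b) there is exactly the paper's argument.
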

\begin{proof}
The proof is almost the same as the proof of the binary case (Lemma~\ref{lemma: discrete delta a}). Just replace the label $\{0,1\}$ with $\{k_1, k_2\}$ and notice that $\vert\log \hat{p}(y=k_1) - \log {p}(y=k_1)\vert \leq \epsilon$ no longer implies that $\vert\log \hat{p}(y=k_2) - \log {p}(y=k_2)\vert \leq \epsilon$.
\end{proof}

\begin{lemma}
\label{lemma: multiclass conti delta a}
In case of continuous inputs, and suppose that Assumption~\ref{Assumption: parmeters bounded} holds, then with probability at least $1 - \delta$, the following holds:
\begin{align}
   \vert \Delta a_{Gen}(\vx, k_1, k_2) - \Delta a_{Gen,\infty}(\vx, k_1, k_2) \vert &\leq 4(\frac{n}{3\rho_0}(\frac{4}{\rho^2_0} + \frac{3}{\rho_0} + \sqrt{\frac{2}{\rho_0}}) + \frac{1}{\rho_0}) \sqrt{\frac{1}{\rho_0 m} \log(\frac{2(5n+2)}{\delta})} \label{eq: conti delata a}\\
   &= O\bigl(n\sqrt{\frac{1}{m}\log(\frac{n}{\delta})}\bigr).
\end{align}
\end{lemma}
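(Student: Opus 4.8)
The plan is to mirror the proof structure of the discrete case (Lemma~\ref{lemma: multiclass discrete delta a}), adapting the concentration arguments to the Gaussian parametrization. Recall that in the continuous case $a_{Gen}(\vx, k) = \sum_{i=1}^n \log \hat p(x_i \mid y = k) + \log \hat p(y = k)$ where $\hat p(x_i \mid y = k)$ is the univariate Gaussian density with mean $\hat\mu_{ki}$ and shared variance $\hat\sigma_i^2$. Writing out $\log \hat p(x_i \mid y = k_1) - \log \hat p(x_i \mid y = k_2)$, the normalization constants involving $\hat\sigma_i$ cancel (since the variance does not depend on $y$), leaving a difference of the quadratic terms $-\frac{(x_i - \hat\mu_{k_1 i})^2}{2\hat\sigma_i^2} + \frac{(x_i - \hat\mu_{k_2 i})^2}{2\hat\sigma_i^2}$, which simplifies to a linear-in-$x_i$ expression with coefficients built from $\hat\mu_{k_1 i}, \hat\mu_{k_2 i}, \hat\sigma_i^2$. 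So $\Delta a_{Gen}(\vx, k_1, k_2)$ is an affine function of $\vx$ whose coefficients are smooth functions of the empirical parameters $\{\hat\mu_{ki}\}, \{\hat\sigma_i^2\}$ together with the class priors.

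First I would establish per-coordinate concentration of the empirical parameters against their population versions: $|\hat\mu_{ki} - \mu_{ki}|$, $|\hat\sigma_i^2 - \sigma_i^2|$, and $|\log\hat p(y=k) - \log p(y=k)|$ are each $O(\sqrt{\frac{1}{m}\log(\frac{1}{\delta})})$ with high probability, using Hoeffding/Bernstein-type bounds (the features lie in $[0,1]$, so the relevant random variables are bounded, and Assumption~\ref{Assumption: parmeters bounded} gives $\sigma_i^2 \ge \rho_2 \ge \rho_0$, keeping all the $1/\hat\sigma_i^2$ factors controlled once $\hat\sigma_i^2$ is close to $\sigma_i^2$). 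Then, since there are $5n+2$ such scalar quantities to control (two means per coordinate, one variance per coordinate, two log-priors), a union bound over them replaces $\delta$ by roughly $\delta/(5n+2)$, which is where the $\log(\frac{2(5n+2)}{\delta})$ term in the statement comes from. Second, I would propagate these parameter-level errors through the affine form of $\Delta a_{Gen}$: each of the $n$ coordinates contributes an error that, after bounding $x_i \in [0,1]$ and the $1/\rho_0$-type factors coming from dividing by $\hat\sigma_i^2$ and from Lipschitz-continuity of $\log$ near $p(y=k) \ge \rho_0$, is $O(\frac{1}{\rho_0}(\frac{1}{\rho_0^2} + \frac{1}{\rho_0} + \frac{1}{\sqrt{\rho_0}})\sqrt{\frac{1}{\rho_0 m}\log(\frac{n}{\delta})})$; summing over the $n$ coordinates and adding the $O(\frac{1}{\rho_0}\sqrt{\cdots})$ contribution of the prior terms yields exactly the bound displayed in~\eqref{eq: conti delata a}, and hence the $O(n\sqrt{\frac{1}{m}\log(\frac{n}{\delta})})$ rate.

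The main obstacle is the error analysis around the empirical variance $\hat\sigma_i^2$: unlike the discrete case where all estimated probabilities are bounded away from $0$ and $1$ by construction (Laplace smoothing) and appear only inside $\log$, here $\hat\sigma_i^2$ appears in the denominator of the Gaussian density, so I must carefully quantify the stability of $1/\hat\sigma_i^2$ and of the quadratic term $(x_i - \hat\mu_{ki})^2/(2\hat\sigma_i^2)$ under joint perturbation of $\hat\mu_{ki}$ and $\hat\sigma_i^2$. This requires a first-order Taylor/Lipschitz argument valid on the high-probability event where $\hat\sigma_i^2 \ge \rho_0/2$ (say), which accounts for the several powers of $1/\rho_0$ in the constant. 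The term $\mathbb{E}_y[\mathbb{V}(x_i \mid y)] = \sigma_i^2$ being an average over $y$ also means $\hat\sigma_i^2$ aggregates errors from all $K$ classes, but since $K$ is fixed this only affects constants. Once these perturbation bounds are in hand, the rest — union bound, summation over coordinates, collecting constants — is routine bookkeeping analogous to the binary continuous lemma (Lemma~\ref{lemma: conti delta a}) it generalizes.
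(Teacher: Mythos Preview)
Your plan is correct and close to the paper's argument, which simply defers to the binary continuous case (Lemma~\ref{lemma: conti delta a}) with $\{k_1,k_2\}$ in place of $\{0,1\}$ and one extra prior event in the union bound. Two small discrepancies are worth flagging. First, the paper does \emph{not} exploit the cancellation of the $\log\hat\sigma_i$ normalizing constant that you describe; it applies the triangle inequality to each $|\log\hat p(x_i\mid y=k)-\log p(x_i\mid y=k)|$ separately and bounds that quantity via the decomposition $\log\sigma_i-\log\hat\sigma_i + \tfrac{1}{2\hat\sigma_i^2\sigma_i^2}\bigl(\hat\sigma_i^2(x_i-\mu_{ki})^2-\sigma_i^2(x_i-\hat\mu_{ki})^2\bigr)$, together with the auxiliary estimate $|\sigma_i\hat\mu_{ki}-\hat\sigma_i\mu_{ki}|\le(1+\tfrac{2}{3\rho_0})\epsilon$ (Lemma~\ref{lemma: sigma mu}). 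The $\sqrt{2/\rho_0}$ term in the displayed constant comes precisely from the $|\log\sigma_i-\log\hat\sigma_i|$ piece, so your cancellation route would yield the same $O(n\sqrt{\tfrac{1}{m}\log(n/\delta)})$ rate but not literally the constant in~\eqref{eq: conti delata a}. Second, your breakdown ``two means per coordinate, one variance per coordinate, two log-priors'' sums to $3n+2$, not $5n+2$; the paper's $5n+2$ arises because the Hoeffding bound for each $\hat\mu_{ki}$ is established conditionally on $\#\{y=k\}\ge\gamma m$, so each mean event carries an extra copy of the prior failure probability inside it, and collecting these nested bounds (as in the proof of Lemma~\ref{Prop: parameters}) is what produces $5n+1$ in the binary case and $5n+2$ here.
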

\begin{proof}
The proof is almost the same as the proof of the binary case (Lemma~\ref{lemma: conti delta a}). Just replace the label $\{0,1\}$ with $\{k_1, k_2\}$ and notice that  $\vert\log \hat{p}(y=k_1) - \log {p}(y=k_1)\vert \leq \epsilon$ no longer implies that $\vert\log \hat{p}(y=k_2) - \log {p}(y=k_2)\vert \leq \epsilon$.
\end{proof}

Based on Lemma~\ref{lemma: multiclass discrete delta a} and~\ref{lemma: multiclass conti delta a}, we are ready to prove Theorem~\ref{Thm: multiclass generalization bound}. 
\begin{proof}
Let $\delta$ and $\epsilon = O\bigl(n\sqrt{\frac{1}{m}\log(\frac{n}{\delta})}\bigr)$ are what claimed in the Lemma~\ref{lemma: multiclass discrete delta a} for discrete case and  Lemma~\ref{lemma: multiclass conti delta a} for the continuous case.We calculate the $\vert R_{\ell_{0-1}}(\vh_{Gen, m}) - R_{\ell_{0-1}}(\vh_{Gen, \infty})\vert$ for multiclass na\"ive Bayes as follows:
\begin{align*}
    &\vert R_{\ell_{0-1}}(\vh_{Gen, m}) - R_{\ell_{0-1}}(\vh_{Gen, \infty})\vert \\
    &= \vert \mathbb{E}_{(\vx,y) \sim \mathcal{D}} [\ell_{0-1}(\vh_{Gen, m}, (\vx,y)) - \ell_{0-1}(\vh_{Gen, \infty}, (\vx,y))]\vert \\
    & \leq \mathbb{E}_{(\vx,y) \sim \mathcal{D}} \vert \ell_{0-1}(\vh_{Gen, m}, (\vx,y)) - \ell_{0-1}(\vh_{Gen, \infty}, (\vx,y))\vert \\
    &=\mathbb{P}_{(\vx,y) \sim \mathcal{D}} (\mathop{\mathrm{argmax}}\limits_{k}{a_{Gen}(\vx, k)} \neq \mathop{\mathrm{argmax}}\limits_{k}{a_{Gen, \infty}(\vx, k)}) \\
    &\leq  \mathbb{P}_{(\vx,y) \sim \mathcal{D}} (\cup_{k_1,k_2} \Delta a_{Gen}(\vx, k_1, k_2) \Delta a_{Gen,\infty}(\vx, k_1, k_2) < 0) \\
    &\leq \sum_{k_1 \ne k_2}\mathbb{P}_{(\vx,y) \sim \mathcal{D}} (\Delta a_{Gen}(\vx, k_1, k_2) \Delta a_{Gen,\infty}(\vx, k_1, k_2) < 0) \\
    &\leq \frac{K(K-1)}{2} \max_{k_1,k_2}\mathbb{P}_{(\vx,y) \sim \mathcal{D}} (\Delta a_{Gen}(\vx, k_1, k_2) \Delta a_{Gen,\infty}(\vx, k_1, k_2) < 0) \\
    &\leq \frac{K(K-1)}{2} \max_{k_1,k_2}\bigl( \mathbb{P} (\Delta a_{Gen}(\vx, k_1, k_2) \Delta a_{Gen,\infty}(\vx, k_1, k_2) < 0 \vert \vert \Delta a_{Gen}(\vx, k_1, k_2) - \Delta a_{Gen,\infty}(\vx, k_1, k_2) \vert \leq \epsilon) + \delta  \bigr)\\
    &\leq \frac{K(K-1)}{2} \max_{k_1,k_2}\bigl( \mathbb{P} (\vert \Delta a_{Gen,\infty}(\vx, k_1, k_2) \vert \leq O\bigl(n\sqrt{\frac{1}{m}\log(\frac{n}{\delta})}\bigr))) + \delta  \bigr)\\
    &=  \frac{K(K-1)}{2} \biggl(\widetilde{G}\bigl(O(\sqrt{\frac{1}{m} \log(\frac{n}{\delta})})\bigr) + \delta\biggr).
\end{align*}
The proof of Theorem~\ref{Thm: multiclass generalization bound} is complete.
\end{proof}

\subsection{Proof of Proposition~\ref{Prop: multiclass ploy -n}}
\label{proof: Proposition Prop: multiclass poly -n}

The following lemma states that the expectation of $\Delta a_{Gen, \infty}(\vx, k_1, k_2)$ condition on $y$ is always large, which is essential to the proof of Proposition~\ref{Prop: multiclass ploy -n}.

\begin{lemma}
\label{Prop: multiclass high E}
Suppose that Assumption~\ref{Assumption: multiclass KL} holds, then for every $k_1, k_2$ and $k \in \mathcal{Y}$, it holds that $\vert \mathbb{E}_{\vx}[\sum_{i=1}^n \log \frac{{p}(x_i\vert y=k_1) }{{p}(x_i\vert y=k_2) } \vert y = k] \vert = \Omega(n)$, which implies that $\vert \mathbb{E}[\Delta a_{Gen, \infty}(\vx, k_1, k_2)\vert y=k]\vert = \Omega(n)$.
\end{lemma}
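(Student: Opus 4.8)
\textbf{Proof plan for Lemma~\ref{Prop: multiclass high E}.}
The plan is to unpack the definition of $a_{Gen,\infty}$, separate the prior term from the sum over features, and reduce everything to a statement about KL divergences so that Assumption~\ref{Assumption: multiclass KL} applies directly. First I would write, for any fixed class $k$,
\begin{align*}
\mathbb{E}_{\vx}\Bigl[\sum_{i=1}^n \log \tfrac{p(x_i\vert y=k_1)}{p(x_i\vert y=k_2)} \,\Big\vert\, y=k\Bigr]
&= \sum_{i=1}^n \mathbb{E}_{x_i \sim p(\cdot \vert y=k)}\Bigl[\log \tfrac{p(x_i\vert y=k_1)}{p(x_i\vert y=k_2)}\Bigr]\\
&= \sum_{i=1}^n \Bigl( \mathbb{E}\bigl[\log \tfrac{p(x_i\vert y=k)}{p(x_i\vert y=k_2)}\bigr] - \mathbb{E}\bigl[\log \tfrac{p(x_i\vert y=k)}{p(x_i\vert y=k_1)}\bigr]\Bigr)\\
&= \sum_{i=1}^n \bigl( D(p(x_i\vert y=k)\,\Vert\,p(x_i\vert y=k_2)) - D(p(x_i\vert y=k)\,\Vert\,p(x_i\vert y=k_1))\bigr),
\end{align*}
where the middle step just inserts and removes $\log p(x_i\vert y=k)$ inside the expectation and the last step is the definition of KL divergence (using that the expectation is taken under $p(\cdot\vert y=k)$). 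By Assumption~\ref{Assumption: multiclass KL}, the absolute value of this last sum equals $\beta_{k_1,k_2,k}n = \Omega(n)$, which gives the first claim $\bigl|\mathbb{E}_{\vx}[\sum_{i=1}^n \log \frac{p(x_i\vert y=k_1)}{p(x_i\vert y=k_2)} \mid y=k]\bigr| = \Omega(n)$.

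For the second claim, I would recall that $a_{Gen,\infty}(\vx,k) = \sum_{i=1}^n \log p(x_i\vert y=k) + \log p(y=k)$, so that
\[
\Delta a_{Gen,\infty}(\vx,k_1,k_2) = \sum_{i=1}^n \log \tfrac{p(x_i\vert y=k_1)}{p(x_i\vert y=k_2)} + \log \tfrac{p(y=k_1)}{p(y=k_2)}.
\]
Taking the conditional expectation given $y=k$, the prior term $\log \frac{p(y=k_1)}{p(y=k_2)}$ is a constant, and by Assumption~\ref{Assumption: p(y=k)} (which bounds $p(y=k)$ away from $0$ and $1$) it is bounded in absolute value by a constant independent of $n$, namely $|\log \frac{p(y=k_1)}{p(y=k_2)}| \le \log\frac{1-\rho_1}{\rho_1} = O(1)$. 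Hence $\mathbb{E}[\Delta a_{Gen,\infty}(\vx,k_1,k_2)\mid y=k]$ differs from the $\Omega(n)$ quantity above by only an $O(1)$ additive term, so it is itself $\Omega(n)$ in absolute value.

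There is no real obstacle here; the only point requiring a small amount of care is the manipulation that turns the log-likelihood-ratio expectation into a difference of KL divergences — one must be careful that the expectation is under $p(\cdot\vert y=k)$ (not $k_1$ or $k_2$), which is exactly what makes the two terms genuine KL divergences, and that the finiteness of these divergences is guaranteed by Assumption~\ref{Assumption: parmeters bounded} in the discrete case and by $\sigma_i^2 \ge \rho_2$ together with bounded means in the continuous Gaussian case. I would also note in passing that $\beta_{k_1,k_2,k}$ as used in Assumption~\ref{Assumption: multiclass KL} is exactly $|\zeta_{k_1,k_2,k}|$ up to the $O(1/n)$ contribution of the prior, which is the identification $\beta = \zeta$ invoked later and in the experiments.
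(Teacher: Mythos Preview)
Your proposal is correct and follows essentially the same route as the paper: rewrite the conditional expectation of the log-likelihood ratio as a difference of per-feature KL divergences under $p(\cdot\mid y=k)$, invoke Assumption~\ref{Assumption: multiclass KL} to get the $\Omega(n)$ rate, and then absorb the $O(1)$ prior term via Assumption~\ref{Assumption: p(y=k)}. The only cosmetic difference is that the paper bounds the prior term using $\rho_0=\min(\rho_1,\rho_2)$ rather than $\rho_1$, and it does not explicitly comment on the finiteness of the KL terms or the $\beta\approx\zeta$ identification you mention, but neither point affects the argument.
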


\begin{proof}
We calculate $\vert \mathbb{E}_{\vx}[\sum_{i=1}^n \log \frac{{p}(x_i\vert y=k_1) }{{p}(x_i\vert y=k_2) } \vert y = k]\vert$ directly:
\begin{align*}
    &\vert \mathbb{E}_{\vx}[\sum_{i=1}^n \log \frac{{p}(x_i\vert y=k_1) }{{p}(x_i\vert y=k_2) } \vert y = k]\vert \\
    & = \vert\sum_{i=1}^n \mathbb{E}_{x_i}[ \log \frac{{p}(x_i\vert y=k_1) }{{p}(x_i\vert y=k_2) }\vert y = k] \vert \\
    & = \vert\sum_{i=1}^n \sum_{x_i}( {p}(x_i\vert y=k)\log \frac{{p}(x_i\vert y=k_1) }{{p}(x_i\vert y=k_2) })\vert \\
    & =\vert \sum_{i=1}^n \sum_{x_i}( {p}(x_i\vert y=k)\log \frac{{p}(x_i\vert y=k) }{{p}(x_i\vert y=k_2) } - {p}(x_i\vert y=k)\log \frac{{p}(x_i\vert y=k) }{{p}(x_i\vert y=k_1) }) \vert\\
    & = \vert\sum_{i=1}^n (D(p(x_i \vert y=k) \Vert p(x_i \vert y=k_2)) - D(p(x_i \vert y=k) \Vert p(x_i \vert y=k_1)))\vert \\
    &= \beta_{k_2, k_1, k}n= \Omega(n). & \text{(Assumption~\ref{Assumption: multiclass KL})}
\end{align*}
Furthermore, we can obtain
\begin{align*}
    &\vert\mathbb{E}_{\vx}[\Delta a_{Gen, \infty}(\vx, k_1, k_2)\vert y=k]\vert\\
    &=  \vert\mathbb{E}_{\vx}[\sum_{i=1}^n \log \frac{{p}(x_i\vert y=k_1) }{{p}(x_i\vert y=k_2) } + \log\frac{{p}(y=k_1) }{{p}(y=k_2) } \vert y = k]\vert \\
    & = \vert\sum_{i=1}^n \mathbb{E}_{x_i}[ \log \frac{{p}(x_i\vert y=k_1) }{{p}(x_i\vert y=k_2) }\vert y = k] +  \log\frac{{p}(y=k_1) }{{p}(y=k_2) }\vert \\
    &\ge \beta_{k_2, k_1, k}n - \vert  \log\frac{{p}(y=k_1) }{{p}(y=k_2)}\vert\\
    & \ge \beta_{k_2, k_1, k}n - \vert  \log\frac{\rho_0}{1 - \rho_0}\vert & (\text{Assumption~\ref{Assumption: p(y=k)}})\\
    &= \Omega(n),
\end{align*}
which implies that $\vert\mathbb{E}_{\vx}[\Delta a_{Gen, \infty}(\vx, k_1, k_2)\vert y=k]\vert = \Omega(n)$. Then the lemma is proved.
\end{proof}

Built upon Lemma~\ref{Prop: multiclass high E}, we prove Theorem~\ref{Prop: binary ploy -n} as follows.
\begin{proof}
For $k_1, k_2$ and $k$ which satisfies $\zeta_{k_1, k_2, k} > 0$, to bound $\mathbb{P}(\vert \Delta a_{Gen, \infty}(\vx, k_1, k_2) \vert \leq \tau n \vert y=k)$ with $\tau \in (0,\zeta)$. we can write:
\begin{align*}
    &\mathbb{P}(\vert \Delta a_{Gen, \infty}(\vx, k_1, k_2) \vert \leq \tau n \vert y=k) \\
    &\leq \mathbb{P}( \Delta a_{Gen, \infty}(\vx, k_1, k_2) \leq \tau n \vert y=k) \\
    &=\mathbb{P}(  \Delta a_{Gen, \infty}(\vx, k_1, k_2) - \zeta_{k_1, k_2, k} n \leq \tau n - \zeta_{k_1, k_2, k} n\vert y=k) \\
    &=\mathbb{P}(\sum_{i=1}^n \log \frac{{p}(x_i\vert y=k_1) }{{p}(x_i\vert y=k_2) } - \mathbb{E}_{\vx}(\sum_{i=1}^n \log \frac{{p}(x_i\vert y=k_1) }{{p}(x_i\vert y=k_2) }) \leq (\tau  - \zeta_{k_1, k_2, k}) n \vert y=k)\\
    &\leq \mathbb{P}(\vert \sum_{i=1}^n \log \frac{{p}(x_i\vert y=k_1) }{{p}(x_i\vert y=k_2) } - \mathbb{E}_{\vx}(\sum_{i=1}^n \log \frac{{p}(x_i\vert y=k_1) }{{p}(x_i\vert y=k_2) })\vert \ge (\zeta_{k_1, k_2, k} - \tau) n \vert y=k)\\
    &\leq \frac{\mathbb{V} [\sum_{i=1}^n \log \frac{{p}(x_i\vert y=k_1) }{{p}(x_i\vert y=k_2) } \vert y = k]}{(\tau -  \zeta_{k_1, k_2, k})^2 n^2} & \text{(Chebyshev inequality)}\\
    & = \frac{\alpha_{k_1, k_2, k} n }{(\tau -  \zeta_{k_1, k_2, k})^2 n^2} & \text{(Assumption~\ref{Assumption: multiclass likelihood ratio var})}\\ 
    &= \frac{\alpha_{k_1, k_2, k} }{(\tau -  \zeta_{k_1, k_2, k})^2 n}.
\end{align*}
Similar to the above discussion, we have $\mathbb{P}(\vert \Delta a_{Gen, \infty}(\vx, k_1, k_2) \vert \leq \tau n \vert y=k) \leq \frac{\alpha_{k_1, k_2, k} }{(\tau - \vert\zeta_{k_1, k_2, k}\vert)^2 n}$ for $k_1, k_2$ and $k$ which satisfies $\zeta_{k_1, k_2, k} < 0$. Finally, we can conclude that:
\begin{align*}
    \widetilde{G}(\tau) &= \max_{k_1,k_2}\sum_{k=1}^K p(y=k)\mathbb{P}(\vert \Delta a_{Gen, \infty}(\vx, k_1, k_2) \vert \leq \tau n \vert y=k)\\
    &\leq \max_{k_1,k_2}\sum_{k=1}^K p(y=k)\frac{\alpha_{k_1, k_2, k} }{(\tau - \vert\zeta_{k_1, k_2, k}\vert)^2 n}\\
    &\leq \max_{k_1,k_2}\frac{\max_{k}\alpha_{k_1, k_2, k} }{(\tau - \min_{k}\vert\zeta_{k_1, k_2, k}\vert)^2 n}\\
    &= \frac{\max_{k_1,k_2,k}\alpha_{k_1, k_2, k} }{(\tau - \min_{k_1,k_2,k}\vert\zeta_{k_1, k_2, k}\vert)^2 n} = \frac{\alpha }{(\tau - \zeta)^2 n}.
\end{align*}
\end{proof}

\subsection{Proof of Theorem~\ref{cor: multiclass NB sample complexity}}
\label{proof: Proof of Corollary cor: multiclass NB sample complexity}

\begin{proof}
In the case that precondition of Proposition~\ref{Prop: multiclass ploy -n} holds, combining Theorem~\ref{Thm: multiclass generalization bound} and Proposition~\ref{Prop: multiclass ploy -n}, we know that there exist positive $c = \Theta(1)$ and large enough $m$ such that when $c \sqrt{\frac{1}{m} \log(\frac{n}{\delta})} < \zeta$, with probability at least $1 - \delta$, we have
\begin{align*}
    R_{\ell_{0-1}}(h_{Gen, m}) &\leq R_{\ell_{0-1}}(h_{Gen, \infty}) + \frac{K(K-1)}{2} \biggl( \frac{\alpha}{(c\sqrt{\frac{1}{m} \log(\frac{n}{\delta})} - \zeta)^2n}  + \delta\biggr)\\
    &\leq R_{\ell_{0-1}}(h_{Gen, \infty}) + \frac{K^2}{2} \biggl( \frac{\alpha}{(c\sqrt{\frac{1}{m} \log(\frac{n}{\delta})} - \zeta)^2n}  + \delta\biggr).
\end{align*}
For fixed $\epsilon_0 \in (0,1)$, the logical relations listed in the following is correct:
\begin{align*}
    &R_{\ell_{0-1}}(h_{Gen, m}) \leq R_{\ell_{0-1}}(h_{Gen, \infty}) + \epsilon_0 \text{ with probability at least 1 - $\delta$} \\
    & \Leftarrow c \sqrt{\frac{1}{m} \log(\frac{n}{\delta})} < \zeta \wedge 0 < \delta < 1 \wedge \frac{K^2}{2} \biggl( \frac{\alpha}{(c\sqrt{\frac{1}{m} \log(\frac{n}{\delta})} - \zeta)^2n}  + \delta\biggr) \leq \epsilon_0\\
    & \Leftrightarrow c \sqrt{\frac{1}{m} \log(\frac{n}{\delta})} < \zeta \wedge  0 < \delta < 1  \wedge \frac{\alpha}{(c\sqrt{\frac{1}{m} \log(\frac{n}{\delta})} - \zeta)^2n} \leq \frac{2\epsilon_0}{K^2} - \delta\\
    & \Leftrightarrow c \sqrt{\frac{1}{m} \log(\frac{n}{\delta})} < \zeta \wedge 0 < \delta < 1 \wedge \frac{2\epsilon_0}{K^2} - \delta > 0 \wedge (c\sqrt{\frac{1}{m} \log(\frac{n}{\delta})} - \zeta)^2 \ge \frac{\alpha}{(\frac{2\epsilon_0}{K^2} - \delta)n}\\
    & \Leftrightarrow c \sqrt{\frac{1}{m} \log(\frac{n}{\delta})} < \zeta \wedge 0 < \delta < \frac{2\epsilon_0}{K^2} \wedge (c \sqrt{\frac{1}{m} \log(\frac{n}{\delta})} - \zeta)^2 \ge \frac{\alpha}{(\frac{2\epsilon_0}{K^2} - \delta)n}\\
    & \Leftrightarrow 0 < \delta < \frac{2\epsilon_0}{K^2} \wedge \zeta - c \sqrt{\frac{1}{m} \log(\frac{n}{\delta})} \ge \sqrt{\frac{\alpha}{(\frac{2\epsilon_0}{K^2} - \delta)n}}\\
    & \Leftrightarrow 0 < \delta < \frac{2\epsilon_0}{K^2} \wedge \zeta - \sqrt{\frac{\alpha}{(\frac{2\epsilon_0}{K^2} - \delta)n}} > 0 \wedge (\zeta - \sqrt{\frac{\alpha}{(\frac{2\epsilon_0}{K^2} - \delta)n}})^2 \ge c^2 \frac{1}{m} \log(\frac{n}{\delta})\\
    & \Leftarrow 0 < \delta < \frac{2\epsilon_0}{K^2} - \frac{\alpha}{\zeta^2 n} \wedge \frac{2\epsilon_0}{K^2} - \frac{\alpha}{\zeta^2 n} > 0 \wedge m  \ge \frac{c^2}{(\zeta - \sqrt{\frac{\alpha}{(\frac{2\epsilon_0}{K^2} - \delta)n}})^2} \log(\frac{n}{\delta})\\
    & \Leftarrow 0 < \delta \leq \frac{\epsilon_0}{K^2} \wedge  \frac{2\epsilon_0}{K^2} - \frac{\alpha}{\zeta^2 n} > \frac{\epsilon_0}{K^2} \wedge m  \ge \frac{c^2}{(\zeta - \sqrt{\frac{\alpha}{(\frac{2\epsilon_0}{K^2} - \delta)n}})^2} \log(\frac{n}{\delta})\\
    & \Leftrightarrow 0 < \delta \leq \frac{\epsilon_0}{K^2} \wedge K < \sqrt{\frac{\epsilon_0 \zeta^2 n}{\alpha}} \wedge m  \ge \frac{c^2}{(\zeta - \sqrt{\frac{\alpha}{(\frac{2\epsilon_0}{K^2} - \delta)n}})^2} \log(\frac{n}{\delta})\\
    & \Leftarrow 0 < \delta \leq \frac{\epsilon_0}{K^2} \wedge 2K < \sqrt{\frac{\epsilon_0 \zeta^2 n}{\alpha}} \wedge  m  \ge \frac{c^2}{(\zeta - \sqrt{\frac{\alpha K^2}{\epsilon_0 n}} )^2}\log(\frac{n}{\delta})\\
    & \Leftarrow 0 < \delta \leq \frac{\epsilon_0}{K^2} \wedge2K < \sqrt{\frac{\epsilon_0 \zeta^2 n}{\alpha}} \wedge  m  \ge \frac{c^2}{(\zeta - \frac{\zeta}{2} )^2}\log(\frac{n}{\delta})\\
    & \Leftarrow 0 < \delta \leq \frac{\epsilon_0}{K^2} \wedge2K < \sqrt{\frac{\epsilon_0 \zeta^2 n}{\alpha}} \wedge m = O(\log(n)).
\end{align*}

We note that in the case that precondition of Proposition~\ref{Prop: multiclass exp -n} holds, the $O(\log(n))$ result is correct as well. Combining Theorem~\ref{Thm: multiclass generalization bound} and Proposition~\ref{Prop: multiclass exp -n}, we know that there exist positive $b, c = \Theta(1)$ and large enough $m$ such that when $c \sqrt{\frac{1}{m} \log(\frac{n}{\delta})} < \zeta$, with probability at least $1 - \delta$, we have
\begin{align*}
    R_{\ell_{0-1}}(h_{Gen, m}) &\leq R_{\ell_{0-1}}(h_{Gen, \infty}) + \frac{K(K-1)}{2} \biggl(\exp(-b(c \sqrt{\frac{1}{m} \log(\frac{n}{\delta})} - \zeta)^2n) + \delta\biggr)\\
    &\leq R_{\ell_{0-1}}(h_{Gen, \infty}) + \frac{K^2}{2} \biggl(\exp(-b(c \sqrt{\frac{1}{m} \log(\frac{n}{\delta})} - \zeta)^2n) + \delta\biggr).
\end{align*}
For fixed $\epsilon_0 \in (0,1)$, the logical relations listed in the following is correct:
\begin{align*}
    &R_{\ell_{0-1}}(h_{Gen, m}) \leq R_{\ell_{0-1}}(h_{Gen, \infty}) + \epsilon_0 \text{ with probability at least 1 - $\delta$} \\
    & \Leftarrow c \sqrt{\frac{1}{m} \log(\frac{n}{\delta})} < \zeta \wedge 0 < \delta < 1 \wedge \frac{K^2}{2} \biggl(\exp(-b(c \sqrt{\frac{1}{m} \log(\frac{n}{\delta})} - \zeta)^2n) + \delta\biggr) \leq \epsilon_0\\
    & \Leftrightarrow c \sqrt{\frac{1}{m} \log(\frac{n}{\delta})} < \zeta \wedge  0 < \delta < 1  \wedge \exp(-b(c \sqrt{\frac{1}{m} \log(\frac{n}{\delta})} - \zeta)^2n) \leq \frac{2\epsilon_0}{K^2} - \delta\\
    & \Leftrightarrow c \sqrt{\frac{1}{m} \log(\frac{n}{\delta})} < \zeta \wedge 0 < \delta < 1 \wedge \frac{2\epsilon_0}{K^2} - \delta > 0 \wedge -b(c \sqrt{\frac{1}{m} \log(\frac{n}{\delta})} - \zeta)^2n \leq \log(\frac{2\epsilon_0}{K^2} - \delta)\\
    & \Leftrightarrow c \sqrt{\frac{1}{m} \log(\frac{n}{\delta})} < \zeta \wedge 0 < \delta < \frac{2\epsilon_0}{K^2} \wedge (c \sqrt{\frac{1}{m} \log(\frac{n}{\delta})} - \zeta)^2 \ge \frac{1}{bn} \log(\frac{1}{\frac{2\epsilon_0}{K^2} - \delta})\\
    & \Leftarrow c \sqrt{\frac{1}{m} \log(\frac{n}{\delta})} < \zeta \wedge 0 < \delta < \frac{2\epsilon_0}{K^2} \wedge \zeta - c \sqrt{\frac{1}{m} \log(\frac{n}{\delta})} \ge \sqrt{\frac{1}{bn} \log(\frac{1}{\frac{2\epsilon_0}{K^2} - \delta})}\\
    & \Leftrightarrow 0 < \delta < \frac{2\epsilon_0}{K^2} \wedge \zeta - \sqrt{\frac{1}{bn} \log(\frac{1}{\frac{2\epsilon_0}{K^2} - \delta})} \ge c \sqrt{\frac{1}{m} \log(\frac{n}{\delta})}\\
    & \Leftrightarrow 0 < \delta < \frac{2\epsilon_0}{K^2} \wedge \zeta - \sqrt{\frac{1}{bn} \log(\frac{1}{\frac{2\epsilon_0}{K^2} - \delta})} > 0 \wedge (\zeta - \sqrt{\frac{1}{bn} \log(\frac{1}{\frac{2\epsilon_0}{K^2} - \delta})})^2 \ge c^2 \frac{1}{m} \log(\frac{n}{\delta})\\
    & \Leftarrow 0 < \delta < \frac{2\epsilon_0}{K^2} - \exp(-b\zeta^2n) \wedge \frac{2\epsilon_0}{K^2} - \exp(-b\zeta^2n) > 0 \wedge m  \ge \frac{c^2}{(\zeta - \sqrt{\frac{1}{bn} \log(\frac{1}{\frac{2\epsilon_0}{K^2} - \delta})})^2} \log(\frac{n}{\delta})\\
    & \Leftarrow 0 < \delta \leq \frac{\epsilon_0}{K^2} \wedge \frac{2\epsilon_0}{K^2} - \exp(-b\zeta^2n) > \frac{\epsilon_0}{K^2} \wedge m  \ge \frac{c^2}{(\zeta - \sqrt{\frac{1}{bn} \log(\frac{1}{\frac{2\epsilon_0}{K^2} - \delta})})^2} \log(\frac{n}{\delta})\\
    & \Leftrightarrow 0 < \delta \leq \frac{\epsilon_0}{K^2} \wedge K < \sqrt{\epsilon_0} \exp(\frac{bn\zeta^2}{2}) \wedge m  \ge \frac{c^2}{(\zeta - \sqrt{\frac{1}{bn} \log(\frac{1}{\frac{2\epsilon_0}{K^2} - \delta})})^2} \log(\frac{n}{\delta})\\
    & \Leftarrow 0 < \delta \leq \frac{\epsilon_0}{K^2} \wedge 2K < \sqrt{\epsilon_0} \exp(\frac{bn\zeta^2}{2}) \wedge m  \ge \frac{c^2}{(\zeta - \sqrt{\frac{1}{bn} \log(\frac{K^2}{\epsilon_0})})^2} \log(\frac{K^2n}{\epsilon_0})\\
    & \Leftarrow 0 < \delta \leq \frac{\epsilon_0}{K^2} \wedge 2K < \sqrt{\epsilon_0} \exp(\frac{bn\zeta^2}{2}) \wedge m  \ge \frac{c^2}{\zeta^2(1 - \frac{\log(K^2/\epsilon_0)}{\log(4K^2/\epsilon_0)})^2} \log(\frac{K^2n}{\epsilon_0})\\
    & \Leftrightarrow 0 < \delta \leq \frac{\epsilon_0}{K^2} \wedge 2K < \sqrt{\epsilon_0} \exp(\frac{bn\zeta^2}{2}) \wedge m = O(\log(n)).
\end{align*}
\end{proof}

\subsection{Proof of Proposition~\ref{Prop: multiclass logisticbound}}
 \label{proof: Prop: multiclass logisticbound}
We first present the following lemmas to show Proposition~\ref{Prop: multiclass logisticbound}.

\begin{lemma}[\cite{mohri2018foundations}, Theorem 3.3]
\label{lemma: rademacher}
Let $\mathcal{G}$ be a family of functions mapping from $\mathcal{Z}$ to $[0, c]$. Then, for any $\delta$ > 0, with probability at least $1 - \delta$ over the draw of an $i.i.d.$ sample $S$ of size $m$, the following holds for all $g \in \mathcal{G}$:
\begin{align*}
    \mathbb{E}[g(z)] \leq \frac{1}{m} \sum_{i = 1}^m g(z_i) + 2\mathcal{R}_m(\mathcal{G}) + c\sqrt{\frac{1}{2m} \log(\frac{2}{\delta})},
\end{align*}
where $\mathcal{R}_m(\mathcal{G})$ is the Rademacher complexity of $\mathcal{G}$.
\end{lemma}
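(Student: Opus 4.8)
The plan is to establish this uniform deviation bound through the two classical ingredients behind Rademacher-complexity guarantees: a concentration-of-measure step that controls the uniform deviation around its expectation, followed by a symmetrization step that bounds that expectation by the Rademacher complexity. I would first define the one-sided uniform deviation
\[
\Phi(S) = \sup_{g \in \mathcal{G}} \Bigl( \mathbb{E}[g(z)] - \frac{1}{m}\sum_{i=1}^m g(z_i) \Bigr),
\]
so that the claimed inequality is equivalent to showing $\Phi(S) \le 2\mathcal{R}_m(\mathcal{G}) + c\sqrt{\frac{1}{2m}\log(\frac{2}{\delta})}$ with probability at least $1-\delta$.

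For the concentration step, I would observe that replacing a single coordinate $z_i$ of $S$ by any $z_i'$ changes $\Phi(S)$ by at most $c/m$: the substitution only affects the single summand $\frac{1}{m}g(z_i)$, each $g$ takes values in $[0,c]$, and a supremum of functions enjoying this bounded-difference property inherits it. Applying McDiarmid's (bounded-differences) inequality then yields $\Phi(S) \le \mathbb{E}_S[\Phi(S)] + c\sqrt{\frac{1}{2m}\log(\frac{1}{\delta})}$ with probability at least $1-\delta$; the stated $\log(\frac{2}{\delta})$ simply accommodates the extra union-bound factor that appears if one simultaneously wishes to replace the true Rademacher term by its empirical counterpart.

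For the symmetrization step, to bound $\mathbb{E}_S[\Phi(S)]$ I would introduce an independent ghost sample $S' = (z_1', \dots, z_m')$ from the same distribution, rewrite $\mathbb{E}[g(z)] = \mathbb{E}_{S'}[\frac{1}{m}\sum_i g(z_i')]$, and invoke $\sup_g \mathbb{E}_{S'}[\cdot] \le \mathbb{E}_{S'}\sup_g[\cdot]$ (convexity of the supremum, i.e.\ Jensen), giving $\mathbb{E}_S[\Phi(S)] \le \mathbb{E}_{S,S'} \sup_g \frac{1}{m}\sum_i (g(z_i') - g(z_i))$. Because $z_i$ and $z_i'$ are i.i.d., the joint law is invariant under swapping any pair, so I may insert Rademacher signs $\sigma_i \in \{-1,+1\}$ in front of each difference without changing the expectation; splitting the supremum of the sum into a sum of suprema and using that $\sigma_i$ and $-\sigma_i$ are identically distributed then collapses both pieces to $\mathbb{E}_{\sigma, S}\sup_g \frac{1}{m}\sum_i \sigma_i g(z_i)$, producing the factor $2\mathcal{R}_m(\mathcal{G})$. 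Combining the two steps gives the stated bound.

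I expect the symmetrization step to be the delicate part: one must justify carefully that (i) the inequality $\sup_g \mathbb{E}_{S'}[\cdot] \le \mathbb{E}_{S'}\sup_g[\cdot]$ is applied in the correct direction, (ii) the swap-invariance of the i.i.d.\ pairs legitimately introduces the sign variables, and (iii) the subadditivity of the supremum loses no more than the factor of two. The concentration step is by contrast routine once the $c/m$ bounded-difference constant is verified.
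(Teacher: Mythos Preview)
Your proof sketch is correct and is the standard McDiarmid-plus-symmetrization argument. Note, however, that the paper does not actually prove this lemma: it is quoted verbatim as Theorem~3.3 of \cite{mohri2018foundations} and used as a black box, so there is no ``paper's own proof'' to compare against. Your approach coincides with the textbook proof in the cited reference.
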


\begin{lemma}[\cite{maurer2016vector}, Corollary 4]
\label{lemma: vector concentration}
Let $\mathcal{X}$ be any set, $S = (\vx_1, ..., \vx_m) \in \mathcal{X}^m$, $\sigma_1, \dots, \sigma_m$ be Rademacher random variables, $\mathcal{H}$ be a class of functions $\boldsymbol{h} : \mathcal{X} \to \ell_2$ and let $\Phi : \ell_2 \to \mathbb{R}$ have Lipschitz norm $L$, where $\ell_2$ is Hilbert space of square summable sequences of real numbers. Then we have
\begin{equation*}
    \mathcal{R}_m(\Phi \circ \mathcal{H}) =  \frac{1}{m} \mathbb{E}_{S, \sigma} \sup_{\boldsymbol{h}} \sum_{i} \sigma_{i} \Phi(\boldsymbol{h}(\vx_i)) \leq \sqrt{2}L \frac{1}{m} \mathbb{E}_{S, \sigma} \sup_{\boldsymbol{h}} \sum_{i, k} \sigma_{ik} h_k(\vx_i).
\end{equation*}
where $\sigma_{ik}$ is an independent doubly indexed Rademacher sequence and $h_k(x_i)$ is
the $k$-th component of $\vh(x_i)$.
\end{lemma}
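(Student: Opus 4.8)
The plan is to follow the peeling strategy behind Maurer's vector-contraction inequality: reduce the statement to a one-coordinate lemma and iterate it over $i=1,\dots,m$. The one-coordinate lemma I would establish first is that for any nonempty set $S$, any $\psi:S\to\mathbb{R}$, any $v:S\to\ell_2$, any $L$-Lipschitz $g:\ell_2\to\mathbb{R}$, a single Rademacher variable $\sigma$, and an independent Rademacher sequence $(\sigma_k)_k$,
\begin{equation*}
\mathbb{E}_{\sigma}\sup_{s\in S}\bigl[\psi(s)+\sigma\, g(v(s))\bigr]\ \le\ \mathbb{E}_{(\sigma_k)}\sup_{s\in S}\Bigl[\psi(s)+\sqrt{2}\,L\textstyle\sum_k\sigma_k\, v(s)_k\Bigr].
\end{equation*}

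To prove this, I would expand $\mathbb{E}_\sigma$ to obtain $\tfrac12\sup_{s_1,s_2}\bigl[\psi(s_1)+\psi(s_2)+g(v(s_1))-g(v(s_2))\bigr]$, bound $g(v(s_1))-g(v(s_2))\le L\norm{v(s_1)-v(s_2)}_2$ by the Lipschitz hypothesis, and then invoke the sharp lower Khintchine bound (Szarek's inequality) $\norm{a}_2\le\sqrt{2}\,\mathbb{E}_{(\sigma_k)}\bigl|\sum_k\sigma_k a_k\bigr|$. After pushing the expectation outside the supremum with Jensen, I would drop the absolute value by noting that swapping $s_1\leftrightarrow s_2$ negates $\sum_k\sigma_k(v(s_1)_k-v(s_2)_k)$, so the supremum over the pair is unchanged; then split the paired supremum into $\sup_{s_1}[\psi(s_1)+\sqrt2 L\sum_k\sigma_k v(s_1)_k]+\sup_{s_2}[\psi(s_2)-\sqrt2 L\sum_k\sigma_k v(s_2)_k]$, recombine the two halves by reintroducing a fresh Rademacher $\sigma'$, and finally absorb $\sigma'$ using $(\sigma'\sigma_k)_k\stackrel{d}{=}(\sigma_k)_k$.

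With this lemma, I would prove the statement by processing the indices one at a time. Conditioning on every Rademacher variable except the one attached to index $1$, I apply the lemma with $S=\mathcal{H}$, $g=\Phi$, $v(\boldsymbol{h})=\boldsymbol{h}(\vx_1)$, and $\psi(\boldsymbol{h})=\sum_{j\ge2}\sigma_j\Phi(\boldsymbol{h}(\vx_j))$; this replaces $\sigma_1\Phi(\boldsymbol{h}(\vx_1))$ by $\sqrt2 L\sum_k\sigma_{1k}h_k(\vx_1)$, turning the equality into an inequality. Taking the outer expectation back and repeating for $i=2,\dots,m$ — at step $i$ the already-processed terms $\sqrt2 L\sum_k\sigma_{jk}h_k(\vx_j)$ for $j<i$ and the untouched terms $\sigma_j\Phi(\boldsymbol{h}(\vx_j))$ for $j>i$ are all absorbed into the arbitrary function $\psi$, and $(\sigma_{ik})_k$ is a fresh sequence independent of the conditioning — gives $\mathbb{E}_{S,\sigma}\sup_{\boldsymbol{h}}\sum_i\sigma_i\Phi(\boldsymbol{h}(\vx_i))\le\sqrt2 L\,\mathbb{E}_{S,\sigma}\sup_{\boldsymbol{h}}\sum_{i,k}\sigma_{ik}h_k(\vx_i)$. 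Dividing by $m$ yields the claim.

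The routine part is the chain of (in)equalities in the one-coordinate lemma. I expect the main obstacle to be the bookkeeping in the iteration: verifying via the tower property that conditioning on the remaining Rademacher variables at each step is legitimate, that the doubly-indexed sequences introduced at successive steps are mutually independent and independent of everything frozen, and that the arbitrary function $\psi$ genuinely carries both processed and unprocessed terms. The only quantitative input is Szarek's constant $1/\sqrt2$ in the lower Khintchine inequality, which is precisely what produces the $\sqrt2$ factor; a cruder bound on $\mathbb{E}\bigl|\sum_k\sigma_k a_k\bigr|$ would not close the argument, so I would cite Szarek's inequality directly.
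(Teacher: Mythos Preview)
The paper does not prove this lemma; it is quoted verbatim as Corollary~4 of \cite{maurer2016vector} and used as a black box in the proof of Proposition~\ref{Prop: multiclass logisticbound}. Your proposal correctly reconstructs Maurer's original argument: the one-coordinate peeling lemma via symmetrization, Lipschitzness, Szarek's sharp lower Khintchine bound to get the $\sqrt{2}$ constant, and then iteration over $i=1,\dots,m$ with the tower property. So there is nothing to compare against in this paper, and your proof is the standard one.
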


\begin{lemma}
\label{lemma: rademacher complexity of multiclass constrained linear hypothesis}
Let $\mathcal{X} = [0,1]^n$, $S = (\vx_1, ..., \vx_m) \in \mathcal{X}^m$, $\mathcal{H} = \{\vx \to \boldsymbol{h}(\vx): h_y(\vx) = \vw_y^Tx + b_y, \Vert \vw_y \Vert_2 \leq W, \vert b_y\vert \leq B, y \in \mathcal{Y}\}$ and $\sigma_{ik}$ be independent doubly indexed Rademacher sequence. Then we have
\begin{equation*}
    \frac{1}{m} \mathbb{E}_{S, \sigma} \sup_{\boldsymbol{h}} \sum_{i, k} \sigma_{ik} h_k(\vx_i) \leq WK \sqrt{\frac{n}{m}}.
\end{equation*}
\end{lemma}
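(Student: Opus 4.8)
The plan is to exploit the product structure of $\mathcal{H}_{lin}$: since the constraints $\|\vw_y\|_2\le W$ and $|b_y|\le B$ are imposed \emph{separately} for each class $y$, the supremum over $\vh$ factorizes across the $K$ classes. Concretely, writing $h_k(\vx_i)=\langle\vw_k,\vx_i\rangle+b_k$ and rearranging the double sum, I would first record the identity
\begin{align*}
\sup_{\vh\in\mathcal{H}_{lin}}\sum_{i,k}\sigma_{ik}h_k(\vx_i)
=\sum_{k=1}^K\ \sup_{\|\vw_k\|_2\le W,\ |b_k|\le B}\Big(\big\langle\vw_k,\ \textstyle\sum_i\sigma_{ik}\vx_i\big\rangle+b_k\textstyle\sum_i\sigma_{ik}\Big).
\end{align*}
Each inner supremum is a linear functional maximized over a ball times an interval, so by Cauchy--Schwarz for the $\vw_k$ part and the trivial bound for the $b_k$ part it equals $W\big\|\sum_i\sigma_{ik}\vx_i\big\|_2+B\big|\sum_i\sigma_{ik}\big|$.

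Next I would take the expectation over the Rademacher variables. By Jensen's inequality (concavity of $\sqrt{\cdot}$) together with the independence of the $\sigma_{ik}$, for each fixed $k$,
\begin{align*}
\mathbb{E}_\sigma\Big\|\sum_i\sigma_{ik}\vx_i\Big\|_2
\le\Big(\mathbb{E}_\sigma\Big\|\sum_i\sigma_{ik}\vx_i\Big\|_2^2\Big)^{1/2}
=\Big(\sum_i\|\vx_i\|_2^2\Big)^{1/2}
\le\sqrt{mn},
\end{align*}
using $\mathbb{E}[\sigma_{ik}\sigma_{jk}]=\mathbbm{1}_{i=j}$ and $\|\vx_i\|_2^2\le n$ since $\vx_i\in[0,1]^n$; likewise $\mathbb{E}_\sigma|\sum_i\sigma_{ik}|\le\sqrt{m}$. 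Summing over the $K$ classes and dividing by $m$ then yields
\begin{align*}
\frac{1}{m}\,\mathbb{E}_{S,\sigma}\sup_{\vh}\sum_{i,k}\sigma_{ik}h_k(\vx_i)
\le\frac{1}{m}\big(KW\sqrt{mn}+KB\sqrt{m}\big)
=WK\sqrt{\tfrac{n}{m}}+BK\sqrt{\tfrac{1}{m}},
\end{align*}
and since the bias contribution $BK\sqrt{1/m}$ carries no dependence on $n$ it is of strictly lower order and is absorbed into the stated bound (equivalently, one may fold $b_y$ into $\vw_y$ through a constant coordinate, or simply suppress it in the leading-order analysis that feeds Proposition~\ref{Prop: multiclass logisticbound}).

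There is no genuine obstacle here: the argument is a routine Rademacher-complexity computation built from Cauchy--Schwarz and the standard $L^2$ bound on a Rademacher sum. The only two points that require a little care are (i) justifying that the supremum truly \emph{decouples} across classes, which relies on the fact that $\mathcal{H}_{lin}$ places no joint constraint linking different pairs $(\vw_y,b_y)$, and (ii) keeping track of the bias term and confirming that it is negligible relative to the $\sqrt{n/m}$ main term.
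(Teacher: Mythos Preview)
Your approach is essentially the paper's: decouple the supremum across classes, apply Cauchy--Schwarz to each $\langle\vw_k,\sum_i\sigma_{ik}\vx_i\rangle$, then Jensen to bound $\mathbb{E}\|\sum_i\sigma_{ik}\vx_i\|_2\le\sqrt{mn}$. The only difference is in the treatment of the bias: the paper simply writes
\[
\mathbb{E}_{S,\sigma}\sup_{\vh}\sum_{i,k}\sigma_{ik}(\langle\vw_k,\vx_i\rangle+b_k)=\mathbb{E}_{S,\sigma}\sup_{\vh}\sum_{i,k}\sigma_{ik}\langle\vw_k,\vx_i\rangle
\]
as an equality and proceeds, whereas you correctly keep the residual $BK/\sqrt{m}$ and argue it is lower order. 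Your bookkeeping is cleaner here---the paper's equality is not literally true since $\mathbb{E}_\sigma\sup_{|b_k|\le B}b_k\sum_i\sigma_{ik}=B\,\mathbb{E}_\sigma|\sum_i\sigma_{ik}|>0$---but note that neither argument yields the \emph{exact} constant $WK\sqrt{n/m}$ without that additive term; for the downstream use in Proposition~\ref{Prop: multiclass logisticbound} only the $O(\sqrt{n/m})$ rate matters, so this is harmless.
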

\begin{proof}
\begin{align*}
     \frac{1}{m} \mathbb{E}_{S, \sigma} \sup_{\boldsymbol{h}} \sum_{i, k} \sigma_{ik} h_k(\vx_i) &= \frac{1}{m} \mathbb{E}_{S, \sigma} \sup_{\boldsymbol{h}} \sum_{i, k} \sigma_{ik} (\langle \vw_k, \vx_i\rangle + b_k)\\
    &= \frac{1}{m} \mathbb{E}_{S, \sigma} \sup_{\boldsymbol{h}} \sum_{i, k} \sigma_{ik} \langle \vw_k, \vx_i\rangle\\
    &\leq \sum_{k=1}^K \frac{1}{m} \mathbb{E}_{S, \sigma} \sup_{h_k} \sum_{i=1}^m \sigma_{ik} \langle \vw_k, \vx_i\rangle\\
    &= \sum_{k=1}^K  \frac{1}{m} \mathbb{E}_{S, \sigma} \sup_{h_k} \langle \vw_k,\sum_{i=1}^m \sigma_{ik}  \vx_i\rangle\\
    &\leq \sum_{k=1}^K  \frac{1}{m} \mathbb{E}_{S, \sigma} \sup_{h_k} \Vert \vw_k\Vert_2 \Vert\sum_{i=1}^m \sigma_{ik}  \vx_i\Vert_2\\
    &\leq \sum_{k=1}^K \frac{W}{m} \mathbb{E}_{S, \sigma} \Vert\sum_{i=1}^m \sigma_{ik}  \vx_i\Vert_2\\
    &\leq \sum_{k=1}^K \frac{W}{m} \sqrt{\mathbb{E}_{S, \sigma} \Vert\sum_{i=1}^m \sigma_{ik}  \vx_i\Vert_2^2}\\
    &= \sum_{k=1}^K \frac{W}{m} \sqrt{\sum_{i=1}^m\Vert  \vx_i\Vert_2^2}\\
    &\leq  \frac{WK}{m} \sqrt{m \times n} = WK\sqrt{\frac{n}{m}}
\end{align*}
\end{proof}

\begin{lemma}
\label{lemma: rademacher complexity of multiclass constrained linear hypothesis 2}
Let $\mathcal{X} = [0,1]^n$, $\mathcal{Y} = \{1, \dots, K\}$, $S = ((\vx_1, y_1), ..., (\vx_m,y_m)) \in (\mathcal{X}, \mathcal{Y})^m$, $\mathcal{H} = \{\vx \to \boldsymbol{h}(\vx): h_y(\vx) = \vw_y^T\vx + b_y, \Vert \vw_y \Vert_2 \leq W, \vert b_y\vert \leq B, y \in \mathcal{Y}\}$ and $\Pi_1(\mathcal{H}) = \{(\vx,y) \to h_y(\vx), y \in \mathcal{Y}, \boldsymbol{h} \in \mathcal{H}\}$. Then we have
\begin{equation*}
    \frac{1}{m} \mathbb{E}_{S, \sigma}[\sup_{\boldsymbol{h}} \sum_{i=1}^m \sigma_i h_{y_i}(\vx_i)] \leq K \mathcal{R}_m(\Pi_1(\mathcal{H})).
\end{equation*}
\end{lemma}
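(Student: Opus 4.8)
This is the multiclass analogue of the standard ``Rademacher decomposition'' step (cf.~\cite{mohri2018foundations}); the plan is to peel the label off each summand, split the single Rademacher sum into $K$ class-wise sums, and bound each of them by the empirical Rademacher complexity of $\Pi_1(\mathcal{H})$ — the factor $K$ being exactly the price of this union over labels. First I would fix a sample $S=((\vx_1,y_1),\dots,(\vx_m,y_m))$ and, for each $k\in\mathcal{Y}$, set $I_k=\{i:y_i=k\}$. Since $h_{y_i}(\vx_i)=\sum_{k=1}^K\mathbbm{1}_{y_i=k}\,h_k(\vx_i)$, subadditivity of the supremum gives
\begin{equation*}
\sup_{\boldsymbol h\in\mathcal{H}}\sum_{i=1}^m\sigma_i h_{y_i}(\vx_i)
=\sup_{\boldsymbol h\in\mathcal{H}}\sum_{k=1}^K\sum_{i\in I_k}\sigma_i h_k(\vx_i)
\le\sum_{k=1}^K\sup_{\boldsymbol h\in\mathcal{H}}\sum_{i\in I_k}\sigma_i h_k(\vx_i).
\end{equation*}
For a fixed $k$ the inner supremum depends on $\boldsymbol h$ only through the component $h_k$, and $\{\vx\mapsto h_k(\vx):\boldsymbol h\in\mathcal{H}\}\subseteq\Pi_1(\mathcal{H})$ by definition, so $\sup_{\boldsymbol h\in\mathcal{H}}\sum_{i\in I_k}\sigma_i h_k(\vx_i)\le\sup_{g\in\Pi_1(\mathcal{H})}\sum_{i\in I_k}\sigma_i g(\vx_i)$.

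Next I would enlarge each class-wise sum from the sub-sample $I_k$ to the whole index set using the elementary comparison inequality: for any family $\mathcal{G}$ of real functions and any $A\subseteq\{1,\dots,m\}$,
\begin{equation*}
\mathbb{E}_\sigma\Bigl[\sup_{g\in\mathcal{G}}\sum_{i\in A}\sigma_i g(\vx_i)\Bigr]\le\mathbb{E}_\sigma\Bigl[\sup_{g\in\mathcal{G}}\sum_{i=1}^m\sigma_i g(\vx_i)\Bigr].
\end{equation*}
This is Talagrand's contraction inequality applied to the $1$-Lipschitz maps $\phi_i(t)=\mathbbm{1}_{i\in A}\,t$, which vanish at $0$; equivalently it follows by inserting the missing indices one at a time, since for the partial sum $\phi$ and any fixed $g_0$ one has $\tfrac12\sup_{g\in\mathcal{G}}(\phi(g)+g(\vx_j))+\tfrac12\sup_{g\in\mathcal{G}}(\phi(g)-g(\vx_j))\ge\phi(g_0)$, whence $\mathbb{E}_{\sigma_j}\bigl[\sup_{g\in\mathcal{G}}(\phi(g)+\sigma_j g(\vx_j))\bigr]\ge\sup_{g\in\mathcal{G}}\phi(g)$, so inserting a fresh $\pm$ term never decreases the expected supremum. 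Applying this with $\mathcal{G}=\Pi_1(\mathcal{H})$ and $A=I_k$, summing over $k$, and dividing by $m$,
\begin{equation*}
\tfrac1m\,\mathbb{E}_\sigma\Bigl[\sup_{\boldsymbol h\in\mathcal{H}}\sum_{i=1}^m\sigma_i h_{y_i}(\vx_i)\Bigr]
\le\sum_{k=1}^K\tfrac1m\,\mathbb{E}_\sigma\Bigl[\sup_{g\in\Pi_1(\mathcal{H})}\sum_{i=1}^m\sigma_i g(\vx_i)\Bigr]
=K\,\hat{\mathcal{R}}_S(\Pi_1(\mathcal{H})).
\end{equation*}
Taking the expectation over $S$ turns $\hat{\mathcal{R}}_S(\Pi_1(\mathcal{H}))$ into $\mathcal{R}_m(\Pi_1(\mathcal{H}))$, which is the claimed bound.

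The only part that is not pure bookkeeping is the sub-sample comparison inequality above; the label decomposition and the inclusion into $\Pi_1(\mathcal{H})$ are immediate from the definitions. I would therefore isolate that comparison as a one-line auxiliary fact and note that it needs nothing about $\Pi_1(\mathcal{H})$ beyond its being a fixed class of real functions (in particular no symmetry). If one preferred to stay fully explicit for the linear family at hand, an alternative is to bound $\mathbb{E}_\sigma\sup_{\|\vw\|_2\le W}\langle\vw,\sum_{i\in I_k}\sigma_i\vx_i\rangle\le W\sqrt{\sum_{i\in I_k}\|\vx_i\|_2^2}$ for each $k$ and recombine over $k$ via concavity of $\sqrt{\cdot}$; but the decomposition argument matches the general form of the statement and is shorter.
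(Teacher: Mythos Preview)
Your proof is correct and follows the same overall structure as the paper's: decompose $h_{y_i}(\vx_i)=\sum_k\mathbbm{1}_{y_i=k}h_k(\vx_i)$, pull the sum over $k$ outside the supremum, and then show that the sub-sample Rademacher average $\mathbb{E}_\sigma\bigl[\sup_h\sum_{i\in I_k}\sigma_i h_k(\vx_i)\bigr]$ is dominated by the full-sample one. The only difference is in how that last step is justified: the paper writes $\mathbbm{1}_{y_i=k}=\tfrac12(\epsilon_i+1)$ with $\epsilon_i=2\mathbbm{1}_{y_i=k}-1\in\{-1,+1\}$, splits the supremum, and uses that $\sigma_i\epsilon_i\overset{d}{=}\sigma_i$ to recombine the two halves; you instead invoke Talagrand's contraction with the coordinatewise maps $\phi_i(t)=\mathbbm{1}_{i\in I_k}t$. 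Both arguments are one line and yield exactly the same inequality, so this is a cosmetic variation rather than a different route; your contraction phrasing has the mild advantage of making the step work for an arbitrary function class without any rewriting, while the paper's sign trick is slightly more self-contained.
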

\begin{proof}
\begin{align*}
    &\frac{1}{m} \mathbb{E}_{S, \sigma}[\sup_{\boldsymbol{h}} \sum_{i=1}^m \sigma_i h_{y_i}(\vx_i)] \\
    &= \frac{1}{m} \mathbb{E}_{S, \sigma}[\sup_{\boldsymbol{h}} \sum_{i=1}^m \sigma_i \sum_{y \in \mathcal{Y}} h_{y}(\vx_i) \mathbbm{1}_{y_i = y}]\\
    &\leq \frac{1}{m} \sum_{y \in \mathcal{Y}} \mathbb{E}_{S, \sigma}[\sup_{\boldsymbol{h}} \sum_{i=1}^m \sigma_i h_{y}(\vx_i) \mathbbm{1}_{y_i = y}]\\
    &=\sum_{y \in \mathcal{Y}}\frac{1}{m} \mathbb{E}_{S, \sigma}[\sup_{\boldsymbol{h}} \sum_{i=1}^m \sigma_i h_{y}(\vx_i) (\frac{\epsilon_i}{2} + \frac{1}{2})] & (\epsilon_i = 2\times\mathbbm{1}_{y_i = y} - 1 \in \{-1, +1\})\\
    &\leq \sum_{y \in \mathcal{Y}}\frac{1}{2m} \mathbb{E}_{S, \sigma}[\sup_{\boldsymbol{h}} \sum_{i=1}^m \sigma_i h_{y}(\vx_i){\epsilon_i}] + \frac{1}{2m} \mathbb{E}_{S, \sigma}[\sup_{\boldsymbol{h}} \sum_{i=1}^m \sigma_i h_{y}(\vx_i)]\\
    &= \sum_{y \in \mathcal{Y}} \frac{1}{m} \mathbb{E}_{S, \sigma}[\sup_{\boldsymbol{h}} \sum_{i=1}^m \sigma_i h_{y}(\vx_i)] \\
    &\leq K \mathcal{R}_m(\Pi_1(\mathcal{H})).
\end{align*}
\end{proof}

\begin{lemma}
\label{lemma: rademacher complexity of multiclass constrained linear hypothesis 3}
Let $\mathcal{X} = [0,1]^n$, $\mathcal{Y} = \{1, \dots, K\}$, $S = ((\vx_1, y_1), ..., (\vx_m,y_m)) \in (\mathcal{X}, \mathcal{Y})^m$, $\mathcal{H} = \{x \to \boldsymbol{h}(\vx): h_y(\vx) =\vw_y^T\vx + b_y, \Vert \vw_y \Vert_2 \leq W, \vert b_y\vert \leq B, y \in \mathcal{Y}\}$ and $\Pi_1(\mathcal{H}) = \{(\vx,y) \to h_y(\vx), y \in \mathcal{Y}, \boldsymbol{h} \in \mathcal{H}\}$. Then we have
\begin{equation*}
   \mathcal{R}_m(\Pi_1(\mathcal{H})) \leq W\sqrt{\frac{n}{m}}.
\end{equation*}
\end{lemma}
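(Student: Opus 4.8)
The plan is to unfold the definition of $\mathcal{R}_m(\Pi_1(\mathcal{H}))$ and reduce it to the Rademacher complexity of a single norm-bounded linear class on $[0,1]^n$, then bound the latter by a textbook Cauchy--Schwarz/Jensen computation. The key first observation is that the multiclass structure is vacuous here: an element of $\Pi_1(\mathcal{H})$ is the map $\vx \mapsto h_y(\vx) = \langle\vw_y,\vx\rangle + b_y$ for some $y \in \mathcal{Y}$ and some $\vh \in \mathcal{H}$, and since the parameters $(\vw_y, b_y)$ attached to coordinate $y$ range over $\{\|\vw\|_2 \le W\}\times\{|b|\le B\}$ independently of the other coordinates, the supremum over $y$ and $\vh$ collapses to a supremum over a single pair $(\vw,b)$ with $\|\vw\|_2 \le W$, $|b| \le B$. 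Writing $S = ((\vx_1,y_1),\dots,(\vx_m,y_m))$, this gives
\begin{align*}
m\,\mathcal{R}_m(\Pi_1(\mathcal{H})) &= \mathbb{E}_{S,\sigma}\Big[\sup_{\|\vw\|_2 \le W,\,|b|\le B}\sum_{i=1}^m \sigma_i\big(\langle\vw,\vx_i\rangle + b\big)\Big]\\
&= \mathbb{E}_{S,\sigma}\Big[\sup_{\|\vw\|_2\le W}\Big\langle\vw,\,\sum_{i=1}^m\sigma_i\vx_i\Big\rangle\Big] + \mathbb{E}_{S,\sigma}\Big[\sup_{|b|\le B}\,b\sum_{i=1}^m\sigma_i\Big].
\end{align*}

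Next I would evaluate the two suprema in closed form: by Cauchy--Schwarz, the supremum of a linear functional over the centered $\ell_2$-ball of radius $W$ equals $W$ times the norm of that functional, so the first expectation is $W\,\mathbb{E}_{S,\sigma}\big\|\sum_i\sigma_i\vx_i\big\|_2$ and the second is $B\,\mathbb{E}_{S,\sigma}\big|\sum_i\sigma_i\big|$. Applying Jensen's inequality to pull each expectation inside a square root and using that the $\sigma_i$ are independent and centered yields $\mathbb{E}_\sigma\big\|\sum_i\sigma_i\vx_i\big\|_2^2 = \sum_i\|\vx_i\|_2^2$ and $\mathbb{E}_\sigma\big(\sum_i\sigma_i\big)^2 = m$. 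Since $\vx_i \in [0,1]^n$ we have $\|\vx_i\|_2^2 \le n$, hence $\sum_i\|\vx_i\|_2^2 \le mn$, and therefore $m\,\mathcal{R}_m(\Pi_1(\mathcal{H})) \le W\sqrt{mn} + B\sqrt{m}$, i.e. $\mathcal{R}_m(\Pi_1(\mathcal{H})) \le W\sqrt{n/m} + B/\sqrt{m}$. The bias contribution is lower-order in the dimension $n$ and is absorbed into the bound (equivalently, one may append a constant coordinate to the features and fold $b$ into $\vw$), giving the stated $\mathcal{R}_m(\Pi_1(\mathcal{H})) \le W\sqrt{n/m}$.

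This is an entirely routine Rademacher-complexity estimate, so there is no genuine obstacle; the only point worth flagging is the very first reduction — recognizing that, because $\Pi_1(\mathcal{H})$ treats the class index as a free label rather than one tied to the data, the multiclass structure disappears and the problem becomes the Rademacher complexity of one norm-bounded affine class. This lemma then combines with Lemma~\ref{lemma: rademacher complexity of multiclass constrained linear hypothesis 2} to control $\frac{1}{m}\mathbb{E}_{S,\sigma}[\sup_{\vh}\sum_i\sigma_i h_{y_i}(\vx_i)]$, which is exactly the quantity needed in the proof of Proposition~\ref{Prop: multiclass logisticbound}.
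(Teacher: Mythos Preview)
Your approach is essentially identical to the paper's: collapse the supremum over $(\vh,y)$ to a supremum over a single $(\vw,b)$ with $\|\vw\|_2\le W$, $|b|\le B$, then apply Cauchy--Schwarz, Jensen, independence of the $\sigma_i$, and $\|\vx_i\|_2^2\le n$. The only difference is the bias term: the paper silently drops $b_y\sum_i\sigma_i$ via an (unjustified) equality, whereas you correctly carry it through to get the extra $B/\sqrt{m}$ and then wave it away as ``lower order''---neither treatment is fully rigorous as stated, but yours is the more honest one, and in the downstream use (Proposition~\ref{Prop: multiclass logisticbound}) the additional $B/\sqrt{m}$ would be harmlessly absorbed into the $O(\sqrt{K^3 n/m})$ bound anyway.
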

\begin{proof}
\begin{align*}
    \mathcal{R}_m(\Pi_1(\mathcal{H})) &=  \frac{1}{m} \mathbb{E}_{S, \sigma}[\sup_{\boldsymbol{h}, y} \sum_{i=1}^m \sigma_i h_{y}(\vx_i)]
    = \frac{1}{m} \mathbb{E}_{S, \sigma}[ \sup_{\boldsymbol{h}, y} \sum_{i = 1}^m \sigma_{i} (\langle \vw_y, \vx_i\rangle + b_y)]\\
    &= \frac{1}{m}  \mathbb{E}_{S, \sigma}[ \sup_{\boldsymbol{h}, y} \sum_{i = 1}^m \sigma_{i} \langle \vw_y, \vx_i\rangle]
    = \frac{1}{m} \mathbb{E}_{S, \sigma}[ \sup_{\boldsymbol{h}, y} \langle \vw_y,\sum_{i=1}^m \sigma_{i}  \vx_i\rangle]\\
    &\leq \frac{W}{m} \mathbb{E}_{S, \sigma}[ \Vert\sum_{i=1}^m \sigma_{i}  \vx_i\Vert_2]
    \leq \frac{W}{m} \sqrt{\mathbb{E}_{S, \sigma}[ \Vert\sum_{i=1}^m \sigma_{i}  \vx_i\Vert_2^2}] \\
    &= \frac{W}{m} \sqrt{\sum_{i=1}^m\Vert  \vx_i\Vert_2^2} \leq \frac{W}{m} \sqrt{m \times n} = W\sqrt{\frac{n}{m}}.
\end{align*}
\end{proof}

\begin{lemma}[\cite{shalev2014understanding}, Lemma B.6]
\label{lemma: hoeffding}
Let $Z_1, \dots, Z_m$ be a sequence of i.i.d. random variables and let $\bar{Z} = \frac{1}{m} \sum_{i=1}^m Z_i$. Assume that $\mathbb{E}[\bar{Z}] = \mu$ and $P[a \leq Z_i \leq b] = 1$ for every $i$. Then, for any $\epsilon > 0$:
\begin{equation*}
    \mathbb{P}[\vert \bar{Z} - \mu\vert > \epsilon] \leq 2\exp({-\frac{2m\epsilon^2}{(b-a)^2}}).
\end{equation*}
\end{lemma}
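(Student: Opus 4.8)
The plan is to prove this two-sided concentration inequality (Hoeffding's inequality) by the classical Chernoff exponential-moment method, first reducing to a one-sided tail and then symmetrizing. First I would apply a union bound to split $\mathbb{P}[\lvert \bar{Z} - \mu\rvert > \epsilon] \leq \mathbb{P}[\bar{Z} - \mu > \epsilon] + \mathbb{P}[\mu - \bar{Z} > \epsilon]$, so that it suffices to bound each one-sided tail by $\exp(-2m\epsilon^2/(b-a)^2)$. The second tail follows from the first applied to the variables $-Z_i$, which lie in $[-b, -a]$ and have the same range $b-a$, so only the upper tail requires a direct argument.

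For the upper tail I would apply Markov's inequality to the exponentiated deviation: for every $\lambda > 0$,
\begin{equation*}
\mathbb{P}[\bar{Z} - \mu > \epsilon] = \mathbb{P}[e^{\lambda(\bar{Z}-\mu)} > e^{\lambda\epsilon}] \leq e^{-\lambda\epsilon}\, \mathbb{E}[e^{\lambda(\bar{Z}-\mu)}].
\end{equation*}
By independence the moment generating function factorizes, $\mathbb{E}[e^{\lambda(\bar{Z}-\mu)}] = \prod_{i=1}^m \mathbb{E}[e^{(\lambda/m)(Z_i-\mu)}]$, reducing the task to controlling the moment generating function of a single centered, bounded random variable $Z_i - \mu$, which lies in an interval of length $b-a$.

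The crux, and the step I expect to be the main obstacle, is the auxiliary estimate (Hoeffding's lemma): for a zero-mean random variable $X$ supported in an interval of length $b-a$ and any $s \in \mathbb{R}$, $\mathbb{E}[e^{sX}] \leq \exp(s^2(b-a)^2/8)$. I would establish it by setting $\psi(s) = \log \mathbb{E}[e^{sX}]$ and verifying $\psi(0) = 0$, $\psi'(0) = \mathbb{E}[X] = 0$, and $\psi''(s) \leq (b-a)^2/4$ uniformly in $s$. The second-derivative bound is the delicate point: $\psi''(s)$ equals the variance of $X$ under the exponentially tilted measure $d\tilde{\mathbb{P}} \propto e^{sX}\, d\mathbb{P}$, and the variance of any random variable confined to an interval of length $b-a$ is at most $(b-a)^2/4$ by the Popoviciu-type inequality $\mathbb{V}[X] \leq (b-a)^2/4$. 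A second-order Taylor expansion with Lagrange remainder then yields $\psi(s) \leq s^2(b-a)^2/8$.

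Applying this with $X = Z_i - \mu$ and $s = \lambda/m$ gives $\mathbb{E}[e^{(\lambda/m)(Z_i-\mu)}] \leq \exp(\lambda^2(b-a)^2/(8m^2))$, hence
\begin{equation*}
\mathbb{P}[\bar{Z}-\mu>\epsilon] \leq \exp\!\Bigl(-\lambda\epsilon + \frac{\lambda^2(b-a)^2}{8m}\Bigr).
\end{equation*}
Finally I would optimize over the free parameter, taking $\lambda = 4m\epsilon/(b-a)^2$ to minimize the exponent, which produces $\exp(-2m\epsilon^2/(b-a)^2)$. Combining the two symmetric tails through the initial union bound introduces the factor of $2$ and completes the proof.
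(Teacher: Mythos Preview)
Your proof is correct and is the standard Chernoff–Hoeffding argument. Note, however, that the paper does not give its own proof of this lemma: it is simply quoted from \cite{shalev2014understanding} (Lemma B.6) and used as a black box, so there is no paper proof to compare against.
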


Based on the above lemmas, we now prove Proposition~\ref{Prop: multiclass logisticbound} as follows.
\begin{proof}
We first rewrite $R_{\ell_{log}}(\boldsymbol{h}_{Dis, m}) - R_{\ell_{log}}(\boldsymbol{h}_{Dis, \infty})$.
\begin{align*}
     & R_{\ell_{log}}(\boldsymbol{h}_{Dis, m}) - R_{\ell_{log}}(\boldsymbol{h}_{Dis, \infty}) \\
     &= R_{\ell_{log}}(\boldsymbol{h}_{Dis, m}) - \hat{R}_{\ell_{log}, S}(\boldsymbol{h}_{Dis, m}) + \hat{R}_{\ell_{log}, S}(\boldsymbol{h}_{Dis, m}) -\hat{R}_{\ell_{log}, S}(\boldsymbol{h}_{Dis, \infty}) +  \hat{R}_{\ell_{log}, S}(\boldsymbol{h}_{Dis, \infty}) - R_{\ell_{log}}(\boldsymbol{h}_{Dis, \infty})  \\
    & \leq  (R_{\ell_{log}}(\boldsymbol{h}_{Dis, m}) - \hat{R}_{\ell_{log}, S}(\boldsymbol{h}_{Dis, m})) +  (\hat{R}_{\ell_{log}, S}(\boldsymbol{h}^*) - R_{\ell_{log}}(\boldsymbol{h}_{Dis, \infty})).
\end{align*}
We consider the first summand now. By Lemma~\ref{lemma: rademacher}, with probability of at least $1 - \delta$, we have:
\begin{align*}
     &R_{\ell_{log}}(\boldsymbol{h}_{Dis, m}) - \hat{R}_{\ell_{log}, S}(\boldsymbol{h}_{Dis, m}) \\
     &\leq 2\mathcal{R}_m(\ell_{log} \circ (\mathcal{H}, \mathcal{Y})) + \log(1 + (K-1)\exp{2(W\sqrt{n} + B)}) \sqrt{\frac{1}{2m} \log(\frac{2}{\delta})}
\end{align*}
We define $\Pi_1(\mathcal{H}) = \{(\vx,y) \to h_y(\vx), y \in \mathcal{Y}, \boldsymbol{h} \in \mathcal{H}\}$ and $\Phi = \{\boldsymbol{h} \to \log(\sum_{y=1}^K \exp{(h_y))}, \boldsymbol{h} \in \mathcal{H}(\vx) \subseteq \mathbb{R}^K\}$. We can bound $\mathcal{R}_m(\ell_{log} \circ (\mathcal{H}, \mathcal{Y}))$ as follows:
\begin{align*}
     \mathcal{R}_m(\ell_{log} \circ (\mathcal{H}, \mathcal{Y})) &=\frac{1}{m} \mathbb{E}_{S, \sigma}[\sup_{\boldsymbol{h}} \sum_{i=1}^m \sigma_i \ell_{log}(\boldsymbol{h}(\vx_i), y_i)] \\
     &=\frac{1}{m} \mathbb{E}_{S, \sigma}[\sup_{\boldsymbol{h}} \sum_{i=1}^m \sigma_i (\log(\sum_{i=1}^K \exp{(h_k(\vx_i))}) - h_{y_i}(\vx_i))] \\
     &\leq \frac{1}{m} \mathbb{E}_{S, \sigma}[\sup_{\boldsymbol{h}} \sum_{i=1}^m \sigma_i \log(\sum_{i=1}^K \exp{(h_k(\vx_i))})] + \frac{1}{m} \mathbb{E}_{S, \sigma}[\sup_{\boldsymbol{h}} \sum_{i=1}^m \sigma_i h_{y_i}(\vx_i)] \\
     &=  \mathcal{R}_m(\Phi \circ \mathcal{H}) + \frac{1}{m} \mathbb{E}_{S, \sigma}[\sup_{\boldsymbol{h}} \sum_{i=1}^m \sigma_i h_{y_i}(\vx_i)].
\end{align*}
We will bound $\mathcal{R}_m(\Phi \circ \mathcal{H})$ by using Lemma~\ref{lemma: vector concentration}. Before that, we note $\Phi$ has Lipschitz norm $\sqrt{K}$. Because $\frac{\partial \Phi}{\partial h_i} \leq 1$ for any $i \in \{1, \dots, K\}$. Then, for any $\vh_1, \vh_2 \in \mathbb{R}^K$, we have
\begin{equation*}
    \vert \Phi(\boldsymbol{h}_1) - \Phi(\boldsymbol{h}_2)\vert \leq \vert \sum_{k=1}^K \vert \boldsymbol{h}_{1k} - \boldsymbol{h}_{2k}\vert \vert \leq \sqrt{K} \Vert \boldsymbol{h}_1 - \boldsymbol{h}_2\Vert_2.
\end{equation*}
Then we can bound $\mathcal{R}_m(\Phi \circ \mathcal{H})$ as the following
\begin{align*}
    \mathcal{R}_m(\Phi \circ \mathcal{H}) &\leq \sqrt{2} \sqrt{K} \frac{1}{m} \mathbb{E}_{S, \sigma} [\sup_{\boldsymbol{h}} \sum_{i, k} \sigma_{ik} h_k(\vx_i)] &(\text{by Lemma~\ref{lemma: vector concentration}})\\
    &\leq \sqrt{2K} WK \sqrt{\frac{n}{m}} = W\sqrt{\frac{2K^3n}{m}}. &(\text{by Lemma~\ref{lemma: rademacher complexity of multiclass constrained linear hypothesis}})
\end{align*}
We can also bound $\frac{1}{m} \mathbb{E}_{S, \sigma}[\sup_{\boldsymbol{h}} \sum_{i=1}^m \sigma_i h_{y_i}(\vx_i)]$ as follows
\begin{align*}
    \frac{1}{m} \mathbb{E}_{S, \sigma}[\sup_{\boldsymbol{h}} \sum_{i=1}^m \sigma_i h_{y_i}(\vx_i)]
    &\leq K \mathcal{R}_m(\Pi_1(\mathcal{H})) &(\text{by Lemma~\ref{lemma: rademacher complexity of multiclass constrained linear hypothesis 2}})\\
    &\leq KW \sqrt{\frac{n}{m}} = W\sqrt{\frac{K^2n}{m}}. &(\text{by Lemma~\ref{lemma: rademacher complexity of multiclass constrained linear hypothesis 3}})
\end{align*}
Therefore, we can obtain
\begin{equation}
\label{eq:logisitc-bound-estimation-error}
    R_{\ell_{log}}(\boldsymbol{h}_{Dis, m}) - \hat{R}_{\ell_{log}, S}(\boldsymbol{h}_{Dis, m}) \leq 2W(\sqrt{\frac{2K^3n}{m}} + \sqrt{\frac{K^2n}{m}}) + \log(1 + (K-1)\exp{2(W\sqrt{n} + B)}) \sqrt{\frac{1}{2m} \log(\frac{2}{\delta})}.
\end{equation}
For the second summand, we use the fact that $R_{\ell_{log}}(\boldsymbol{h}_{Dis, \infty})$ does not depend on $\mathcal{S}$; hence by Lemma~\ref{lemma: hoeffding}, we obtain its bound:
\begin{align*}
    \mathbb{P}(\vert \hat{R}_{\ell_{log}, S}(\boldsymbol{h}_{Dis, \infty}) - R_{\ell_{log}}(\boldsymbol{h}_{Dis, \infty}) \vert > \epsilon) \leq 2\exp({-\frac{2m\epsilon^2}{(c-0)^2}}) = 2\exp({-\frac{2m\epsilon^2}{c^2}}),
\end{align*}
where $c = \log(1 + (K-1)\exp{2(W\sqrt{n} + B)})$. It implies that with the probability of at least $1-\delta$, we have:
\begin{equation}
\label{eq:logisitc-bound-approx-error}
    \hat{R}_{\ell_{log}, S}(\boldsymbol{h}_{Dis, \infty}) - R_{\ell_{log}}(\boldsymbol{h}_{Dis, \infty}) \leq c\sqrt{\frac{1}{2m} \log(\frac{2}{\delta})}.
\end{equation}
At last, we make use of the union bound for Eq.~(\ref{eq:logisitc-bound-estimation-error}) and~(\ref{eq:logisitc-bound-approx-error}) to get the final result. With probability at least $1-\delta$, the following holds:
\begin{align*}
     &R_{\ell_{log}}(\boldsymbol{h}_{Dis, m}) - R_{\ell_{log}}(\boldsymbol{h}_{Dis, \infty}) \\
     &\leq 2W(\sqrt{\frac{2K^3n}{m}} + \sqrt{\frac{K^2n}{m}}) + \log(1 + (K-1)\exp{2(W\sqrt{n} + B)}) \sqrt{\frac{1}{2m} \log(\frac{4}{\delta})} + c\sqrt{\frac{1}{2m} \log(\frac{4}{\delta})}\\
    &= 2W(\sqrt{\frac{2K^3n}{m}} + \sqrt{\frac{K^2n}{m}}) + 2\log(1 + (K-1)\exp{2(W\sqrt{n} + B)})\sqrt{\frac{1}{2m} \log(\frac{4}{\delta})}\\
    & = O(\sqrt{\frac{K^3n}{m}}).
\end{align*}
Therefore, for $R_{\ell_{log}}(h_{Dis, m}) \leq R_{\ell_{log}}(\boldsymbol{h}_{Dis, \infty}) +\epsilon_0$ to  hold with high probability $1 - \delta_0$ (here, $\epsilon_0$ and $\delta_0$ are some fixed constant), it suffices to pick $m = O(n)$ samples.
\end{proof}

\subsection{Proof of Theorem~\ref{cor: sample complexity of multiclass lr}}
\label{proof: Proof of Corollary cor: sample complexity of multiclass lr}

\begin{proof}
By Theorem~\ref{thm: H-consistency bound for log} we know that for $R_{\ell_{0-1}}(\boldsymbol{h}_{Dis, m}) \leq R_{\ell_{0-1}}(\boldsymbol{h}_{Dis, \infty}) +\epsilon_0$, it is sufficient to ensure that $R_{\ell_{log}}(\boldsymbol{h}_{Dis, m}) \leq R_{\ell_{log}}(\boldsymbol{h}_{Dis, \infty}) +\frac{1}{2}\epsilon_0^2$. Then by Proposition~\ref{Prop: multiclass logisticbound}, it suffices to sample $m = O(\frac{K^3n}{\epsilon_0^4}) = O(K^3n)$.
\end{proof}

\section{Proofs of Section~\ref{sec: multiclass H-consistency framework}}
\label{Proofs of sec: multiclass H-consistency framework}

\subsection{Proof of Proposition~\ref{Thm: Distribution-dependent convex bound}}
\label{proof: Thm: Distribution-dependent convex bound}
\begin{proof}
Fix $\boldsymbol{h} \in \mathcal{H}$, because $g(\langle \Delta \mathscr{C}_{\ell_2, \mathcal{H}}(\boldsymbol{h}, \vx) \rangle_\epsilon) < \Delta \mathscr{C}_{\ell_1, \mathcal{H}}(\boldsymbol{h}, \vx)$ for all $x \in \mathcal{X}$, we have:
\begin{align*}
    &g(R_{\ell_2}(\boldsymbol{h}) - R^*_{\ell_2, \mathcal{H}} + M_{\ell_2, \mathcal{H}})\\
    &= g(\mathbb{E}_{\vx} [\mathscr{C}_{\ell_2}(\boldsymbol{h}, \vx)] - R^*_{\ell_2, \mathcal{H}} + R_{\ell_2, \mathcal{H}}^* - \mathbb{E}_{\vx}[\mathscr{C}_{\ell_2, \mathcal{H}}^*(\vx)])\\
    &= g(\mathbb{E}_{\vx} [\mathscr{C}_{\ell_2}(\boldsymbol{h}, \vx) - \mathscr{C}_{\ell_2, \mathcal{H}}^*(\vx)])\\
    &\leq \mathbb{E}_{\vx}[g(\mathscr{C}_{\ell_2}(\boldsymbol{h}, \vx) - \mathscr{C}_{\ell_2, \mathcal{H}}^*(\vx))] & (\text{Jensen's inequality})\\
    &= \mathbb{E}_{\vx}[g(\Delta \mathscr{C}_{\ell_2, \mathcal{H}}(\boldsymbol{h}, \vx))]\\
    &= \mathbb{E}_{\vx}[g(\Delta \mathscr{C}_{\ell_2, \mathcal{H}}(\boldsymbol{h}, \vx)\mathbbm{1}_{\mathscr{C}_{\ell_2, \mathcal{H}}(\boldsymbol{h}, \vx) > \epsilon} + \Delta \mathscr{C}_{\ell_2, \mathcal{H}}(\boldsymbol{h}, \vx)\mathbbm{1}_{\mathscr{C}_{\ell_2, \mathcal{H}}(\boldsymbol{h}, \vx) \leq \epsilon})]\\
    &\leq \mathbb{E}_{\vx}[g(\Delta \mathscr{C}_{\ell_2, \mathcal{H}}(\boldsymbol{h}, \vx)\mathbbm{1}_{\mathscr{C}_{\ell_2, \mathcal{H}}(\boldsymbol{h}, \vx) > \epsilon}) + g(\Delta \mathscr{C}_{\ell_2, \mathcal{H}}(\boldsymbol{h}, \vx)\mathbbm{1}_{\mathscr{C}_{\ell_2, \mathcal{H}}(\boldsymbol{h}, \vx) \leq \epsilon})] & (\text{$g(0) \ge 0$})\\
    &\leq \mathbb{E}_{\vx}[\Delta \mathscr{C}_{\ell_1, \mathcal{H}}(\boldsymbol{h}, \vx) + \sup_{t \in [0, \epsilon]} g(t)] & (\text{assumption})\\
    &= R_{\ell_1}(\boldsymbol{h}) - R^*_{\ell_1, \mathcal{H}} + M_{\ell_1, \mathcal{H}} + \max(g(0), g(\epsilon)). & (\text{$g$ is convex})
\end{align*}

\end{proof}

 \subsection{Proof of Theorem~\ref{cor: Distribution-independent convex Psi bound}}
 \label{proof: cor: Distribution-independent convex Psi bound}

 \begin{lemma}[Character of conditional $\epsilon$-regret for $\ell_{0-1}$]
\label{Prop: Character of conditional epsilon-regret}
Suppose that $\mathcal{H}$ satisfies that $\{\mathop{\mathrm{argmax}}_{y \in \mathcal{Y}} h_y(\vx) : \boldsymbol{h} \in \mathcal{H}\} = \{1, \dots, K\}$ for any $\vx \in \mathcal{X}$, then the minimal conditional zero-one loss $\ell_{0-1}$ is
\begin{equation*}
    \mathscr{C}_{\ell_{0-1}, \mathcal{H}}^*(\vx) = \mathscr{C}_{\ell_{0-1}, \mathcal{H}_{all}}^*(\vx) = 1 - \max_y p_y(\vx).
\end{equation*}
Furthermore, the conditional $\epsilon$-regret for $\ell_{0-1}$ can be characterized as 
\begin{equation*}
     \langle \Delta \mathscr{C}_{\ell_{0-1}, \mathcal{H}}(\boldsymbol{h}, \vx) \rangle_\epsilon = \langle \max_y p_y(\vx) - p_{\hat{y}}(\vx) \rangle_\epsilon,
\end{equation*}
where $\hat{y} = \mathop{\mathrm{argmax}}_{y \in \mathcal{Y}} h_y(\vx)$.
\end{lemma}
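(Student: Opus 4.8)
The plan is to reduce everything to a pointwise (in $\vx$) identity: evaluate the conditional zero-one risk $\mathscr{C}_{\ell_{0-1}}(\vh,\vx)$ for an arbitrary fixed $\vh\in\mathcal{H}$, minimise it over $\mathcal{H}$ using the surjectivity hypothesis on predictions, and then subtract to obtain $\Delta\mathscr{C}_{\ell_{0-1},\mathcal{H}}$ and its $\epsilon$-regret. No concentration or limiting argument is involved.

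First I would fix $\vh\in\mathcal{H}$ and set $\hat y=\mathop{\mathrm{argmax}}_{y\in\mathcal{Y}}h_y(\vx)$. Since the zero-one loss of $\vh$ at $(\vx,y)$ equals $\mathbbm{1}_{\hat y\neq y}$, the definition of the conditional risk gives
\begin{equation*}
\mathscr{C}_{\ell_{0-1}}(\vh,\vx)=\sum_{y=1}^{K}p_y(\vx)\,\mathbbm{1}_{\hat y\neq y}=1-p_{\hat y}(\vx).
\end{equation*}
Taking the infimum over $\vh\in\mathcal{H}$ and using the assumption that $\hat y$ ranges over all of $\{1,\dots,K\}$ as $\vh$ ranges over $\mathcal{H}$ (note the value set $\{p_1(\vx),\dots,p_K(\vx)\}$ is finite, so the infimum is attained), I obtain
\begin{equation*}
\mathscr{C}_{\ell_{0-1},\mathcal{H}}^{*}(\vx)=\inf_{\vh\in\mathcal{H}}\bigl(1-p_{\hat y}(\vx)\bigr)=1-\max_{y\in\mathcal{Y}}p_y(\vx).
\end{equation*}
The same computation applies verbatim to $\mathcal{H}_{all}$ (which trivially satisfies the surjectivity hypothesis), so $\mathscr{C}_{\ell_{0-1},\mathcal{H}}^{*}(\vx)=\mathscr{C}_{\ell_{0-1},\mathcal{H}_{all}}^{*}(\vx)$; equivalently, $\mathcal{H}\subseteq\mathcal{H}_{all}$ forces $\mathscr{C}_{\ell_{0-1},\mathcal{H}}^{*}(\vx)\ge\mathscr{C}_{\ell_{0-1},\mathcal{H}_{all}}^{*}(\vx)$ and the two displayed values coincide.

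For the regret, subtracting the two identities gives $\Delta\mathscr{C}_{\ell_{0-1},\mathcal{H}}(\vh,\vx)=\mathscr{C}_{\ell_{0-1}}(\vh,\vx)-\mathscr{C}_{\ell_{0-1},\mathcal{H}}^{*}(\vx)=\max_{y}p_y(\vx)-p_{\hat y}(\vx)$, which is manifestly nonnegative; wrapping both sides in the operator $\langle\cdot\rangle_\epsilon$ yields $\langle\Delta\mathscr{C}_{\ell_{0-1},\mathcal{H}}(\vh,\vx)\rangle_\epsilon=\langle\max_{y}p_y(\vx)-p_{\hat y}(\vx)\rangle_\epsilon$, which is the claim.

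The argument is short; the one step genuinely doing the work is the infimum computation, and the point to get right there is that even a constrained class such as $\mathcal{H}_{lin}$ — which cannot realise arbitrary labelings jointly across all of $\mathcal{X}$ — can still, for each single $\vx$, produce every label in $\{1,\dots,K\}$ as its $\mathop{\mathrm{argmax}}$, so pointwise it attains the Bayes-optimal conditional zero-one risk. A minor secondary point is the tie-breaking rule inside $\mathop{\mathrm{argmax}}$: since we only need the existence of some $\vh$ whose prediction is $\mathop{\mathrm{argmax}}_y p_y(\vx)$, any fixed deterministic convention is harmless and does not affect the conclusion.
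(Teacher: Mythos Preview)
Your proof is correct and follows essentially the same approach as the paper: compute $\mathscr{C}_{\ell_{0-1}}(\vh,\vx)=1-p_{\hat y}(\vx)$, use the surjectivity hypothesis to take the infimum and obtain $1-\max_y p_y(\vx)$, then subtract and apply $\langle\cdot\rangle_\epsilon$. Your write-up is in fact slightly more careful than the paper's in noting that the infimum is attained and in handling the $\mathcal{H}_{all}$ equality and tie-breaking explicitly.
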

\begin{proof}
By the definition of $ \mathscr{C}_{\ell_{0-1}}(\boldsymbol{h},\vx)$, we have:
\begin{equation*}
    \mathscr{C}_{\ell_{0-1}}(\boldsymbol{h}, \vx) = \sum_{y=1}^K p_y(\vx) \ell_{0-1}(\boldsymbol {h}(\vx), y) = \sum_{y=1}^K p_y(\vx) \mathbbm{1}_{\hat{y} \ne y}.
\end{equation*}
By the assumption, we know that there exists $\boldsymbol{h}^* \in \mathcal{H}$ which satisfies $\mathop{\mathrm{argmax}}_{y \in \mathcal{Y}} h^*_y(\vx) = \mathop{\mathrm{argmax}}_{y \in \mathcal{Y}} p_y(\vx)$. Therefore, we have
\begin{equation*}
    \mathscr{C}_{\ell_{0-1}, \mathcal{H}}^*(\vx) = \inf_{\boldsymbol{h} \in \mathcal{H}} \mathscr{C}_{\ell_{0-1}}(\boldsymbol{h}, \vx) = \mathscr{C}_{\ell_{0-1}}(\boldsymbol{h}^*, \vx)  = 1 - \max_y p_y(\vx). 
\end{equation*}
Then we can find the characteristic of conditional $\epsilon$-regret for $\ell_{0-1}$ as follows:
 \begin{align*}
      \Delta \mathscr{C}_{\ell_{0-1}, \mathcal{H}}(\boldsymbol{h}, \vx) &= \mathscr{C}_{\ell_{0-1}}(\boldsymbol{h}, \vx) - \mathscr{C}_{\ell_{0-1}, \mathcal{H}}^*(\vx) \\
      &= \sum_{y=1}^K p_y(\vx) \mathbbm{1}_{\hat{y} \ne y} - (1 - \max_y p_y(\vx))\\
      &= \sum_{y \ne \hat{y}} p_y(\vx) -  \sum_{y \ne y_{max}} p_y(\vx)\\
      &= \max_y p_y(\vx) - p_{\hat{y}}(\vx).
 \end{align*}
\end{proof}

\begin{lemma}[Distribution-dependent convex $\ell_{0-1}$ bound]
\label{cor: Distribution-dependent convex bound}
Suppose that $\mathcal{H}$ satisfies that $\{\mathop{\mathrm{argmax}}_{y \in \mathcal{Y}} h_y(\vx) : \boldsymbol{h} \in \mathcal{H}\} = \{1, \dots, K\}$ for any $\vx \in \mathcal{X}$, and there exists a convex function $g: \mathbb{R}_+ \to \mathbb{R}$ with $g(0) = 0$ and $\epsilon \ge 0$ that the following holds for any $\hat{y} \in \mathcal{Y}$, $x \in \mathcal{X}$ and $\vh \in \mathcal{H}_{\hat{y}}(\vx)$:
\begin{equation*}
    g(\langle \max_y p_y(\vx) - p_{\hat{y}}(\vx) \rangle_\epsilon) \leq \inf_{\boldsymbol{h} \in \mathcal{H}_{\hat{y}}(\vx)} \Delta \mathscr{C}_{\ell, \mathcal{H}}(\boldsymbol{h}, \vx).
\end{equation*}
Then it holds for all $\boldsymbol{h} \in \mathcal{H}$ that
\begin{equation*}
    g(R_{\ell_{0-1}}(\boldsymbol{h}) - R^*_{\ell_{0-1}, \mathcal{H}} + M_{\ell_{0-1}, \mathcal{H}}) \leq R_{\ell(\boldsymbol{h})} - R^*_{\ell, \mathcal{H}} + M_{\ell, \mathcal{H}} + \max(0, g(\epsilon)).
\end{equation*}
\end{lemma}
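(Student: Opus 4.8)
The plan is to obtain this lemma purely by composing two results already established in the excerpt: the characterization of the conditional $\epsilon$-regret of the zero-one loss (Lemma~\ref{Prop: Character of conditional epsilon-regret}), and the general distribution-dependent convex bound (Proposition~\ref{Thm: Distribution-dependent convex bound}), instantiated with target loss $\ell_2 = \ell_{0-1}$ and surrogate loss $\ell_1 = \ell$. No genuinely new argument is needed; the work is entirely in matching hypotheses.

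First I would fix an arbitrary $\vh \in \mathcal{H}$ and $\vx \in \mathcal{X}$ and set $\hat{y} = \argmax_{y \in \mathcal{Y}} h_y(\vx)$, so that $\vh \in \mathcal{H}_{\hat{y}}(\vx)$ by the very definition of $\mathcal{H}_{\hat{y}}(\vx)$. Applying the hypothesis of the lemma with this $\hat{y}$ and $\vx$ gives $g(\langle \max_y p_y(\vx) - p_{\hat{y}}(\vx) \rangle_\epsilon) \leq \inf_{\vh' \in \mathcal{H}_{\hat{y}}(\vx)} \Delta \mathscr{C}_{\ell, \mathcal{H}}(\vh', \vx) \leq \Delta \mathscr{C}_{\ell, \mathcal{H}}(\vh, \vx)$, the last step because $\vh$ itself lies in $\mathcal{H}_{\hat{y}}(\vx)$. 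Since $\mathcal{H}$ satisfies $\{\argmax_{y} h_y(\vx) : \vh \in \mathcal{H}\} = \{1, \dots, K\}$, Lemma~\ref{Prop: Character of conditional epsilon-regret} applies and identifies the left-hand bracket as $\langle \max_y p_y(\vx) - p_{\hat{y}}(\vx) \rangle_\epsilon = \langle \Delta \mathscr{C}_{\ell_{0-1}, \mathcal{H}}(\vh, \vx) \rangle_\epsilon$. Chaining, I get the pointwise inequality $g(\langle \Delta \mathscr{C}_{\ell_{0-1}, \mathcal{H}}(\vh, \vx) \rangle_\epsilon) \leq \Delta \mathscr{C}_{\ell, \mathcal{H}}(\vh, \vx)$, valid for every $\vh \in \mathcal{H}$ and every $\vx \in \mathcal{X}$.

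This is precisely condition Eq.~(\ref{eqn: Thm: Distribution-dependent convex bound 1}) of Proposition~\ref{Thm: Distribution-dependent convex bound} with $\ell_2 = \ell_{0-1}$, $\ell_1 = \ell$, and the same convex $g$ and $\epsilon$ (the requirement $g(0) \geq 0$ holds since $g(0) = 0$). Invoking that proposition yields $g(R_{\ell_{0-1}}(\vh) - R^*_{\ell_{0-1}, \mathcal{H}} + M_{\ell_{0-1}, \mathcal{H}}) \leq R_{\ell}(\vh) - R^*_{\ell, \mathcal{H}} + M_{\ell, \mathcal{H}} + \max(g(0), g(\epsilon))$ for all $\vh \in \mathcal{H}$, and substituting $g(0) = 0$ turns $\max(g(0), g(\epsilon))$ into $\max(0, g(\epsilon))$, which is the claimed bound. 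The only point requiring (minor) care is checking that the hypothesis of the lemma, quantified over all $\hat{y} \in \mathcal{Y}$ and $\vh \in \mathcal{H}_{\hat{y}}(\vx)$, can be specialized to $\hat{y} = \argmax_y h_y(\vx)$ for a given $\vh$, and that the surjectivity assumption makes Lemma~\ref{Prop: Character of conditional epsilon-regret} applicable; both are immediate, so I do not anticipate any real obstacle in this lemma — it is a bookkeeping step preparing the distribution-independent bound of Theorem~\ref{cor: Distribution-independent convex Psi bound}.
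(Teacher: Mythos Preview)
Your proposal is correct and matches the paper's own proof essentially line for line: both fix $\vh$ and $\vx$, set $\hat{y} = \argmax_y h_y(\vx)$, use Lemma~\ref{Prop: Character of conditional epsilon-regret} to rewrite $\langle \Delta \mathscr{C}_{\ell_{0-1}, \mathcal{H}}(\vh, \vx) \rangle_\epsilon$ as $\langle \max_y p_y(\vx) - p_{\hat{y}}(\vx) \rangle_\epsilon$, chain the hypothesis with the trivial bound $\inf_{\vh' \in \mathcal{H}_{\hat{y}}(\vx)} \Delta \mathscr{C}_{\ell, \mathcal{H}}(\vh', \vx) \leq \Delta \mathscr{C}_{\ell, \mathcal{H}}(\vh, \vx)$, and then invoke Proposition~\ref{Thm: Distribution-dependent convex bound}. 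The only cosmetic difference is that you spell out the substitution $g(0)=0$ explicitly, which the paper leaves implicit.
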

\begin{proof}
For any $\vx_0 \in \mathcal{X}$ and $\boldsymbol{h}_0 \in \mathcal{H}$, let $\hat{y} $ be the index of the largest element of $\vh_0(\vx)$. Then by the precondition, we have
\begin{equation*}
    g(\langle \Delta \mathscr{C}_{\ell_{0-1}, \mathcal{H}}(\boldsymbol{h}_0, \vx_0) \rangle_\epsilon) = g(\langle \max_y p_y(\vx_0) - p_{\hat{y}}(\vx_0) \rangle_\epsilon) \leq  \inf_{\boldsymbol{h} \in \mathcal{H}_{\hat{y}}(\vx_0) } \Delta \mathscr{C}_{\ell, \mathcal{H}}(\boldsymbol{h}, \vx_0) \leq \Delta \mathscr{C}_{\ell, \mathcal{H}}(\boldsymbol{h}_0, \vx_0).
\end{equation*}
Combining the condition in Proposition~\ref{Thm: Distribution-dependent convex bound} we can see that this lemma is correct.
\end{proof}

Built upon the above lemmas, we can prove Theorem~\ref{cor: Distribution-independent convex Psi bound} as follows. 
\begin{proof}

For any $\vx_0 \in \mathcal{X}$, $\vp(\vx_0) \in \Delta_K$, $\hat{y}_0 \in \mathcal{Y}$, and $\vh \in\mathcal{H}_{\hat{y}_0}(\vx_0)$, we can write:
\begin{align*}
    &g(\max_y p_y(\vx_0) - p_{\hat{y}_0}(\vx_0)) \\
    &\leq \inf_{\hat{y} \in \mathcal{Y}, x \in \mathcal{X}, \boldsymbol{h} \in \mathcal{H}_{\hat{y}}(\vx), \vp \in \mathcal{P}_{\hat{y}}({\max_y p_y(\vx_0) - p_{\hat{y}_0}(\vx_0)})} \Delta \mathscr{C}_{\ell, \mathcal{H}}(\boldsymbol{h}, \vx, \vp) & (\text{Assumption}) \\
    &\leq \inf_{ x \in \mathcal{X}, \boldsymbol{h} \in \mathcal{H}_{\hat{y}_0}(\vx), \vp \in \mathcal{P}_{\hat{y}_0}({\max_y p_y(\vx_0) - p_{\hat{y}_0}(\vx_0)})} \Delta \mathscr{C}_{\ell, \mathcal{H}}(\boldsymbol{h}, \vx, \vp) \\
    &\leq \inf_{x \in \mathcal{X}, \boldsymbol{h} \in \mathcal{H}_{\hat{y}_0}(\vx)} \Delta \mathscr{C}_{\ell, \mathcal{H}}(\boldsymbol{h}, \vx, \vp(\vx_0))\\
    & \leq \inf_{ \boldsymbol{h} \in \mathcal{H}_{\hat{y}_0}(\vx_0)}\Delta \mathscr{C}_{\ell, \mathcal{H}}(\boldsymbol{h}, \vx_0, \vp(\vx_0))
    = \inf_{ \boldsymbol{h} \in \mathcal{H}_{\hat{y}_0}(\vx_0)} \Delta \mathscr{C}_{\ell, \mathcal{H}}(\boldsymbol{h}, \vx_0).
\end{align*}
Combining the result of Lemma~\ref{cor: Distribution-dependent convex bound}, we conclude the proof of Theorem~\ref{cor: Distribution-independent convex Psi bound}.

\end{proof}

\subsection{Proofs of Theorem~\ref{thm:tightness}}
\label{Proofs of thm:tightness}
\begin{proof}
By Theorem~\ref{cor: Distribution-independent convex Psi bound}, if $\mathcal{J}_{\ell}(t)$ is convex with $ \mathcal{J}_{\ell}(0) = 0$, the first inequality holds. For any $t\in [0,1]$, denote that the solution of $\inf_{\hat{y} \in \mathcal{Y}, \vp \in \mathcal{P}_{\hat{y}}(t), \vx \in \mathcal{X}, \boldsymbol{h} \in \mathcal{H}_{\hat{y}}(\vx)}\Delta \mathscr{C}_{\ell, \mathcal{H}}(\boldsymbol{h}, \vx, \vp)$ by $\vx^*, \vh^*, \vp^*, \hat{y}^*$. We then consider the distribution that is supported on the single point $\vx_0 = \vx^*$ and satisfy that $\vp(\vx_0) = \vp^*$. Thus,
\begin{align*}
 \inf_{\hat{y} \in \mathcal{Y}, \vp \in \mathcal{P}_{\hat{y}}(t), \vx \in \mathcal{X}, \boldsymbol{h} \in \mathcal{H}_{\hat{y}}(\vx)}\Delta \mathscr{C}_{\ell, \mathcal{H}}(\boldsymbol{h}, \vx, \vp) =  \inf_{\boldsymbol{h} \in \mathcal{H}_{\hat{y}^*}(\vx_0)}\Delta \mathscr{C}_{\ell, \mathcal{H}}(\boldsymbol{h}, \vx_0, \vp(\vx_0)) = \inf_{\boldsymbol{h} \in \mathcal{H}_{\hat{y}^*}(\vx_0)}\Delta \mathscr{C}_{\ell, \mathcal{H}}(\boldsymbol{h}, \vx_0).
\end{align*}
For any $\delta>0$, take $\vh_0\in \mathcal{H}$ such that $\vh_0 \in \mathcal{H}_{\hat{y}^*}(\vx_0)$ and
\begin{align*}
\Delta\sC_{\ell, \mathcal{H}}(\vh_0, \vx_0)\leq \inf_{\boldsymbol{h} \in \mathcal{H}_{\hat{y}^*}(\vx_0)}\Delta \mathscr{C}_{\ell, \mathcal{H}}(\boldsymbol{h}, \vx_0) + \delta= \inf_{\hat{y} \in \mathcal{Y}, \vp \in \mathcal{P}_{\hat{y}}(t), \vx \in \mathcal{X}, \boldsymbol{h} \in \mathcal{H}_{\hat{y}}(\vx)}\Delta \mathscr{C}_{\ell, \mathcal{H}}(\boldsymbol{h}, \vx, \vp) + \delta.    
\end{align*}
Then, we have
\begin{align*}
R_{\ell_{0-1}}(\vh_0) - R^*_{\ell_{0-1}, \mathcal{H}} + M_{\ell_{0-1}, \mathcal{H}}
&= R_{\ell_{0-1}}(\vh_0) - \mathbb{E}_{\vx} [\sC^*_{\ell_{0-1},\mathcal{H}}(\vx)]\\
& =\Delta\sC_{\ell_{0-1},\mathcal{H}}(\vh_0,\vx_0)\\ 
& = \max_y \vp_y(\vx_0) - \vp_{\hat{y}^*}(\vx_0)\\
& = t,\\
R_{\ell}(\vh_0) - R^*_{\ell, \mathcal{H}} + M_{\ell, \mathcal{H}}
& = R_{\ell}(\vh_0) -\mathbb{E}_{\vx} [\sC^*_{\ell,\mathcal{H}}(\vx)]\\
& = \Delta\sC_{\ell,\sH}(\vh_0, \vx_0)\\
& \leq \inf_{\hat{y}, \vp \in \mathcal{P}_{\hat{y}}(t), \vx \in \mathcal{X}, \boldsymbol{h} \in \mathcal{H}_{\hat{y}}(\vx)}\Delta \mathscr{C}_{\ell, \mathcal{H}}(\boldsymbol{h}, \vx, \vp) + \delta \\
& = \mathcal{J}_{\ell}(t) + \delta,
\end{align*}
which completes the proof.
\end{proof}

\subsection{Proof of Theorem~\ref{thm: H-consistency bound for log}}
 \label{proof: thm: H-consistency bound for log}
To prove the  Theorem~\ref{thm: H-consistency bound for log}, we first list the following lemmas.
\begin{lemma}[Convexity of $\mathscr{C}_{\ell}(\boldsymbol{h}, \vx, \vp)$]
\label{lemma: Convexity of C}
$\mathscr{C}_{\ell}(\boldsymbol{h}, \vx, \vp) = \sum_{y=1}^K p_y (-h_y + \log(\sum_{j=1}^K \exp{(h_j)})$ is convex with respect to $\boldsymbol{h}$.
\end{lemma}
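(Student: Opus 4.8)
The plan is to deduce the statement from the convexity of the log-sum-exp function. Write $L(\boldsymbol{h}) \coloneqq \log\bigl(\sum_{j=1}^K \exp(h_j)\bigr)$, so that $\mathscr{C}_{\ell}(\boldsymbol{h}, \vx, \vp) = \sum_{y=1}^K p_y\bigl(-h_y + L(\boldsymbol{h})\bigr)$, recalling that $\ell_{log}(\boldsymbol{h}, y) = -h_y + L(\boldsymbol{h})$. First I would note that for each fixed $y$ the map $\boldsymbol{h} \mapsto -h_y$ is linear on $\mathbb{R}^K$, hence convex, so the whole task reduces to showing $L$ is convex and then assembling the pieces.

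To prove $L$ is convex I would compute its Hessian directly. Setting $q_i(\boldsymbol{h}) = \exp(h_i)/\sum_{j} \exp(h_j)$, one gets $\partial_i L = q_i$ and $\partial^2_{ik} L = q_i\delta_{ik} - q_i q_k$, i.e. $\nabla^2 L(\boldsymbol{h}) = \diag(\boldsymbol{q}) - \boldsymbol{q}\boldsymbol{q}^\top$. For any $\boldsymbol{v} \in \mathbb{R}^K$,
\[
\boldsymbol{v}^\top\bigl(\diag(\boldsymbol{q}) - \boldsymbol{q}\boldsymbol{q}^\top\bigr)\boldsymbol{v} = \sum_i q_i v_i^2 - \Bigl(\sum_i q_i v_i\Bigr)^2 \geq 0,
\]
the inequality being Cauchy--Schwarz (equivalently, it is the nonnegativity of the variance of the random variable equal to $v_i$ with probability $q_i$). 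Hence $\nabla^2 L$ is positive semidefinite everywhere and $L$ is convex. One could alternatively invoke Hölder's inequality, but the Hessian computation is self-contained.

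Putting the pieces together: for each $y$, $\boldsymbol{h}\mapsto -h_y + L(\boldsymbol{h})$ is a sum of two convex functions, hence convex; and $\mathscr{C}_{\ell}(\boldsymbol{h}, \vx, \vp) = \sum_{y=1}^K p_y\bigl(-h_y + L(\boldsymbol{h})\bigr)$ is a nonnegative combination of these (using $p_y \ge 0$), hence convex in $\boldsymbol{h}$. I expect no genuine obstacle here; the only step requiring minor care is the Hessian computation for $L$, which is routine.
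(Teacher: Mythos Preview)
Your proof is correct and essentially follows the same route as the paper: both arguments compute the Hessian (which, since the linear part $-\sum_y p_y h_y$ contributes nothing to second derivatives and $\sum_y p_y = 1$, is exactly $\nabla^2 L$) and verify it is positive semidefinite. The only cosmetic difference is that you invoke the variance/Cauchy--Schwarz interpretation $\sum_i q_i v_i^2 - (\sum_i q_i v_i)^2 \ge 0$, whereas the paper rewrites the quadratic form as the explicit sum of squares $\frac{1}{(\sum_k e^{h_k})^2}\sum_{i<j} e^{h_i}e^{h_j}(v_i - v_j)^2 \ge 0$; these are two equivalent ways of certifying the same inequality.
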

\begin{proof}
For any fixed $\vx$ and $\vp$, we have
\begin{align*}
    \frac{ \partial \mathscr{C}_{\ell}(\boldsymbol{h}, \vx, \vp) }{ \partial h_i} = -p_i + \frac{\exp{(h_i)}}{\sum_{k=1}^K \exp{(h_k)}}.
\end{align*}
Let $A_{ij} = \frac{\partial^2 \mathscr{C}_{\ell}(\boldsymbol{h}, \vx, \vp)}{\partial h_i \partial h_j}$, we have
\begin{align*}
    A_{ij} = \left\{
        \begin{array}{ccl}
        -\frac{\exp{(h_i)}\exp{(h_j)}}{(\sum_{k=1}^K \exp{(h_k)})^2}, & & {i \ne j},\\
        \frac{\exp{(h_j)}\sum_{k=1, k \ne j}^K \exp{(h_k)}}{(\sum_{k=1}^K \exp{(h_k)})^2}, & & {i = j}.
        \end{array} \right.
\end{align*}
To prove $\mathscr{C}_{\ell}(\boldsymbol{h}, \vx, \vp)$ is convex with respect to $\boldsymbol{h}$, it's sufficient to show that $\vA$ is positive semidefinite, which equals to $\vx^T\vA\vx \ge 0$ for any $\vx \in \mathbb{R}^n$. We can calculate it as follows:
\begin{align*}
    \vx^T\vA \vx &= A_{11}x_1^2 + \dots + A_{nn}x_n^2 + \sum_{i \ne j} A_{ij}x_ix_j\\
    &= \frac{1}{(\sum_{k=1}^K \exp{(h_k)})^2}[\sum_{i=1}^K \exp{(h_i)}x_i^2(\sum_{k=1, k \ne i}^K \exp{(h_k)}) - \sum_{i=1}^K \exp{(h_i)}x_i(\sum_{j=1, j \ne i}^K \exp{(h_j)} x_j)]\\
    &= \frac{1}{(\sum_{k=1}^K \exp{(h_k)})^2}[\sum_{i=1}^K \exp{(h_i)}x_i(\sum_{j=1, j \ne i}^K \exp{(h_j)} (x_i - x_j))]\\
    &= \frac{1}{(\sum_{k=1}^K \exp{(h_k)})^2} [\sum_{i<j} \exp{(h_i)}\exp{(h_j)} (x_i - x_j)^2] \ge 0.
\end{align*}
which proves this lemma.
\end{proof}

\begin{lemma}[Property of $M_{\ell_{0-1}, \mathcal{H}}$]
\label{lemma: property of M_Psi_F}
Suppose that $\mathcal{H}$ satisfies that $\{\mathop{\mathrm{argmax}}_{y \in \mathcal{Y}} h_y(\vx) : \boldsymbol{h} \in \mathcal{H}\} = \{1, \dots, K\}$ for any $\vx \in \mathcal{X}$. Then $M_{\ell_{0-1}, \mathcal{H}}$ coincides with the approximation error $R_{\ell_{0-1}, \mathcal{H}}^* -  R_{\ell_{0-1}, \mathcal{H}_{all}}^*$.
\end{lemma}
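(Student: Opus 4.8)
The plan is to reduce the claim to the pointwise characterization of the minimal conditional zero-one risk (Lemma~\ref{Prop: Character of conditional epsilon-regret}) together with the elementary fact that $M_{\ell_{0-1},\mathcal{H}_{all}}=0$. Recall that by definition $M_{\ell_{0-1},\mathcal{H}} = R^*_{\ell_{0-1},\mathcal{H}} - \mathbb{E}_{\vx}[\mathscr{C}^*_{\ell_{0-1},\mathcal{H}}(\vx)]$, so it suffices to show $\mathbb{E}_{\vx}[\mathscr{C}^*_{\ell_{0-1},\mathcal{H}}(\vx)] = R^*_{\ell_{0-1},\mathcal{H}_{all}}$.

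First I would invoke Lemma~\ref{Prop: Character of conditional epsilon-regret}: under the stated surjectivity assumption on $\{\argmax_{y}h_y(\vx):\boldsymbol{h}\in\mathcal{H}\}$, we have $\mathscr{C}^*_{\ell_{0-1},\mathcal{H}}(\vx) = \mathscr{C}^*_{\ell_{0-1},\mathcal{H}_{all}}(\vx) = 1-\max_y p_y(\vx)$ for every $\vx\in\mathcal{X}$, and hence $\mathbb{E}_{\vx}[\mathscr{C}^*_{\ell_{0-1},\mathcal{H}}(\vx)] = \mathbb{E}_{\vx}[\mathscr{C}^*_{\ell_{0-1},\mathcal{H}_{all}}(\vx)]$. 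Next I would argue $\mathbb{E}_{\vx}[\mathscr{C}^*_{\ell_{0-1},\mathcal{H}_{all}}(\vx)] = R^*_{\ell_{0-1},\mathcal{H}_{all}}$. The inequality $R^*_{\ell_{0-1},\mathcal{H}_{all}} = \inf_{\boldsymbol{h}}\mathbb{E}_{\vx}[\mathscr{C}_{\ell_{0-1}}(\boldsymbol{h},\vx)] \ge \mathbb{E}_{\vx}[\inf_{\boldsymbol{h}}\mathscr{C}_{\ell_{0-1}}(\boldsymbol{h},\vx)] = \mathbb{E}_{\vx}[\mathscr{C}^*_{\ell_{0-1},\mathcal{H}_{all}}(\vx)]$ is immediate by Jensen/monotonicity. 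For the reverse direction I would exhibit one explicit measurable $\boldsymbol{h}^\star\in\mathcal{H}_{all}$ attaining the pointwise infimum — for instance $\boldsymbol{h}^\star=\vp$, so that $\argmax_y h^\star_y(\vx)=\argmax_y p_y(\vx)$; this map is measurable as a selection from the finitely many measurable functions $p_y(\cdot)$, and it satisfies $\mathscr{C}_{\ell_{0-1}}(\boldsymbol{h}^\star,\vx) = 1-\max_y p_y(\vx) = \mathscr{C}^*_{\ell_{0-1},\mathcal{H}_{all}}(\vx)$ for all $\vx$, whence $R_{\ell_{0-1}}(\boldsymbol{h}^\star)=\mathbb{E}_{\vx}[\mathscr{C}^*_{\ell_{0-1},\mathcal{H}_{all}}(\vx)]\ge R^*_{\ell_{0-1},\mathcal{H}_{all}}$.

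Combining, $M_{\ell_{0-1},\mathcal{H}} = R^*_{\ell_{0-1},\mathcal{H}} - \mathbb{E}_{\vx}[\mathscr{C}^*_{\ell_{0-1},\mathcal{H}_{all}}(\vx)] = R^*_{\ell_{0-1},\mathcal{H}} - R^*_{\ell_{0-1},\mathcal{H}_{all}}$, which is precisely the approximation error of $\mathcal{H}$ for the zero-one loss, proving the lemma. The only mildly technical point, and the place I would be most careful, is the measurable-selection step used to interchange infimum and expectation over $\mathcal{H}_{all}$; for the zero-one loss this is routine because the Bayes classifier is an explicit measurable function of $\vp(\vx)$, but it is exactly where one must use that $\mathcal{H}_{all}$ genuinely contains every measurable map.
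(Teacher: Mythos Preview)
Your proposal is correct and follows essentially the same approach as the paper: both invoke Lemma~\ref{Prop: Character of conditional epsilon-regret} to identify $\mathscr{C}^*_{\ell_{0-1},\mathcal{H}}(\vx)=1-\max_y p_y(\vx)$ and then recognize $\mathbb{E}_{\vx}[1-\max_y p_y(\vx)]$ as the Bayes risk $R^*_{\ell_{0-1},\mathcal{H}_{all}}$. You are simply more explicit than the paper about the last identification, supplying the two-sided argument via the Bayes classifier $\boldsymbol{h}^\star=\vp$, whereas the paper treats that equality as standard.
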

\begin{proof}
\begin{align*}
M_{\ell_{0-1}, \mathcal{H}} &= R_{\ell_{0-1}, \mathcal{H}}^* - \mathbb{E}_{\vx}[\mathscr{C}_{\ell_{0-1}, \mathcal{H}}^*(\vx)] \\
&=R_{\ell_{0-1}, \mathcal{H}}^* - \mathbb{E}_{\vx}[\inf_{\boldsymbol{h} \in \mathcal{H}} \mathscr{C}_{\ell_{0-1}}(\boldsymbol{h}, \vx, \vp(\vx)] \\
&=R_{\ell_{0-1}, \mathcal{H}}^* - \mathbb{E}_{\vx}[1 - \max_y p_y(\vx)] & (\text{Lemma~\ref{Prop: Character of conditional epsilon-regret}}) \\
&=R_{\ell_{0-1}, \mathcal{H}}^* -  R_{\ell_{0-1}, \mathcal{H}_{all}}^*.
\end{align*}
\end{proof}

\begin{lemma}
\label{lemma: t leq T}
Given $\vx$, and $\vp \in \Delta_K$, the following statements are equivalent:
\item (1)Optimation problem (\ref{thm6: inf h log loss}) can reach the global optimum,
\item (2)$\max_y p_y - \min_y p_y \leq \frac{\exp({W\Vert \vx\Vert + B}) - \exp{(-(W\Vert \vx\Vert + B))}}{\exp({W\Vert \vx\Vert + B}) + (K-1)\exp{(-(W\Vert \vx\Vert + B))}}$.
\end{lemma}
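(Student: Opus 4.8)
The plan is to analyze the optimization problem $\inf_{\boldsymbol{h}} \mathscr{C}_{\ell_{log}}(\boldsymbol{h}, \vx, \vp)$ subject to the linear-hypothesis constraints $\Vert \vw_y\Vert_2 \leq W$, $\vert b_y\vert \leq B$. Since $\vx$ is fixed and $h_y(\vx) = \langle \vw_y, \vx\rangle + b_y$, each coordinate $h_y(\vx)$ ranges exactly over the interval $[-(W\Vert\vx\Vert + B),\, W\Vert\vx\Vert + B]$ (by Cauchy--Schwarz, and this is achievable), and these ranges are \emph{independent} across $y$ because the $\vw_y, b_y$ are unconstrained relative to each other. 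So the constrained problem is equivalent to minimizing $F(\vh) = \sum_{y=1}^K p_y\bigl(-h_y + \log\sum_{j=1}^K e^{h_j}\bigr)$ over the box $\vh \in [-R, R]^K$ with $R := W\Vert\vx\Vert + B$. By Lemma~\ref{lemma: Convexity of C}, $F$ is convex, so the global optimum over $\Rset^K$, if it lies in the box, is the unconstrained minimizer; otherwise the box-constrained minimum sits on the boundary and is strictly larger.

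First I would compute the unconstrained critical point: setting $\partial F/\partial h_i = -p_i + e^{h_i}/\sum_k e^{h_k} = 0$ gives $e^{h_i} \propto p_i$, i.e. the softmax of $\vh$ equals $\vp$. This determines $\vh$ only up to an additive constant (shift-invariance of softmax), so the set of unconstrained minimizers is $\{h_i = \log p_i + c : c \in \Rset\}$ (with the convention that a zero $p_i$ forces $h_i \to -\infty$, handled as a limiting case). Statement (1) — that the global optimum is attained in the box — holds if and only if \emph{some} shift $c$ places all $\log p_i + c$ inside $[-R, R]$, i.e. $\max_i \log p_i + c \leq R$ and $\min_i \log p_i + c \geq -R$ for a common $c$. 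Such $c$ exists iff $(\max_i \log p_i) - (\min_i \log p_i) \leq 2R$, i.e. $\log(\max_y p_y / \min_y p_y) \leq 2R$, i.e. $\max_y p_y / \min_y p_y \leq e^{2R}$.

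The final step is to show this ratio condition is algebraically equivalent to statement (2), namely $\max_y p_y - \min_y p_y \leq \frac{e^{R} - e^{-R}}{e^{R} + (K-1)e^{-R}}$. Writing $p_{\max} = \max_y p_y$, $p_{\min} = \min_y p_y$: the worst case for a given ratio $\kappa = p_{\max}/p_{\min}$ is when all other masses are as small as possible, i.e. all the remaining $K-2$ coordinates equal $p_{\min}$, so that $p_{\min}(\kappa + (K-1)) = 1$; then $p_{\max} - p_{\min} = p_{\min}(\kappa - 1) = \frac{\kappa - 1}{\kappa + K - 1}$, which is increasing in $\kappa$. Hence $\kappa \leq e^{2R}$ $\iff$ $p_{\max} - p_{\min} \leq \frac{e^{2R} - 1}{e^{2R} + K - 1} = \frac{e^R - e^{-R}}{e^R + (K-1)e^{-R}}$ (multiply numerator and denominator by $e^{-R}$). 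I would need to verify carefully that the map $\vp \mapsto (p_{\max} - p_{\min}, \,p_{\max}/p_{\min})$ has matching sublevel structure — i.e. that the ratio constraint and the difference constraint cut out the same region of the simplex — which is the one genuinely fiddly point; the monotone relation above combined with the observation that for \emph{fixed} difference the ratio is also monotone in the right direction should close it. The main obstacle is thus not any deep idea but getting this equivalence of the two simplex regions exactly right, including the boundary/degenerate cases where some $p_i = 0$ (then $\kappa = \infty$ and the optimum is indeed not attained, consistent with the difference being forced large when $K = 2$ but requiring care when $K > 2$).
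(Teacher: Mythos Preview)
Your reduction to the box-constrained problem over $\vh \in [-R,R]^K$ with $R = W\Vert\vx\Vert + B$, and your derivation that (1) holds iff the unconstrained minimizing line $\{h_i = \log p_i + c\}$ meets the box, i.e.\ iff $p_{\max}/p_{\min} \le e^{2R}$, are both clean and correct. The gap is exactly where you flag it: the final step claiming the ratio condition and the difference condition (2) cut out the same region of $\Delta_K$. They do not, for $K \ge 3$. Take $K = 3$, $R = \log 2$ (so $e^{2R} = 4$), and $\vp = (0.5,\,0.4,\,0.1)$: then $p_{\max}/p_{\min} = 5 > 4$, so by your own argument (1) fails; yet $p_{\max} - p_{\min} = 0.4 < 0.5 = \tfrac{e^{2R}-1}{e^{2R}+K-1}$, so (2) holds. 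Your extremal computation does give one direction cleanly---maximizing the difference over $\vp$ with a given ratio $\kappa$ yields $(\kappa-1)/(\kappa+K-1)$, so $\kappa \le e^{2R}$ implies the difference bound, i.e.\ (1) $\Rightarrow$ (2). But the converse is false because for a fixed difference and $K \ge 3$ the ratio is unbounded: push $p_{\min} \to 0$ and let a middle coordinate absorb the mass. Your ``monotone in the right direction'' heuristic breaks precisely here.

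For comparison, the paper argues (2) $\Rightarrow$ (1) via KKT at the constrained optimum $\vh^*$: if (1) fails, both an upper-active and a lower-active index exist, giving $p_{\max} - p_{\min} > (e^R - e^{-R})/\sum_j e^{h_j^*}$. The paper then asserts this holds ``for any $\vh$'' and replaces the denominator by $e^R + (K-1)e^{-R}$, but that step is not justified: the KKT argument only controls $\sum_j e^{h_j^*}$ at the specific minimizer, and there is no reason this sum is bounded above by $e^R + (K-1)e^{-R}$ (in the counterexample above the constrained optimum has $Z^* > e^R + 2e^{-R}$). So the lemma's equivalence is actually false for $K \ge 3$; only (1) $\Rightarrow$ (2) survives, and the genuinely two-sided characterization of (1) is your ratio condition $p_{\max}/p_{\min} \le e^{2R}$, not (2). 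For $K = 2$ the two conditions coincide (since $p_{\max}+p_{\min}=1$ pins everything down), which is why the binary analogue goes through.
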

\begin{proof}
First, we prove that (1) implies (2). By the solutions of KKT conditions in~(\ref{eq: KKT solution}), (1) means that $\exists \vh \in \mathbb{R}^K$, $\vert h_i\vert \leq W\Vert x\Vert + B$, and $p_i = \frac{\exp(h_i)}{\sum_{j = 1}^K \exp(h_j)}$. We can directly write
\begin{align*}
\max_y p_y - \min_y p_y &= \frac{\max_y \exp(h_y) - \min_y \exp(h_y)}{\sum_{j = 1}^K \exp(h_j)}\\
&\leq \frac{\max_y \exp(h_y) - \min_y \exp(h_y)}{\max_y \exp(h_y) + (K-1)\min_y \exp(h_y)} \\
&\leq \frac{\exp({W\Vert \vx\Vert + B}) - \min_y\exp(h_y)}{\exp(W\Vert \vx\Vert + B) + (K-1)\min_y \exp(h_y)} & \text{(increasing w.r.t $\max_y \exp(h_y)$)}\\
&\leq \frac{\exp({W\Vert \vx\Vert + B}) - \exp(-(W\Vert \vx\Vert + B))}{\exp(W\Vert \vx\Vert + B) + (K-1) \exp(-(W\Vert \vx\Vert + B))} & \text{(decreasing w.r.t $\min_y \exp(h_y)$)}\\
\end{align*}

Second, we prove that (2) implies (1). We suppose that if (1) does not hold, then in this case, let $\mathcal{I}_2 = \{y:  p_y > \frac{\exp{(W\Vert \vx\Vert + B)}}{\sum_{k=1}^K \exp{(h_k^*)}}\}$, $\mathcal{I}_3 = \{y:  p_y < \frac{\exp{(-W\Vert \vx\Vert - B)}}{\sum_{k=1}^K \exp{(h_k^*)}}\}$ and $\mathcal{I}_1 = \{1, \dots, K\} - \mathcal{I}_2 - \mathcal{I}_3$. We note that $\#{\mathcal{I}_2}, \#{\mathcal{I}_3} > 0$. By (\ref{thm6: inf f KKT conditions}) we know that
\begin{equation}
\label{thm6: either I_2 or I_3 must be non-empty}
    \sum_{i=1}^K -p_i + \frac{\exp{(h_i^*)}}{\sum_{k=1}^K \exp{(h_k^*)}} + \lambda_i^* - \mu_i^* = \sum_{i=1}^K(\lambda_i^* - \mu_i^*) = 0
\end{equation}
Because either $\mathcal{I}_2$ or $\mathcal{I}_3$ must be non-empty. We assume that $\mathcal{I}_2$ is not empty, then there exists $y_1$ such that $\lambda_{y_1}^* - \mu_{y_1}^* = \lambda_{y_1}^* > 0$. To make (\ref{thm6: either I_2 or I_3 must be non-empty}) hold, there should exists $y_2$ such that $\lambda_{y_2}^* - \mu_{y_2}^* < 0$, which implies that $y_2 \in \mathcal{I}_3$. Thus, for any $\vh$, we have $\max_y p_y \ge \frac{\exp(W\Vert x\Vert + B)}{\sum_{i=j}^K \exp(h_j)}$ and $\min_y p_y \leq \frac{\exp(-(W\Vert x\Vert + B))}{\sum_{i=j}^K \exp(h_j)}$. Then $\max_y p_y - \min_y p_y \ge \frac{\exp(W\Vert x\Vert + B) - \exp(-(W\Vert x\Vert + B))}{\sum_{i=j}^K \exp(h_j)}$ for any $\vh$. Therefore, $\max_y p_y - \min_y p_y \ge \frac{\exp(W\Vert x\Vert + B) - \exp(-(W\Vert x\Vert + B))}{\exp({W\Vert \vx\Vert + B}) + (K-1)\exp{(-(W\Vert \vx\Vert + B))}}$, which leads to a confliction.
\end{proof}

\begin{lemma}
\label{lemma: inf H bar delta C}
For any $\hat{y} \ne y_{max}$, it holds that $\inf_{\boldsymbol{h} \in \mathcal{H}_{\hat{y}}(\vx)} \mathscr{C}_{\ell}(\boldsymbol{h}, \vx, \vp)) \ge -(p_{max} + p_{\hat{y}}) \log(\frac{p_{max} + p_{\hat{y}}}{2}) - \sum_{y \notin \{y_{max}, \hat{y}\}} p_y \log(p_y)$, where $y_{max} = \mathop{\mathrm{argmax}}_{y \in \mathcal{Y}} p_y$ and $\hat{y} = \mathop{\mathrm{argmax}}_{y \in \mathcal{Y}} h_y(\vx)$.
\end{lemma}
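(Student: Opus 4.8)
The plan is to rewrite the conditional logistic risk in terms of the softmax of $\boldsymbol{h}(\vx)$, relax the constraint $\boldsymbol{h}\in\mathcal{H}_{\hat{y}}(\vx)$ to an inequality between points of the simplex, and then prove that inequality by a short Jensen argument. First, recall that for the logistic loss $\ell=\ell_{log}$ one has $\mathscr{C}_{\ell}(\boldsymbol{h},\vx,\vp)=\sum_{y=1}^{K}p_y\bigl(-h_y(\vx)+\log\sum_{j=1}^{K}\exp(h_j(\vx))\bigr)=-\sum_{y=1}^{K}p_y\log q_y$, where $q_y:=\exp(h_y(\vx))/\sum_{j}\exp(h_j(\vx))$. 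If $\boldsymbol{h}\in\mathcal{H}_{\hat{y}}(\vx)$ then $h_{\hat{y}}(\vx)\ge h_y(\vx)$ for all $y$, and since the softmax is order-preserving, $q_{\hat{y}}\ge q_y$ for all $y$; in particular $q_{\hat{y}}\ge q_{y_{max}}$. Hence $\inf_{\boldsymbol{h}\in\mathcal{H}_{\hat{y}}(\vx)}\mathscr{C}_{\ell}(\boldsymbol{h},\vx,\vp)\ge\inf\{-\sum_y p_y\log q_y:\,q\in\Delta_K,\ q_{\hat{y}}\ge q_{y_{max}}\}$, and it suffices to bound the latter below by the target expression. (One may assume $p_y>0$ for all $y$, restricting to $\supp(\vp)$; classes with $p_y=0$ contribute $0$ to both sides.)

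Next I would introduce the reference distribution $q^{*}\in\Delta_K$ given by $q^{*}_{\hat{y}}=q^{*}_{y_{max}}=\tfrac{p_{\hat{y}}+p_{max}}{2}$ and $q^{*}_y=p_y$ for $y\notin\{\hat{y},y_{max}\}$ (which indeed sums to $1$), chosen precisely so that $-\sum_y p_y\log q^{*}_y$ equals the claimed lower bound $-(p_{max}+p_{\hat{y}})\log\frac{p_{max}+p_{\hat{y}}}{2}-\sum_{y\notin\{y_{max},\hat{y}\}}p_y\log p_y$.

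For any feasible $q$, write $-\sum_y p_y\log q_y=-\sum_y p_y\log q^{*}_y-\sum_y p_y\log\frac{q_y}{q^{*}_y}$, and apply Jensen's inequality to the concave function $\log$ with weights $p_y$ to get $\sum_y p_y\log\frac{q_y}{q^{*}_y}\le\log\bigl(\sum_y p_y\frac{q_y}{q^{*}_y}\bigr)$. The key computation is that
\begin{equation*}
\sum_y p_y\frac{q_y}{q^{*}_y}=\Bigl(\sum_{y\notin\{\hat{y},y_{max}\}}q_y\Bigr)+\frac{2\bigl(p_{\hat{y}}q_{\hat{y}}+p_{max}q_{y_{max}}\bigr)}{p_{\hat{y}}+p_{max}}=1-\frac{(p_{max}-p_{\hat{y}})(q_{\hat{y}}-q_{y_{max}})}{p_{\hat{y}}+p_{max}}\le 1,
\end{equation*}
where the last step uses $p_{max}-p_{\hat{y}}\ge0$ (by definition of $p_{max}$) together with $q_{\hat{y}}-q_{y_{max}}\ge0$ (feasibility). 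Therefore $\sum_y p_y\log\frac{q_y}{q^{*}_y}\le\log 1=0$, so $-\sum_y p_y\log q_y\ge-\sum_y p_y\log q^{*}_y$, which is exactly the asserted inequality.

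The main obstacle is guessing the correct reference distribution $q^{*}$ — equivalently, recognizing the minimizer of the relaxed problem $\min\{-\sum_y p_y\log q_y:\,q\in\Delta_K,\ q_{\hat{y}}\ge q_{y_{max}}\}$ — and then checking the algebraic factorization $1-\sum_y p_y q_y/q^{*}_y=\frac{(p_{max}-p_{\hat{y}})(q_{\hat{y}}-q_{y_{max}})}{p_{\hat{y}}+p_{max}}$; once this identity is in hand, the signs of the two factors make the conclusion immediate. If one prefers an optimization-theoretic route, the same $q^{*}$ arises from the KKT conditions of that relaxed problem with the single inequality constraint $q_{\hat{y}}\ge q_{y_{max}}$ active, but the Jensen argument above avoids any discussion of attainment of infima or of the boundary $q_y\to 0$.
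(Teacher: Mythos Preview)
Your argument is correct and takes a genuinely different route from the paper. The paper keeps all the constraints $h_i\le h_{\hat y}$ (dropping only the boundedness on $\boldsymbol h$), writes the KKT conditions, and then argues that exactly one multiplier is nonzero, namely the one for $y_{max}$; from this it reads off the minimizer and its value. You instead pass to the softmax $q$, relax further to the single inequality $q_{\hat y}\ge q_{y_{max}}$, posit the reference point $q^{*}$, and certify optimality by the one-line Jensen inequality together with the factorization $1-\sum_y p_yq_y/q^{*}_y=\frac{(p_{max}-p_{\hat y})(q_{\hat y}-q_{y_{max}})}{p_{\hat y}+p_{max}}$. This is more elementary (no Lagrangian analysis, no discussion of attainment or boundary behavior) and is also more robust: the paper's KKT step tacitly assumes that only the constraint for $y_{max}$ is active, which can fail---for instance with $K=3$, $\vp=(0.6,0.4,0)$ and $\hat y=3$ the paper's candidate $q=(0.3,0.4,0.3)$ violates $q_2\le q_{\hat y}$, so the true minimizer of the paper's relaxed problem has \emph{two} active constraints. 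The lemma is nevertheless correct, and your proof explains why: the target expression is already the infimum of the looser problem with only $q_{\hat y}\ge q_{y_{max}}$, so any tighter relaxation (the paper's, or the original $\mathcal H_{\hat y}(\vx)$) sits above it. Your handling of $p_y=0$ by restricting to $\supp(\vp)$ is fine; in that case the inequality $\sum_{y\in\supp(\vp)}p_yq_y/q^{*}_y\le 1$ still follows from $q_{\hat y}\ge q_{y_{max}}$ by the same computation.
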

\begin{proof}
For all $\boldsymbol{h} \in \mathcal{H}$ and $\vx \in \mathcal{X}$, we have
\begin{align*}
    \mathscr{C}_{\ell}(\boldsymbol{h}, \vx, \vp)) = \sum_{y=1}^K p_y \ell(y, \boldsymbol{h}(\vx)) =  \sum_{y=1}^K p_y (-h_y + \log(\sum_{j=1}^K \exp{(h_j)})),
\end{align*}
where we use $h_j$ to denote $h_j(\vx)$ for simplicity. To get the $\inf_{\boldsymbol{h} \in \mathcal{H}_{\hat{y}}(\vx)} \mathscr{C}_{\ell}(\boldsymbol{h}, \vx, \vp))$, we consider the following problem
\begin{equation*}
\begin{split}
&\min_{{\boldsymbol{h} \in \mathcal{H}(\vx)}} \,\, \sum_{y=1}^K p_y (-h_y + \log(\sum_{j=1}^K \exp{(h_j)})),\\
&s.t.\quad  
\left\{\begin{array}{lc}
h_i - (W\Vert \vx\Vert + B) \leq 0, &\forall i,\\
-h_i - (W\Vert \vx\Vert + B) \leq 0, &\forall i,\\
h_{i} - h_{\hat{y}} \leq 0, & \forall i \ne \hat{y}.
\end{array}\right.
\end{split}
\end{equation*}
We drop some constraints, and consider another problem, whose optimum is lower than the above:
\begin{equation*}
\begin{split}
&\min_{{\boldsymbol{h} \in \mathcal{H}(\vx)}} \,\, \sum_{y=1}^K p_y (-h_y + \log(\sum_{j=1}^K \exp{(h_j)})),\\
&s.t.\quad  
h_{i} - h_{\hat{y}} \leq 0, \forall i \ne \hat{y}.
\end{split}
\end{equation*}
Due to the convexity of $\mathscr{C}_{\ell}(\boldsymbol{h}, \vx, \vp))$ by Lemma~\ref{lemma: Convexity of C}, we could write its KKT conditions to obtain the necessary conditions to reach the optimum. They are listed as follows:
\begin{align}
\left\{
\begin{aligned}
    &h_{i} - h_{\hat{y}} \leq 0, & \forall i \ne \hat{y}, \\
    &\lambda_i^* \ge 0, & \forall i \ne \hat{y},\\
    &\lambda_i^* (h_{i} - h_{\hat{y}}) = 0, &\forall i \ne \hat{y},\\
    & -p_i + \frac{\exp{(h_i^*)}}{\sum_{k=1}^K \exp{(h_k^*)}} + \lambda_i^* = 0, &\forall i \ne \hat{y},\\
    & -p_{\hat{y}} + \frac{\exp{(h_{\hat{y}}^*)}}{\sum_{k=1}^K \exp{(h_k^*)}}  - \sum_{i \ne \hat{y}} \lambda_i^* = 0.
\end{aligned}
\right.
\end{align}
If $\lambda_i^* = 0$ for all $i \in \{1,\dots,K\}$, then $h_i^* = \log(p_i(\sum_{k=1}^K \exp(h_k^*)))$. It means that $h_{y_{max}}^* \ge h_{\hat{y}}^*$, which conflicts with the precondition that $\hat{y} \ne y_{max}$. Thus, there exists a $y_m \ne \hat{y}$, $\lambda_{y_m}^* = 0$, and $h_{y_m}^* = h_{\hat{y}}$. It implies that
\begin{align*}
\left\{
\begin{aligned}
    & h_i^* = \log(p_i \sum_{k=1}^K \exp{(h_k^*)}), & i \notin \{y_{m}, \hat{y}\},\\
    & h_{\hat{y}}^* = h_{y_{m}}^*,\\
    & \frac{\exp{(h_{y_{m}}^*)} + \exp{(h_{\hat{y}}^*)}}{\sum_{k=1}^K \exp{(h_k^*)}} = p_{y_{m}} +  p_{\hat{y}}.
\end{aligned}
\right.
\end{align*}
Then we have $h_{\hat{y}}^* = h_{y_m}^* = \log( \frac{p_{y_m} + p_{\hat{y}}}{2} \sum_{k=1}^K \exp{(h_k^*)})$. If $y_m \ne y_{max}$, then $h_{\hat{y}}^* = \log( \frac{p_{y_m} + p_{\hat{y}}}{2} \sum_{k=1}^K \exp{(h_k^*)}) < \log( p_{max} \sum_{k=1}^K \exp{(h_k^*)}) = h_{y_{max}}^*$, which conflicts with $\hat{y} \ne y_{max}$. Thus, we conclude that $y_m = y_{max}$. We define  $s = \exp(h_{\hat{y}}^*) = \exp(h_{y_{max}}^*)$ for simplicity, then we can obtain that
\begin{align*}
    &\inf_{\boldsymbol{h} \in \mathcal{H}_{\hat{y}}(\vx)} \mathscr{C}_{\ell}(\boldsymbol{h}, \vx, \vp)) = \inf_{\boldsymbol{h} \in \mathcal{H}_{\hat{y}}(\vx)} \sum_{y=1}^K p_y \ell(\boldsymbol{h}(\vx), y)\\
    &=  \inf_{\boldsymbol{h} \in \mathcal{H}_{\hat{y}}(\vx)} \sum_{y=1}^K p_y (-h_y + \log(\sum_{j=1}^K \exp{(h_j)})) \\
    &\ge p_{y_{max}} (-h_{y_{max}}^* + \log(\sum_{j=1}^K \exp{(h_j^*)})) +  p_{\hat{y}} (-h_{\hat{y}}^* + \log(\sum_{j=1}^K \exp{(h_j^*)})) + \sum_{y \notin \{y_{max}, \hat{y}\}} p_y (-h_y^* + \log(\sum_{j=1}^K \exp{(h_j^*)}))\\
    &= - (p_{y_{max}} +  p_{\hat{y}}) (\log(s) - \log(\sum_{j=1}^K \exp{(h_j^*)})) - \sum_{y \notin \{y_{max}, \hat{y}\}} p_y \log(p_y)\\
    &= - (p_{y_{max}} +  p_{\hat{y}}) \log(\frac{s}{\sum_{j=1}^K \exp{(h_j^*)}}) - \sum_{y \notin \{y_{max}, \hat{y}\}} p_y \log(p_y)\\
    &= - (p_{y_{max}} +  p_{\hat{y}}) \log( \frac{\exp{(h_{y_{max}}^*)} + \exp{(h_{\hat{y}}^*)}}{2\sum_{k=1}^K \exp{(h_k^*)}}) - \sum_{y \notin \{y_{max}, \hat{y}\}} p_y \log(p_y)\\
    &= - (p_{y_{max}} +  p_{\hat{y}}) \log( \frac{p_{y_{max}} +  p_{\hat{y}}}{2}) - \sum_{y \notin \{y_{max}, \hat{y}\}} p_y \log(p_y),
\end{align*}
which completes the proof.
\end{proof}

\begin{lemma}[Technical lemma 1]
\label{lemma: technical lemma 8}
For all $x \in [0, 1-t]$ and fixed $t \in \mathbb{R}_+$, it holds that $-(2x + t) \log(\frac{2x + t}{2}) + (x + t)\log(x + t) + x\log(x) \ge u(1-t) = -(2-t)\log(\frac{2-t}{2}) + (1-t)\log(1-t)$.
\end{lemma}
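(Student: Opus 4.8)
The plan is to regard the left-hand side as a one-variable function of $x$ (with $t$ fixed) and show it is non-increasing on $[0,1-t]$, so that its minimum over that interval is attained at the right endpoint $x=1-t$, where it equals exactly the right-hand side $u(1-t)$. Concretely, set
\begin{align*}
\phi(x) \;=\; -(2x+t)\log\!\Big(\tfrac{2x+t}{2}\Big) + (x+t)\log(x+t) + x\log x,\qquad x\in[0,1-t],
\end{align*}
with the convention $0\log 0 = 0$. First I would record that $\phi$ is continuous on the closed interval $[0,1-t]$: the only term in danger at $x=0$ is $x\log x$, which tends to $0$, while $(x+t)\log(x+t)$ is continuous there as well (smooth if $t>0$, and equal to $x\log x$ if $t=0$). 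Hence it suffices to understand the sign of $\phi'$ on the open interval $(0,1-t]$.

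Next I would differentiate, using $\frac{d}{dx}\big[a(x)\log a(x)\big] = a'(x)\big(\log a(x)+1\big)$ term by term; after simplification this gives
\begin{align*}
\phi'(x) \;=\; -2\log\!\Big(\tfrac{2x+t}{2}\Big) + \log(x+t) + \log x \;=\; \log\frac{4x(x+t)}{(2x+t)^2}.
\end{align*}
The only inequality the proof needs is then $(2x+t)^2 = \big(x + (x+t)\big)^2 \ge 4x(x+t)$, which is AM--GM applied to the nonnegative numbers $x$ and $x+t$. This shows the argument of the logarithm is at most $1$, so $\phi'(x)\le 0$ on $(0,1-t]$ (and $<0$ whenever $t>0$ and $x>0$), and therefore $\phi$ is non-increasing on $[0,1-t]$.

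It then follows that $\phi(x)\ge\phi(1-t)$ for all $x\in[0,1-t]$, and plugging in $x=1-t$ (so that $2x+t = 2-t$ and $x+t=1$, hence the middle term vanishes) gives $\phi(1-t) = -(2-t)\log\!\big(\tfrac{2-t}{2}\big) + (1-t)\log(1-t) = u(1-t)$, which is precisely the claimed bound; the degenerate case $t=0$ is immediate since then $\phi\equiv 0$. I do not expect a genuine obstacle here — the entire argument is a routine monotonicity check in one variable — the only places deserving a sentence of care are the continuity of $\phi$ at $x=0$ and the fact that $t\le 1$ (so that $[0,1-t]$ is a nonempty interval) is implicit from the context in which this lemma is applied.
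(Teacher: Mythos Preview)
Your proof is correct and follows essentially the same approach as the paper: define the one-variable function, compute its derivative as $\log\frac{4x(x+t)}{(2x+t)^2}$, observe this is $\le 0$, and conclude monotonicity gives the bound at the right endpoint. Your version is in fact cleaner than the paper's (which contains a typo writing $t\log t$ for $x\log x$), and your explicit appeal to AM--GM and care about continuity at $x=0$ are welcome additions.
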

\begin{proof}
We first prove that $u(x) =-(2x + t) \log(\frac{2x + t}{2}) + (x + t)\log(x + t) + t\log(t)$ is decreasing on $x \in [0,1]$, which could be obtained by $\frac{d u}{d x} = \log(\frac{4x(x+t)}{(2x+t)^2}) \leq 0$. Thus we have $u(x) \ge u(1-t) = -(2-t)\log(\frac{2-t}{2}) + (1-t)\log(1-t)$, which complete the proof.
\end{proof}

\begin{lemma}[Technical lemma 2]
\label{lemma: technical lemma 9}
For all $t \in [0, 1]$, it holds that $\frac{1+t}{2} \log(1+t) + \frac{1-t}{2} \log(1-t) \ge \frac{t^2}{2}$.
\end{lemma}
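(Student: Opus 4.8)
The plan is to reduce the claim to a one–variable calculus fact. Set $f(t) = \frac{1+t}{2}\log(1+t) + \frac{1-t}{2}\log(1-t) - \frac{t^2}{2}$ on $[0,1]$; the goal is to show $f(t)\ge 0$. First I would record the base case $f(0)=0$, so it suffices to show $f$ is nondecreasing on $[0,1)$ and then close the endpoint $t=1$ by continuity.

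For the monotonicity, differentiate once: using $\frac{d}{dt}\big[\tfrac{1+t}{2}\log(1+t)\big] = \tfrac12\log(1+t)+\tfrac12$ and $\frac{d}{dt}\big[\tfrac{1-t}{2}\log(1-t)\big] = -\tfrac12\log(1-t)-\tfrac12$, the two constant halves cancel and we get $f'(t) = \tfrac12\log\frac{1+t}{1-t} - t$, with $f'(0)=0$. Differentiating again gives $f''(t) = \tfrac12\big(\tfrac1{1+t}+\tfrac1{1-t}\big) - 1 = \tfrac1{1-t^2} - 1 = \tfrac{t^2}{1-t^2}\ge 0$ on $[0,1)$. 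Hence $f'$ is nondecreasing, so $f'(t)\ge f'(0)=0$, so $f$ is nondecreasing, so $f(t)\ge f(0)=0$ for $t\in[0,1)$. The point $t=1$ follows by continuity, since $\frac{1-t}{2}\log(1-t)\to 0$ and the left-hand side tends to $\log 2>\frac12$. (Alternatively, one can skip $f''$ altogether: the series $\tfrac12\log\frac{1+t}{1-t} = \sum_{k\ge 0}\frac{t^{2k+1}}{2k+1}\ge t$ shows $f'(t)\ge 0$ on $[0,1)$ directly.)

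There is no substantive obstacle; the only mild care needed is that $f''$ and the logarithms blow up at $t=1$, so the monotonicity argument should be carried out on $[0,1)$ and the endpoint treated by a separate limit remark (equivalently, $f$ extends continuously and the inequality is closed). It is worth noting for context that this is exactly the binary case of Pinsker's inequality, as the left-hand side equals the Kullback–Leibler divergence $D\big((\tfrac{1+t}{2},\tfrac{1-t}{2})\,\Vert\,(\tfrac12,\tfrac12)\big)$ while the right-hand side is twice the squared total-variation distance; but the self-contained calculus argument above is the shortest route and is the one I would write out.
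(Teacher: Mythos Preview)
Your proof is correct and essentially identical to the paper's: both define the difference function, compute $f'(t)=\tfrac12\log\frac{1+t}{1-t}-t$ and $f''(t)=\tfrac{1}{1-t^2}-1\ge 0$, and conclude $f(t)\ge f(0)=0$ via monotonicity. Your version is slightly more careful in treating the endpoint $t=1$ separately, which the paper elides.
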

\begin{proof}
We define $u(t) = \frac{1+t}{2} \log(1+t) + \frac{1-t}{2} \log(1-t) - \frac{t^2}{2}$. Then we calculate $\frac{d u}{d t} = \frac{1}{2}\log(\frac{1+t}{1-t}) - t$ and $\frac{d^2 u}{dt^2} = \frac{1}{1-t^2} - 1 \ge 0$. We have $\frac{d u}{d t} = \frac{1}{2}\log(\frac{1+t}{1-t}) - t \ge \frac{1}{2}\log(\frac{1+0}{1-0}) - 0 = 0$. Thus, $u(x)$ is increasing on $[0,1]$ and $u(x) \ge u(0) = 0$, which proves the lemma.
\end{proof}

We now are ready to prove the Theorem~\ref{thm: H-consistency bound for log} as follows.

\begin{proof}
We first rewrite the $\mathcal{J}_{\ell}(t)$ as follows.
\begin{align*}
    \mathcal{J}_{\ell}(t) &= \inf_{\hat{y} \in \mathcal{Y}, \vp \in \mathcal{P}_{\hat{y}}(t), \vx \in \mathcal{X}, \boldsymbol{h} \in \mathcal{H}_{\hat{y}}(\vx) }\Delta \mathscr{C}_{\ell, \mathcal{H}}(\boldsymbol{h}, \vx, \vp)\\
    &= \inf_{\hat{y} \in \mathcal{Y}} \inf_{\vp \in \mathcal{P}_{\hat{y}}(t)} \inf_{\vx \in \mathcal{X}} \inf_{\boldsymbol{h} \in \mathcal{H}_{\hat{y}}(\vx) }\Delta \mathscr{C}_{\ell, \mathcal{H}}(\boldsymbol{h}, \vx, \vp)\\
    &= \inf_{\hat{y} \in \mathcal{Y}} \inf_{\vp \in \mathcal{P}_{\hat{y}}(t)} \inf_{\vx \in \mathcal{X}} \inf_{\boldsymbol{h} \in \mathcal{H}_{\hat{y}}(\vx)} (\mathscr{C}_{\ell}(\boldsymbol{h}, \vx, \vp) - \inf_{\boldsymbol{h} \in \mathcal{H}} \mathscr{C}_{\ell}(\boldsymbol{h}, \vx, \vp))\\
    &= \inf_{\hat{y} \in \mathcal{Y}} \inf_{\vp \in \mathcal{P}_{\hat{y}}(t)} \inf_{\vx \in \mathcal{X}}( \inf_{\boldsymbol{h} \in \mathcal{H}_{\hat{y}}(\vx)} \mathscr{C}_{\ell}(\boldsymbol{h}, \vx, \vp) - \inf_{\boldsymbol{h} \in \mathcal{H}} \mathscr{C}_{\ell}(\boldsymbol{h}, \vx, \vp)).
\end{align*}
For all $\boldsymbol{h} \in \mathcal{H}$ and $\vx \in \mathcal{X}$, we have
\begin{align*}
    \mathscr{C}_{\ell}(\boldsymbol{h}, \vx, \vp)) = \sum_{y=1}^K p_y \ell(y, \boldsymbol{h}(x)) =  \sum_{y=1}^K p_y (-h_y + \log(\sum_{j=1}^K \exp{(h_j)})).
\end{align*}

To get the $\inf_{\boldsymbol{h} \in \mathcal{H}} \mathscr{C}_{\ell}(\boldsymbol{h}, \vx, \vp))$, we consider the following problem
\begin{equation}
\label{thm6: inf h log loss}
\begin{split}
&\min_{\boldsymbol{h}} \,\, \sum_{y=1}^K p_y (-h_y + \log(\sum_{j=1}^K \exp{(h_j)})), \\
&s.t.\quad  
\left\{\begin{array}{lc}
h_i - (W\Vert \vx\Vert + B) \leq 0, \forall i,\\
-h_i - (W\Vert \vx\Vert + B) \leq 0, \forall i.
\end{array}\right.
\end{split}
\end{equation}
By Lemma~\ref{lemma: Convexity of C}, we know that this problem is convex, we can make use of KKT conditions~\cite{boyd2004convex} to find the points that are primal and dual optimal, which can be listed as follows
\begin{align}
\label{thm6: inf f KKT conditions}
\left\{
\begin{aligned}
    &h_i^* - (W\Vert \vx\Vert + B) \leq 0, & i = 1, \dots, K,\\
    &-h_i^* - (W\Vert \vx\Vert + B) \leq 0,  & i = 1, \dots, K,\\
    &\lambda_i^* \ge 0 , \mu_i^* \ge 0, & i = 1, \dots, K,\\
    &\lambda_i^* (h_i^* - W\Vert \vx\Vert - B) = 0, & i = 1, \dots, K,\\
    &\mu_i^* (-h_i^* - W\Vert \vx\Vert - B) = 0, & i = 1, \dots, K,\\
    & -p_i + \frac{\exp{(h_i^*)}}{\sum_{k=1}^K \exp{(h_k^*)}} + \lambda_i^* - \mu_i^* = 0, & i = 1, \dots, K.
\end{aligned}
\right.
\end{align}
It implies that
\begin{align}
\label{eq: KKT solution}
\left\{
\begin{aligned}
    &h_i^* = W\Vert \vx\Vert + B, & p_i \ge \frac{\exp{(W\Vert \vx\Vert + B)}}{\sum_{k=1}^K \exp{(h_k^*)}},\\
    &h_i^* = -(W\Vert \vx\Vert + B), & p_i \leq \frac{\exp{(-(W\Vert \vx\Vert + B))}}{\sum_{k=1}^K \exp{(h_k^*)}},\\
    & h_i^* = \log(p_i \sum_{k=1}^K \exp{(h_k^*)}), & \text{otherwise.}
\end{aligned}
\right.
\end{align}
By the precondition of Theorem~\ref{thm: H-consistency bound for log}, we have
\begin{equation}
\label{thm6: t case 1}
t = p_{max} - p_{\hat{y}} \leq p_{max} - p_{min} \leq \frac{\exp({B}) - \exp{(-B)}}{\exp({B}) + (K-1)\exp{(-B)}} \leq \frac{\exp({W\Vert \vx\Vert + B}) - \exp{(-(W\Vert \vx\Vert + B))}}{\exp({W\Vert \vx\Vert + B}) + (K-1)\exp{(-(W\Vert \vx\Vert + B))}}.
\end{equation}
In addition, in this case, by Lemma~\ref{lemma: t leq T}, the global optimum could be reached, so we can omit the boundary situation
\begin{align*}
\inf_{\boldsymbol{h} \in \mathcal{H}} \mathscr{C}_{\ell}(\boldsymbol{h}, \vx, \vp)) = \sum_{y=1}^K p_y (-h^*_y + \log(\sum_{j=1}^K \exp{(h^*_j)})) = - \sum_{y=1}^K p_y \log(p_y),
\end{align*}
which is the entropy of distribution $\vp$. Denote the index of the largest element of $\vp$ by $y_{max}$. When $t > 0$, because $\vp_{y_{max}} - \vp_{\hat{y}} = t > 0$, then $y_{max} \ne \hat{y}$. By Lemma~\ref{lemma: inf H bar delta C}, we know that
\begin{align*}
\inf_{\boldsymbol{h} \in \mathcal{H}_{\hat{y}}(\vx) } \mathscr{C}_{\ell}(\boldsymbol{h}, \vx, \vp)) &\ge -(p_{y_{max}} + p_{\hat{y}}) \log(\frac{p_{y_{max}} + p_{\hat{y}}}{2}) - \sum_{y \notin \{y_{max}, \hat{y}\}}p_y \log(p_y).
\end{align*}
Then we have
\begin{align}
\label{eq: global optimum}
     &\inf_{\boldsymbol{h} \in \mathcal{H}_{\hat{y}}(\vx)} \mathscr{C}_{\ell}(\boldsymbol{h}, \vx, \vp) - \inf_{\boldsymbol{h} \in \mathcal{H}} \mathscr{C}_{\ell}(\boldsymbol{h}, \vx, \vp) \ge  -(p_{y_{max}} + p_{\hat{y}}) \log(\frac{p_{y_{max}} + p_{\hat{y}}}{2}) + p_{y_{max}} \log(p_{y_{max}}) + p_{\hat{y}} \log(p_{\hat{y}}).
\end{align}
To make (\ref{thm6: t case 1}) holds for all $\vx \in \mathcal{X}$, we need $t \leq \min_{\vx} \frac{\exp({W\Vert \vx\Vert + B}) - \exp{(-(W\Vert \vx\Vert + B))}}{\exp({W\Vert \vx\Vert + B}) + (K-1)\exp{(-(W\Vert \vx\Vert + B))}} = \frac{\exp({B}) - \exp{(-B)}}{\exp({B}) + (K-1)\exp{(-B)}}$. Then, in this case, we can take infimum with regard to $\vx$ as follows. 
\begin{align*}
    &\inf_{\vx \in \mathcal{X}} (\inf_{\boldsymbol{h} \in \mathcal{H}_{\hat{y}}(\vx)} (\mathscr{C}_{\ell}(\boldsymbol{h}, \vx, \vp) - \inf_{\boldsymbol{h} \in \mathcal{H}} \mathscr{C}_{\ell}(\boldsymbol{h}, \vx, \vp)) )\ge -(p_{y_{max}} + p_{\hat{y}}) \log(\frac{p_{y_{max}} + p_{\hat{y}}}{2}) + p_{y_{max}} \log(p_{y_{max}}) + p_{p_{\hat{y}}} \log(p_{p_{\hat{y}}}).
\end{align*}
Now, we meet the following problem
\begin{equation*}
\begin{split}
&\min_{\vp} \,\, -(p_{y_{max}} + p_{\hat{y}}) \log(\frac{p_{y_{max}} + p_{\hat{y}}}{2}) + p_{y_{max}} \log(p_{y_{max}}) + p_{\hat{y}} \log(p_{\hat{y}}),\\
&s.t.\quad  
\left\{\begin{array}{lc}
p_{y_{max}} - p_{\hat{y}} = t, \forall i.\\
\sum_{i=1}^K p_i = 1, \\
p_i \ge 0, \forall i,
\end{array}\right.
\end{split}
\end{equation*}
which is equivalent to find the minimum of $-(2p_{\hat{y}} + t) \log(\frac{2p_{\hat{y}} + t}{2}) + (p_{\hat{y}} + t) \log((p_{\hat{y}} + t)) +p_{\hat{y}} \log(p_{\hat{y}})$ when $p_{\hat{y}} \in [0, \frac{1-t}{2}]$. By Lemma~\ref{lemma: technical lemma 8}, we know it is $\frac{1+t}{2} \log(1+t) + \frac{1-t}{2} \log(1-t)$. Thus
\begin{align*}
    \mathcal{J}_{\ell}(t) &= \inf_{\hat{y} \in \mathcal{Y}} \inf_{\vp \in \mathcal{P}_{\hat{y}}(t)} \inf_{\vx \in \mathcal{X}}( \inf_{\boldsymbol{h} \in \mathcal{H}_{\hat{y}}(\vx)} \mathscr{C}_{\ell}(\boldsymbol{h}, \vx, \vp) - \inf_{\boldsymbol{h} \in \mathcal{H}} \mathscr{C}_{\ell}(\boldsymbol{h}, \vx, \vp))\\
    &\ge \inf_{\hat{y} \in \mathcal{Y}} -(2-t)\log(\frac{2-t}{2}) + (1-t)\log(1-t)\\
    &= \frac{1+t}{2} \log(1+t) + \frac{1-t}{2} \log(1-t)\\
    &\ge \frac{t^2}{2}. & (\text{Lemma~\ref{lemma: technical lemma 9}})
\end{align*}
It is worthwhile to note that when the number of classes is 2, then the derivations and results above coincide with that in binary case~\cite{DBLP:conf/icml/AwasthiMM022}. Let $g(t) = \frac{t^2}{2}$ in Theorem~\ref{cor: Distribution-independent convex Psi bound}, we have
\begin{align*}
\frac{1}{2} (R_{\ell_{0-1}}(\boldsymbol{h}) - R^*_{\ell_{0-1}, \mathcal{H}} + M_{\ell_{0-1}, \mathcal{H}})^2 \leq R_{\ell_{log}}(\boldsymbol{h}) - R^*_{\ell_{log}, \mathcal{H}} + M_{\ell_{log}, \mathcal{H}},
\end{align*}
which implies
\begin{align*}
R_{\ell_{0-1}}(\boldsymbol{h}) - R^*_{\ell_{0-1}, \mathcal{H}} + M_{\ell_{0-1}, \mathcal{H}}  \leq \sqrt{2}(R_{\ell_{log}}(\boldsymbol{h}) - R^*_{\ell_{log}, \mathcal{H}} + M_{\ell_{log}, \mathcal{H}})^{\frac{1}{2}},
\end{align*}
when $R_{\ell_{log}}(\boldsymbol{h}) - R^*_{\ell_{log}, \mathcal{H}} + M_{\ell_{log}, \mathcal{H}} \leq \frac{1}{2} (\frac{\exp({2B}) - 1}{\exp({2B}) + K-1})^2$. By Lemma~\ref{lemma: property of M_Psi_F}, we have  $M_{\ell_{0-1}, \mathcal{H}}$ coincides with the approximation error $R_{\ell_{0-1}, \mathcal{H}}^* -  R_{\ell_{0-1}, \mathcal{H}_{all}}^*$. We also note that $M_{\ell_{log}, \mathcal{H}}$ coincides with $R_{\ell_{log}, \mathcal{H}}^* -  R_{\ell_{log}, \mathcal{H}_{all}}^*$ because
\begin{align*}
M_{\ell_{log}, \mathcal{H}} &= R_{\ell_{log}, \mathcal{H}}^* - \mathbb{E}_{\vx}[\mathscr{C}_{\ell_{log}, \mathcal{H}}^*(\vx)] \\
&=R_{\ell_{log}, \mathcal{H}}^* - \mathbb{E}_{\vx}[\inf_{\boldsymbol{h} \in \mathcal{H}} \mathscr{C}_{\ell_{log}}(\boldsymbol{h}, \vx, \vp(\vx)] \\
&=R_{\ell_{log}, \mathcal{H}}^* - \mathbb{E}_{\vx}[- \sum_{y=1}^K p_y(\vx) \log(p_y(\vx))] \\
&=R_{\ell_{log}, \mathcal{H}}^* -  R_{\ell_{log}, \mathcal{H}_{all}}^*.\\
\end{align*}
Finally, we can conclude that
\begin{align*}
R_{\ell_{0-1}}(\boldsymbol{h}) - R^*_{\ell_{0-1}, \mathcal{H}} \leq R_{\ell_{0-1}}(\boldsymbol{h}) - R^*_{\ell_{0-1}, \mathcal{H}} + M_{\ell_{0-1}, \mathcal{H}}  \leq \sqrt{2}(R_{\ell_{log}}(\boldsymbol{h}) - R^*_{\ell_{log}, \mathcal{H}_{all}})^\frac{1}{2}.
\end{align*}
\end{proof}

\section{Proofs of Appendix~\ref{sec: Discriminative vs. Generative: Binary Classification}}
\label{Proofs of sec: Discriminative vs. Generative: Binary Classification}

\subsection{Proof of Proposition~\ref{Prop: logisticbound}}
\label{proof: Proposition logisticbound}
We first present the following lemmas to show Proposition~\ref{Prop: logisticbound}.

\begin{lemma}[\cite{mohri2018foundations}, Lemma 5.7, Talagrand’s lemma]
\label{lemma: talagrand}
Let $\Phi$ be L-Lipschitz functions from $\mathbb{R} \to \mathbb{R}$
and $\sigma_1, \dots, \sigma_m$ be Rademacher random variables. Then, for any hypothesis set $\mathcal{H}$ of real-valued functions, the following inequality holds:
\begin{equation*}
    \mathcal{R}_m(\Phi \circ \mathcal{H}) \leq L\mathcal{R}_m(\mathcal{H}).
\end{equation*}
\end{lemma}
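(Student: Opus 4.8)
\textbf{Proof proposal for Theorem~\ref{lemma: talagrand}'s role — correction: the final statement is Talagrand's lemma, Lemma~\ref{lemma: talagrand}.}

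\textbf{Proof proposal.} The plan is to reduce the Rademacher complexity bound for the composed class $\Phi\circ\mathcal{H}$ to the uncomposed class $\mathcal{H}$ by a one-coordinate-at-a-time peeling argument. First I would write out the definition
$\mathcal{R}_m(\Phi\circ\mathcal{H}) = \frac{1}{m}\,\mathbb{E}_{S,\sigma}\bigl[\sup_{h\in\mathcal{H}}\sum_{i=1}^m \sigma_i\,\Phi(h(x_i))\bigr]$
and condition on $\sigma_2,\dots,\sigma_m$, isolating the $\sigma_1$ term. Taking the expectation over $\sigma_1\in\{-1,+1\}$ turns the supremum into
$\tfrac12\sup_{h}\bigl(u(h)+\Phi(h(x_1))\bigr) + \tfrac12\sup_{h'}\bigl(u(h')-\Phi(h'(x_1))\bigr)$,
where $u(h)=\sum_{i\ge 2}\sigma_i\Phi(h(x_i))$ collects the remaining terms.

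The key step is to bound this sum of two suprema. Combining them, it equals
$\tfrac12\sup_{h,h'}\bigl(u(h)+u(h')+\Phi(h(x_1))-\Phi(h'(x_1))\bigr)$,
and since $\Phi$ is $L$-Lipschitz, $\Phi(h(x_1))-\Phi(h'(x_1)) \le L\,|h(x_1)-h'(x_1)| = L\,s\,(h(x_1)-h'(x_1))$ for the appropriate sign $s=\pm 1$ achieving the absolute value (one argues by the two cases $h(x_1)\ge h'(x_1)$ and $h(x_1)< h'(x_1)$, relabeling $h,h'$ if needed). This dominates the quantity by $\tfrac12\sup_{h,h'}\bigl(u(h)+s L h(x_1) + u(h') - sL h'(x_1)\bigr) = \tfrac12\sup_h(u(h)+sLh(x_1)) + \tfrac12\sup_{h'}(u(h')-sLh'(x_1))$, which is exactly $\mathbb{E}_{\sigma_1}\sup_h\bigl(u(h)+\sigma_1 L h(x_1)\bigr)$. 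So conditionally on $\sigma_{2:m}$ the first coordinate of $\Phi\circ\mathcal{H}$ has been replaced by $L$ times the first coordinate of $\mathcal{H}$. Iterating this peeling over coordinates $i=2,\dots,m$ replaces every $\Phi(h(x_i))$ by $L\,h(x_i)$, yielding $\mathcal{R}_m(\Phi\circ\mathcal{H})\le L\,\mathcal{R}_m(\mathcal{H})$.

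The main obstacle — and the only genuinely delicate point — is the case analysis in the Lipschitz step: one must handle the sign $s$ carefully so that the bound $\Phi(a)-\Phi(b)\le L\,s\,(a-b)$ holds with the \emph{same} $s$ inside the supremum over $h,h'$, which works because for each fixed pair the sign is determined, and the final expression $\mathbb{E}_{\sigma_1}\sup_h(u(h)+\sigma_1 L h(x_1))$ is symmetric in $\pm s$. Everything else is a routine induction; since this is a standard lemma (Mohri et al., Lemma 5.7), I would simply cite it rather than reproduce the full argument, noting that we invoke it only in the scalar ($K=1$ component) setting, with the vector-valued analogue handled separately via Lemma~\ref{lemma: vector concentration}.
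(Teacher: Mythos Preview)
The paper does not prove this lemma at all; it is stated with a citation to \cite{mohri2018foundations} and invoked as a black box in the proof of Proposition~\ref{Prop: logisticbound}. Your peeling argument is the standard proof found in that reference, and your closing remark that one should simply cite the result is precisely what the paper does.
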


\begin{lemma}[Rademacher complexity of constrained linear hypotheses]
\label{lemma: Rademacher complexity of constrained linear hypotheses}
Let $S = \{x_1, \dots, x_m\}$ where $x_i \in [0,1]$ for all $i \in \{1,\dots,m\}$ and $\mathcal{H} = \{x \to \langle w, x\rangle + b: \Vert w \Vert_2 \leq W, \vert b\vert \leq B\}$. Then, the Rademacher complexity of $\mathcal{H}$ can be bounded
as follows:
\begin{equation*}
    \mathcal{R}_m(\mathcal{H}) \leq W \sqrt{\frac{n}{m}}.
\end{equation*}
\end{lemma}
\begin{proof}
\begin{align*}
    \mathcal{R}_m(\mathcal{H}) &= \frac{1}{m} \mathbb{E}_{S, \sigma}[\sup_h \sum_{i=1}^m \sigma_i h(x_i)] = \frac{1}{m} \mathbb{E}_{S, \sigma}[\sup_h \sum_{i=1}^m \sigma_i (\langle w, x_i\rangle + b)] \\
    &= \frac{1}{m} \mathbb{E}_{S, \sigma}[\sup_h \sum_{i=1}^m \sigma_i \langle w, x_i\rangle + b\sum_{i=1}^m \sigma_i] \leq \frac{1}{m} \mathbb{E}_{S, \sigma}[\sup_{w} \sum_{i=1}^m \sigma_i \langle w, x_i\rangle + \sup_b b\sum_{i=1}^m \sigma_i]\\
    &=\frac{1}{m} \mathbb{E}_{S, \sigma}[\sup_{w} \sum_{i=1}^m \sigma_i \langle w, x_i\rangle] = \frac{1}{m} \mathbb{E}_{S, \sigma}[\sup_{w} \langle w, \sum_{i=1}^m \sigma_i x_i\rangle]\\
    &\leq \frac{W}{m} \mathbb{E}_{S, \sigma}[\Vert \sum_{i=1}^m \sigma_i x_i\Vert_2] \leq \frac{W}{m} \sqrt{ \mathbb{E}_{S, \sigma}[\Vert \sum_{i=1}^m \sigma_i x_i\Vert_2^2]}\\
    &= \frac{W}{m} \sqrt{ \mathbb{E}_{S, \sigma}[ \sum_{i,j=1}^m \sigma_i \sigma_j \langle x_i, x_j\rangle]} = \frac{W}{m} \sqrt{\sum_{i=1}^m \Vert x_i\Vert_2^2} \leq \frac{W}{m} \sqrt{m \times n}\\
    &= W\sqrt{\frac{n}{m}}.
\end{align*}
\end{proof}

\begin{lemma}[Rademacher complexity of $\widetilde{\mathcal{H}}$]
\label{lemma: Rademacher complexity of yH}
Let $\widetilde{\mathcal{H}} = \{z = (\vx,y) \to yh(x): h \in \mathcal{H}\}$. Then, the Rademacher complexity of $\widetilde{\mathcal{H}}$ satisfies:
\begin{equation*}
    \mathcal{R}_m(\widetilde{\mathcal{H}}) = \mathcal{R}_m(\mathcal{H}).
\end{equation*}
\end{lemma}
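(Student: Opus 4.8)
The plan is to exploit the symmetry of the Rademacher variables together with the fact that in the binary case $\mathcal{Y} = \{-1,+1\}$, so that multiplying a Rademacher variable by a label leaves its distribution unchanged. Concretely, fix an arbitrary sample $S = ((\vx_1,y_1),\dots,(\vx_m,y_m)) \in \mathcal{Z}^m$ with $y_i \in \{-1,+1\}$, and write the empirical Rademacher complexity
\begin{equation*}
\hat{\mathcal{R}}_S(\widetilde{\mathcal{H}}) = \mathbb{E}_\sigma\Bigl[\frac{1}{m}\sup_{h \in \mathcal{H}} \sum_{i=1}^m \sigma_i\, y_i\, h(\vx_i)\Bigr].
\end{equation*}

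The first step is to observe that for each fixed $i$, since $y_i \in \{-1,+1\}$, the map $\sigma_i \mapsto \sigma_i y_i$ is a bijection of $\{-1,+1\}$ that preserves the uniform distribution. Because the $\sigma_i$ are independent, the random vector $(\sigma_1 y_1,\dots,\sigma_m y_m)$ therefore has exactly the same joint distribution as $(\sigma_1,\dots,\sigma_m)$. The second step is to substitute this into the display above: replacing $\sigma_i y_i$ by a fresh Rademacher vector $\sigma'$ with the same law leaves the expectation unchanged, yielding
\begin{equation*}
\hat{\mathcal{R}}_S(\widetilde{\mathcal{H}}) = \mathbb{E}_{\sigma'}\Bigl[\frac{1}{m}\sup_{h \in \mathcal{H}} \sum_{i=1}^m \sigma'_i\, h(\vx_i)\Bigr] = \hat{\mathcal{R}}_{S_X}(\mathcal{H}),
\end{equation*}
where $S_X = (\vx_1,\dots,\vx_m)$ is the marginal sample of inputs. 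The final step is to take expectation over the draw of $S$ (equivalently, of $S_X$, since the $\mathcal{X}$-marginal of $\mathcal{D}$ is what enters $\mathcal{R}_m(\mathcal{H})$), giving $\mathcal{R}_m(\widetilde{\mathcal{H}}) = \mathcal{R}_m(\mathcal{H})$.

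There is no real obstacle here; the only point requiring a word of care is making explicit that the label-flip argument works \emph{conditionally} on $S$ (so the identity holds at the level of empirical Rademacher complexities for every fixed sample), after which the passage to the expected Rademacher complexity is immediate. This lemma will then be combined with Lemma~\ref{lemma: talagrand} and Lemma~\ref{lemma: Rademacher complexity of constrained linear hypotheses} to control $\mathcal{R}_m(\ell_{log}\circ\widetilde{\mathcal{H}})$ in the proof of Proposition~\ref{Prop: logisticbound}.
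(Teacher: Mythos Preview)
Your argument is correct and is the standard sign-absorption trick, under the label convention $y\in\{-1,+1\}$ that you make explicit. The paper instead works with its declared convention $\mathcal{Y}=\{0,1\}$ and decomposes $y_i=\tfrac{1}{2}\bigl((2y_i-1)+1\bigr)$, splitting the sum inside the supremum into a piece carrying $\sigma_i(2y_i-1)$ (again Rademacher) and a piece carrying the bare $\sigma_i$. Your route is cleaner and genuinely proves the stated equality in one line at the empirical level; the paper's decomposition, as written, really only delivers $\mathcal{R}_m(\widetilde{\mathcal{H}})\le\mathcal{R}_m(\mathcal{H})$ via subadditivity of the supremum, since the two pieces share the same $\sigma_i$ and cannot be decoupled into an equality. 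Only this upper bound is needed in Proposition~\ref{Prop: logisticbound}, so either route suffices for the application. One caveat worth adding to your write-up: flag explicitly that the lemma and its use implicitly relabel to $\{-1,+1\}$ (this is also what makes $\ell_{log}\circ\widetilde{\mathcal{H}}$ the binary logistic loss), because under the paper's stated $\{0,1\}$ convention the map $\sigma_i\mapsto\sigma_i y_i$ is not a bijection of $\{-1,+1\}$ and the equality $\mathcal{R}_m(\widetilde{\mathcal{H}})=\mathcal{R}_m(\mathcal{H})$ actually fails.
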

\begin{proof}
\begin{align*}
    \mathcal{R}_m(\widetilde{\mathcal{H}}) &= \frac{1}{m} \mathbb{E}_{S, \sigma}[\sup_h \sum_{i=1}^m \sigma_i y_ih(x_i)] \\
    &= \frac{1}{2m} \mathbb{E}_{S, \sigma}[\sup_h \sum_{i=1}^m \sigma_i (2y_i - 1)h(x_i) + \sum_{i=1}^m \sigma_i h(x_i)]\\
    &= \frac{1}{2m} \mathbb{E}_{S, \sigma}[\sup_h \sum_{i=1}^m \sigma_i h(x_i) + \sum_{i=1}^m \sigma_i h(x_i)] &  (2y_i-1 \in \{-1,+1\})\\
    &= \frac{1}{m} \mathbb{E}_{S, \sigma}[\sup_h \sum_{i=1}^m \sigma_i h(x_i)]\\
    &= \mathcal{R}_m(\mathcal{H}).
\end{align*}
\end{proof}

We now prove Proposition~\ref{Prop: logisticbound} by using the above lemmas.

\begin{proof}
We first rewrite the $R_{\ell_{log}}(h_{Dis, m}) - R_{\ell_{log}}(h_{Dis, \infty})$.
\begin{align*}
     & R_{\ell_{log}}(h_{Dis, m}) - R_{\ell_{log}}(h_{Dis, \infty}) \\
     &= R_{\ell_{log}}(h_{Dis, m}) - \hat{R}_{\ell_{log}, S}(h_{Dis, m}) + \hat{R}_{\ell_{log}, S}(h_{Dis, m}) -\hat{R}_{\ell_{log}, S}(h_{Dis, \infty}) +  \hat{R}_{\ell_{log}, S}(h_{Dis, \infty}) - R_{\ell_{log}}(h_{Dis, \infty})  \\
    & \leq  (R_{\ell_{log}}(h_{Dis, m}) - \hat{R}_{\ell_{log}, S}(h_{Dis, m})) +  (\hat{R}_{\ell_{log}, S}(h_{Dis, \infty}) - R_{\ell_{log}}(h_{Dis, \infty}).
\end{align*}
The first summand on the right-hand side can be bounded by making use of Lemma~\ref{lemma: rademacher},\ref{lemma: talagrand},\ref{lemma: Rademacher complexity of constrained linear hypotheses} and~\ref{lemma: Rademacher complexity of yH} in sequence. Let $\widetilde{\mathcal{H}} = \{z = (\vx,y) \to yh(x): h \in \mathcal{H}\}$ and $\Phi = \{\ell_{log} \circ \widetilde{h}: \widetilde{h} \in \widetilde{\mathcal{H}} \}$ With probability of at least $1 - \delta$, we have:
\begin{align*}
     &R_{\ell_{log}}(h_{Dis, m}) - \hat{R}_{\ell_{log}, S}(h_{Dis, m}) \\
     &\leq 2\mathcal{R}_m(\ell_{log} \circ \widetilde{\mathcal{H}}) + \log(1 + \exp{(W\sqrt{n} + B)}) \sqrt{\frac{1}{2m} \log(\frac{2}{\delta})} & (\ell_{log} \circ \widetilde{\mathcal{H}} \text{ is bounded, Lemma~\ref{lemma: rademacher}})\\
    & \leq 2\mathcal{R}_m(\widetilde{\mathcal{H}}) + \log(1 + \exp{(W\sqrt{n} + B)}) \sqrt{\frac{1}{2m} \log(\frac{2}{\delta})} & (\ell_{log} \text{ is 1-Lipschitz,  Lemma~\ref{lemma: talagrand}}) \\
    & = 2\mathcal{R}_m(\mathcal{H}) + \log(1 + \exp{(W\sqrt{n} + B)}) \sqrt{\frac{1}{2m} \log(\frac{2}{\delta})} & (\text{by Lemma~\ref{lemma: Rademacher complexity of yH}}) \\
    & \leq 2W\sqrt{\frac{n}{m}} +\log(1 + \exp{(W\sqrt{n} + B)}) \sqrt{\frac{1}{2m} \log(\frac{2}{\delta})}  & (\text{by Lemma~\ref{lemma: Rademacher complexity of constrained linear hypotheses}}).
\end{align*}
For the second summand, we use the fact that $R_{\ell_{log}}(h_{Dis, \infty})$ does not depend on sampled training dataset $S$; hence by Lemma~\ref{lemma: hoeffding}, we obtain its bound:
\begin{align*}
    \mathbb{P}(\vert \hat{R}_{\ell_{log}, S}(h_{Dis, \infty}) - R_{\ell_{log}}(h_{Dis, \infty}) \vert > \epsilon) \leq 2\exp({-\frac{2m\epsilon^2}{(c-0)^2}}) = 2\exp({-\frac{2m\epsilon^2}{c^2}}),
\end{align*}
where $c = \log(1 + \exp{(W\sqrt{n} + B)})$. It implies that with the probability of at least $1-\delta$, we have:
\begin{equation*}
    \hat{R}_{\ell_{log}, S}(h_{Dis, \infty}) - R_{\ell_{log}}(h_{Dis, \infty}) \leq c\sqrt{\frac{1}{2m} \log(\frac{2}{\delta})}.
\end{equation*}
At last, we use the union bound to get the final result. With probability at least $1-\delta$, the following holds:
\begin{align*}
     &R_{\ell_{log}}(h_{Dis, m}) -  R_{\ell_{log}}(h_{Dis, \infty}) \\
     &\leq 2W\sqrt{\frac{n}{m}} +\log(1 + \exp{(W\sqrt{n} + B)}) \sqrt{\frac{1}{2m} \log(\frac{4}{\delta})} + c\sqrt{\frac{1}{2m} \log(\frac{4}{\delta})}\\
    &= 2W\sqrt{\frac{n}{m}} + (c + \log(1 + \exp{(W\sqrt{n} + B)})) \sqrt{\frac{1}{2m} \log(\frac{4}{\delta})} \\
    &= 2W\sqrt{\frac{n}{m}} +  + 2\log(1 + \exp{(W\sqrt{n} + B)}) \sqrt{\frac{1}{2m} \log(\frac{4}{\delta})} \\
    & = O(\sqrt{\frac{n}{m}}).
\end{align*}
Therefore, for $R_{\ell_{log}}(h_{Dis, m}) \leq R_{\ell_{log}}(h_{Dis, \infty}) +\epsilon_0$ to hold with high probability $1-\delta_0$ (here, $\epsilon_0$ and $\delta_0$ are some fixed constant in $[0,1]$), it suffices to pick $m = O(n)$ samples.
\end{proof}

\subsection{Proof of Theorem~\ref{cor: sample complexity of binary lr}}
\label{proof: Proof of Corollary cor: sample complexity of binary lr}


\begin{proof}
By Theorem~\ref{lemma: binary H consistency bounds} we know that for $R_{\ell_{0-1}}(h_{Dis, m}) \leq R_{\ell_{log}}(h_{Dis, \infty}) +\epsilon_0$, it is sufficient to ensure that $R_{log}(h_{Dis, m}) \leq R_{\ell_{log}}(h_{Dis, \infty}) +\frac{1}{2}\epsilon_0^2$. Then by Proposition~\ref{Prop: logisticbound}, it suffices to sample $m = O(\frac{n}{\epsilon_0^4}) = O(n)$.
\end{proof}

\subsection{Proof of Theorem~\ref{Thm: generalization bound}}
\label{proof: thm generalization bound}
To show Theorem~\ref{Thm: generalization bound}, we first present the following lemmas.
\begin{lemma}\label{Prop: parameters}
In terms of binary na\"ive Bayes, let any $\epsilon, \delta > 0$ and any Laplace smoothing parameter $\alpha \ge 0$ be fixed. Assume that Assumption~\ref{Assumption: p(y=k)} holds. Let $m = O((\frac{1}{\epsilon^2}) log(\frac{n}{\delta}))$, then with the probability of at least $1 - \delta$:
\begin{enumerate}
\item In case of discrete inputs, $\vert \hat{p}(x_i\vert y=k) - p(x_i\vert y=k)\vert \leq \epsilon$ and $\vert \hat{p}(y=k) - p(y=k)\vert \leq \epsilon$ for all $i \in \{1,\dots n\}$ and $k \in \{0,1\}$.
\item In case of continuous inputs, $\vert \hat{\mu}_{ki} - {\mu}_{ki}\vert \leq \epsilon$,  $\vert \hat{\sigma}^2_{i} - {\sigma}^2_{i}\vert \leq \epsilon$ and $\vert \hat{p}(y=k) - p(y=k)\vert \leq \epsilon$ for all $i \in \{1,\dots n\}$ and $k \in \{0,1\}$.
\end{enumerate}
\end{lemma}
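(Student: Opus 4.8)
The plan is to reduce every claim to Hoeffding's inequality (Lemma~\ref{lemma: hoeffding}) for bounded i.i.d.\ random variables, combined with a union bound over the $O(n)$ estimated quantities. Two points require a little care: the conditional estimates $\hat p(x_i\mid y=k)$, $\hat\mu_{ki}$, $\hat\sigma_i^2$ are built from a \emph{random} number $\#\{y=k\}$ of samples, and the Laplace smoothing perturbs an estimator by a deterministic $O(1/m)$ amount.

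First I would control the marginals. The indicators $\mathbbm{1}_{y_j=k}$ for $j=1,\dots,m$ are i.i.d.\ $\{0,1\}$-valued, so Lemma~\ref{lemma: hoeffding} gives $\mathbb{P}\bigl(\lvert \tfrac1m\#\{y=k\} - p(y=k)\rvert > t\bigr)\le 2e^{-2mt^2}$. Since $\hat p(y=k)=\frac{\#\{y=k\}+\alpha}{m+K\alpha}$ differs from $\tfrac1m\#\{y=k\}$ by at most $\tfrac{K\alpha}{m}=O(1/m)$, choosing $t=\epsilon/2$, union-bounding over $k\in\{0,1\}$, and taking $m=\Omega\bigl(\tfrac1{\epsilon^2}\log\tfrac1\delta\bigr)$ large enough yields $\lvert\hat p(y=k)-p(y=k)\rvert\le\epsilon$ on an event $E_0$ of probability $\ge 1-\delta/2$. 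Running the same estimate with $t=\rho_1/2$ also forces $\#\{y=k\}\ge \tfrac{\rho_1}{2}m=\Omega(m)$ for every $k$ on $E_0$, via Assumption~\ref{Assumption: p(y=k)}.

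Next I would treat the conditional parameters by conditioning on the label sequence $y_1,\dots,y_m$. Given the labels, the coordinates $\{x_i^{(j)}: y_j=k\}$ are independent and lie in $\{0,1\}$ (discrete) or $[0,1]$ (continuous), so for each fixed $i,k$ Lemma~\ref{lemma: hoeffding} applied to these $\#\{y=k\}$ variables controls the unsmoothed conditional empirical mean; subtracting any Laplace-smoothing bias (present in the discrete case and in $\hat p(y=k)$, of size $O(1/m)$) gives $\lvert\hat p(x_i\mid y=k)-p(x_i\mid y=k)\rvert\le\epsilon$ and $\lvert\hat\mu_{ki}-\mu_{ki}\rvert\le\epsilon$. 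For $\hat\sigma_i^2=\sum_k \hat p(y=k)\,\widehat{\mathbb{V}}(x_i\mid y=k)$ I would additionally concentrate the empirical second moments $\tfrac1{\#\{y=k\}}\sum_{j:y_j=k}(x_i^{(j)})^2$ (also bounded in $[0,1]$, hence Hoeffding), then combine with the bounds already obtained on $\hat\mu_{ki}$ and $\hat p(y=k)$ through $\widehat{\mathbb{V}}(x_i\mid y=k)=\widehat{\mathbb{E}}[(x_i)^2\mid y=k]-\hat\mu_{ki}^2$ and the boundedness of all terms; this is a short Lipschitz-type estimate. A union bound over the $2(n+1)$ (discrete) or $3n+2$ (continuous) events, using that on $E_0$ each conditional estimate rests on $\ge \tfrac{\rho_1}{2}m$ samples, reduces the requirement to $2e^{-(\rho_1 m/2)\,(\epsilon/c)^2}\le \delta/(2(3n+2))$ for an absolute constant $c$, i.e.\ $m=O\bigl(\tfrac1{\rho_1\epsilon^2}\log\tfrac n\delta\bigr)=O\bigl(\tfrac1{\epsilon^2}\log\tfrac n\delta\bigr)$ since $\rho_1$ is fixed.

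The main obstacle --- and essentially the only nontrivial point --- is the random sample size $\#\{y=k\}$ governing the conditional estimates: one cannot apply Hoeffding directly to a sum with a random number of terms. Conditioning on the label sequence (so the per-class feature samples become genuinely i.i.d.\ with a now-fixed count) together with the high-probability lower bound $\#\{y=k\}=\Omega(m)$ from $E_0$ resolves this cleanly, and the remainder is bookkeeping on the smoothing bias and the union bound, as in the binary analysis of~\citet{DBLP:conf/nips/NgJ01} but with the constants made explicit.
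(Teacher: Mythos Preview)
Your proposal is correct and follows essentially the same route as the paper: Hoeffding on the marginals to secure $\#\{y=k\}=\Omega(m)$, then conditional Hoeffding on the per-class samples, then a union bound over the $O(n)$ estimated quantities, with the Laplace bias absorbed as an $O(1/m)$ perturbation. The only notable difference is your handling of $\hat\sigma_i^2$: the paper applies Hoeffding directly to $(x_i-\mu_{ki})^2\in[0,1]$ in one line, whereas you decompose $\widehat{\mathbb V}(x_i\mid y=k)$ into empirical second moment minus $\hat\mu_{ki}^2$ and combine via a Lipschitz estimate; your version is slightly more careful (since the empirical variance uses $\hat\mu_{ki}$, not $\mu_{ki}$), but the two arguments are equivalent in substance.
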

\begin{proof}
First, we consider the discrete case, and let $\alpha = 0$ for now. Let $\epsilon \leq \rho_0/2$. By the Lemma~\ref{lemma: hoeffding}, with probability at least $1 - \delta_1 = 1 - 2\exp({-2m\epsilon^2})$ we have $\vert \hat{p}(y=k) - p(y=k)\vert \leq \epsilon$. It implies that  $\hat{p}(y=k) \ge p(y=k) - \epsilon \ge \rho_0 - \epsilon = \gamma = \Omega(1)$. So $\#\{y=k\} \ge \gamma m$  with probability at least $1 - \delta_1$.To bound the $\vert \hat{p}(x_i\vert y=k) - p(x_i\vert y=k)\vert$, for fixed $i,k$, the following holds:
\begin{align*}
     &\mathbb{P}[\vert \hat{p}(x_i\vert y=k) - p(x_i\vert y=k)\vert > \epsilon] \\
     &= \mathbb{P}(\vert \hat{p}(x_i\vert y=k) - p(x_i\vert y=k)\vert > \epsilon \vert \#\{y=k\} \ge \gamma m) \mathbb{P}(\#\{y=k\} \ge \gamma m) \\
     &+ \mathbb{P}(\vert \hat{p}(x_i\vert y=k) - p(x_i\vert y=k)\vert > \epsilon \vert \#\{y=k\} < \gamma m) \mathbb{P}(\#\{y=k\} < \gamma m)\\
    &\leq  2\exp({-2\epsilon^2 \#\{y=k\}})\vert_{ \#\{y=k\} \ge \gamma m} + \delta_1\\
    & \leq 2\exp({-2\epsilon^2 \gamma m}) + \delta_1 = \delta_2.
\end{align*}
Then we use the union bound to get the first result on the condition that $\alpha = 0$:
\begin{align*}
    &\mathbb{P}(\cup_{k=0}^1(\vert \hat{p}(y=k) - p(y=k)\vert > \epsilon) \cup (\cup_{i=1}^n \cup_{k=0}^1  \vert \hat{p}(x_i\vert y=k) - p(x_i\vert y=k)\vert > \epsilon)) \\
     &=\mathbb{P}((\vert \hat{p}(y=1) - p(y=1)\vert > \epsilon) \cup (\cup_{i=1}^n \cup_{k=0}^1  \vert \hat{p}(x_i\vert y=k) - p(x_i\vert y=k)\vert > \epsilon)) \\
     &\leq \delta_1 + 2n \delta_2 \\
     & = 2\exp({-2m\epsilon^2}) + 2n(2\exp({-2\epsilon^2 \gamma m}) + \delta_1) \\
     & = (2n+1)2\exp({-2m\epsilon^2}) + 2n \times 2 \exp({-2\epsilon^2 \gamma m}) \\
     & \leq 2(4n+1)\exp({-2 \gamma m} \epsilon^2).
\end{align*}
Therefore, for Lemma~\ref{Prop: parameters}.1 to hold with probability at least $1-\delta$, it suffices to pick $m$ samples that
\begin{equation*}
m = \frac{1}{2\gamma \epsilon^2} \log(\frac{2(4n+1)}{\delta}) \leq \frac{1}{\rho_0 \epsilon^2} \log(\frac{2(4n+1)}{\delta}) = O(\frac{1}{\epsilon^2} \log(\frac{n}{\delta})).
\end{equation*}
Second, we consider the discrete case, and let $\alpha > 0$. To bound $\vert \hat{p}(y=k) - p(y=k)\vert$, we calculate it based on the above condition as follows:
\begin{align*}
     &\mathbb{P}(\vert \hat{p}(y=k) - p(y=k)\vert > \epsilon) \\
     &= \mathbb{P}(\vert \hat{p}(y=k) - \hat{p}(y=k)\vert_{\alpha = 0} + \hat{p}(y=k)\vert_{\alpha = 0} - p(y=k)\vert > \epsilon) \\
     &\leq \mathbb{P}(\vert \hat{p}(y=k) - \hat{p}(y=k)\vert_{\alpha = 0}\vert + \vert \hat{p}(y=k)\vert_{\alpha = 0} - p(y=k)\vert > \epsilon),
\end{align*}
where the $\vert \hat{p}(y=k)\vert_{\alpha = 0} - p(y=k)\vert$ has been discussed above, so we only need to bound $\vert \hat{p}(y=k) - \hat{p}(y=k)\vert_{\alpha = 0}\vert$. We have,
\begin{equation*}
     \vert \hat{p}(y=k) - \hat{p}(y=k)\vert_{\alpha = 0}\vert = \vert \frac{\#\{y = k\} + \alpha}{m + 2\alpha} - \frac{\#\{y = k\}}{m }\vert = \vert\frac{\alpha(m - \#\{y = k\})}{m(m+2\alpha)}\vert = O(\frac{1}{m}).
\end{equation*}
So
\begin{align*}
     \mathbb{P}(\vert \hat{p}(y=k) - p(y=k)\vert > \epsilon) &\leq \mathbb{P}(\vert \hat{p}(y=k)\vert_{\alpha = 0} - p(y=k)\vert > \epsilon - O(\frac{1}{m})) \\
     &\leq 2\exp({-2m(\epsilon - O(\frac{1}{m}))^2}) = \delta_1.
\end{align*}
In the same way, we can write
\begin{equation*}
     \vert \hat{p}(x_i\vert y=k) - \hat{p}(x_i\vert y=k)\vert_{\alpha = 0}\vert = \frac{\alpha(\#\{y = k\} - \vert\#\{x_i, y = k\})}{\#\{y = k\}(\#\{y = k\}+2\alpha)}\vert = O(\frac{1}{\#\{y = k\}}) = O(\frac{1}{m}),
\end{equation*}
and
\begin{align*}
     \mathbb{P}(\vert \hat{p}(x_i\vert y=k) - p(x_i\vert y=k)\vert > \epsilon) &\leq \mathbb{P}(\vert \hat{p}(x_i\vert y=k)\vert_{\alpha = 0} - p(x_i\vert y=k)\vert > \epsilon - O(\frac{1}{m})) \\
     &\leq \delta_1 + 2\exp({-2\gamma m(\epsilon - O(\frac{1}{m}))^2}).
\end{align*}
Besides,
\begin{equation*}
m = \frac{1}{2\gamma (\epsilon - O(\frac{1}{m}))^2} \log(\frac{2(4n+1)}{\delta}) \leq \frac{1}{\rho_0 (\epsilon - O(\frac{1}{m}))^2} \log(\frac{2(4n+1)}{\delta}) = O(\frac{1}{\epsilon^2} \log(\frac{n}{\delta})).
\end{equation*}
In the following proofs, we will not consider Laplace smoothing anymore due to its small influence on the results.

Third, we consider the continuous case. In the same way as discrete case, with probability at least $1 - \delta_1 = 1 - 2\exp({-2m\epsilon^2})$ we have $\vert \hat{p}(y=k) - p(y=k)\vert \leq \epsilon$, and $\#\{y=k\} \ge \gamma m$. We only need to bound $\vert \hat{\mu}_{ki} - {\mu}_{ki}\vert$ and $\vert \hat{\sigma}^2_{i} - {\sigma^2_i}\vert$. Fix $i,k$, the following holds:
\begin{align*}
     \mathbb{P}[\vert\hat{\mu}_{ki} - {\mu}_{ki}\vert > \epsilon]
     &= \mathbb{P}(\vert \hat{\mu}_{ki} - {\mu}_{ki}\vert > \epsilon \vert \#\{y=k\} \ge \gamma m) \mathbb{P}(\#\{y=k\} \ge \gamma m) \\
     &+ \mathbb{P}(\vert \hat{\mu}_{ki} - {\mu}_{ki}\vert > \epsilon \vert \#\{y=k\} < \gamma m) \mathbb{P}(\#\{y=k\} < \gamma m)\\
    &\leq  2\exp({-2\epsilon^2 \#\{y=k\}})\vert_{ \#\{y=k\} \ge \gamma m} + \delta_1\\
    & \leq 2\exp({-2\epsilon^2 \gamma m}) + \delta_1 = \delta_2,
\end{align*}
where the first inequality use the fact that $x_i \in [0,1]$. For $\vert \hat{\sigma}^2_{i} - {\sigma^2_i}\vert$, because $(x_i\vert_{y=k} - \mu_{ki})^2 \in [0,1]$, by Lemma~\ref{lemma: hoeffding}, we can write:
\begin{equation*}
\mathbb{P}[\vert\hat{\sigma}^2_{i} - {\sigma^2_i}\vert > \epsilon] \leq 2\exp({-2m\epsilon^2}) = \delta_3.
\end{equation*}
Finally, we use the union bound to get the result for the continuous case:
\begin{align*}
     &\mathbb{P}((\vert \hat{p}(y=k) - p(y=k)\vert > \epsilon) \cup (\cup_{i=1}^n (\vert\hat{\sigma}^2_{i} - {\sigma^2_i}\vert > \epsilon) \cup (\cup_{k=1}^2  \vert \hat{\mu}_{ki} - {\mu}_{ki}\vert > \epsilon))) \\
     &\leq \delta_1 + n (2\delta_2 + \delta_3) \\
     & = (3n+1)2\exp({-2m\epsilon^2}) + 2n \times 2 \exp({-2\epsilon^2 \gamma m}) \\
     & \leq 2(5n+1)\exp({-2\epsilon^2 \gamma m}).
\end{align*}
Thus, for Lemma~\ref{Prop: parameters}.2 to hold with probability at least $1-\delta$, it suffices to pick m samples which satisfies
\begin{equation*}
m = \frac{1}{2\gamma \epsilon^2} \log(\frac{2(5n+1)}{\delta}) \leq \frac{1}{\rho_0 \epsilon^2} \log(\frac{2(5n+1)}{\delta}) = O(\frac{1}{\epsilon^2} \log(\frac{n}{\delta})).
\end{equation*}
The proposition’s proof is complete.
\end{proof}

\begin{lemma}
\label{lemma: discrete delta a}
In case of discrete inputs, and suppose that Assumption~\ref{Assumption: parmeters bounded} holds, then with probability at least $1 - \delta$, the following holds
\begin{align*}
    \vert \Delta a_{Gen}(\vx, 1, 0) - \Delta a_{Gen, \infty}(\vx, 1, 0)\vert \leq \frac{4(n+1)}{\rho_0} \sqrt{\frac{1}{\rho_0m} \log(\frac{2(4n+1)}{\delta})} = O\bigl(n\sqrt{\frac{1}{m}\log(\frac{n}{\delta})}\bigr).
\end{align*}
\end{lemma}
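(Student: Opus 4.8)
The plan is to reduce $\lvert \Delta a_{Gen}(\vx,1,0) - \Delta a_{Gen,\infty}(\vx,1,0)\rvert$ to a sum of $2n+2$ scalar terms, each measuring how much the logarithm of one estimated probability deviates from its population value. Using $a_{Gen}(\vx,k)=\sum_{i=1}^n\log\hat p(x_i\mid y=k)+\log\hat p(y=k)$ and its asymptotic counterpart (with $p$ in place of $\hat p$), regrouping and applying the triangle inequality gives
\begin{align*}
\lvert \Delta a_{Gen}(\vx,1,0) - \Delta a_{Gen,\infty}(\vx,1,0)\rvert
&\le \sum_{i=1}^n\sum_{k=0}^1 \lvert \log\hat p(x_i\mid y=k) - \log p(x_i\mid y=k)\rvert\\
&\quad + \sum_{k=0}^1 \lvert \log\hat p(y=k) - \log p(y=k)\rvert .
\end{align*}
For discrete inputs $p(x_i\mid y=k)$ denotes $p(X_i=1\mid y=k)$ or $1-p(X_i=1\mid y=k)$ according to the value of $x_i$, so in either case Assumption~\ref{Assumption: parmeters bounded} gives $p(x_i\mid y=k)\in[\rho_0,1-\rho_0]$, and likewise $p(y=k)\in[\rho_0,1-\rho_0]$ by Assumption~\ref{Assumption: p(y=k)}.

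Second, I would invoke Lemma~\ref{Prop: parameters} in the discrete case with accuracy parameter $\epsilon$: with probability at least $1-\delta$, simultaneously $\lvert\hat p(x_i\mid y=k)-p(x_i\mid y=k)\rvert\le\epsilon$ and $\lvert\hat p(y=k)-p(y=k)\rvert\le\epsilon$ for every $i\in\{1,\dots,n\}$ and $k\in\{0,1\}$, provided $m$ is large enough. Tracking the constants in the proof of that lemma, $m=\frac{1}{\rho_0\epsilon^2}\log(\tfrac{2(4n+1)}{\delta})$ already suffices, so we may run this step with $\epsilon=\sqrt{\tfrac{1}{\rho_0 m}\log(\tfrac{2(4n+1)}{\delta})}$. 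We then work on this good event and, since we only need the stated bound for $m$ large (as reflected in the $O(\cdot)$ form), may also assume $m$ is large enough that $\epsilon\le\rho_0/2$.

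Third, I would convert the additive bound $\lvert\hat q-q\rvert\le\epsilon$ into a bound on $\lvert\log\hat q-\log q\rvert$ for each of the $2n+2$ pairs $(\hat q,q)$ above. Since $q\ge\rho_0$ and $\lvert\hat q-q\rvert\le\epsilon\le\rho_0/2$, also $\hat q\ge\rho_0/2$, so both $q$ and $\hat q$ lie in $[\rho_0/2,1]$, on which $\log$ is $\tfrac{2}{\rho_0}$-Lipschitz; the mean value theorem then gives $\lvert\log\hat q-\log q\rvert\le\tfrac{2}{\rho_0}\lvert\hat q-q\rvert\le\tfrac{2}{\rho_0}\epsilon$. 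Summing over the $2n+2$ terms,
\begin{align*}
\lvert \Delta a_{Gen}(\vx,1,0) - \Delta a_{Gen,\infty}(\vx,1,0)\rvert
&\le (2n+2)\cdot\frac{2}{\rho_0}\,\epsilon\\
&= \frac{4(n+1)}{\rho_0}\,\sqrt{\frac{1}{\rho_0 m}\log\Bigl(\frac{2(4n+1)}{\delta}\Bigr)},
\end{align*}
which is the claimed inequality, and treating $\rho_0$ as a constant it is $O\bigl(n\sqrt{\tfrac{1}{m}\log(\tfrac{n}{\delta})}\bigr)$.

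There is no genuinely hard step here: the argument is a union bound followed by a Lipschitz estimate for $\log$. The one thing to watch is the bookkeeping of $\rho_0$, which enters once through the sample-complexity bound of Lemma~\ref{Prop: parameters} (giving $\epsilon^2$ proportional to $\rho_0^{-1}m^{-1}\log(2(4n+1)/\delta)$) and once through the Lipschitz constant $\tfrac{2}{\rho_0}$ of $\log$ on $[\rho_0/2,1]$; these must combine into the stated $\rho_0^{-1}\cdot\rho_0^{-1/2}$ factor. Two minor points each need a line: that the Laplace-smoothed empirical estimates still satisfy $\hat q\ge\rho_0/2$ on the good event (immediate from $\hat q\ge q-\epsilon$), and that the side condition $\epsilon\le\rho_0/2$ only constrains small $m$ and so does not affect the $O(\cdot)$ conclusion. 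The continuous-input analogue then follows the same template, with the conditional-probability estimates replaced by the Gaussian parameter estimates $\hat\mu_{ki},\hat\sigma_i^2$ and their bounds from Lemma~\ref{Prop: parameters}.
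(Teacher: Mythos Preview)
Your proposal is correct and follows essentially the same approach as the paper: decompose via the triangle inequality into $2n+2$ log-differences, invoke the concentration result of Lemma~\ref{Prop: parameters} to bound each $\lvert\hat q-q\rvert$ by $\epsilon$, convert to a bound on $\lvert\log\hat q-\log q\rvert$ via the Lipschitz constant of $\log$ on $[\rho_0/2,1]$, and sum. The only cosmetic difference is that the paper writes the Lipschitz constant as $1/\gamma$ with $\gamma=\rho_0-\epsilon$ and then uses $\gamma\ge\rho_0/2$, whereas you go directly to $2/\rho_0$; the bookkeeping of $\rho_0$ and the final constants match exactly.
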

\begin{proof}
By the derivation of Lemma~\ref{Prop: parameters}, let $\epsilon < \rho_0 / 2$, then with probability at least $1 - \delta = 1 - 2(4n+1)\exp({-2 \gamma m} \epsilon^2)$, where $\gamma = \rho_0 - \epsilon$ the following holds:
\begin{align*}
    &\vert \Delta a_{Gen}(\vx, 1, 0) - \Delta a_{Gen, \infty}(\vx, 1, 0)\vert \\
    &= \vert \sum_{i=1}^n \log \frac{\hat{p}(x_i\vert y=1){p}(x_i\vert y=0) }{\hat{p}(x_i\vert y=0) {p}(x_i\vert y=1)} + \log\frac{\hat{p}(y=1){p}(y=0)  }{\hat{p}(y=0){p}(y=1)  } \vert \\
    &= \vert \sum_{i=1}^n (\log \hat{p}(x_i\vert y=1) - \log {p}(x_i\vert y=1)) + \sum_{i=1}^n (\log {p}(x_i\vert y=0) - \log \hat{p}(x_i\vert y=0)) \\
    & \quad + \log \hat{p}(y=1) - \log {p}(y=1) + \log {p}(y=0) - \log \hat{p}(y=0) \vert \\
    & \leq  \sum_{i=1}^n \vert\log \hat{p}(x_i\vert y=1) - \log {p}(x_i\vert y=1)\vert + \sum_{i=1}^n \vert\log {p}(x_i\vert y=0) - \log \hat{p}(x_i\vert y=0)\vert \\
    & \quad + \vert\log \hat{p}(y=1) - \log {p}(y=1)\vert + \vert\log {p}(y=0) - \log \hat{p}(y=0) \vert \\
    & \leq \frac{1}{\gamma}\bigl( \sum_{i=1}^n \epsilon + \sum_{i=1}^n \epsilon +\epsilon + \epsilon \bigr) \leq \frac{4(n+1)}{\rho_0} \epsilon. \\
\end{align*}
The penultimate inequality makes use of Lemma~\ref{Prop: parameters} and the concavity of $\log()$ together. Replace $\epsilon$ with the expressions with respect to $\delta$, we can write:
\begin{align*}
    \vert \Delta a_{Gen}(\vx, 1, 0) - \Delta a_{Gen, \infty}(\vx, 1, 0)\vert &\leq \frac{4(n+1)}{\rho_0} \sqrt{\frac{1}{2\gamma m} \log(\frac{2(4n+1)}{\delta})} \\
    & \leq \frac{4(n+1)}{\rho_0} \sqrt{\frac{1}{\rho_0 m} \log(\frac{2(4n+1)}{\delta})} = O\bigl(n\sqrt{\frac{1}{m}\log(\frac{n}{\delta})}\bigr).
\end{align*}
\end{proof}

\begin{lemma}
\label{lemma: sigma mu}
Let $\epsilon < \rho_0 / 2$, assume that Assumption~\ref{Assumption: parmeters bounded} holds, $\vert \hat{\sigma}^2_i - {\sigma}^2_i\vert \leq \epsilon$ and $\vert \hat{\mu}_{ki} - {\mu}_{ki}\vert \leq \epsilon$ for all $i, k$. Then we have:
\begin{align*}
    \vert{\sigma}_i \hat{\mu}_{ki} - \hat{\sigma}_i {\mu}_{ki} \vert \leq (1 + \frac{2}{3\rho_0})\epsilon.
\end{align*}
\end{lemma}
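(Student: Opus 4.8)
The plan is a triangle-inequality estimate after a convenient algebraic splitting, combined with a mean-value bound on $|\sigma_i-\hat\sigma_i|$ that converts the hypothesis $|\hat\sigma_i^2-\sigma_i^2|\le\epsilon$ into a bound on the square roots.

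First I would write $\sigma_i\hat\mu_{ki}-\hat\sigma_i\mu_{ki} = \sigma_i(\hat\mu_{ki}-\mu_{ki}) + \mu_{ki}(\sigma_i-\hat\sigma_i)$, so that $|\sigma_i\hat\mu_{ki}-\hat\sigma_i\mu_{ki}| \le \sigma_i\,|\hat\mu_{ki}-\mu_{ki}| + \mu_{ki}\,|\sigma_i-\hat\sigma_i|$. Since the features lie in $[0,1]$, we have $0\le\mu_{ki}=\E[x_i\mid y=k]\le 1$ and $\sigma_i^2\le 1$, hence $\sigma_i\le 1$; together with $|\hat\mu_{ki}-\mu_{ki}|\le\epsilon$ this bounds the first summand by $\epsilon$. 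It then remains to show $|\sigma_i-\hat\sigma_i|\le\frac{2}{3\rho_0}\epsilon$.

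For this I would apply the mean value theorem to $t\mapsto\sqrt t$, which gives $|\sqrt a-\sqrt b|\le\frac{|a-b|}{2\sqrt{\min(a,b)}}$. Here Assumption~\ref{Assumption: parmeters bounded} (with $\rho_0=\min\{\rho_1,\rho_2\}$) yields $\sigma_i^2\ge\rho_0$, and $\hat\sigma_i^2\ge\sigma_i^2-\epsilon\ge\rho_0-\epsilon>\rho_0/2$ because $\epsilon<\rho_0/2$; thus $\min(\sigma_i^2,\hat\sigma_i^2)>\rho_0/2$ and $|\sigma_i^2-\hat\sigma_i^2|\le\epsilon$, so $|\sigma_i-\hat\sigma_i|\le\frac{\epsilon}{2\sqrt{\rho_0/2}}=\frac{\epsilon}{\sqrt{2\rho_0}}$. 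Finally, since $\rho_0\le\frac12\le\frac89$ one checks $\frac{1}{\sqrt{2\rho_0}}\le\frac{2}{3\rho_0}$ (equivalently $9\rho_0\le 8$), hence $|\sigma_i-\hat\sigma_i|\le\frac{2}{3\rho_0}\epsilon$. Plugging back and using $\mu_{ki}\le 1$ gives $|\sigma_i\hat\mu_{ki}-\hat\sigma_i\mu_{ki}|\le\epsilon+\frac{2}{3\rho_0}\epsilon=(1+\frac{2}{3\rho_0})\epsilon$, as claimed.

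The only point needing care is keeping the square roots well behaved: the condition $\epsilon<\rho_0/2$ is exactly what keeps $\hat\sigma_i^2$ bounded away from $0$, so the mean-value estimate is legitimate and the naive $1/\sqrt{\rho_0}$-type factor is controlled by the stated $2/(3\rho_0)$; everything else is a routine use of the triangle inequality and the boundedness of $[0,1]$-valued features.
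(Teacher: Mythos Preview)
Your proof is correct and follows essentially the same approach as the paper: the same add-and-subtract decomposition leading to $|\sigma_i-\hat\sigma_i|+\epsilon$, followed by converting the bound on $|\hat\sigma_i^2-\sigma_i^2|$ into one on $|\hat\sigma_i-\sigma_i|$. The only cosmetic difference is that the paper uses the identity $\sigma_i-\hat\sigma_i=(\sigma_i^2-\hat\sigma_i^2)/(\sigma_i+\hat\sigma_i)$ and lower-bounds the denominator by $2\rho_0-\epsilon$ (via $\sqrt{x}\ge x$ on $[0,1]$), whereas you use the mean-value bound $|\sigma_i-\hat\sigma_i|\le \epsilon/\sqrt{2\rho_0}$; both land comfortably below $\tfrac{2}{3\rho_0}\epsilon$.
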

\begin{proof}
On the one hand, we can write:
\begin{align*}
    {\sigma}_i \hat{\mu}_{ki} - \hat{\sigma}_i {\mu}_{ki} \leq {\sigma}_i ({\mu}_{ki} + \epsilon) - \hat{\sigma}_i {\mu}_{ki} = ({\sigma}_i - \hat{\sigma}_i){\mu}_{ki} + \epsilon {\sigma}_i.
\end{align*}
On the other hand, we have:
\begin{align*}
    {\sigma}_i \hat{\mu}_{ki} - \hat{\sigma}_i {\mu}_{ki} \ge {\sigma}_i ({\mu}_{ki} - \epsilon) - \hat{\sigma}_i {\mu}_{ki} = ({\sigma}_i - \hat{\sigma}_i){\mu}_{ki} - \epsilon {\sigma}_i.
\end{align*}
We conclude that:
\begin{align*}
    \vert{\sigma}_i \hat{\mu}_{ki} - \hat{\sigma}_i {\mu}_{ki} \vert &\leq  \vert({\sigma}_i - \hat{\sigma}_i){\mu}_{ki}\vert + \vert\epsilon {\sigma}_i\vert \leq \vert{\sigma}_i - \hat{\sigma}_i\vert + \epsilon\\
    & = \vert \frac{{\sigma}^2_i - \hat{\sigma}^2_i}{{\sigma}_i + \hat{\sigma}_i} \vert + \epsilon \leq \frac{\epsilon}{\rho_0 + \rho_0 - \epsilon}  + \epsilon \leq (1 + \frac{2}{3\rho_0})\epsilon.
\end{align*}
\end{proof}

\begin{lemma}
\label{lemma: conti delta a}
In case of continuous inputs, and suppose that Assumption~\ref{Assumption: parmeters bounded} holds, then with probability at least $1 - \delta$, the following holds:
\begin{align*}
   \vert \Delta a_{Gen}(\vx, 1, 0) - \Delta a_{Gen, \infty}(\vx, 1, 0)\vert \leq 4(\frac{n}{3\rho_0}(\frac{4}{\rho^2_0} + \frac{3}{\rho_0} + \sqrt{\frac{2}{\rho_0}}) + \frac{1}{\rho_0}) \sqrt{\frac{1}{\rho_0 m} \log(\frac{2(5n+1)}{\delta})} = O\bigl(n\sqrt{\frac{1}{m}\log(\frac{n}{\delta})}\bigr).
\end{align*}
\end{lemma}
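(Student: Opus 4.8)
The plan is to mirror the proof of the discrete-input case (Lemma~\ref{lemma: discrete delta a}): first control all the estimated Gaussian parameters uniformly over coordinates via Lemma~\ref{Prop: parameters}, then decompose the activation-function gap coordinate-wise and bound each piece on the resulting good event.

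First I would apply the continuous-input part of Lemma~\ref{Prop: parameters} with a parameter $\epsilon<\rho_0/2$: with probability at least $1-\delta$ we get $\vert\hat\mu_{ki}-\mu_{ki}\vert\le\epsilon$, $\vert\hat\sigma_i^2-\sigma_i^2\vert\le\epsilon$ and $\vert\hat p(y=k)-p(y=k)\vert\le\epsilon$ simultaneously for all $i\in\{1,\dots,n\}$ and $k\in\{0,1\}$, and the required sample size is $m=\Theta\bigl(\tfrac{1}{\rho_0\epsilon^2}\log\tfrac{2(5n+1)}{\delta}\bigr)$, so one may take $\epsilon=\sqrt{\tfrac{1}{\rho_0 m}\log\tfrac{2(5n+1)}{\delta}}$. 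On this event, because the per-dimension variances $\hat\sigma_i^2$ (and $\sigma_i^2$) do not depend on $y$, the normalizing terms $-\tfrac12\log(2\pi\hat\sigma_i^2)$ cancel in the pairwise difference, so $\Delta a_{Gen}(\vx,1,0)=\sum_{i=1}^n\frac{(x_i-\hat\mu_{0i})^2-(x_i-\hat\mu_{1i})^2}{2\hat\sigma_i^2}+\log\frac{\hat p(y=1)}{\hat p(y=0)}$, and likewise for $\Delta a_{Gen,\infty}$. Subtracting, the gap becomes $\sum_{i=1}^n D_i+P$, where $P=\log\frac{\hat p(y=1)}{\hat p(y=0)}-\log\frac{p(y=1)}{p(y=0)}$ is the prior discrepancy and $D_i$ is the corresponding coordinate-wise discrepancy.

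Then I would bound the two contributions separately. For the prior term, $\vert\log a-\log b\vert\le\vert a-b\vert/\min(a,b)$ together with $p(y=k)\ge\rho_0$ and $\hat p(y=k)\ge\rho_0-\epsilon\ge\rho_0/2$ gives $\vert P\vert\le 4\epsilon/\rho_0$. For each feature term, write $D_i=E_0-E_1$ with $E_k=\frac{(x_i-\mu_{ki})^2}{2\sigma_i^2}-\frac{(x_i-\hat\mu_{ki})^2}{2\hat\sigma_i^2}$, and factor $E_k=\tfrac12\bigl(\frac{x_i-\mu_{ki}}{\sigma_i}-\frac{x_i-\hat\mu_{ki}}{\hat\sigma_i}\bigr)\bigl(\frac{x_i-\mu_{ki}}{\sigma_i}+\frac{x_i-\hat\mu_{ki}}{\hat\sigma_i}\bigr)$. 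The second factor is bounded using $x_i\in[0,1]$, $\mu_{ki},\hat\mu_{ki}\in[0,1]$, $\sigma_i^2\ge\rho_0$ and $\hat\sigma_i^2\ge\rho_0/2$; the first factor I would rewrite as $\frac{(\hat\sigma_i-\sigma_i)x_i+(\sigma_i\hat\mu_{ki}-\hat\sigma_i\mu_{ki})}{\sigma_i\hat\sigma_i}$ and control its numerator with $\vert\hat\sigma_i-\sigma_i\vert=\vert\hat\sigma_i^2-\sigma_i^2\vert/(\sigma_i+\hat\sigma_i)\le\epsilon/\sqrt{\rho_0}$ and with Lemma~\ref{lemma: sigma mu}, which gives $\vert\sigma_i\hat\mu_{ki}-\hat\sigma_i\mu_{ki}\vert\le(1+\tfrac{2}{3\rho_0})\epsilon$. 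Collecting the resulting constants yields $\vert D_i\vert\le\frac{4}{3\rho_0}\bigl(\frac{4}{\rho_0^2}+\frac{3}{\rho_0}+\sqrt{\tfrac{2}{\rho_0}}\bigr)\epsilon$ for every $i$, and summing over $i$ together with the prior term gives $\vert\Delta a_{Gen}(\vx,1,0)-\Delta a_{Gen,\infty}(\vx,1,0)\vert\le 4\bigl(\tfrac{n}{3\rho_0}(\tfrac{4}{\rho_0^2}+\tfrac{3}{\rho_0}+\sqrt{\tfrac{2}{\rho_0}})+\tfrac{1}{\rho_0}\bigr)\epsilon$. Substituting $\epsilon=\sqrt{\tfrac{1}{\rho_0 m}\log\tfrac{2(5n+1)}{\delta}}$ produces the stated bound and hence the $O\bigl(n\sqrt{\tfrac1m\log\tfrac n\delta}\bigr)$ rate.

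The main obstacle will be the last step: getting a clean bound on the first factor $\frac{x_i-\mu_{ki}}{\sigma_i}-\frac{x_i-\hat\mu_{ki}}{\hat\sigma_i}$, since the estimation error enters both the numerator (the empirical mean) and the denominator (the empirical standard deviation). This is precisely what the auxiliary Lemma~\ref{lemma: sigma mu} is designed to handle, and the requirement $\epsilon<\rho_0/2$ — hence the use of Assumption~\ref{Assumption: parmeters bounded} in the form $\sigma_i^2\ge\rho_2\ge\rho_0$ — is what keeps $\hat\sigma_i^2$ bounded away from zero throughout the argument. Apart from that, the proof is a routine (if tedious) constant-chasing exercise paralleling the discrete case, and the multiclass version (Lemma~\ref{lemma: multiclass conti delta a}) follows by replacing $\{0,1\}$ with $\{k_1,k_2\}$.
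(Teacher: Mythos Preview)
Your proposal is correct and follows essentially the same route as the paper's proof: invoke Lemma~\ref{Prop: parameters} to control all Gaussian parameters uniformly, then bound the per-coordinate discrepancy using the factorization of the quadratic terms together with Lemma~\ref{lemma: sigma mu}, and finish with the prior term and a substitution for $\epsilon$. The one (harmless) difference is that you first observe the $-\tfrac12\log(2\pi\hat\sigma_i^2)$ terms cancel in the pairwise activation difference (since $\hat\sigma_i^2$ is shared across classes), whereas the paper bounds each $\lvert\log\hat p(x_i\mid y=k)-\log p(x_i\mid y=k)\rvert$ separately and therefore carries an extra $\lvert\log\sigma_i-\log\hat\sigma_i\rvert$ term through the computation; both routes land on the same constant in the stated bound.
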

\begin{proof}
The following holds:
\begin{align*}
    &\vert \Delta a_{Gen}(\vx, 1, 0) - \Delta a_{Gen, \infty}(\vx, 1, 0)\vert \\
    &= \vert \sum_{i=1}^n \log \frac{\hat{p}(x_i\vert y=1){p}(x_i\vert y=0) }{\hat{p}(x_i\vert y=0) {p}(x_i\vert y=1)} + \log\frac{\hat{p}(y=1){p}(y=0)  }{\hat{p}(y=0){p}(y=1)  } \vert \\
    &= \vert \sum_{i=1}^n (\log \hat{p}(x_i\vert y=1) - \log {p}(x_i\vert y=1)) + \sum_{i=1}^n (\log {p}(x_i\vert y=0) - \log \hat{p}(x_i\vert y=0)) \\
    & \quad + \log \hat{p}(y=1) - \log {p}(y=1) + \log {p}(y=0) - \log \hat{p}(y=0) \vert \\
    & \leq  \sum_{i=1}^n \sum_{k=0}^1 \vert\log \hat{p}(x_i\vert y=k) - \log {p}(x_i\vert y=k)\vert + \sum_{k=0}^1 \vert\log \hat{p}(y=k) - \log {p}(y=k)\vert.
\end{align*}
To bound $\vert\log \hat{p}(x_i\vert y=k) - \log {p}(x_i\vert y=k)\vert$, let $\epsilon < \rho_0/2$, then by Lemma~\ref{Prop: parameters}, with probability at least $1 - \delta = 1 - 2(5n+1)\exp({-2\epsilon^2 \gamma m})$, where $\gamma = \rho_0 - \epsilon$, we can write:
\begin{align*}
    &\vert\log \hat{p}(x_i\vert y=k) - \log {p}(x_i\vert y=k)\vert  \\
    & = \vert \log(\sigma_i) - \log(\hat{\sigma}_i) + \frac{1}{2\hat{\sigma}^2_i{\sigma}^2_i}(\hat{\sigma}^2_i (x_i - \mu_{ki})^2 -  {\sigma}^2_i (x_i - \hat{\mu}_{ki})^2) \vert  \\
    & \leq \vert \log(\sigma_i) - \log(\hat{\sigma}_i) \vert + \frac{1}{2\hat{\sigma}^2_i{\sigma}^2_i} \vert \hat{\sigma}^2_i (x_i - \mu_{ki})^2 -  {\sigma}^2_i (x_i - \hat{\mu}_{ki})^2 \vert \\
    & \leq \frac{1}{\min(\sigma_i, \hat{\sigma}_i)} \vert \sigma_i - \hat{\sigma}_i \vert + \frac{1}{2\rho_0\hat{\sigma}^2_i} \vert \hat{\sigma}_i (x_i - \mu_{ki}) +  {\sigma}_i (x_i - \hat{\mu}_{ki}) \vert \vert \hat{\sigma}_i (x_i - \mu_{ki}) -  {\sigma}_i (x_i - \hat{\mu}_{ki}) \vert \\
    & \leq \frac{1}{\min(\sigma_i, \hat{\sigma}_i)} \vert \sigma_i - \hat{\sigma}_i \vert + \frac{1}{\rho_0\hat{\sigma}^2_i} \vert \hat{\sigma}_i (x_i - \mu_{ki}) -  {\sigma}_i (x_i - \hat{\mu}_{ki}) \vert \\
    & \leq \frac{1}{\min(\sigma_i, \hat{\sigma}_i)} \vert \sigma_i - \hat{\sigma}_i \vert + \frac{1}{\rho_0\hat{\sigma}^2_i}( \vert \hat{\sigma}_i - {\sigma}_i \vert + \vert{\sigma}_i \hat{\mu}_{ki} - \hat{\sigma}_i {\mu}_{ki} \vert) \\
    & \leq \frac{1}{\sqrt{\gamma}}\frac{2\epsilon}{3\rho_0} + \frac{1}{\rho_0 \gamma}(\frac{2\epsilon}{3\rho_0} + (1 + \frac{2}{3\rho_0})\epsilon) \\
    & \leq \sqrt{\frac{2}{\rho_0}} \frac{2\epsilon}{3\rho_0} + \frac{2}{\rho^2_0}(\frac{2\epsilon}{3\rho_0} + (1 + \frac{2}{3\rho_0})\epsilon) = \frac{2}{3\rho_0}(\frac{4}{\rho^2_0} + \frac{3}{\rho_0} + \sqrt{\frac{2}{\rho_0}})\epsilon.
\end{align*}
The last two inequalities make use of Lemma~\ref{Prop: parameters} the concavity of $\log()$ together. At the same time, we have:
\begin{align*}
    \vert\log \hat{p}(y=k) - \log {p}(y=k)\vert \leq \frac{1}{\gamma} \vert \hat{p}(y=k) - {p}(y=k)\vert \leq \frac{2}{\rho_0} \vert \hat{p}(y=k) - {p}(y=k)\vert \leq \frac{2}{\rho_0} \epsilon.
\end{align*}
At last, combining the above findings and replace $\epsilon$ with the expressions with respect to $\delta$, we can get:
\begin{align*}
    \vert \Delta a_{Gen}(\vx, 1, 0) - \Delta a_{Gen, \infty}(\vx, 1, 0)\vert &\leq 2n\frac{2}{3\rho_0}(\frac{4}{\rho^2_0} + \frac{3}{\rho_0} + \sqrt{\frac{2}{\rho_0}})\epsilon + 2 \frac{2}{\rho_0} \epsilon \\
    &= 4(\frac{n}{3\rho_0}(\frac{4}{\rho^2_0} + \frac{3}{\rho_0} + \sqrt{\frac{2}{\rho_0}}) + \frac{1}{\rho_0}) \sqrt{\frac{1}{\rho_0 m} \log(\frac{2(5n+1)}{\delta})} \\
    &= O\bigl(n\sqrt{\frac{1}{m}\log(\frac{n}{\delta})}\bigr).
\end{align*}
\end{proof}
Now, we are ready to prove Theorem~\ref{Thm: generalization bound}. 
\begin{proof}
Let $\epsilon = O\bigl(n\sqrt{\frac{1}{m}\log(\frac{n}{\delta})}\bigr)$ which are claimed in the Lemma~\ref{lemma: discrete delta a} for discrete case and  Lemma~\ref{lemma: conti delta a} for continuous case. Then we simplify the $\vert R_{\ell_{0-1}}(h_{Gen, m}) - R_{\ell_{0-1}}(h_{Gen, \infty})\vert$ as follows:
\begin{align*}
    &\vert R_{\ell_{0-1}}(h_{Gen, m}) - R_{\ell_{0-1}}(h_{Gen, \infty})\vert \\
    &= \vert \mathbb{E}_{(\vx,y) \sim \mathcal{D}} [\ell_{0-1}(h_{Gen, m}, (\vx,y)) - \ell_{0-1}(h_{Gen, \infty}, (\vx,y))]\vert \\
    & \leq \mathbb{E}_{(\vx,y) \sim \mathcal{D}} \vert \ell_{0-1}(h_{Gen, m},(x, y)) - \ell_{0-1}(h_{Gen, \infty}, (\vx,y))\vert \\
    &= \mathbb{P}_{(\vx,y) \sim \mathcal{D}} (h_{Gen, m}(x) \neq h_{Gen, \infty}(x)) \\
    &= \bigl (\mathbb{P} (h_{Gen, m}(x) \neq h_{Gen, \infty}(x) \vert \vert \Delta a_{Gen}(\vx, 1, 0) - a_{Gen, \infty}(x)\vert \leq \epsilon) \mathbb{P}(\vert \Delta a_{Gen}(\vx, 1, 0) - a_{Gen, \infty}(x)\vert \leq \epsilon) \\
    & \quad + \mathbb{P} (h_{Gen, m}(x) \neq h_{Gen, \infty}(x) \vert \vert \Delta a_{Gen}(\vx, 1, 0) - a_{Gen, \infty}(x)\vert > \epsilon) \mathbb{P}(\vert \Delta a_{Gen}(\vx, 1, 0) - a_{Gen, \infty}(x)\vert > \epsilon) \bigr)\\
    &\leq  \mathbb{P} (h_{Gen, m}(x) \neq h_{Gen, \infty}(x) \vert \vert \Delta a_{Gen}(\vx, 1, 0) - a_{Gen, \infty}(x)\vert \leq \epsilon) + \delta\\
    &\leq \mathbb{P} (\vert a_{Gen, \infty}(x)\vert \leq O\bigl(n\sqrt{\frac{1}{m}\log(\frac{n}{\delta})}\bigr)) + \delta\\
    &=  G\bigl(O(\sqrt{\frac{1}{m} \log(\frac{n}{\delta})})\bigr) + \delta.
\end{align*}
\end{proof}

\subsection{Proof of Proposition~\ref{Prop: binary ploy -n}}
\label{proof: Prop: binary ploy -n}

\begin{lemma}
\label{Prop: binary high E}
Suppose that Assumption~\ref{Assumption: binary KL} holds, then $\mathbb{E}[\Delta a_{Gen}(\vx, 1, 0)\vert y=1] = \Omega(n)$, and $\mathbb{E}[-\Delta a_{Gen}(\vx, 1, 0)\vert y=0] = \Omega(n)$.
\end{lemma}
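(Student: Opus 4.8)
The plan is to compute the conditional expectation directly and recognize each per-feature term as a Kullback--Leibler divergence, then invoke Assumption~\ref{Assumption: binary KL}. Here I work with the asymptotic/population activation difference $\Delta a_{Gen, \infty}(\vx, 1, 0) = \sum_{i=1}^n \log\frac{p(x_i\vert y=1)}{p(x_i\vert y=0)} + \log\frac{p(y=1)}{p(y=0)}$, which is the quantity that actually appears in Definition~\ref{Def :G} and Proposition~\ref{Prop: binary ploy -n}. First I would use linearity of expectation to split $\mathbb{E}[\Delta a_{Gen, \infty}(\vx, 1, 0)\vert y=1]$ into the feature sum $\sum_{i=1}^n \mathbb{E}_{x_i}[\log\frac{p(x_i\vert y=1)}{p(x_i\vert y=0)}\mid y=1]$ and the prior term $\log\frac{p(y=1)}{p(y=0)}$, which does not depend on $\vx$.

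Next, for each fixed $i$, conditioning on $y=1$ means $x_i$ is distributed according to $p(x_i\vert y=1)$, so by the definition of relative entropy (valid in the discrete case and, identically, in the continuous Gaussian case) we have $\mathbb{E}_{x_i}[\log\frac{p(x_i\vert y=1)}{p(x_i\vert y=0)}\mid y=1] = D(p(x_i\vert y=1)\Vert p(x_i\vert y=0))$. Summing over $i$ and applying Assumption~\ref{Assumption: binary KL} gives $\sum_{i=1}^n D(p(x_i\vert y=1)\Vert p(x_i\vert y=0)) = \beta_{1,0}\,n = \Omega(n)$. The prior term is bounded in absolute value by $\vert\log\frac{\rho_0}{1-\rho_0}\vert = O(1)$ via Assumption~\ref{Assumption: p(y=k)} (or, more simply, it is a fixed constant independent of $n$ once the degenerate cases $p(y=k)\in\{0,1\}$ are excluded). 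Adding an $\Omega(n)$ quantity to an $O(1)$ quantity yields $\mathbb{E}[\Delta a_{Gen, \infty}(\vx, 1, 0)\vert y=1] = \Omega(n)$; since KL divergence is nonnegative there is no sign cancellation to worry about, so the leading term is genuinely positive of order $n$.

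For the second claim I would note $-\Delta a_{Gen, \infty}(\vx, 1, 0) = \Delta a_{Gen, \infty}(\vx, 0, 1) = \sum_{i=1}^n \log\frac{p(x_i\vert y=0)}{p(x_i\vert y=1)} + \log\frac{p(y=0)}{p(y=1)}$ and repeat the identical computation with the two classes interchanged: conditioning on $y=0$, each per-feature term equals $D(p(x_i\vert y=0)\Vert p(x_i\vert y=1))$, whose sum is $\beta_{0,1}\,n = \Omega(n)$ by Assumption~\ref{Assumption: binary KL}, while the prior again contributes only $O(1)$. This gives $\mathbb{E}[-\Delta a_{Gen, \infty}(\vx, 1, 0)\vert y=0] = \Omega(n)$, completing the proof.

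There is essentially no hard step here; the two mild points of care are that $\sum_i D(p(x_i\vert y=1)\Vert p(x_i\vert y=0))$ and $\sum_i D(p(x_i\vert y=0)\Vert p(x_i\vert y=1))$ are distinct quantities because relative entropy is asymmetric, but Assumption~\ref{Assumption: binary KL} is stated for both orderings $(k_1,k_2)$ so both are $\Omega(n)$; and that the identity relating the conditional expectation of a log-likelihood ratio to a relative entropy is invoked in the form covering continuous densities, not only the discrete sum written in the multiclass analogue Lemma~\ref{Prop: multiclass high E}.
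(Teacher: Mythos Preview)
Your proposal is correct and follows essentially the same route as the paper: split by linearity of expectation, identify each per-feature conditional expectation of the log-likelihood ratio as a KL divergence, sum to $\beta_{1,0}n=\Omega(n)$ (resp.\ $\beta_{0,1}n$) by Assumption~\ref{Assumption: binary KL}, and absorb the $O(1)$ prior term using Assumption~\ref{Assumption: p(y=k)}. Your remarks on the asymmetry of KL and on working with the population quantity $\Delta a_{Gen,\infty}$ are accurate refinements of the same argument.
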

\begin{proof}
We calculate $\mathbb{E}[\Delta a_{Gen}(\vx, 1, 0)\vert y=1]$ straightly:
\begin{align*}
    \mathbb{E}_{\vx}[\Delta a_{Gen}(\vx, 1, 0)\vert y=1] &=  \mathbb{E}_{\vx}[\sum_{i=1}^n \log \frac{{p}(x_i\vert y=1) }{{p}(x_i\vert y=0) } + \log\frac{{p}(y=1) }{{p}(y=0) } \vert y = 1] \\
    & = \sum_{i=1}^n \mathbb{E}_{x_i}[\log \frac{{p}(x_i\vert y=1) }{{p}(x_i\vert y=0) }\vert y = 1] +  \log\frac{{p}(y=1) }{{p}(y=0)}.
\end{align*}
We note that $\mathbb{E}_{x_i}[\log \frac{{p}(x_i\vert y=1) }{{p}(x_i\vert y=0) }\vert y = 1]$ is the KL Divergence $D({p}(x_i\vert y=1) \Vert {p}(x_i\vert y=0))$. It is nonnegative and equals 0 if and only if ${p}(x_i\vert y=1) = {p}(x_i\vert y=0)$ for all $x_i \in \mathcal{X}_i$ ($\{0,1\}$ in case of discrete inputs and $[0,1]$ in case of continuous inputs). By assumption~\ref{Assumption: binary KL}, we obtain that 
\begin{align*}
    \mathbb{E}_{\vx}[\Delta a_{Gen}(\vx, 1, 0)\vert y=1] & = \sum_{i=1}^n D({p}(x_i\vert y=1) \Vert {p}(x_i\vert y=0)) +  \log\frac{{p}(y=1) }{{p}(y=0)} \\
    & = \beta_{1,0} n +  \log\frac{{p}(y=1) }{{p}(y=0)}\\
    & \ge\beta_{1,0} n + \log(\frac{\rho_0 }{1-\rho_0}),
\end{align*}
which implies that $\mathbb{E}[\Delta a_{Gen}(\vx, 1, 0)\vert y=1] = \Omega(n)$. In the same way, we can know that $\mathbb{E}[-\Delta a_{Gen}(\vx, 1, 0)\vert y=0] = \Omega(n)$ as well. Then the proposition has been proved.
\end{proof}

Based on Lemma~\ref{Prop: binary high E}, we can prove Proposition~\ref{Prop: binary ploy -n}.
\begin{proof}
For convenience, we denote $\mathbb{E}[\Delta a_{Gen}(\vx, 1, 0)\vert y=k]$ by $\zeta_{k}$. To bound $G(\tau)\vert_{y=1} = \mathbb{P}(\vert \Delta a_{Gen, \infty}(\vx, 1, 0)\vert \leq \tau n\vert y=1)$, the following holds:
\begin{align*}
    &\mathbb{P}(\vert \Delta a_{Gen, \infty}(\vx, 1, 0) \vert \leq \tau n \vert y=1) \\
    &\leq \mathbb{P}( \Delta a_{Gen, \infty}(\vx, 1, 0) \leq \tau n \vert y=1) \\
    &=\mathbb{P}(  \Delta a_{Gen, \infty}(\vx, 1, 0) - \zeta_{1} n \leq \tau n - \zeta_{1} n\vert y=1) \\
    &=\mathbb{P}(\sum_{i=1}^n \log \frac{{p}(x_i\vert y=1) }{{p}(x_i\vert y=0) } - \mathbb{E}_{\vx}(\sum_{i=1}^n \log \frac{{p}(x_i\vert y=1) }{{p}(x_i\vert y=0) }) \leq (\tau  - \zeta_{1}) n \vert y=1)\\
    &=\mathbb{P}(\vert \sum_{i=1}^n \log \frac{{p}(x_i\vert y=1) }{{p}(x_i\vert y=0) } - \mathbb{E}_{\vx}(\sum_{i=1}^n \log \frac{{p}(x_i\vert y=1) }{{p}(x_i\vert y=0) })\vert \ge (\zeta_{1} - \tau) n \vert y=1)\\
    &\leq \frac{\mathbb{V} [\sum_{i=1}^n \log \frac{{p}(x_i\vert y=1) }{{p}(x_i\vert y=0) } \vert y = 1]}{(\tau -  \zeta_{1})^2 n^2} & \text{(Chebyshev inequality)}\\
    & = \frac{\alpha_{1} n }{(\tau -  \zeta_{1})^2 n^2} & \text{(Assumption~\ref{Assumption: binary likelihood ratio var})}\\ 
    &= \frac{\alpha_{1} }{(\tau -  \zeta_{1})^2 n}.
\end{align*}
Similar to the above discussion, we have: $G(\tau)\vert_{y=0}\leq \frac{\alpha_{0} }{(\tau -  \vert\zeta_{0}\vert)^2 n}$. Finally, we can conclude that:

\begin{align*}
    \widetilde{G}(\tau) &= p(y=1)G(\tau)\vert_{y=1} + p(y=0)G(\tau)\vert_{y=0}\\
    &\leq p(y=1)\frac{\alpha_{1} }{(\tau -  \zeta_{1})^2 n} + p(y=0)\frac{\alpha_{0} }{(\tau -  \vert\zeta_{0}\vert)^2 n}\\
    &\leq \frac{\alpha }{(\tau - \zeta)^2 n}.
\end{align*}
\end{proof}

\subsection{Proof of Proposition~\ref{Prop: binary exp -n}}
\label{proof: Prop: binary exp -n}
\begin{proof}
Based on the results from Lemma~\ref{Prop: binary high E}, we first consider the discrete condition and the event that a test sample $\vx$ with label 1. To bound $G(\tau)\vert_{y=1} = \mathbb{P}(\vert \Delta a_{Gen, \infty}(\vx, 1, 0)\vert \leq \tau n\vert y=1)$, the following holds:
\begin{align*}
    G(\tau)\vert_{y=1} &= \mathbb{P}(\vert\Delta a_{Gen, \infty}(\vx, 1, 0)\vert \leq \tau n\vert y=1) \\
    &\leq \mathbb{P}(\Delta a_{Gen, \infty}(\vx, 1, 0) \leq \tau n\vert y=1) \\
    &=\mathbb{P}(\Delta a_{Gen, \infty}(\vx, 1, 0) - \zeta_1 n \leq \tau n - \zeta_1 n\vert y=1) \\
    &=\mathbb{P}(\sum_{i=1}^n \log \frac{{p}(x_i\vert y=1) }{{p}(x_i\vert y=0) } - \mathbb{E}_{\vx}(\sum_{i=1}^n \log \frac{{p}(x_i\vert y=1) }{{p}(x_i\vert y=0) }) \leq (\tau  - \zeta_1) n \vert y=1)\\
    &\leq \exp({- \frac{2(\tau  - \zeta_1)^2n^2}{n(\log \frac{1-\rho_0}{\rho_0} - \log \frac{\rho_0}{1-\rho_0})^2}}) = \exp\bigl({- \frac{(\tau  - \zeta_1)^2n}{2(\log \frac{1-\rho_0}{\rho_0})^2}}\bigr). & \text{(by Lemma~\ref{lemma: hoeffding})}
\end{align*}
Similar to the above discussion, we have: $G(\tau)\vert_{y=0}\leq \exp\bigl({- \frac{(\tau  - \zeta_2)^2n}{2(\log \frac{1-\rho_0}{\rho_0})^2}}\bigr)$. Finally, we can conclude that:
\begin{align*}
    G(\tau) &= p(y=1)G(\tau)\vert_{y=1} + p(y=0)G(\tau)\vert_{y=0}\\
    &\leq p(y=1)\exp\bigl({- \frac{(\tau  - \zeta_1)^2n}{2(\log \frac{1-\rho_0}{\rho_0})^2}}\bigr) + p(y=0)\exp\bigl({- \frac{(\tau  - \zeta_2)^2n}{2(\log \frac{1-\rho_0}{\rho_0})^2}}\bigr)\\
    &\leq \exp\bigl({- \frac{(\tau  - \zeta)^2n}{2(\log \frac{1-\rho_0}{\rho_0})^2}}\bigr) = \exp{(-O((\tau - \zeta)^2n))}.
\end{align*}
Second, we consider the continuous case, the only difference from the discrete case is that the range of $\log \frac{{p}(x_i\vert y=1) }{{p}(x_i\vert y=0) }$. For all $i$, it satisfies:
\begin{align*}
    \vert\log \frac{{p}(x_i\vert y=1) }{{p}(x_i\vert y=0) }\vert &= \vert\log \frac{\frac{1}{\sqrt{2\pi}\sigma_i} \exp({-\frac{(x_i - \mu_{1i})^2}{2\sigma_i^2}})}{\frac{1}{\sqrt{2\pi}\sigma_i} \exp({-\frac{(x_i - \mu_{0i})^2}{2\sigma_i^2}})}\vert \\
    &= \vert\frac{\mu_{1i} - \mu_{0i}}{\sigma_i^2}x_i  + \frac{ \mu_{0i}^2 -  \mu_{1i}^2}{2\sigma_i^2} \vert \\
    &\leq \vert\frac{\mu_{1i} - \mu_{0i}}{\sigma_i^2}x_i \vert  +  \vert \frac{ (\mu_{0i} -  \mu_{1i}) (\mu_{0i} +  \mu_{1i})}{2\sigma_i^2} \vert\\
    &\leq \frac{1}{\rho_0} + \frac{2}{2\rho_0} = \frac{2}{\rho_0}.
\end{align*}
So we can get:
\begin{align*}
    G(\tau) \leq \exp\bigl({- \frac{2(\tau  - \zeta)^2n}{(\frac{4}{\rho_0})^2}}\bigr) = \exp{(-O((\tau - \zeta)^2n))}.
\end{align*}
\end{proof}

\subsection{Proof of Theorem\ref{cor: binary gen logn}}
\label{Proof of Corollary cor: binary gen logn}

\begin{proof}
In the case that precondition of Proposition~\ref{Prop: binary ploy -n} holds, combining Theorem~\ref{Thm: generalization bound} and Proposition~\ref{Prop: binary ploy -n}, we know that there exist positive $c = \Theta(1)$ such that when $c \sqrt{\frac{1}{m} \log(\frac{n}{\delta})} < \zeta$, with probability at least $1 - \delta$, we have
\begin{align*}
    R_{\ell_{0-1}}(h_{Gen, m}) &\leq R_{\ell_{0-1}}(h_{Gen, \infty}) + \frac{\alpha}{(c\sqrt{\frac{1}{m} \log(\frac{n}{\delta})} - \zeta)^2n}  + \delta.
\end{align*}
For fixed $\epsilon_0 \in (0,1)$, the logical relations listed in the following is correct:
\begin{align*}
    &R_{\ell_{0-1}}(h_{Gen, m}) \leq R_{\ell_{0-1}}(h_{Gen, \infty}) + \epsilon_0 \text{ with probability at least 1 - $\delta$} \\
    & \Leftarrow c \sqrt{\frac{1}{m} \log(\frac{n}{\delta})} < \zeta \wedge 0 < \delta < 1 \wedge  \frac{\alpha}{(c\sqrt{\frac{1}{m} \log(\frac{n}{\delta})} - \zeta)^2n}  + \delta \leq \epsilon_0\\
    & \Leftrightarrow c \sqrt{\frac{1}{m} \log(\frac{n}{\delta})} < \zeta \wedge  0 < \delta < 1  \wedge \frac{\alpha}{(c\sqrt{\frac{1}{m} \log(\frac{n}{\delta})} - \zeta)^2n} \leq \epsilon_0 - \delta\\
    & \Leftrightarrow c \sqrt{\frac{1}{m} \log(\frac{n}{\delta})} < \zeta \wedge 0 < \delta < 1 \wedge \epsilon_0 - \delta > 0 \wedge (c\sqrt{\frac{1}{m} \log(\frac{n}{\delta})} - \zeta)^2 \ge \frac{\alpha}{(\epsilon_0 - \delta)n}\\
    & \Leftrightarrow c \sqrt{\frac{1}{m} \log(\frac{n}{\delta})} < \zeta \wedge 0 < \delta < \epsilon_0 \wedge (c \sqrt{\frac{1}{m} \log(\frac{n}{\delta})} - \zeta)^2 \ge \frac{\alpha}{(\epsilon_0 - \delta)n}\\
    & \Leftrightarrow 0 < \delta < \epsilon_0 \wedge \zeta - c \sqrt{\frac{1}{m} \log(\frac{n}{\delta})} \ge \sqrt{\frac{\alpha}{(\epsilon_0 - \delta)n}}\\
    & \Leftrightarrow 0 < \delta < \epsilon_0 \wedge \zeta - \sqrt{\frac{\alpha}{(\epsilon_0- \delta)n}} > 0 \wedge (\zeta - \sqrt{\frac{\alpha}{(\epsilon_0 - \delta)n}})^2 \ge c^2 \frac{1}{m} \log(\frac{n}{\delta})\\
    & \Leftarrow 0 < \delta < \epsilon_0 - \frac{\alpha}{\zeta^2 n} \wedge \epsilon_0 - \frac{\alpha}{\zeta^2 n} > 0 \wedge m  \ge \frac{c^2}{(\zeta - \sqrt{\frac{\alpha}{(\epsilon_0 - \delta)n}})^2} \log(\frac{n}{\delta})\\
    & \Leftarrow 0 < \delta \leq \frac{\epsilon_0}{2} \wedge  \epsilon_0 - \frac{\alpha}{\zeta^2 n} > \frac{\epsilon_0}{2} \wedge m  \ge \frac{c^2}{(\zeta - \sqrt{\frac{\alpha}{(\epsilon_0 - \delta)n}})^2} \log(\frac{n}{\delta})\\
    & \Leftrightarrow 0 < \delta \leq \frac{\epsilon_0}{2} \wedge n < \frac{2\alpha}{\epsilon_0 \zeta^2} \wedge m  \ge \frac{c^2}{(\zeta - \sqrt{\frac{\alpha}{(\epsilon_0 - \delta)n}})^2} \log(\frac{n}{\delta})\\
    & \Leftarrow 0 < \delta \leq \frac{\epsilon_0}{2} \wedge n < 2\frac{2\alpha}{\epsilon_0 \zeta^2} \wedge m \ge \frac{c^2}{(\zeta - \sqrt{\frac{2\alpha}{\epsilon_0 n}} )^2}\log(\frac{n}{\delta})\\
    & \Leftarrow 0 < \delta \leq \frac{\epsilon_0}{2} \wedge n < \frac{4\alpha}{\epsilon_0 \zeta^2} \wedge  m \ge \frac{c^2}{(\zeta - \sqrt{\frac{1}{2}} \zeta)^2}\log(\frac{n}{\delta})\\
    & \Leftarrow 0 < \delta \leq \frac{\epsilon_0}{2} \wedge n < \frac{4\alpha}{\epsilon_0 \zeta^2} \wedge m = O(\log(n)).
\end{align*}

In the case that precondition of Proposition~\ref{Prop: binary exp -n} holds, then by Theorem~\ref{Thm: generalization bound} and Proposition~\ref{Prop: binary exp -n}, we know that there exist positive constant $b, c = \Theta(1)$ such that when $c \sqrt{\frac{1}{m} \log(\frac{n}{\delta})} < \zeta$, with probability at least $1 - \delta$, we have
\begin{align*}
    R_{\ell_{0-1}}(h_{Gen, m}) &\leq R_{\ell_{0-1}}(h_{Gen, \infty}) + \exp(-b(c \sqrt{\frac{1}{m} \log(\frac{n}{\delta})} - \zeta)^2n) + \delta.
\end{align*}
For fixed $\epsilon_0 \in (0,1)$, the logical relations listed in the following is correct:
\begin{align*}
    &R_{\ell_{0-1}}(h_{Gen, m}) \leq R_{\ell_{0-1}}(h_{Gen, \infty}) + \epsilon_0 \text{ with probability at least 1 - $\delta$} \\
    & \Leftarrow c \sqrt{\frac{1}{m} \log(\frac{n}{\delta})} < \zeta \wedge 0 < \delta < 1 \wedge \exp(-b(c \sqrt{\frac{1}{m} \log(\frac{n}{\delta})} - \zeta)^2n) + \delta \leq \epsilon_0\\
    & \Leftrightarrow c \sqrt{\frac{1}{m} \log(\frac{n}{\delta})} < \zeta \wedge  0 < \delta < 1  \wedge \exp(-b(c \sqrt{\frac{1}{m} \log(\frac{n}{\delta})} - \zeta)^2n) \leq \epsilon_0 - \delta\\
    & \Leftrightarrow c \sqrt{\frac{1}{m} \log(\frac{n}{\delta})} < \zeta \wedge 0 < \delta < 1 \wedge \epsilon_0 - \delta > 0 \wedge -b(c \sqrt{\frac{1}{m} \log(\frac{n}{\delta})} - \zeta)^2n \leq \log(\epsilon_0 - \delta)\\
    & \Leftrightarrow c \sqrt{\frac{1}{m} \log(\frac{n}{\delta})} < \zeta \wedge 0 < \delta < \epsilon_0 \wedge (c \sqrt{\frac{1}{m} \log(\frac{n}{\delta})} - \zeta)^2 \ge \frac{1}{bn} \log(\frac{1}{\epsilon_0 - \delta})\\
    & \Leftarrow c \sqrt{\frac{1}{m} \log(\frac{n}{\delta})} < \zeta \wedge 0 < \delta < \epsilon_0 \wedge \zeta - c \sqrt{\frac{1}{m} \log(\frac{n}{\delta})} \ge \sqrt{\frac{1}{bn} \log(\frac{1}{\epsilon_0 - \delta})}\\
    & \Leftrightarrow 0 < \delta < \epsilon_0 \wedge \zeta - \sqrt{\frac{1}{bn} \log(\frac{1}{\epsilon_0 - \delta})} \ge c \sqrt{\frac{1}{m} \log(\frac{n}{\delta})}\\
    & \Leftarrow 0 < \delta < \epsilon_0 \wedge \zeta - \sqrt{\frac{1}{bn} \log(\frac{1}{\epsilon_0 - \delta})} > 0 \wedge (\zeta - \sqrt{\frac{1}{bn} \log(\frac{1}{\epsilon_0 - \delta})})^2 \ge c^2 \frac{1}{m} \log(\frac{n}{\delta})\\
    & \Leftarrow 0 < \delta < \epsilon_0 - \exp(-b\zeta^2n) \wedge \epsilon_0 - \exp(-b\zeta^2n) > 0 \wedge m  \ge \frac{c^2}{(\zeta - \sqrt{\frac{1}{bn} \log(\frac{1}{\epsilon_0 - \delta})})^2} \log(\frac{n}{\delta})\\
    & \Leftarrow 0 < \delta \leq \frac{\epsilon_0}{2} \wedge \epsilon_0 - \exp(-b\zeta^2n) > \frac{\epsilon_0}{2} \wedge m  \ge \frac{c^2}{(\zeta - \sqrt{\frac{1}{bn} \log(\frac{1}{\epsilon_0 - \delta})})^2} \log(\frac{n}{\delta})\\
    & \Leftrightarrow 0 < \delta \leq \frac{\epsilon_0}{2} \wedge \epsilon_0 \exp(b\zeta^2n) > 2 \wedge m  \ge \frac{c^2}{(\zeta - \sqrt{\frac{1}{bn} \log(\frac{1}{\epsilon_0 - \delta})})^2} \log(\frac{n}{\delta})\\
    & \Leftarrow 0 < \delta \leq \frac{\epsilon_0}{2} \wedge \epsilon_0 \exp(b\zeta^2n) > 3 \wedge m  \ge \frac{c^2}{(\zeta - \sqrt{\frac{1}{bn} \log(\frac{2}{\epsilon_0})})^2} \log(\frac{2n}{\epsilon_0})\\
    & \Leftarrow 0 < \delta \leq \frac{\epsilon_0}{2} \wedge \epsilon_0 \exp(b\zeta^2n) > 3 \wedge m  \ge \frac{c^2}{\zeta^2 (1 - \frac{\log(2/\epsilon_0)}{\log(3/\epsilon_0)})^2} \log(\frac{2n}{\epsilon_0})\\
    & \Leftrightarrow 0 < \delta \leq \frac{\epsilon_0}{2} \wedge \epsilon_0 \exp(b\zeta^2n) > 3 \wedge m = O(\log(n)).
\end{align*}
\end{proof}

\section{Proofs of Appendix~\ref{sec: Deferred Results: Multiclass H-consistency}}
\label{sec: proof Deferred Results: Multiclass H-consistency}
\subsection{Proof of Proposition~\ref{Thm: Distribution-dependent concave bound}}
\label{proof: Theorem Thm: Distribution-dependent concave bound}
\begin{proof}
Because $\langle \Delta \mathscr{C}_{\ell_2, \mathcal{H}}(\boldsymbol{h}, \vx) \rangle_\epsilon < s(\Delta \mathscr{C}_{\ell_1, \mathcal{H}}(\boldsymbol{h}, \vx))$ for all $x \in \mathcal{X}$, we have:
\begin{align*}
    &R_{\ell_2}(\boldsymbol{h}) - R^*_{\ell_2, \mathcal{H}} + M_{\ell_2, \mathcal{H}}\\
    &= \mathbb{E}_{\vx} [\mathscr{C}_{\ell_2}(\boldsymbol{h}, \vx)] - R^*_{\ell_2, \mathcal{H}} + R_{\ell_2, \mathcal{H}}^* - \mathbb{E}_{\vx}[\mathscr{C}_{\ell_2, \mathcal{H}}^*(\vx)] & \text{(by definition)}\\
    &= \mathbb{E}_{\vx} [\mathscr{C}_{\ell_2}(\boldsymbol{h}, \vx) - \mathscr{C}_{\ell_2, \mathcal{H}}^*(\vx)]\\
    &= \mathbb{E}_{\vx}[\Delta \mathscr{C}_{\ell_2, \mathcal{H}}(\boldsymbol{h}, \vx)]\\
    &= \mathbb{E}_{\vx}[\Delta \mathscr{C}_{\ell_2, \mathcal{H}}(\boldsymbol{h}, \vx)\mathbbm{1}_{\mathscr{C}_{\ell_2, \mathcal{H}}(\boldsymbol{h}, \vx) > \epsilon} + \Delta \mathscr{C}_{\ell_2, \mathcal{H}}(\boldsymbol{h}, \vx)\mathbbm{1}_{\mathscr{C}_{\ell_2, \mathcal{H}}(\boldsymbol{h}, \vx) \leq \epsilon}]\\
    &\leq \mathbb{E}_{\vx}[s(\Delta \mathscr{C}_{\ell_1, \mathcal{H}}(\boldsymbol{h}, \vx))] + \epsilon\\
    &\leq s(\mathbb{E}_{\vx}[\Delta \mathscr{C}_{\ell_1, \mathcal{H}}(\boldsymbol{h}, \vx)]) + \epsilon & \text{(Jensen's inequality)}\\
    &= s(R_{\ell_1}(\boldsymbol{h}) - R^*_{\ell_1, \mathcal{H}} + M_{\ell_1, \mathcal{H}}) + \epsilon.
\end{align*}
\end{proof}

\subsection{Proof of Theorem~\ref{cor: Distribution-independent concave bound}}
\label{proof: Proof of Theorem cor: Distribution-independent concave bound}
\begin{lemma}[Distribution-dependent concave $\ell_{0-1}$ bound]
\label{cor: Distribution-dependent concave bound}
Suppose that $\mathcal{H}$ satisfies that $\{\mathop{\mathrm{argmax}}_{y \in \mathcal{Y}} h_y(\vx) : \boldsymbol{h} \in \mathcal{H}\} = \{1, \dots, K\}$ for any $\vx \in \mathcal{X}$, and there exists a non-decreasing concave function $s: \mathbb{R}_+ \to \mathbb{R}_+$ and $\epsilon \ge 0$ that the following holds for any $\hat{y} \in \mathcal{Y}$, $x \in \mathcal{X}$ and $\vh \in \mathcal{H}_{\hat{y}}(\vx)$:
\begin{equation*}
    \langle \max_y p_y(\vx) - p_{\hat{y}}(\vx) \rangle_\epsilon \leq s(\inf_{\boldsymbol{h} \in \mathcal{H}_{\hat{y}}(\vx)} \Delta \mathscr{C}_{\ell, \mathcal{H}}(\boldsymbol{h}, \vx)).
\end{equation*}
Then it holds for all $\boldsymbol{h} \in \mathcal{H}$ that
\begin{equation*}
    R_{\ell_{0-1}}(\boldsymbol{h}) - R^*_{\ell_{0-1}, \mathcal{H}} + M_{\ell_{0-1}, \mathcal{H}} \leq s(R_{\ell(\boldsymbol{h})} - R^*_{\ell, \mathcal{H}} + M_{\ell, \mathcal{H}}) + \epsilon.
\end{equation*}
\end{lemma}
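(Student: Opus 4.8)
The plan is to mirror, almost verbatim, the argument used for the convex counterpart (Lemma~\ref{cor: Distribution-dependent convex bound} inside the proof of Theorem~\ref{cor: Distribution-independent convex Psi bound}), with the convex function $g$ replaced by the non-decreasing concave function $s$ and the transfer step Proposition~\ref{Thm: Distribution-dependent convex bound} replaced by its concave analogue Proposition~\ref{Thm: Distribution-dependent concave bound}. Concretely, I first reduce the claim to verifying the pointwise hypothesis of Proposition~\ref{Thm: Distribution-dependent concave bound} with $\ell_2=\ell_{0-1}$ and $\ell_1=\ell$, namely $\langle \Delta \mathscr{C}_{\ell_{0-1}, \mathcal{H}}(\boldsymbol{h}, \vx)\rangle_\epsilon \le s(\Delta \mathscr{C}_{\ell, \mathcal{H}}(\boldsymbol{h}, \vx))$ for all $\boldsymbol{h}\in\mathcal{H}$ and $\vx\in\mathcal{X}$; once this holds, Proposition~\ref{Thm: Distribution-dependent concave bound} immediately gives the desired $R_{\ell_{0-1}}(\boldsymbol{h}) - R^*_{\ell_{0-1}, \mathcal{H}} + M_{\ell_{0-1}, \mathcal{H}} \leq s(R_{\ell}(\boldsymbol{h}) - R^*_{\ell, \mathcal{H}} + M_{\ell, \mathcal{H}}) + \epsilon$.

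To verify the pointwise bound, I fix an arbitrary $\vx_0\in\mathcal{X}$ and $\boldsymbol{h}_0\in\mathcal{H}$ and set $\hat{y}$ to be the index of the largest component of $\boldsymbol{h}_0(\vx_0)$, so $\boldsymbol{h}_0\in\mathcal{H}_{\hat{y}}(\vx_0)$. Since the standing assumption on $\mathcal{H}$ is exactly the hypothesis of Lemma~\ref{Prop: Character of conditional epsilon-regret}, that lemma rewrites the conditional $\epsilon$-regret as $\langle \Delta \mathscr{C}_{\ell_{0-1}, \mathcal{H}}(\boldsymbol{h}_0, \vx_0)\rangle_\epsilon = \langle \max_y p_y(\vx_0) - p_{\hat{y}}(\vx_0)\rangle_\epsilon$. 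The assumed inequality in the statement then bounds this by $s\bigl(\inf_{\boldsymbol{h}\in\mathcal{H}_{\hat{y}}(\vx_0)}\Delta \mathscr{C}_{\ell, \mathcal{H}}(\boldsymbol{h}, \vx_0)\bigr)$, and because $\boldsymbol{h}_0$ itself belongs to $\mathcal{H}_{\hat{y}}(\vx_0)$ the infimum is at most $\Delta \mathscr{C}_{\ell, \mathcal{H}}(\boldsymbol{h}_0, \vx_0)$; applying the monotonicity of $s$ finishes the chain $\langle \Delta \mathscr{C}_{\ell_{0-1}, \mathcal{H}}(\boldsymbol{h}_0, \vx_0)\rangle_\epsilon \le s(\Delta \mathscr{C}_{\ell, \mathcal{H}}(\boldsymbol{h}_0, \vx_0))$. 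Arbitrariness of $\vx_0,\boldsymbol{h}_0$ then supplies the hypothesis of Proposition~\ref{Thm: Distribution-dependent concave bound}, and the proof is complete.

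I do not expect a genuine obstacle: the argument is a direct transcription of the convex case. The one subtlety worth flagging is the direction of monotonicity. In the convex case the step ``$\inf_{\boldsymbol{h}\in\mathcal{H}_{\hat{y}}(\vx_0)}\Delta \mathscr{C}_{\ell,\mathcal{H}}(\boldsymbol{h},\vx_0)\le \Delta \mathscr{C}_{\ell,\mathcal{H}}(\boldsymbol{h}_0,\vx_0)$'' sat on the larger side of the inequality with no function applied, so no monotonicity of $g$ was needed there; here, since $s$ is applied to the infimum, passing to $s(\Delta \mathscr{C}_{\ell,\mathcal{H}}(\boldsymbol{h}_0,\vx_0))$ genuinely requires $s$ to be non-decreasing, which is part of the hypothesis. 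Everything else — including the use of Jensen's inequality against concavity and the nonnegativity of the $\epsilon$-regret term — is handled inside Proposition~\ref{Thm: Distribution-dependent concave bound}, so it need not be reproved.
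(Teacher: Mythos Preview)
Your proposal is correct and follows essentially the same approach as the paper: fix $\vx_0,\boldsymbol{h}_0$, let $\hat{y}$ be the predicted label, use Lemma~\ref{Prop: Character of conditional epsilon-regret} to identify $\langle \Delta \mathscr{C}_{\ell_{0-1}, \mathcal{H}}(\boldsymbol{h}_0, \vx_0)\rangle_\epsilon$ with $\langle \max_y p_y(\vx_0)-p_{\hat{y}}(\vx_0)\rangle_\epsilon$, apply the hypothesis and then the monotonicity of $s$ to pass from the infimum to $\boldsymbol{h}_0$, and finish via Proposition~\ref{Thm: Distribution-dependent concave bound}. Your observation that monotonicity of $s$ is genuinely needed here (unlike in the convex case) matches exactly the one remark the paper makes in its proof.
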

\begin{proof}
For any $\vx_0 \in \mathcal{X}$ and $\boldsymbol{h}_0 \in \mathcal{H}$, let $\hat{y} $ be the index of the largest element of $\vh_0(\vx)$. Then by the precondition, we have
\begin{equation*}
    \langle \Delta \mathscr{C}_{\ell_{0-1}, \mathcal{H}}(\boldsymbol{h}_0, \vx_0) \rangle_\epsilon = \langle \max_y p_y(\vx_0) - p_{\hat{y}}(\vx_0) \rangle_\epsilon \leq  s(\inf_{\boldsymbol{h} \in \mathcal{H}_{\hat{y}}(\vx_0) } \Delta \mathscr{C}_{\ell, \mathcal{H}}(\boldsymbol{h}, \vx_0)) \leq s(\Delta \mathscr{C}_{\ell, \mathcal{H}}(\boldsymbol{h}_0, \vx_0)).
\end{equation*}
where we use the assumption that $s$ is non-decreasing.
Combining the condition in Proposition~\ref{Thm: Distribution-dependent concave bound} we can conclude the proof.
\end{proof}

Built upon Lemma~\ref{cor: Distribution-dependent concave bound}, we can prove Theorem~\ref{cor: Distribution-independent concave bound} as follows.
\begin{proof}
For any $\vx_0 \in \mathcal{X}$, $\vp(\vx_0) \in \Delta_K$, $\hat{y}_0 \in \mathcal{Y}$, and $\vh \in\mathcal{H}_{\hat{y}_0}(\vx_0)$, we can write:
\begin{align*}
    &\max_y p_y(\vx_0) - p_{\hat{y}_0}(\vx_0) \\
    &\leq s(\inf_{\hat{y} \in \mathcal{Y}, x \in \mathcal{X}, \boldsymbol{h} \in \mathcal{H}_{\hat{y}}(\vx), \vp \in \mathcal{P}_{\hat{y}}({\max_y p_y(\vx_0) - p_{\hat{y}_0}(\vx_0)})} \Delta \mathscr{C}_{\ell, \mathcal{H}}(\boldsymbol{h}, \vx, \vp)) & (\text{Assumption}) \\
    &\leq s(\inf_{ x \in \mathcal{X}, \boldsymbol{h} \in \mathcal{H}_{\hat{y}_0}(\vx), \vp \in \mathcal{P}_{\hat{y}_0}({\max_y p_y(\vx_0) - p_{\hat{y}_0}(\vx_0)})} \Delta \mathscr{C}_{\ell, \mathcal{H}}(\boldsymbol{h}, \vx, \vp)) \\
    & \leq s(\inf_{x \in \mathcal{X}, \boldsymbol{h} \in \mathcal{H}_{\hat{y}_0}(\vx)} \Delta \mathscr{C}_{\ell, \mathcal{H}}(\boldsymbol{h}, \vx, \vp(\vx_0)))\\
    &\leq s(\inf_{\boldsymbol{h} \in \mathcal{H}_{\hat{y}_0}(\vx_0)} \Delta \mathscr{C}_{\ell, \mathcal{H}}(\boldsymbol{h}, \vx_0, \vp(\vx_0)))\\
    &= s(\inf_{\boldsymbol{h} \in \mathcal{H}_{\hat{y}_0}(\vx_0)} \Delta \mathscr{C}_{\ell, \mathcal{H}}(\boldsymbol{h}, \vx_0)).
\end{align*}
Combining the result of Lemma~\ref{cor: Distribution-dependent concave bound} we can prove Theorem~\ref{cor: Distribution-independent concave bound}.
\end{proof}

\subsection{Proofs of Theorem~\ref{thm: H-consistency bound for log neural network}}
\label{Proofs of thm:H-consistency bound for log neural network}
\begin{proof}
The proof is essentially the same as that of Theorem~\ref{thm: H-consistency bound for log}. We use $\mathcal{H}, \ell$ to replace the $\mathcal{H}_{NN}, \ell_{log}$ in the following proof, which will not bring ambiguity. We can rewrite the $\mathcal{J}_{\ell}(t)$ as follows:
\begin{align*}
    \mathcal{J}_{\ell}(t) = \inf_{\hat{y} \in \mathcal{Y}} \inf_{\vp \in \mathcal{P}_{\hat{y}}(t)} \inf_{\vx \in \mathcal{X}}( \inf_{\boldsymbol{h} \in \mathcal{H}_{\hat{y}}(\vx)} \mathscr{C}_{\ell}(\boldsymbol{h}, \vx, \vp) - \inf_{\boldsymbol{h} \in \mathcal{H}} \mathscr{C}_{\ell}(\boldsymbol{h}, \vx, \vp)).
\end{align*}
For all $\boldsymbol{h} \in \mathcal{H}$ and $\vx \in \mathcal{X}$, we have
\begin{align*}
    \mathscr{C}_{\ell}(\boldsymbol{h}, \vx, \vp)) = \sum_{y=1}^K p_y \ell(y, \boldsymbol{h}(x)) =  \sum_{y=1}^K p_y (-h_y + \log(\sum_{j=1}^K \exp{(h_j)})).
\end{align*}

To get the $\inf_{\boldsymbol{h} \in \mathcal{H}} \mathscr{C}_{\ell}(\boldsymbol{h}, \vx, \vp))$, we consider the following problem
\begin{equation*}
\min_{\boldsymbol{h}} \,\, \sum_{y=1}^K p_y (-h_y + \log(\sum_{j=1}^K \exp{(h_j)})).
\end{equation*}
By Lemma~\ref{lemma: Convexity of C}, we know that this problem is convex, we can make use of KKT conditions~\cite{boyd2004convex} to find the points that are primal and dual optimal, which can be written as follows
\begin{align}
-p_i + \frac{\exp{(h_i^*)}}{\sum_{k=1}^K \exp{(h_k^*)}} = 0 \quad i = 1, \dots, K.
\end{align}
It implies that $h_i^* = \log(p_i \sum_{k=1}^K \exp{(h_k^*)}$. Thus, we have
\begin{align*}
\inf_{\boldsymbol{h} \in \mathcal{H}} \mathscr{C}_{\ell}(\boldsymbol{h}, \vx, \vp)) = \sum_{y=1}^K p_y (-h^*_y + \log(\sum_{j=1}^K \exp{(h^*_j)})) = - \sum_{y=1}^K p_y \log(p_y).
\end{align*}
which is the entropy of distribution $\vp$. By Lemma~\ref{lemma: inf H bar delta C}, we know that
\begin{align*}
\inf_{\boldsymbol{h} \in \mathcal{H}_{\hat{y}}(\vx)} \mathscr{C}_{\ell}(\boldsymbol{h}, \vx, \vp)) &\ge -(p_{max} + p_{\hat{y}}) \log(\frac{p_{max} + p_{\hat{y}}}{2}) - \sum_{y \notin \{y_{max}, \hat{y}\}}p_y \log(p_y).
\end{align*}
Then we have
\begin{align*}
     &\inf_{\boldsymbol{h} \in \mathcal{H}_{\hat{y}}(\vx)} \mathscr{C}_{\ell}(\boldsymbol{h}, \vx, \vp) - \inf_{\boldsymbol{h} \in \mathcal{H}} \mathscr{C}_{\ell}(\boldsymbol{h}, \vx, \vp) \ge  -(p_{max} + p_{\hat{y}}) \log(\frac{p_{max} + p_{\hat{y}}}{2}) + p_{y_{max}} \log(p_{y_{max}}) + p_{\hat{y}} \log(p_{\hat{y}}),
\end{align*}
and
\begin{align*}
    &\inf_{\vx \in \mathcal{X}} (\inf_{\boldsymbol{h} \in \mathcal{H}_{\hat{y}}(\vx)} (\mathscr{C}_{\ell}(\boldsymbol{h}, \vx, \vp) - \inf_{\boldsymbol{h} \in \mathcal{H}} \mathscr{C}_{\ell}(\boldsymbol{h}, \vx, \vp)) )\ge -(p_{y_{max}} + p_{\hat{y}}) \log(\frac{p_{y_{max}} + p_{\hat{y}}}{2}) + p_{y_{max}} \log(p_{y_{max}}) + p_{p_{\hat{y}}} \log(p_{p_{\hat{y}}}).
\end{align*}
Now, we meet the following problem
\begin{equation*}
\begin{split}
&\min_{\vp} \,\, -(p_{y_{max}} + p_{\hat{y}}) \log(\frac{p_{y_{max}} + p_{\hat{y}}}{2}) + p_{y_{max}} \log(p_{y_{max}}) + p_{\hat{y}} \log(p_{\hat{y}})\\
&s.t.\quad  
\left\{\begin{array}{lc}
p_{y_{max}} - p_{\hat{y}} = t, \forall i,\\
\sum_{i=1}^K p_i = 1, \\
p_i \ge 0, \forall i,
\end{array}\right.
\end{split}
\end{equation*}
which is equivalent to find the minimum of $-(2p_{\hat{y}} + t) \log(\frac{2p_{\hat{y}} + t}{2}) + (p_{\hat{y}} + t) \log((p_{\hat{y}} + t)) +p_{\hat{y}} \log(p_{\hat{y}})$ when $p_{\hat{y}} \in [0, \frac{1-t}{2}]$. By Lemma~\ref{lemma: technical lemma 8}, we know it is $\frac{1+t}{2} \log(1+t) + \frac{1-t}{2} \log(1-t)$. Thus,
\begin{align*}
    \mathcal{J}_{\ell}(t) &= \inf_{\hat{y} \ne y_{max}} \inf_{\vp \in \{\vp: \vp \in \Delta_k,p_{max} - p_{\hat{y}} = t\}} \inf_{\vx \in \mathcal{X}}( \inf_{\boldsymbol{h} \in \mathcal{H}_{\hat{y}}(\vx)} \mathscr{C}_{\ell}(\boldsymbol{h}, \vx, \vp) - \inf_{\boldsymbol{h} \in \mathcal{H}} \mathscr{C}_{\ell}(\boldsymbol{h}, \vx, \vp))\\
    &\ge \inf_{\hat{y} \ne y_{max}} -(2-t)\log(\frac{2-t}{2}) + (1-t)\log(1-t)\\
    &= \frac{1+t}{2} \log(1+t) + \frac{1-t}{2} \log(1-t)\\
    &\ge \frac{t^2}{2}. & (\text{Lemma~\ref{lemma: technical lemma 9}})
\end{align*}
Let $g(t) = \frac{t^2}{4}$ in Theorem~\ref{cor: Distribution-independent convex Psi bound}, we have
\begin{align*}
\frac{1}{2} (R_{\ell_{0-1}}(\boldsymbol{h}) - R^*_{\ell_{0-1}, \mathcal{H}} + M_{\ell_{0-1}, \mathcal{H}})^2 \leq R_{\ell_{log}}(\boldsymbol{h}) - R^*_{\ell_{log}, \mathcal{H}} + M_{\ell_{log}, \mathcal{H}},
\end{align*}
which implies
\begin{align*}
R_{\ell_{0-1}}(\boldsymbol{h}) - R^*_{\ell_{0-1}, \mathcal{H}} + M_{\ell_{0-1}, \mathcal{H}}  \leq \sqrt{2}(R_{\ell_{log}}(\boldsymbol{h}) - R^*_{\ell_{log}, \mathcal{H}} + M_{\ell_{log}, \mathcal{H}})^{\frac{1}{2}}.
\end{align*}
By Lemma~\ref{lemma: property of M_Psi_F}, we have  $M_{\ell_{0-1}, \mathcal{H}}$ coincides with the approximation error $R_{\ell_{0-1}, \mathcal{H}}^* -  R_{\ell_{0-1}, \mathcal{H}_{all}}^*$. We also note that $M_{\ell_{log}, \mathcal{H}}$ coincides with $R_{\ell_{log}, \mathcal{H}}^* -  R_{\ell_{log}, \mathcal{H}_{all}}^*$ because
\begin{align*}
M_{\ell_{log}, \mathcal{H}} &= R_{\ell_{log}, \mathcal{H}}^* - \mathbb{E}_{\vx}[\mathscr{C}_{\ell_{log}, \mathcal{H}}^*(\vx)] \\
&=R_{\ell_{log}, \mathcal{H}}^* - \mathbb{E}_{\vx}[\inf_{\boldsymbol{h} \in \mathcal{H}} \mathscr{C}_{\ell_{log}}(\boldsymbol{h}, \vx, \vp(\vx)] \\
&=R_{\ell_{log}, \mathcal{H}}^* - \mathbb{E}_{\vx}[- \sum_{y=1}^K p_y(\vx) \log(p_y(\vx))] \\
&=R_{\ell_{log}, \mathcal{H}}^* -  R_{\ell_{log}, \mathcal{H}_{all}}^*.\\
\end{align*}
Finally, we can conclude that
\begin{align*}
R_{\ell_{0-1}}(\boldsymbol{h}) - R^*_{\ell_{0-1}, \mathcal{H}} \leq R_{\ell_{0-1}}(\boldsymbol{h}) - R^*_{\ell_{0-1}, \mathcal{H}} + M_{\ell_{0-1}, \mathcal{H}}  \leq \sqrt{2}(R_{\ell_{log}}(\boldsymbol{h}) - R^*_{\ell_{log}, \mathcal{H}_{all}})^\frac{1}{2}.
\end{align*}
\end{proof}

\subsection{Proof of Proposition~\ref{Prop: multiclass exp -n}}
\label{proof: Proposition Prop: multiclass exp -n}
\begin{proof}
Based on the results of Lemma~\ref{Prop: multiclass high E}, for $k_1, k_2$ and $k$ which satisfies $\zeta_{k_1, k_2, k} > 0$, to bound $\mathbb{P}(\vert \Delta a_{Gen, \infty}(\vx, k_1, k_2) \vert \leq \tau n \vert y=k)$, we can write:
\begin{align*}
    &\mathbb{P}(\vert \Delta a_{Gen, \infty}(\vx, k_1, k_2) \vert \leq \tau n \vert y=k) \\
    &\leq \mathbb{P}( \Delta a_{Gen, \infty}(\vx, k_1, k_2) \leq \tau n \vert y=k) \\
    &=\mathbb{P}(  \Delta a_{Gen, \infty}(\vx, k_1, k_2) - \zeta_{k_1, k_2, k} n \leq \tau n - \zeta_{k_1, k_2, k} n\vert y=k) \\
    &=\mathbb{P}(\sum_{i=1}^n \log \frac{{p}(x_i\vert y=k_1) }{{p}(x_i\vert y=k_2) } - \mathbb{E}_{\vx}(\sum_{i=1}^n \log \frac{{p}(x_i\vert y=k_1) }{{p}(x_i\vert y=k_2) }) \leq (\tau  - \zeta_{k_1, k_2, k}) n \vert y=k)\\
    &\leq \exp({- \frac{2(\tau  - \zeta_{k_1, k_2, k})^2n^2}{n(\log \frac{1-\rho_0}{\rho_0} - \log \frac{\rho_0}{1-\rho_0})^2}}) = \exp\bigl({- \frac{(\tau  - \zeta_{k_1, k_2, k})^2n}{2(\log \frac{1-\rho_0}{\rho_0})^2}}\bigr). & \text{(Assumption~\ref{Assumption: parmeters bounded} and Lemma~\ref{lemma: hoeffding})}
\end{align*}
Similar to the above discussion, we have $\mathbb{P}(\vert \Delta a_{Gen, \infty}(\vx, k_1, k_2) \vert \leq \tau n \vert y=k) \leq \exp\bigl({- \frac{(\tau  -\vert \zeta_{k_1, k_2, k}\vert)^2n}{2(\log \frac{1-\rho_0}{\rho_0})^2}}\bigr)$ for $k_1, k_2$ and $k$ which satisfies $\beta_{k_1, k_2, k} < 0$. Finally, we can conclude that:
\begin{align*}
    \widetilde{G}(\tau) &= \max_{k_1,k_2}\sum_{k=1}^K p(y=k)\mathbb{P}(\vert \Delta a_{Gen, \infty}(\vx, k_1, k_2) \vert \leq \tau n \vert y=k)\\
    &\leq \max_{k_1,k_2}\sum_{k=1}^K p(y=k)\exp\bigl({- \frac{(\tau  -\vert \zeta_{k_1, k_2, k}\vert)^2n}{2(\log \frac{1-\rho_0}{\rho_0})^2}}\bigr)\\
    &\leq \max_{k_1,k_2}\exp\bigl({- \frac{(\tau  -\min_{k}\vert \zeta_{k_1, k_2, k}\vert)^2n}{2(\log \frac{1-\rho_0}{\rho_0})^2}}\bigr)\\
    &= \exp\bigl({- \frac{(\tau  - \zeta)^2n}{2(\log \frac{1-\rho_0}{\rho_0})^2}}\bigr) = \exp{(-O((\tau - \zeta)^2n))}.
\end{align*}
Second, we consider the continuous case, the only difference from the discrete case is that the range of $\log \frac{{p}(x_i\vert y=k_1) }{{p}(x_i\vert y=k_2) }$ is $[-\frac{2}{\rho_0}, \frac{2}{\rho_0}]$. So we can get:
\begin{align*}
     \widetilde{G}(\tau) \leq \exp\bigl({- \frac{(\tau  - \zeta)^2n}{2(\frac{4}{\rho_0})^2}}\bigr) = \exp{(-O((\tau - \zeta)^2n))}.
\end{align*}
The proof is complete.
\end{proof}

\section{Details of Simulation Experiments}
\label{app: Configurations of Simulation Experiment}

\subsection{Implementation of Logistic Regression}
We train the logistic regression using scikit-learn's~\cite{scikit-learn} L-BFGS implementation, with a maximum of 1000 iterations. The weight of $\ell_2$ regularization of logistic regression is fixed as 1. All experiments are done on a single GeForce RTX 3090 GPU.

\subsection{Sythentic Dataset}
We construct a simulated multiclass balanced mixture Gaussian distribution dataset, which also satisfies all assumptions. The simulated data distribution satisfies $p(x\vert y=1) \sim \mathcal{N}(x; \{-1\}^n, diag\{\{n\}^{\frac{n}{2}},  \{1\}^{\frac{n}{2}}\})$ and $p(x\vert y=k) \sim \mathcal{N}(x; \{2^{k-2}\}^n, diag\{\{n\}^{\frac{n}{2}},  \{1\}^{\frac{n}{2}}\})$ for $k > 1$, where $\mathcal{N}$ is Gaussian distribution, $diag(\boldsymbol{a})$ means a matrix whose diagonal is $\boldsymbol{a}$, and $\{a\}^n$ means a vector whose length is $n$ and all its elements are $a$.

\subsection{Discussion about the synthetic dataset}
First, we note that the optimal classifier is a linear function, which means that  Assumption~\ref{Assumption: Optimal classifier has finite empirical loss} is valid with $\nu = 0$.

\textbf{Binary case}.
The data distribution satisfies $p(x\vert y=0) \sim \mathcal{N}(x; \{-1\}^n, diag\{\{n\}^{\frac{n}{2}},  \{1\}^{\frac{n}{2}}\})$ and $p(x\vert y=1) \sim \mathcal{N}(\{1\}^n, diag\{\{ n\}^{\frac{n}{2}},  \{1\}^{\frac{n}{2}}\})$. The boundary of Bayes classifier $\Delta a_{Gen}(\vx, 1, 0)$ can be calculated as follows:
\begin{align*}
    \Delta a_{Gen}(\vx, 1, 0) &= \sum_{i=1}^n \log \frac{\frac{1}{\sqrt{2\pi}\sigma_i} \exp({-\frac{(x_i - \mu_{1i})^2}{2\sigma_i^2}})}{\frac{1}{\sqrt{2\pi}\sigma_i} \exp({-\frac{(x_i - \mu_{0i})^2}{2\sigma_i^2}})} + \log \frac{q}{1-q} \\
    &= \sum_{i=1}^n \frac{\mu_{1i} - \mu_{0i}}{\sigma_i^2}x_i  + \sum_{i=1}^n \frac{ \mu_{0i}^2 -  \mu_{1i}^2}{2\sigma_i^2} + \log \frac{q}{1-q} \\
    &= \sum_{i=1}^{n/2} \frac{2}{n} x_i + \sum_{i=\frac{n}{2} + 1}^{n} 2 x_i.
\end{align*}
It is a linear function. In addition, the Bayes error $BE$ can be obtained as follows.
\begin{align*}
    BE &= \frac{1}{(2\pi)^{\frac{n}{2}} (n^\frac{n}{2})^\frac{1}{2}} \int_{\sum_{i=1}^{n/2} \frac{2}{n} x_i + \sum_{i=\frac{n}{2} + 1}^{n} 2 x_i < 0} \exp(-\frac{1}{2} ( \sum_{i=1}^{n/2} \frac{(x_i - 1)^2}{n} + \sum_{i=\frac{n}{2} + 1}^{n} (x_i - 1)^2 )) dx_1\dots dx_n \\
    &= \frac{1}{(2\pi)^{\frac{n}{2}} (n^\frac{n}{2})^\frac{1}{2}} \int_{\sum_{i=1}^{n/2} \frac{2}{n} y_i + \sum_{i=\frac{n}{2} + 1}^{n} 2 y_i < -(n+1)} \exp(-\frac{1}{2} ( \sum_{i=1}^{n/2} \frac{y_i^2}{n} + \sum_{i=\frac{n}{2} + 1}^{n} y_i^2 )) dy_1\dots dy_n \\
    &= \frac{1}{(2\pi)^{\frac{n}{2}}} \int_{\sum_{i=1}^{n/2} \frac{2}{\sqrt{n}} z_i + \sum_{i=\frac{n}{2} + 1}^{n} 2 z_i < -(n+1)} \exp(-\frac{1}{2} ( \sum_{i=1}^{n} {z_i^2} )) dz_1\dots dz_n \\
    &= \int_{\sum_{i=1}^{\frac{n}{2}} \frac{\boldsymbol{z_i}}{\sqrt{n}} +  \sum_{i=\frac{n}{2} + 1}^{n} \boldsymbol{z_i} + \frac{n+1}{2} < 0} \mathcal{N}(\boldsymbol{z}; 0, \boldsymbol{I}) d\boldsymbol{z},
\end{align*}
which approaches 0 quickly as $n$ increases and can be approximated by the Monte Caro method efficiently.

\textbf{Multiclass case}.
The boundary of Bayes classifier $a(\vx, k_1, k_2)$ for class $k_1 = 1$ and $k_2$ can be calculated as follows:
\begin{align*}
    \Delta a_{Gen}(\vx, k_1, k_2) &= \sum_{i=1}^n \log \frac{\frac{1}{\sqrt{2\pi}\sigma_i} \exp({-\frac{(x_i - \mu_{k_1i})^2}{2\sigma_i^2}})}{\frac{1}{\sqrt{2\pi}\sigma_i} \exp({-\frac{(x_i - \mu_{k_2i})^2}{2\sigma_i^2}})} + \log \frac{q_{k_1}}{q_{k_2}} \\
    &= \sum_{i=1}^n \frac{\mu_{k_1i} - \mu_{k_2i}}{\sigma_i^2}x_i  + \sum_{i=1}^n \frac{ \mu_{k_2i}^2 -  \mu_{k_1i}^2}{2\sigma_i^2} + \log \frac{q_{k_1}}{q_{k_2}} \\
    &= \sum_{i=1}^{n/2} \frac{-1 - 2^{k_2 - 2}}{n} x_i + \sum_{i=\frac{n}{2} + 1}^{n} (-1 + 2^{k_2-2}) x_i.
\end{align*}
In addition, the boundary of Bayes classifier $a(\vx, k_1, k_2)$ for class $k_1 \ne 1$ and $k_2 \ne 1$ can be calculated as follows:
\begin{align*}
    \Delta a_{Gen}(\vx, k_1, k_2) &= \sum_{i=1}^n \log \frac{\frac{1}{\sqrt{2\pi}\sigma_i} \exp({-\frac{(x_i - \mu_{k_1i})^2}{2\sigma_i^2}})}{\frac{1}{\sqrt{2\pi}\sigma_i} \exp({-\frac{(x_i - \mu_{k_2i})^2}{2\sigma_i^2}})} + \log \frac{q_{k_1}}{q_{k_2}} \\
    &= \sum_{i=1}^n \frac{\mu_{k_1i} - \mu_{k_2i}}{\sigma_i^2}x_i  + \sum_{i=1}^n \frac{ \mu_{k_2i}^2 -  \mu_{k_1i}^2}{2\sigma_i^2} + \log \frac{q_{k_1}}{q_{k_2}} \\
    &= \sum_{i=1}^{n/2} \frac{2^{k_1-2} - 2^{k_2-2}}{n} x_i + \sum_{i=\frac{n}{2} + 1}^{n} (2^{k_1-2} - 2^{k_2-2}) x_i + (4^{k_1-2} - 4^{k_2-2})\frac{n + 1}{4}
\end{align*}
The Bayes error is not easy to obtain in an analytic version. However, the test error can decrease to less than $10^{-4}$ in our multiclass experiments, so we set 0 as the estimated asymptotic error.

Second, Assumption~\ref{Assumption: multiclass KL} holds in this case, that is, for all $k_1, k_2 (k_1 \ne k_2)$ and $k \in \mathcal{Y}$, it holds that $\lvert \sum_{i=1}^n (D(p(x_i \vert y=k) \Vert p(x_i \vert y=k_1)) - D(p(x_i \vert y=k) \Vert p(x_i \vert y=k_2))) \rvert = \beta_{k_1, k_2, k}n = \Omega(n)$. For all $k_1, k_2 (k_1 \ne k_2) \in \mathcal{Y}$, we have 

\begin{align*}
\sum_{i=1}^n D(p(x_i \vert y=k_1) \Vert p(x_i \vert y=k_2)) &= \sum_{i=1}^n D(p(x_i \vert y=k_2) \Vert p(x_i \vert y=k_1)) \\
&= \sum_{i=1}^n \frac{ (\mu_{k_2i} -  \mu_{k_1i})^2}{2\sigma_i^2} = \sum_{i=1}^{n/2} \frac{(2^{k_1-2} - 2^{k_2-2})^2}{2n} + \sum_{i=\frac{n}{2} + 1}^{n} (2^{k_1-2} - 2^{k_2-2})^2\\
&= \frac{(2^{k_1-2} - 2^{k_2-2})^2}{4} + \frac{n}{2} (2^{k_1-2} - 2^{k_2-2})^2.
\end{align*}
So we have

\begin{align*}
&\lvert \sum_{i=1}^n (D(p(x_i \vert y=k) \Vert p(x_i \vert y=k_1)) - D(p(x_i \vert y=k) \Vert p(x_i \vert y=k_2))) \rvert \\
&=\lvert \frac{(2^{k_1-2} - 2^{k-2})^2}{4} + \frac{n}{2} (2^{k_1-2} - 2^{k-2})^2 - \frac{(2^{k_2-2} - 2^{k-2})^2}{4} - \frac{n}{2} (2^{k_2-2} - 2^{k-2})^2\rvert  = O(n).
\end{align*}
Third, Assumption~\ref{Assumption: multiclass likelihood ratio var} holds as well. This can be obtained by the property of conditional independence directly, that is, for all $k_1, k_2 (k_1 \ne k_2)$ and $k \in \mathcal{Y}$, it holds that $\mathbb{V}_{\vx}[\sum_{i=1}^n \log \frac{{p}(x_i\vert y=k_1) }{{p}(x_i\vert y=k_2) } \vert y = k] = \sum_{i=1}^n \mathbb{V}_{\vx}[\log \frac{{p}(x_i\vert y=k_1) }{{p}(x_i\vert y=k_2) } \vert y = k] = O(n)$.

Finally, we note that we can directly scale this dataset because scaling will not influence the establishment of the above assumptions. In our multiclass experiments ($K > 2$), we scale the dataset to boost logistic regression converging faster. The scale function we use is $\boldsymbol{f}(\vx) = \frac{\vx}{2^{K-3}} - 1$, which can make the mean of each class to $[-1,1]$.

\subsection{The number of samples required to converge}
For a fixed $K$, we traversal $n$ from 100 to 1000 gradually. For each selected $n$, we randomly generate $1 \times 10^4$ samples as a test set. We increase the training dataset size $m$ gradually until the errors of two classifiers approach their asymptotic error. Specially, we conduct 5 random repeats to keep the stability of our results. We record the training set size $m_{conv}$ when the gap between the error and the estimation of asymptotic error is less than $\epsilon_0 = 0.01$ for the first time.

\subsection{Additional Results of Simulations}
We present results with $K = 2,3,7$ here. Consistently, logistic regression and na\"ive Bayes require $O(n)$ and $O(\log n)$ samples to approach the estimated asymptotic error respectively. Error bars represent the variance estimated by 5 runs.

\begin{figure}[htbp]
\centering

\subfloat[$K = 2$]{
\includegraphics[width=0.3\columnwidth]{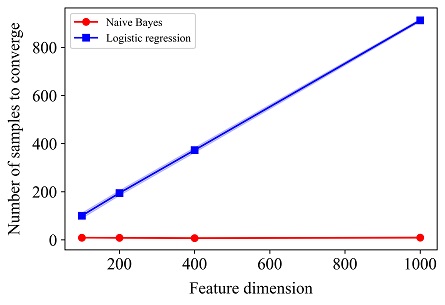}
}%
\subfloat[$K = 3$]{
\includegraphics[width=0.3\columnwidth]{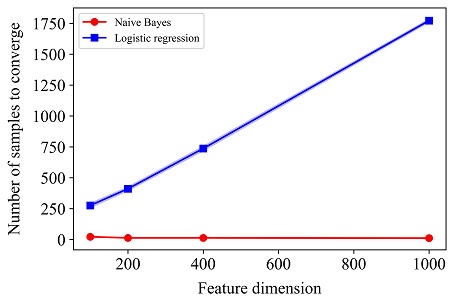}
}%
\subfloat[$K = 7$]{
\includegraphics[width=0.3\columnwidth]{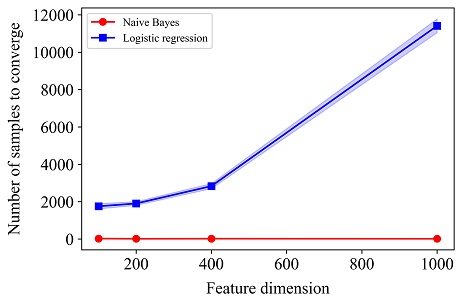}
}%

\centering
\caption{Additional results of simulations with $K = 2,3,7$.}
\label{figures: additional simulations}
\end{figure}

\section{Details of Deep Learning Experiments}
\label{app: details of DL Experiments}

\subsection{Models}
\textbf{ViT}. We include ViT-B/16~\cite{dosovitskiy2020image} checkpoint pretrained on the ImageNet-21k dataset~\cite{imagenet}.

\textbf{ResNet}. We add the ResNet50 checkpoint released by Pytorch~\cite{pytorch}.

\textbf{CLIP image encoder}. We use the image encoder released by CLIP~\cite{CLIP} project with ResNet50 backbone.

\textbf{MoCov2}. We include the MoCov2~\cite{DBLP:journals/corr/abs-2003-04297MocoV2} checkpoint trained with 800 epochs on the ImageNet dataset. The backbone is ResNet50.

\textbf{SimCLRv2}. The SimCLRv2~\cite{DBLP:conf/nips/simclrv2} project released various pre-trained and fine-tuned models. We use the pretrain-only checkpoint with selective Kernels. The backbone is ResNet50.

\textbf{MAE}. We adopt pre-trained checkpoint in~\cite{DBLP:conf/cvpr/HeCXLDG22MAE}. The backbone is ViT-B/16.

\textbf{SimMIM}.We use the checkpoint pre-trained on the ImageNet-1K dataset with 800 epochs released in~\cite{DBLP:conf/cvpr/simmim}. The backbone is ViT-B/16.

The used codes and their licenses are listed as follows.

\begin{table}[h]
\centering
\caption{The used codes and licenses.}
\vskip 0.15in
\begin{tabular}{ccc} 
\toprule
URL                                                    & citations & License                                                    \\ 
\midrule
https://github.com/google-research/vision\_transformer & \cite{dosovitskiy2020image}       & Apache-2.0 License                                         \\
https://github.com/pytorch/pytorch                     & \cite{pytorch}   &  \href{https://github.com/pytorch/pytorch/blob/master/LICENSE}{License}      \\
https://github.com/openai/CLIP                         & \cite{CLIP}      & MIT License                                                \\
https://github.com/facebookresearch/moco               & \cite{DBLP:journals/corr/abs-2003-04297MocoV2}      & MIT License                                                \\
https://github.com/google-research/simclr              & \cite{DBLP:conf/nips/simclrv2}    & Apache-2.0 License                                         \\
https://github.com/Separius/SimCLRv2-Pytorch              & -    & GPL-3.0 license     \\
https://github.com/facebookresearch/mae                & \cite{DBLP:conf/cvpr/HeCXLDG22MAE}       & \href{https://github.com/facebookresearch/mae/blob/main/LICENSE}{License}  \\
https://github.com/microsoft/SimMIM                    & \cite{DBLP:conf/cvpr/simmim}    & MIT License                                                \\
https://github.com/scikit-learn/scikit-learn                    & \cite{scikit-learn}    & BSD-3-Clause License                                                \\
\bottomrule
\end{tabular}
\end{table}

\subsection{Feature preprocessing}
For the reason that our theory assumes that $\mathcal{X} = [0,1]^n$, we scale each dimension of features to $[0,1]$. It is implemented by using the MinMaxScaler supported in scikit-learn's~\cite{scikit-learn}. Empirically, we note this transformation will not influence the happening of the ``two regimes'' phenomenon in practice.

\subsection{Additional Results of Validating the Assumptions}
\label{app: Additional Results of Validating the Assumptions}

\begin{figure}[htbp]
\centering

\subfloat[ViT]{
\includegraphics[width=0.35\columnwidth]{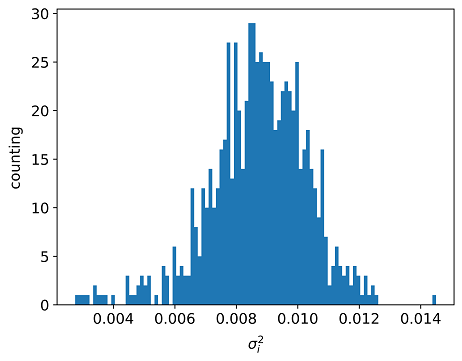}
}
\subfloat[ResNet]{
\includegraphics[width=0.35\columnwidth]{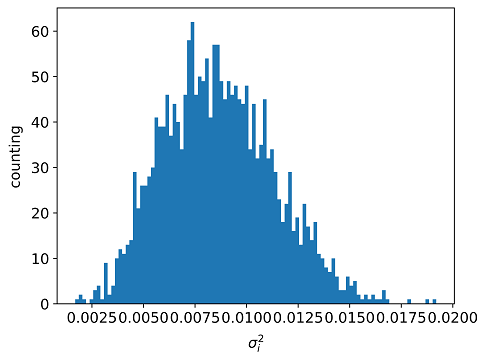}
}

\subfloat[CLIP]{
\includegraphics[width=0.35\columnwidth]{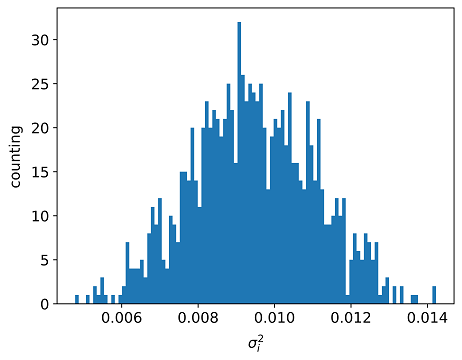}
}
\subfloat[MoCov2]{
\includegraphics[width=0.35\columnwidth]{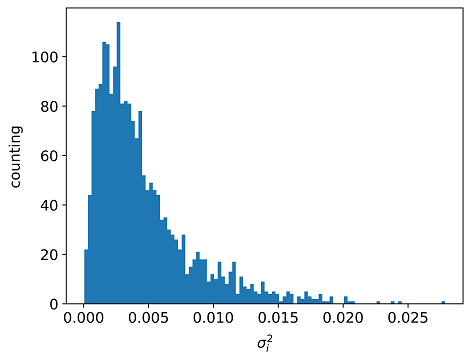}
}

\subfloat[SimCLRv2]{
\includegraphics[width=0.35\columnwidth]{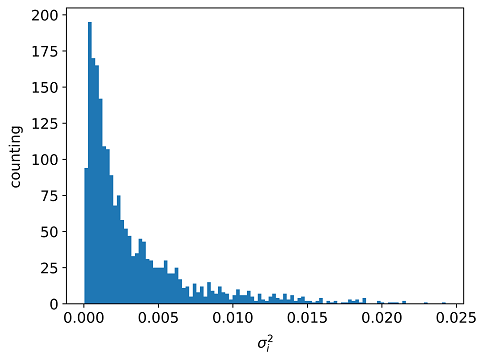}
}
\subfloat[MAE]{
\includegraphics[width=0.35\columnwidth]{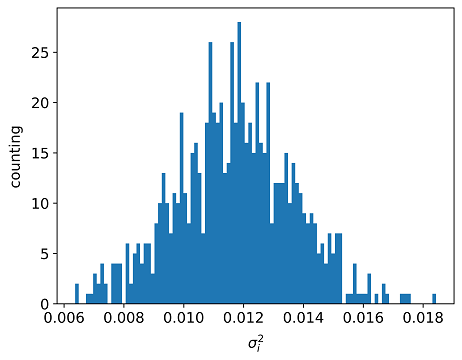}
}

\subfloat[SimMIM]{
\includegraphics[width=0.35\columnwidth]{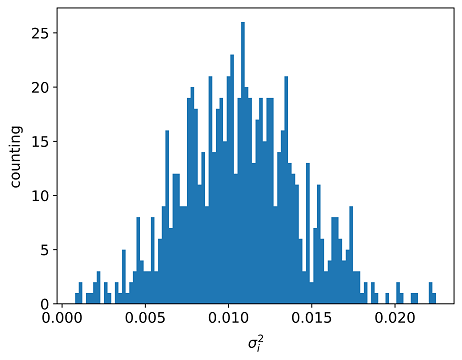}
}

\centering
\caption{Distribution histogram of $\sigma_i^2$}
\label{figures: sigmas}  
\end{figure}

\begin{figure}[htbp]
\centering

\subfloat[ViT]{
\includegraphics[width=0.35\columnwidth]{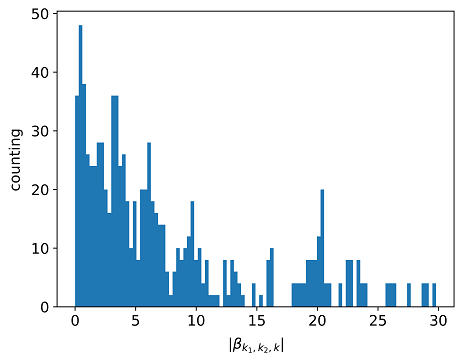}
}
\subfloat[ResNet]{
\includegraphics[width=0.35\columnwidth]{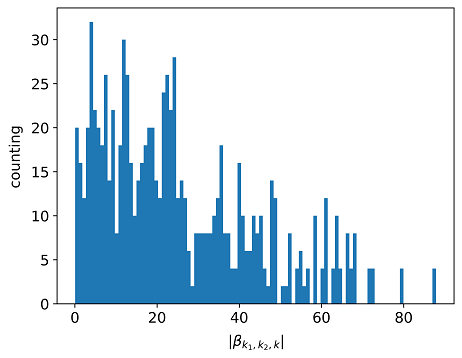}
}

\subfloat[CLIP]{
\includegraphics[width=0.35\columnwidth]{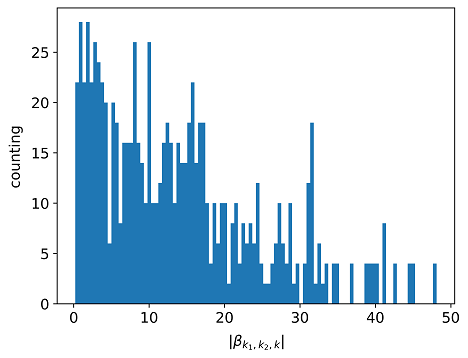}
}
\subfloat[MoCov2]{
\includegraphics[width=0.35\columnwidth]{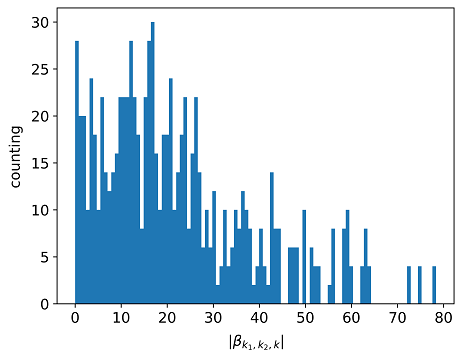}
}

\subfloat[SimCLRv2]{
\includegraphics[width=0.35\columnwidth]{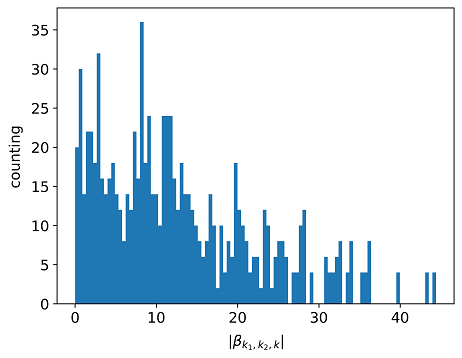}
}
\subfloat[MAE]{
\includegraphics[width=0.35\columnwidth]{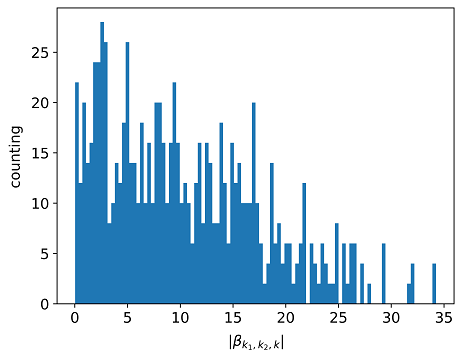}
}

\subfloat[SimMIM]{
\includegraphics[width=0.35\columnwidth]{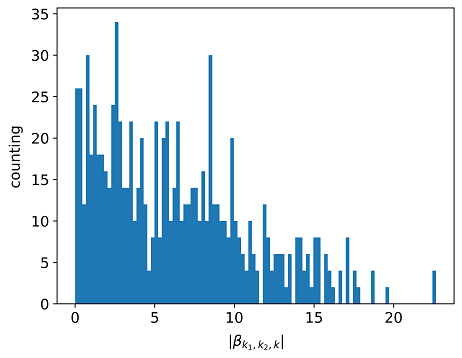}
}

\caption{Distribution histogram of $\vert\beta_{k_1,k_2,k}\vert$.} 
\label{figures: beta}  
\end{figure}

\begin{figure}[htbp]
\centering

\subfloat[ViT]{
\includegraphics[width=0.35\columnwidth]{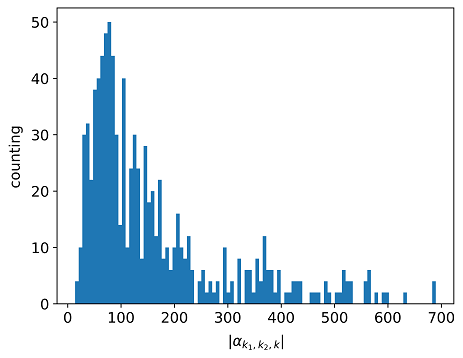}
}
\subfloat[ResNet]{
\includegraphics[width=0.35\columnwidth]{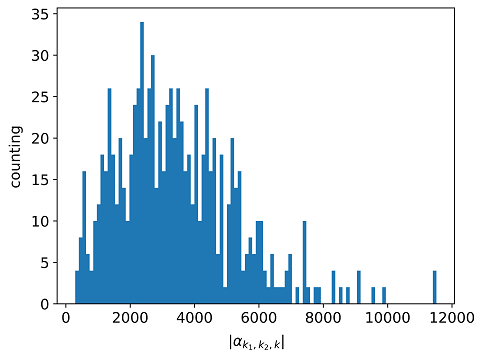}
}

\subfloat[CLIP]{
\includegraphics[width=0.35\columnwidth]{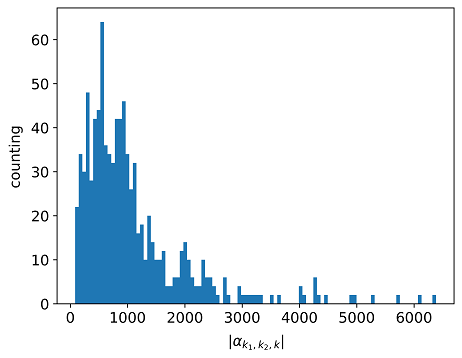}
}
\subfloat[MoCov2]{
\includegraphics[width=0.35\columnwidth]{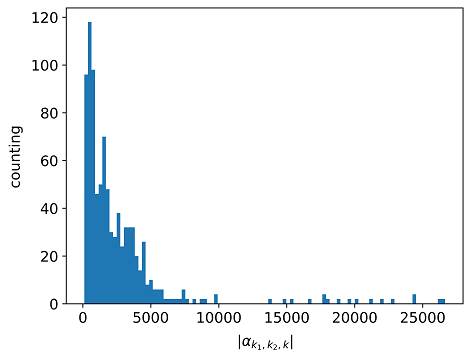}
}

\subfloat[SimCLRv2]{
\includegraphics[width=0.35\columnwidth]{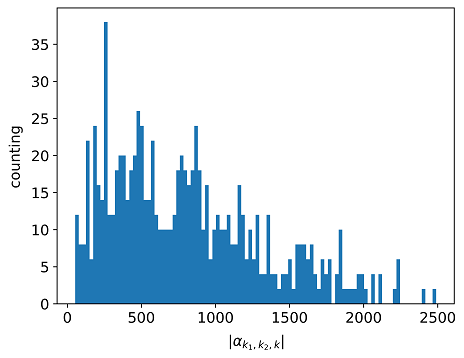}
}
\subfloat[MAE]{
\includegraphics[width=0.35\columnwidth]{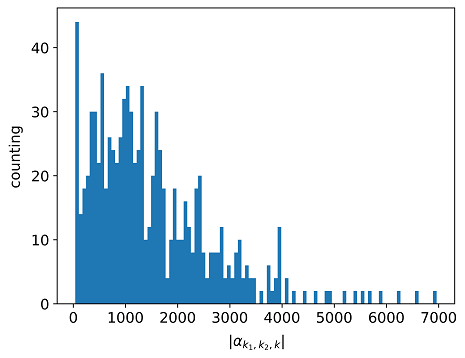}
}

\subfloat[SimMIM]{
\includegraphics[width=0.35\columnwidth]{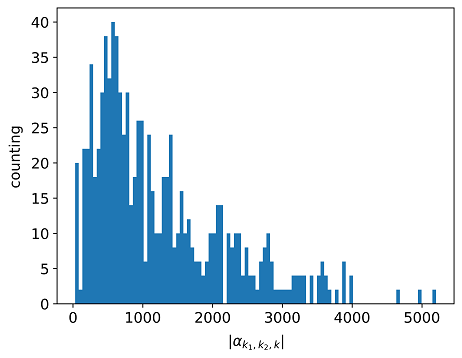}
}

\caption{Distribution histogram of $\alpha_{k_1,k_2,k}$.} 
\label{figures: alpha}  
\end{figure}

\newpage

\subsection{Additional Results of Deep Learning}
\label{app: Additional Deep Learning Results}

\begin{figure}[htbp]
\centering

\subfloat[CIFAR10, small m]{
\includegraphics[width=0.45\columnwidth]{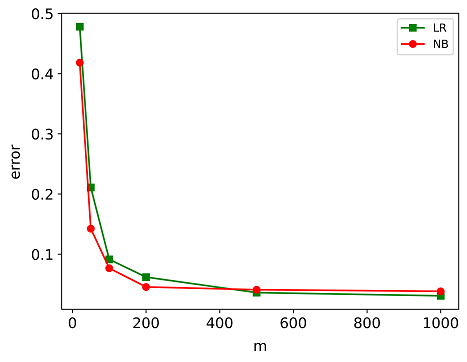}
}
\subfloat[CIFAR10, all m]{
\includegraphics[width=0.45\columnwidth]{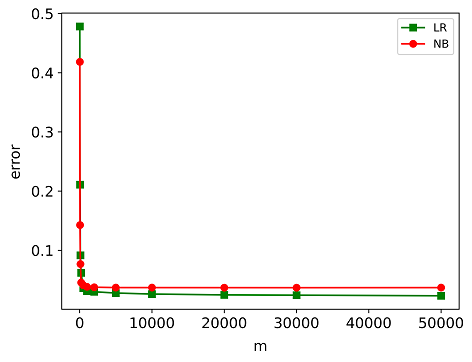}
}

\subfloat[CIFAR100, small m]{
\includegraphics[width=0.45\columnwidth]{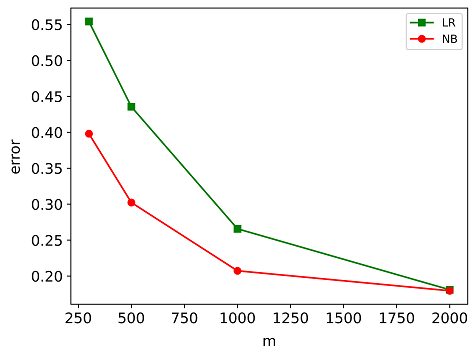}
}
\subfloat[CIFAR100, all m]{
\includegraphics[width=0.45\columnwidth]{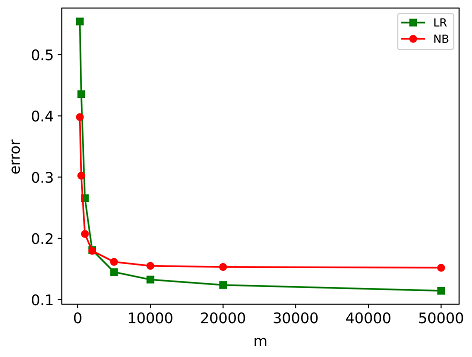}
}

\centering
\caption{Comparison between na\"ive Bayes and logistic regression trained on features extracted by ViT.}
\label{figures: vit}
\end{figure}

\begin{figure}[htbp]
\centering

\subfloat[CIFAR10, small m]{
\includegraphics[width=0.45\columnwidth]{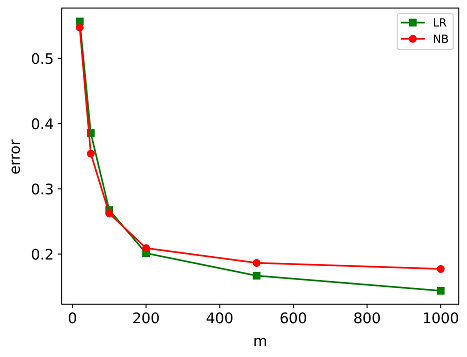}
}%
\subfloat[CIFAR10, all m]{
\includegraphics[width=0.45\columnwidth]{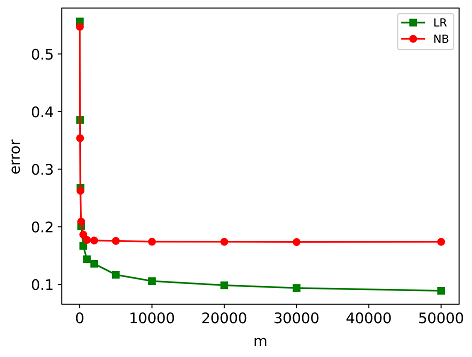}
}%

\subfloat[CIFAR100, small m]{
\includegraphics[width=0.45\columnwidth]{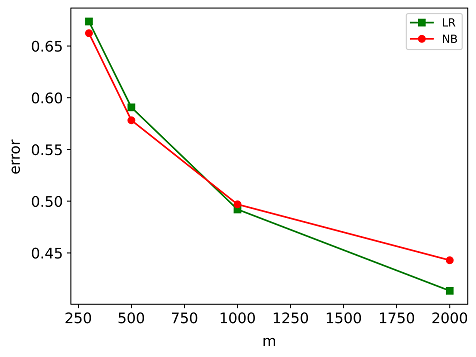}
}%
\subfloat[CIFAR100, all m]{
\includegraphics[width=0.45\columnwidth]{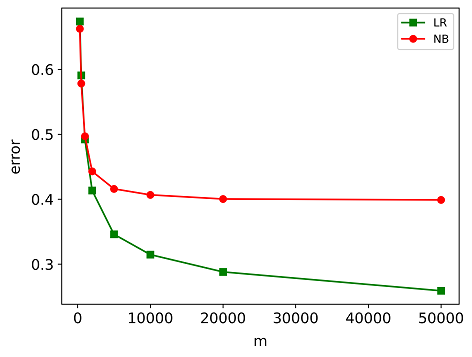}
}%

\centering
\caption{Comparison between na\"ive Bayes and logistic regression trained on features extracted by ResNet50.}
\label{figures: resnet}
\end{figure}

\begin{figure}[htbp]
\centering

\subfloat[CIFAR10, small m]{
\includegraphics[width=0.45\columnwidth]{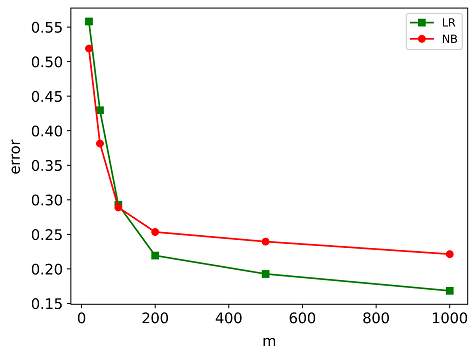}
}%
\subfloat[CIFAR10, all m]{
\includegraphics[width=0.45\columnwidth]{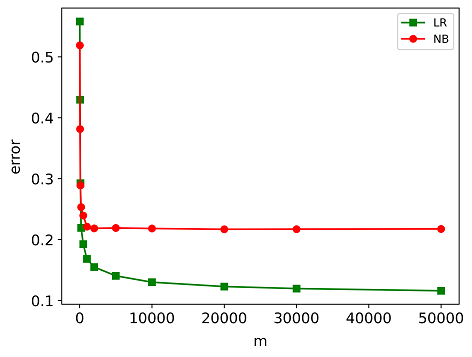}
}%

\subfloat[CIFAR100, small m]{
\includegraphics[width=0.45\columnwidth]{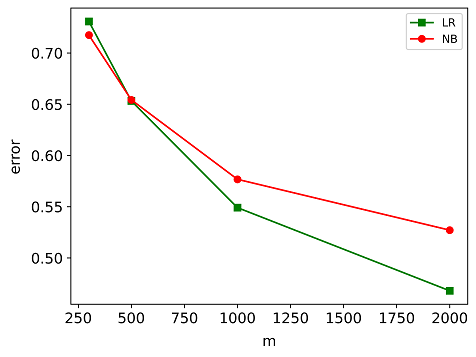}
}%
\subfloat[CIFAR100, all m]{
\includegraphics[width=0.45\columnwidth]{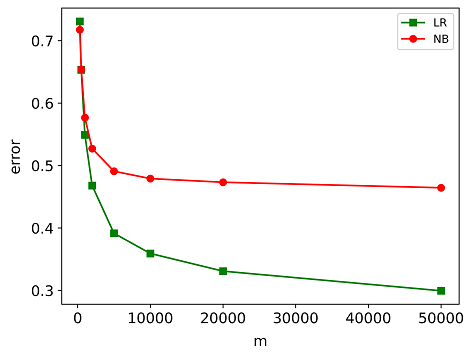}
}%

\centering
\caption{Comparison between na\"ive Bayes and logistic regression trained on features extracted by CLIP.}
\label{figures: clip}
\end{figure}

\begin{figure}[htbp]
\centering

\subfloat[CIFAR10, small m]{
\includegraphics[width=0.45\columnwidth]{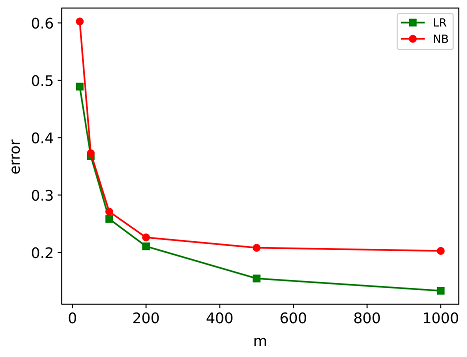}
}%
\subfloat[CIFAR10, all m]{

\includegraphics[width=0.45\columnwidth]{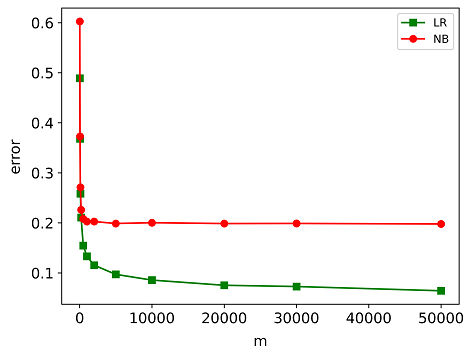}

}%

\subfloat[CIFAR100, small m]{

\includegraphics[width=0.45\columnwidth]{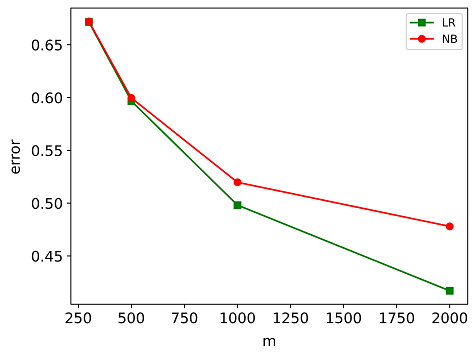}

}%
\subfloat[CIFAR100, all m]{

\includegraphics[width=0.45\columnwidth]{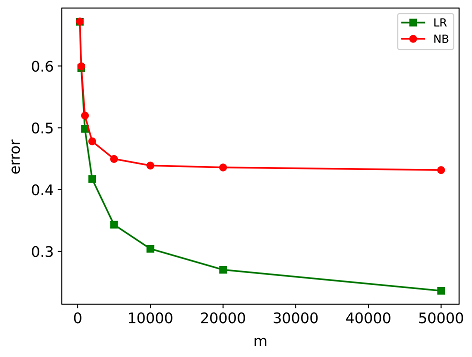}

}%

\centering
\caption{Comparison between na\"ive Bayes and logistic regression trained on features extracted by MoCov2.}
\label{figures: mocov2}
\end{figure}

\begin{figure}[htbp]
\centering

\subfloat[CIFAR10, small m]{

\includegraphics[width=0.45\columnwidth]{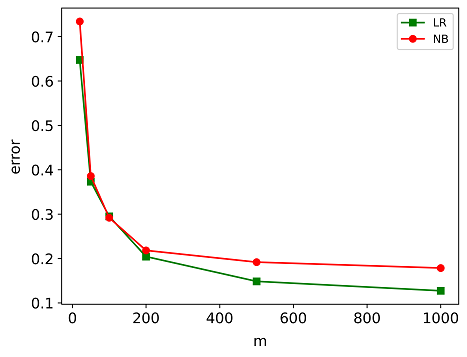}

}%
\subfloat[CIFAR10, all m]{

\includegraphics[width=0.45\columnwidth]{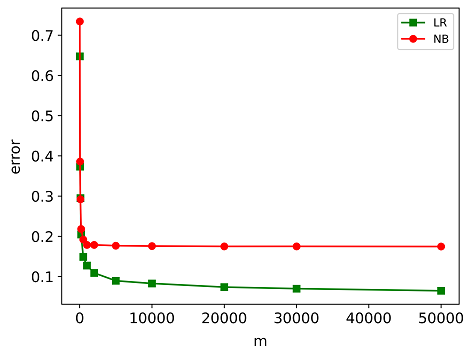}

}%

\subfloat[CIFAR100, small m]{

\includegraphics[width=0.45\columnwidth]{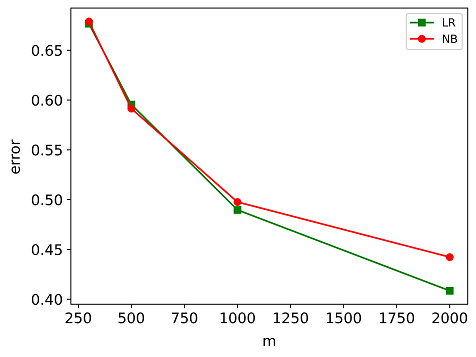}

}%
\subfloat[CIFAR100, all m]{

\includegraphics[width=0.45\columnwidth]{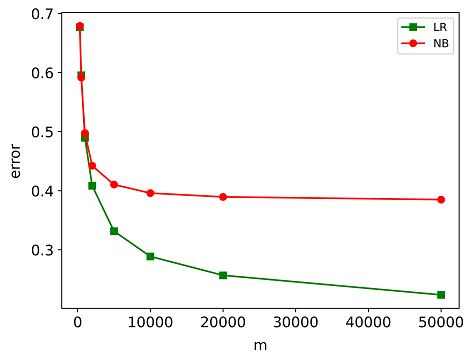}

}%

\centering
\caption{Comparison between na\"ive Bayes and logistic regression trained on features extracted by SimCLRv2.}
\label{figures: simclrv2}
\end{figure}

\begin{figure}[htbp]
\centering

\subfloat[CIFAR10, small m]{

\includegraphics[width=0.45\columnwidth]{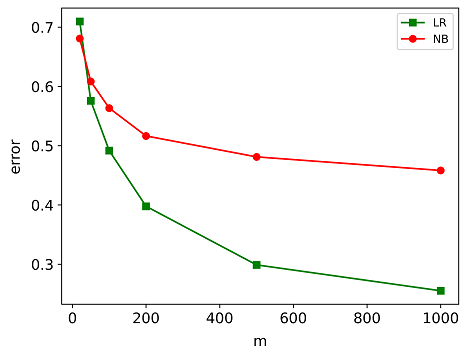}

}%
\subfloat[CIFAR10, all m]{

\includegraphics[width=0.45\columnwidth]{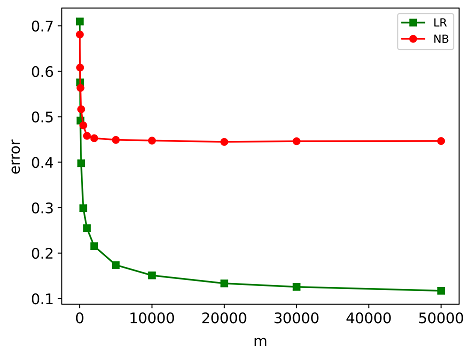}

}%

\subfloat[CIFAR100, small m]{

\includegraphics[width=0.45\columnwidth]{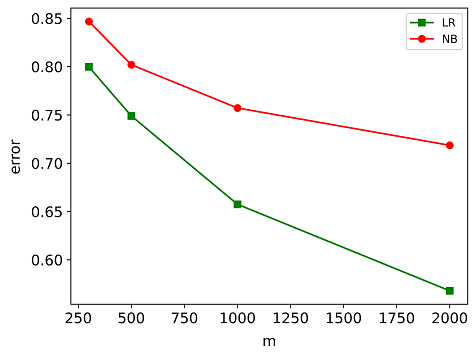}

}%
\subfloat[CIFAR100, all m]{

\includegraphics[width=0.45\columnwidth]{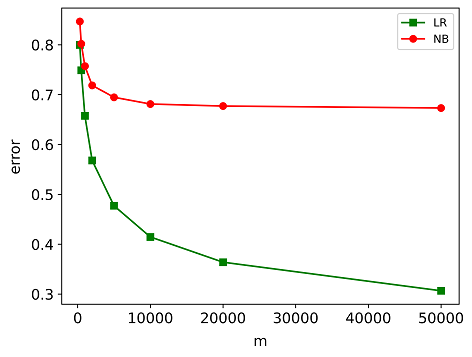}

}%

\centering
\caption{Comparison between na\"ive Bayes and logistic regression trained on features extracted by MAE.}
\label{figures: mae}
\end{figure}

\begin{figure}[htbp]
\centering

\subfloat[CIFAR10, small m]{

\includegraphics[width=0.45\columnwidth]{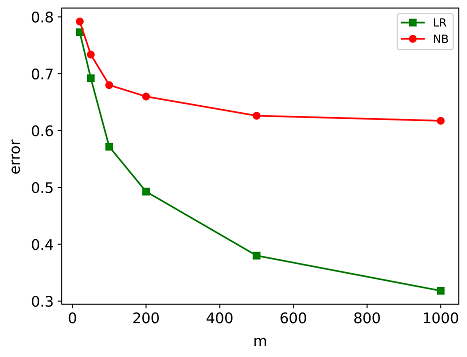}

}%
\subfloat[CIFAR10, all m]{

\includegraphics[width=0.45\columnwidth]{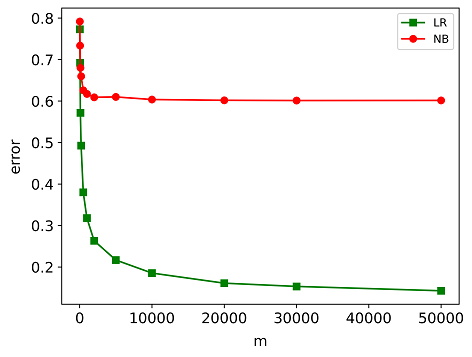}

}%

\subfloat[CIFAR100, small m]{

\includegraphics[width=0.45\columnwidth]{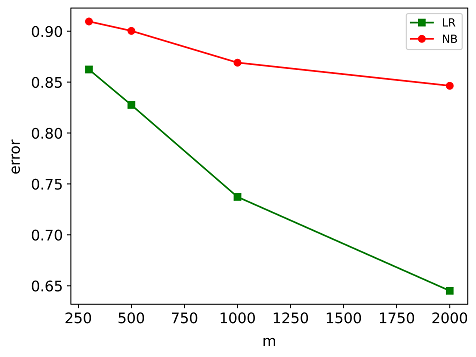}

}%
\subfloat[CIFAR100, all m]{

\includegraphics[width=0.45\columnwidth]{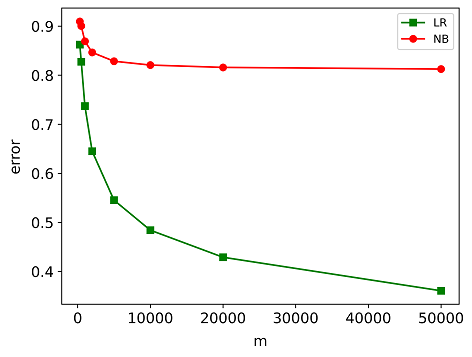}

}%

\centering
\caption{Comparison between na\"ive Bayes and logistic regression trained on features extracted by SimMIM.}
\label{figures: simmim}
\end{figure}

\end{document}